\documentclass{article}

\usepackage{microtype}
\usepackage{graphicx}
\usepackage{subcaption}
\usepackage{booktabs} 

\usepackage{hyperref}

\usepackage[accepted]{icml2026}

\usepackage{amsmath}
\usepackage{amssymb}
\usepackage{mathtools}
\usepackage{amsthm}

\usepackage{natbib}

\usepackage{url}            
\usepackage{booktabs}       
\usepackage{amsfonts}       
\usepackage{nicefrac}       
\usepackage{microtype}      
\usepackage[dvipsnames]{xcolor}         
\usepackage{pifont}
\usepackage{subcaption}
\usepackage{enumitem}
\usepackage{amsmath}
\usepackage{amsthm}
\usepackage{wrapfig}
\usepackage{diagbox}
\usepackage{csquotes}
\usepackage{placeins}
\usepackage{graphicx}
\usepackage{subcaption}

\def\Eqref#1{Equation~\eqref{#1}}

\usepackage{mleftright}

\usepackage{graphicx}       

\usepackage[capitalize,noabbrev]{cleveref}

\theoremstyle{plain}
\newtheorem{theorem}{Theorem}[section]

\newtheorem{lemma}[theorem]{Lemma}
\newtheorem{corollary}[theorem]{Corollary}
\theoremstyle{definition}
\newtheorem{definition}[theorem]{Definition}

\theoremstyle{remark}

\newcommand{\innerprod}[2]{\left\langle #1,#2 \right\rangle}
\newcommand{\angles}[2]{\left\langle #1,#2 \right\rangle}
\newcommand{\ema}[2]{\mathrm{EMA}\left( #1,#2 \right)}
\newcommand{\norm}[1]{\left \lVert #1 \right \rVert}
\newcommand{\sign}[1]{\mathrm{sign}\left ( #1 \right )}
\newcommand{\abs}[1]{\left | #1 \right |}
\newcommand{\normst}[1]{{\left \lVert #1 \right \rVert}_{\star}}
\newcommand{\normsp}[1]{{\left \lVert #1 \right \rVert}_{\mathrm{sp}}}
\newcommand{\normnuc}[1]{{\left \lVert #1 \right \rVert}_{\mathrm{nuc}}}
\newcommand{\normspinf}[1]{{\left \lVert #1 \right \rVert}_{\mathrm{msp}}}
\newcommand{\normnucone}[1]{{\left \lVert #1 \right \rVert}_{\mathrm{snuc}}}
\newcommand{\normspa}[1]{{\left \lVert #1 \right \rVert}_{\alpha \cdot \mathrm{msp}}}
\newcommand{\normnuca}[1]{{\left \lVert #1 \right \rVert}_{\frac1\alpha \cdot \mathrm{snuc}}}
\newcommand{\bigO}[1]{\mathcal{O} \left ( #1 \right )}
\newcommand{\smallo}[1]{\mathrm{o} \left ( #1 \right )}
\newcommand{\vphi}{\varphi}
\newcommand{\loss}[1]{\ell \left(#1 \right)}
\newcommand{\ellexp}{\ell_{\mathrm{exp}}}
\newcommand{\ellog}{\ell_{\mathrm{log}}}
\newcommand{\qmin}{q_{\mathrm{min}}}
\newcommand{\bbR}{\mathbb{R}}
\newcommand{\bbN}{\mathbb{N}}

\newcommand{\Lcal}{\mathcal{L}}
\newcommand{\Gcal}{\mathcal{G}}
\newcommand{\Fcal}{\mathcal{F}}
\newcommand{\w}{\mathbf{w}}
\newcommand{\x}{\mathbf{x}}
\newcommand{\g}{\mathbf{g}}
\newcommand{\h}{\mathbf{h}}
\newcommand{\m}{\mathbf{m}}
\newcommand{\n}{\mathbf{n}}
\newcommand{\vb}{\mathbf{v}}
\newcommand{\ub}{\mathbf{u}}
\newcommand{\kb}{\mathbf{k}}
\newcommand{\W}{\mathbf{W}}
\newcommand{\aevry}{\ a.e. \ }
\newcommand{\thetab}{\bm{\theta}}
\newcommand{\dtheta}[1]{\frac{d\thetab_{#1}}{d#1}}
\newcommand{\thetadir}[1]{\frac{\thetab_{#1}}
{\norm{\thetab_{#1}}}}
\newcommand{\toinfty}[1]{\overset{#1 \to \infty}{\longrightarrow}}
\newcommand{\gdir}[1]{\frac{\g_{#1}}{\normst{\g_{#1}}}}

\renewcommand{\tilde}{\widetilde}

\DeclareMathOperator*{\essliminf}{ess\,liminf}
\DeclareMathOperator*{\esslimsup}{ess\,limsup}
\DeclareMathOperator*{\esslim}{ess\,lim}

\newcommand{\tr}{\mathop{\mathrm{Tr}}}

\newcommand{\zero}{{\mathbf{0}}}

\makeatletter
\newcommand{\litem}[2]{%
  \item[#1]%
  \protected@edef\@currentlabel{#1}%
  \label{#2}%
}
\makeatother

\usepackage{xparse}

\usepackage{bm}
\usepackage{nicematrix}

\usepackage[disable,textsize=tiny]{todonotes}

\icmltitlerunning{The Implicit Bias of Adam and Muon on Smooth Homogeneous Neural Networks}

\begin{document}

\twocolumn[
  \icmltitle{The Implicit Bias of Adam and Muon on Smooth Homogeneous Neural Networks}



  \icmlsetsymbol{equal}{*}

  \begin{icmlauthorlist}
    \icmlauthor{Eitan Gronich}{weizmann}
    \icmlauthor{Gal Vardi}{weizmann}
  \end{icmlauthorlist}

  \icmlaffiliation{weizmann}{Department of Computer Science and Applied Mathematics, Weizmann Institute of Science, Rehovot, Israel}

  \icmlcorrespondingauthor{Eitan Gronich}{eitan.gronich@weizmann.ac.il}

  \icmlkeywords{Implicit bias, Adam, Muon, homogeneous networks, max margin, normalized steepest descent}

  \vskip 0.3in
]



\printAffiliationsAndNotice{}  

\begin{abstract}
We study the implicit bias of momentum-based optimizers on smooth homogeneous models. We show that \textit{momentum steepest descent} algorithms like Muon (spectral norm), MomentumGD ($\ell_2$ norm), and Signum ($\ell_\infty$ norm) are \textit{approximate} steepest descent trajectories under a decaying learning rate schedule, proving that these algorithms have a bias towards KKT points of the corresponding margin maximization problem. We extend the analysis to Adam (without the stability constant), which maximizes the $\ell_\infty$ margin, and to Muon-Signum and Muon-Adam, which maximize a hybrid norm. Our experiments corroborate the theory and show that the identity of the margin maximized depends on the choice of optimizer. 
Overall, our results extend earlier lines of work on steepest descent in homogeneous models and momentum-based optimizers in linear models.
\end{abstract}

\section{Introduction}
Deep neural networks show remarkable generalization performance despite often being overparameterized, and even when trained with no explicit regularization. A well-established line of work attempts to explain this phenomenon with the notion of the \textit{implicit} bias (tendency) of gradient-based optimization algorithms to converge to well-generalizing solutions. This bias is often realized in the form of maximizing a certain \emph{margin} for the training points (cf. \citet{vardi2023implicit}).

While earlier works studied mostly gradient descent and showed its implicit bias towards maximizing the $\ell_2$ margin in increasingly complex models, recent years have witnessed an interest in the study of the implicit bias of other optimizers, such as Adam \citep{kingma2014adam}, AdamW \citep{loshchilov2019_adamw}, and recently Muon \citep{jordan2024muon}, hand-in-hand with their rising popularity. Indeed, as these algorithms are used near-universally for training large language models (LLMs) and vision transformers, there is a growing imperative to understand their inner workings. 

In this work, we study smooth homogeneous models and show a margin-maximization bias of Adam and Muon. Previous work analyzed the implicit bias of Adam and Muon on linear predictors \citep{zhang2024_adam_linear,fan2025implicit}, and we extend these results to the substantially broader class of smooth homogeneous models.
Moreover, our analysis of Muon is a special case of a more general framework that we develop, which is applicable to all momentum-based optimizers built on top of steepest descent algorithms. All of our results hold for a family of exponentially tailed losses that includes the logistic and exponential losses.

Our main contribution is showing that when the parameter direction $\thetadir{t}$ converges, it converges to the direction of a KKT point of the $\norm{\cdot}$-max-margin problem even for trajectories which \textit{approximate} steepest descent in an appropriately defined way. This allows us to focus on momentum-based optimizers on smooth homogeneous models and show:
    \begin{enumerate}
        \item Muon has an implicit bias towards margin maximization with respect to a norm defined using spectral norms of the weight matrices, under a decaying learning rate regime. In fact, the bias towards margin maximization holds for any normalized Momentum Steepest Descent (MSD) algorithm, for the appropriate norm. We show this includes composite MSD algorithms such as Muon-Signum. In addition, we prove an implicit bias of Muon-Adam.
        \item Adam (without the stability constant) has an implicit bias towards $\ell_\infty$ margin maximization under a decaying learning rate regime. 
    \end{enumerate}

\subsection*{Related Work}
\citet{soudry2017_gd_linear} first showed that gradient descent in linear models maximizes the $\ell_2$ margin. This result
was followed by several works on margin maximization in linear fully-connected, convolutional and diagonal networks (e.g., \citet{ji2018gradient,gunasekar2018implicit,yun2020unifying,moroshko2020implicit}). Going beyond linear networks, \citet{chizat2020implicit} studied the implicit bias in infinitely-wide two-layer smooth homogeneous networks, and proved margin maximization w.r.t. a certain function norm, known as the variation norm. \citet{lyu_li_homogeneous} studied homogeneous models under gradient descent, demonstrating that any limit point of the direction of the vector of parameters $\thetadir{t}$ is the direction of a KKT point of the max-margin problem. In a complementary result, \citet{ji2020directional} showed that directional convergence of the parameters is indeed guaranteed when optimizing homogeneous models definable in an o-minimal structure with gradient descent.
The implicit bias of gradient descent in certain non-homogeneous neural networks was studied in \citet{nacson2019lexicographic,kunin2022asymmetric,cai2025_nearlyhomogeneous}.
For a more comprehensive survey on the implicit bias of gradient descent, see \citet{vardi2023implicit}.

A general treatment of the implicit bias of the family of steepest descent algorithms was given for linear models by \citet{gunasekar2018characterizing}, who proved maximization of the appropriate norm-dependent margin. \citet{tsilivis2025} generalized that result and the result by \citet{lyu_li_homogeneous} and proved for homogeneous models under steepest descent that any limit point of $\thetadir{t}$ is a KKT point of the max-margin problem.

The implicit bias of Adam in the context of homogeneous models was studied by \citet{wang2021_adam_eps_homoegeneous}, who showed a bias towards $\ell_2$-margin maximization. Notably, this work studied Adam without 
momentum in the numerator and with a stability constant $\varepsilon$ which asymptotically dominates the denominator, driving behavior to be similar to gradient descent. Follow-up works have argued that the analysis of the implicit bias with the stability constant is less faithful to the characteristics of Adam in practice, as the stability constant is typically negligible throughout the trajectory.
Such works on Adam without the stability constant have so far focused on linear models and include \citet{zhang2024_adam_linear} and \citet{fan2025implicit}, in the binary and multiclass settings respectively, who showed $\ell_\infty$ margin maximization, and \citet{baek2025implicit} who showed that the implicit bias of Adam under a deterministic batching routine can deviate from the full-batch case. We generalize the result by \citet{zhang2024_adam_linear} to smooth homogeneous models. AdamW, contrasting with other algorithms by its utilization of explicit weight decay, was studied for smooth models and losses by \citet{xie2024implicit}, who showed that limit points of the trajectory are KKT points of the loss under the constraint that the $\ell_\infty$ norm of the parameters is bounded.

\citet{fan2025implicit} studied normalized steepest descent and its momentum counterparts on linear models in the multiclass setting, including spectral descent and Muon respectively, and showed maximization of the appropriate margins. We generalize their result, albeit in the binary classification setting.

Studies of Adam include dynamical analyses and proofs of convergence in some settings (e.g. \citet{bock2019proof, defossez2020simple, zou2019sufficient, zhang2022adam, barakat2021convergence}), as well as examples of non-convergence (\citet{reddi2019convergence, bock2019non}), and other aspects such as generalization and loss-landscape geometry \citep{wilson2017marginal, zhou2020towards}.
The Muon optimizer was inspired by path-SGD \citep{neyshabur2015path}.

The significance of \enquote{tame} geometry, including stratifiability, in non-smooth dynamical systems was studied in foundational works such as \citet{bolte2007lojasiewicz, bolte2007clarke}, and built upon by \citet{Dav+20} in the context of optimization methods.

\section{Preliminaries}
\subsection{Setting and Notations}\label{subsection:notations}
Throughout this work, we consider a fixed binary classification dataset $ \{\left(\x _i, y_i \right)\}_{i=1}^m  \subseteq \bbR ^d \times  \{\pm 1\}$, a parameterized model $f(\x; \thetab)$ for parameters $\thetab \in \bbR^p$, and a \textit{log-concave, exponentially tailed} loss of the form:
\begin{equation}\label{eq:loss}
 \Lcal(\thetab) = \sum_{i=1}^m \loss{y_if(\x_i; \thetab)} =  \sum_{i=1}^m e^{-\vphi({y_if(\x_i; \thetab)})}~,
\end{equation}
where $\vphi$ is twice continuously differentiable, strictly monotone increasing and convex, with bounded first and second derivatives (see Appendix~\ref{subsection:losses}), notably allowing for the exponential ($\ell(u)=e^{-u}$) and logistic ($\ell(u) = \log(1+e^{-u})$) losses.
We denote for brevity $z_i(\thetab) = y_if(\x_i; \thetab)$ and $\qmin(\thetab) = \min_{i \in [m]}z_i(\thetab)$.  For a trajectory $\thetab_t$ we often write $z_i^t = z_i(\thetab_t), \qmin^t=\qmin(\thetab_t)$ or $z_i, \qmin$ when $t$ is clear from context. 

For a vector $\vb \in \bbR^n$ we denote by $\vb[j]$ the $j$'th coordinate of $\vb$. 
For $n \in \bbN$, we denote $[n]=\{1,\ldots,n\}$.
We denote by $\norm{\cdot}_p$ the $\ell_p$ norm for $p \in [1, \infty]$. For an arbitrary norm $\norm{\cdot}$ we denote by $\normst{\cdot}$ the dual norm, defined by $\normst{\x} = \max_{\norm{\ub} = 1} \angles{\ub}{\x}$. We denote by $\normsp{W}$ the standard spectral norm of a matrix $W$, and $\normspinf{(W_1,...,W_K)}:=\max_{k \in [K]}\normsp{W_k}$ (short for max-spectral). We use the standard asymptotic notations $\mathcal{O}, \Omega, \Theta, \mathrm{o}, \mathrm \omega$. We denote by $C^k(X)$ for $X \subseteq \bbR^{n}, n \in \bbN$ the class of $k$-times continuously differentiable functions from $X$ to $\bbR$. By $\log u$ we refer to the natural logarithm. By $\esslim, \essliminf, \esslimsup$ we refer to \textit{essential} limits holding up to sets of measure $0$.

\subsection{Optimizers}
The optimization algorithms we study are derivatives of the \emph{steepest descent} family, a generalization of gradient descent defined with respect to a norm $\norm{\cdot}$ (and its dual norm $\normst{\cdot}$). We study the infinitesimal step size (flow) limit of the optimization trajectories. We define steepest descent and its normalized variant in the general case of a subdifferentiable model $f$, allowing for a learning rate schedule $\eta(t) > 0$, as follows:

\textbf{Steepest Descent:}
\begin{equation}\label{eq:steepest_descent}
    \dtheta t \in \left \{\eta(t) \cdot \arg \min _{\norm{\ub} = \normst{\g_t}} \angles{\ub}{\g_t} \mid \g_t \in \partial \Lcal(\thetab_t)\right  \}~,
\end{equation}
\textbf{Normalized Steepest Descent:}
\begin{equation}\label{eq:norm_steepest_descent}
    \dtheta t \in \left \{\eta(t) \cdot \arg \min _{\norm{\ub} = 1} \angles{\ub}{\g_t} \mid \g_t \in \partial \Lcal(\thetab_t)\right  \}~, 
\end{equation}
for almost every $t \geq 0$, where $\partial \Lcal$ is the Clarke subdifferential of $\Lcal$ (see Appendix~\ref{section:clarke_subgradient_chain_rule}), which reduces to $\nabla \Lcal$ wherever $f$ is differentiable. Notably, \emph{gradient descent} and \emph{coordinate descent} are recovered with $\norm{\cdot} = \norm{\cdot}_2, \norm{\cdot}_1$ respectively from \Eqref{eq:steepest_descent}, and \textit{sign gradient descent} is recovered with $\norm{\cdot} = \norm{\cdot}_\infty$ from \Eqref{eq:norm_steepest_descent}. We note that in the flow regime, normalization of the update and the addition of the LR schedule $\eta(t)$ do not affect the trajectories in parameter space (but rather the traversal speed only); the above presentation of Equations \eqref{eq:steepest_descent}, \eqref{eq:norm_steepest_descent} is brought for completeness and comparison to Equations \eqref{eq:momentum_steepest_descent}, \eqref{eq:norm_momentum_steepest_descent} below.

Introducing momentum-based optimizers, we consider a choice of subgradients $\g_t \in \partial \Lcal(\thetab_t)$ along the trajectory, and denote the momentum estimate by the following ODE with the given explicit solution:
\begin{equation}\label{eq:momentum}
\begin{gathered}
        \frac{d\m_t}{dt} = c_1(\g_t - \m_t), \quad \m_0 = \mathbf 0 \\\left(\text{Explicitly:}\quad \m_t = \int_0^t c_1e^{-c_1(t-s)}\g_s ds\right)~.
\end{gathered}
\end{equation}

For the full derivation of the above continuous analogue of momentum, see Appendix~\ref{section:momentum_derivation}. Here, $c_1> 0$ is the \emph{momentum smoothing parameter}, and $\frac1{c_1}$ is the characteristic time frame in which past gradients are accumulated ($c_1$ is analogous to $- \log(\beta_1)$ for a discrete momentum parameter $\beta_1 \in (0,1)$, and is roughly $1 - \beta_1$ when $\beta_1$ is close to 1). We now define \textit{momentum steepest descent} and its normalized counterpart:

\textbf{Momentum Steepest Descent:}
\begin{equation}\label{eq:momentum_steepest_descent}
    \dtheta t \in \left \{\eta(t) \cdot \arg \min _{\norm{\ub} = \normst{\m_t}} \angles{\ub}{\m_t} \right\}~,
\end{equation}

\textbf{Normalized Momentum Steepest Descent:}
\begin{equation}\label{eq:norm_momentum_steepest_descent}
\dtheta t \in \left \{\eta(t) \cdot \arg \min _{\norm{\ub} = 1} \angles{\ub}{\m_t} \right\}~.
\end{equation}

Classical gradient descent with momentum, for example, is obtained from Equation~\eqref{eq:momentum_steepest_descent} with $\norm{\cdot} = \norm{\cdot}_2$. \emph{Muon}, the recently proposed weight-matrix optimizer \citep{jordan2024muon}, applies Newton-Schulz orthogonalization iterations on a momentum estimate of the matrix. In our work (as in \citet{fan2025implicit}), Muon refers to the exact orthogonalization setting rather than the Netwon-Schulz approximation (i.e., $U\Sigma V^T \mapsto UV^T$ where $U\Sigma V^T$ is the SVD of the weight matrix $W$\ -- defined precisely in \ref{subsection:composite_algorithms_muon}). Muon with exact orthogonalization is recovered from \Eqref{eq:norm_momentum_steepest_descent} with $\norm{\cdot} = \normsp{\cdot}$ for a single-layer network. When running Muon simultaneously on each weight matrix of a multi-layer network, the resulting trajectory follows \Eqref{eq:norm_momentum_steepest_descent} with $\norm{\cdot}=\normspinf{\cdot}$ (see notations in Subsection~\ref{subsection:notations}). \citet{bernstein2024old} noted that Shampoo with accumulation disabled is, too, spectral descent, although accumulation in Shampoo is not identical to momentum. Another algorithm adhering exactly to Equation~\eqref{eq:norm_momentum_steepest_descent} is \emph{Signum} \citep{bernstein2018signsgd}, i.e. momentum sign gradient descent ($\norm{\cdot} = \norm{\cdot}_\infty$).  \\

The final optimizer we discuss is \emph{Adam}, for which we define similarly to the above
\begin{equation}\label{eq:momentum_2}
\begin{gathered}
        \frac{d\vb_t}{dt} = c_2(\g_t^2 - \vb_t), \quad \vb_0 = \mathbf 0 \\\left(\text{Explicitly:}\quad \vb_t = \int_0^t c_2e^{-c_2(t-s)}\g_s^2 ds\right)~,
\end{gathered}
\end{equation}
where the square is taken element-wise.
Following \citet{zhang2024_adam_linear,fan2025implicit,baek2025implicit,xie2024implicit}, we consider Adam without the stability constant, since such a constant dominates $\vb_t$ in asymptotic analysis, contrary to the typical situation in practice where the stability constant is negligible throughout the trajectory. Therefore we define Adam as the following:
\begin{equation}\label{eq:adam}
    \text{\textbf{Adam:}} \quad \dtheta t = -\eta(t) \cdot \frac{\hat \m_t}{\sqrt{\hat \vb_t}}~,
\end{equation}
where $\hat \m_t = (1 - e^{-c_1t})^{-1}\m_t$ and $\hat \vb_t = (1 - e^{-c_2t})^{-1}\vb_t$ are bias-corrected terms, and division and square root are taken element-wise. Adam in the discrete case (\citet{kingma2014adam}) is defined using parameters $\beta_1, \beta_2 \in [0,1)$, where $\beta_1 \leq \beta_2$ (see definition for the discrete case in Appendix~\ref{section:momentum_derivation}).\footnote{Pytorch \citep{paszke2019pytorch} defaults are $\beta_1=0.9, \beta_2=0.999$.} Due to the inverse relation between $\beta_i$ and $c_i$, this is analogous to $c_1 \geq c_2$, which is the setting we focus on.

\subsection{Assumptions}
We now introduce the assumptions made in this work. Note that some assumptions overlap, and not all assumptions are used in all sections of the work.

\subparagraph{Model Assumptions.} Our main contributions include the following assumptions on $f$:
\begin{enumerate}[label=(M\arabic*),]
    \item $f$ is smooth in $\thetab$, i.e. $\forall \x \in \bbR^d:f(\x; \cdot) \in C^1(\bbR^p)$. \label{model_ass:smooth}
    \item $f$ is $L$-homogeneous for some $L \geq 1$, i.e. $\forall \x \in \bbR^d, \thetab \in \bbR^p, \alpha > 0: f(\x; \alpha\thetab)= \alpha^Lf(\x; \thetab)$. \label{model_ass:homogeneous_1}
\end{enumerate}
 This includes (deep) linear networks, for which implicit bias has been extensively studied (e.g., \citet{ji2018gradient,gunasekar2018implicit,yun2020unifying,moroshko2020implicit}), but notably also models with smooth non-linear activations such as assumed in \citet{chizat2020implicit}. 
 One example for an activation function that induces non-linear smooth homogeneous networks is $\text{ReLU}^q(z) := \max\{0,z\}^q$, for any constant $q>1$ (networks with this activation function have been studied in, e.g., \citet{cao2022benign,min2024can,min2025gradient,chizat2020implicit}). Another example for a smooth homogeneous activation is the quadratic activation $z \mapsto z^2$, which has been studied in many prior works (e.g., \citet{soltanolkotabi2018theoretical,du2018power,gamarnik2019stationary,sarao2020optimization,mohamadi2024you,martin2024impact,arous2025learning,martin2026high}).

We also introduce a weakened version of \ref{model_ass:smooth}:
\begin{enumerate}[label=(M\arabic*-Weak),align=left]
    \item $f$ is locally Lipschitz and Whitney $C^1$-stratifiable (thereby admitting a chain rule), see Appendix~\ref{section:clarke_subgradient_chain_rule}.  \label{model_ass:lipschitz_C1}
\end{enumerate}

\subparagraph{Learning Rate Assumptions.} We detail the different sets of assumptions on the learning rate schedule $\eta(t)$, according to the setting (momentum steepest descent and Adam/Muon-Adam).
\begin{enumerate}[align=left]
    \litem{(LR-MSD)}{lr_ass:lr_msd} $\eta(t)$ satisfies $\int_0^\infty \eta(t)dt=\infty$ and $\eta(t) \leq \smallo{t^{\frac1L-1}}$, where $L \geq1$ is from \ref{model_ass:homogeneous_1}.
    \litem{(LR-Adam)}{lr_ass:lr_adam} $\eta(t)$ satisfies $\int_0^\infty \eta(t)dt=\infty$ and $\eta(t) \leq \smallo{t^{\frac1L-1}}$, where $L \geq1$ is from \ref{model_ass:homogeneous_1}, and is non-increasing.
\end{enumerate}
We note that existing works on Adam and momentum steepest descent in linear models \citep{zhang2024_adam_linear,fan2025implicit,baek2025implicit} assumed a non-increasing learning rate $\eta_t$ with $\sum_{t=1}^\infty \eta_t = \infty$ and $\eta_t = \smallo{1}$, as well as additional technical assumptions; our Assumption~\ref{lr_ass:lr_adam} in the linear predictor case ($L =1$) is somewhat weaker than theirs. We also point out that the extensively studied harmonic learning rate schedule $\eta(t) = \frac1t$ satisfies the assumptions for any $L > 1$.

\subparagraph{Realizability and Trajectory Assumptions.} 
Our results hold given the following assumptions on the trajectory:
\begin{enumerate}[label=(T\arabic*), align=left]
    \item Nontrivial trajectory: $\exists N_{\text{min}}>0, t_0\geq0: \forall t \geq t_0:\norm{\thetab_t} \geq N_{\text{min}}$.\label{traj_ass:1}
    \item Directional Convergence: $\thetadir{t}$ converges to some $\bar \thetab$ with a positive margin $\min_{i \in [m]}y_if(\x_i; \bar \thetab)>0$. \label{traj_ass:2}
\end{enumerate}

Assumption~\ref{traj_ass:1} guarantees only that $\thetab_t$ is eventually bounded away from the origin. In particular this assumption holds if eventually $\Lcal(\thetab_t) < m\cdot \ell(0)- \delta$ for some $\delta > 0$. This is therefore a very mild assumption. \ref{traj_ass:1} follows from realizability when analyzing (normalized) steepest descent, as in \cite{tsilivis2025}, since the loss is proved to decay once $\Lcal(\thetab_t) < \ell(0)$; momentum steepest descent and Adam, however, only asymptotically approximate steepest descent, and such a decay is only proved for them in this work under Assumptions~\ref{traj_ass:1} and~\ref{traj_ass:2}. 

Assumption~\ref{traj_ass:2} is common in the implicit bias literature for linear and homogeneous networks (see, for instance, \citet{gunasekar2018characterizing, gunasekar2018implicit, chizat2020implicit, nascon2019_gradientdescent}).\footnote{These works were published with no result known about directional convergence at the time (directional convergence of gradient descent was proved by \citep{ji2020directional}).} We note the question of convergence is typically decoupled from that of implicit bias, and results on implicit bias often assume convergence. As the direction of parameters in homogeneous models captures all model behavior up to scale, assuming directional convergence is akin to assuming convergence of the model. Regarding the positive margin, we note that given that the model interpolates the training data (i.e. realizability), the margin is positive throughout the whole late phase of the trajectory, and the assumption only rules out the possibility of the margin decaying to 0 asymptotically. This assumption can also be found in \citet{gunasekar2018implicit, nascon2019_gradientdescent}. Finally, in Figure \ref{fig:plots_nonlinear} we show empirical evidence of directional convergence and strictly positive margins in our experiments.

\paragraph{Adam Well-Definability Assumption.} 
To discuss Adam without the stability constant and guarantee that $\vb_t[j] > 0$ for all $j \in [p]$, which is required to prevent division by zero, we introduce the following technical assumption regarding the initialization. This assumption appears in similar form also in \citet{zhang2024_adam_linear} and \citet{fan2025implicit}, while \citet{baek2025implicit} assume nonzero coordinates of the input in the iterative batching regime, leading to a similar conclusion. 
\begin{enumerate}[label=(A\arabic*)]
    \item There exist $\tau > 0, \rho > 0$ such that for all $j \in [p]$ and for almost any $t \in [0, \tau]$, we have $\g_t[j]^2 > \rho$. These $\tau$ and $\rho$ may be arbitrarily small.\label{adam_ass:1}\\
\end{enumerate}

\subsection{Margin Maximization and KKT Conditions} 
An important notion when discussing implicit bias is that of the \emph{(hard) margin}, defined for homogeneous models with respect to a norm $\norm{\cdot}$ by
\begin{equation}\label{eq:margin_def}
\gamma(\thetab) = \min_{i \in [m]}y_if\left(\x_i; \thetadir{}\right)~.    
\end{equation}
Under Assumption~\ref{traj_ass:2}, $\gamma(\thetab_t)$ converges to $\gamma(\bar \thetab)>0$. 
We also consider the following \emph{soft margin}
\begin{equation}\label{eq:soft_margin_def}
    \tilde \gamma (\thetab) = \frac{\vphi^{-1}\left(\log \frac{1}{\Lcal(\thetab)}\right)}{\norm{\thetab}^L}~,
\end{equation}
as a convenient substitute for $\gamma(\thetab)$. The soft margin approximates the hard margin with a $\bigO{\frac{\log m}{\norm{\thetab}^L}}$ error that vanishes whenever $\norm{\thetab} \to \infty$ (as is indeed proved in all of our results). The examination of quantities relating to $\tilde \gamma$ prove important in the analysis.

Our results pertain to the following objective, known as margin maximization, which is not directly optimized by any of the aforementioned algorithms:
\begin{equation}\label{eq:min_norm}
    \min_{\thetab \in \bbR^p} \frac12\norm{\thetab}^2 \quad \text{s.t.} \quad \forall i \in [m]: y_if(\x_i; \thetab) \geq 1~.
\end{equation}
Minimizing the norm $\norm{\thetab}$ while preserving feasibility ($\forall i \in [m]: y_if(\x_i; \thetab) \geq 1$) is known to be equivalent to maximizing the margin $\gamma(\thetab)$. For general homogeneous models, Problem~\eqref{eq:min_norm} is non-convex, and the implicit bias of algorithms towards minimizing it is shown in light of the \emph{KKT (Karush-Kuhn-Tucker) conditions}, which are local stationarity conditions:
\begin{definition}\label{def:kkt_min_norm}
    A point $\thetab \in \bbR ^p$ with $y_if(\x_i; \thetab) \geq 1$ for all $i \in [m]$ is said to satisfy the KKT conditions of Problem~\eqref{eq:min_norm} if there exist $\kb \in \partial \frac12\norm{\thetab}^2$, coefficients $\lambda_1,...,\lambda_m \geq 0$ and subgradients $\h_i \in \partial f(\x_i; \thetab)$ with:
    \begin{enumerate}
        \item $\sum_{i=1}^m \lambda _iy_i\h_i - \kb = \mathbf 0$;
        \item $\sum_{i=1}^m \lambda_i(y_if(\x_i; \thetab)-1) = 0$.
    \end{enumerate}
\end{definition}

\section{Results}\label{section:results}
In this section, we present our main results. We will discuss the proof ideas in Section~\ref{section:proof_ideas}, with all formal proofs deferred to the appendix.

\subsection{Normalized Steepest Descent}\label{subsection:main_text_nsd}
Our first theorem extends the analysis of \citet{tsilivis2025} to the setting of normalized steepest descent. We note that  Theorem~\ref{thm:main_text_nsd_kkt} can be derived from \citet{tsilivis2025} with a time reparameterization argument, due to the equivalence of trajectories in the flow regime. However, we present the result with a detailed proof for completeness, and since the proof also reveals previously unknown convergence rates (see Lemma~\ref{lemma:nsd_loss_decays}).

The assumptions of the theorem follow \citet{tsilivis2025}, requiring \ref{model_ass:lipschitz_C1} and only momentary realizability.
\begin{theorem}\label{thm:main_text_nsd_kkt}
    Let $\thetab_t$ be a trajectory of normalized steepest descent with respect to a norm $\norm{\cdot}$ (Equation~\eqref{eq:norm_steepest_descent}). Assume ~\ref{model_ass:lipschitz_C1}, \ref{model_ass:homogeneous_1}. Additionally assume that $\int_0^\infty \eta(t)dt = \infty$ and that there exists $t_0 \geq 0$ such that $\Lcal(\thetab_{t_0})<1$.
    Then, any limit point $\bar \thetab$ of $\thetadir{t}$ is the direction of a KKT point of Problem~\eqref{eq:min_norm} with the same norm $\norm{\cdot}$.
\end{theorem}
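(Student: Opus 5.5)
The cleanest route is a time reparameterization reducing to the steepest descent result of \citet{tsilivis2025}. I would still give a self-contained proof along the lines of \citet{lyu_li_homogeneous} and \citet{tsilivis2025}, since its intermediate lemmas supply the rates.

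\textbf{Step 1: chain rule and loss decay.} By \ref{model_ass:lipschitz_C1}, the chain rule holds for a.e.\ $t$. For a.e.\ $t$ it gives
\[
\frac{d\Lcal(\thetab_t)}{dt} = \angles{\g_t}{\dtheta t} = -\eta(t)\normst{\g_t},
\]
because the minimizer $\ub$ with $\norm{\ub}=1$ attains $\angles{\ub}{\g_t}=-\normst{\g_t}$. Euler's identity for $L$-homogeneous functions holds for Clarke subgradients: $\angles{\h}{\thetab}=L f(\x;\thetab)$ for every $\h\in\partial f(\x;\thetab)$. Together with duality this yields
\[
\normst{\g_t}\geq \frac{\angles{-\g_t}{\thetab_t}}{\norm{\thetab_t}} = \frac{L\sum_i e^{-\vphi(z_i)}\vphi'(z_i)z_i}{\norm{\thetab_t}}.
\]
Once $\Lcal<1$ we have $\qmin>0$, and the right-hand side is at least of order $\Lcal\cdot \vphi^{-1}(\log\frac1\Lcal)/\norm{\thetab_t}$. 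Since $\norm{\dtheta t}=\eta(t)$, we get $\norm{\thetab_t}\le \norm{\thetab_{t_0}}+\int_{t_0}^t\eta$. Integrating the resulting differential inequality with $\int\eta=\infty$ then gives $\Lcal(\thetab_t)\to0$, an explicit rate (this is Lemma~\ref{lemma:nsd_loss_decays}), and $\norm{\thetab_t}\to\infty$.

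\textbf{Step 2: the soft margin is nondecreasing.} I will differentiate $\log\vphi^{-1}(\log\frac1\Lcal)-L\log\norm{\thetab_t}$. The first term has derivative proportional to $\eta\normst{\g_t}/\big(\Lcal\cdot(\vphi^{-1})'\cdots\big)$. The second term has derivative at most $L\eta/\norm{\thetab_t}$, using $\frac{d}{dt}\norm{\thetab_t}\le\eta$. Because $\vphi$ is convex, the Step 1 lower bound on $\normst{\g_t}$ dominates this, so $\tilde\gamma$ is nondecreasing and bounded, and it converges. I will also record the integrated "alignment gap" between $\normst{\g_t}$ and $\angles{-\g_t}{\thetab_t}/\norm{\thetab_t}$, which the monotonicity argument shows is finite.

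\textbf{Step 3: approximate KKT and limits (main obstacle).} Take a limit point $\bar\thetab$ along $t_k\to\infty$. I will set $\tilde\thetab_k=\thetab_{t_k}/\qmin^{1/L}$, which is feasible, and take
\[
\lambda_i\propto e^{-\vphi(z_i)}\vphi'(z_i)\,\norm{\thetab}^{\cdots}, \qquad \h_i\in\partial f(\x_i;\tilde\thetab_k).
\]
I will then show this is an $(\epsilon_k,\delta_k)$-approximate KKT point with $\epsilon_k,\delta_k\to0$. Complementary slackness follows because non-minimal margins incur exponentially small weight as $\norm{\thetab}\to\infty$. Stationarity is the hard part. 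It requires $-\g_t/\normst{\g_t}$ to approach the subdifferential $\partial\frac12\norm{\cdot}^2$ at $\thetadir{t}$ suitably scaled, i.e.\ near-equality in the duality $\angles{-\g_t}{\thetab_t}\le\normst{\g_t}\norm{\thetab_t}$. The finiteness of the integrated gap from Step 2 only gives this along some times near $t_k$, not at $t_k$ itself. I would handle this as in \citet{lyu_li_homogeneous}: use that the total change of $\thetadir{}$ over intervals with bounded soft-margin increase is small, and pick times $s_k$ near $t_k$ where the gap is small and $\thetadir{s_k}\to\bar\thetab$. Finally, approximate KKT points converging to a point satisfying an MFCQ-type condition (which holds by homogeneity, since $\theta$ itself is a feasible ascent direction) converge to a KKT point. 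Upper semicontinuity of the Clarke subdifferentials of $f$ and of $\frac12\norm{\cdot}^2$ then passes the conditions to the limit, giving the claim.
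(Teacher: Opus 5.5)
Your Steps 1--2 are sound and follow the paper's skeleton. Integrating $-\frac{d\log\Lcal}{dt}\gtrsim \eta/\big(\norm{\thetab_{t_0}}+\int_{t_0}^t\eta\big)$ directly does give $\Lcal\to0$ without the soft margin, though only with a polynomial rate in $\int\eta$, not the sharp rate of Lemma~\ref{lemma:nsd_loss_decays}.

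\textbf{Main gap: how Step~3 finds good times.} You propose, following \citet{lyu_li_homogeneous}, to use that the total change of $\thetadir{}$ over intervals with small soft-margin increase is small. That estimate is Euclidean. For gradient flow, the direction's velocity is $\norm{\thetab_t}^{-1}$ times the component of $\dtheta{t}=-\g_t$ orthogonal to $\thetab_t$, so it vanishes with the alignment gap. Cauchy--Schwarz then bounds the direction change by the square root of (soft-margin log-increase)$\times$(log-norm increase).

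For a general norm the direction can move with zero gap, because $\thetadir{t}$ and the update direction $\ub_t$ may lie on the same face of the unit ball exposed by $-\g_t$. For example, take $\norm{\cdot}_\infty$ with $\g_t=(-c,0)$, the admissible update $\ub_t=(1,0)$, and $\thetadir{t}=(1,a)$ with $0<\abs{a}<1$. Then $\angles{\thetadir{t}}{-\gdir{t}}=1$, yet $\frac{d}{dt}\thetadir{t}=\frac{\eta(t)}{\norm{\thetab_t}}(0,-a)\neq 0$. So a stalled soft margin gives no control of the direction.

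The paper uses only the crude bound \eqref{eq:norm_of_direction_change}, $\norm{\frac{d}{dt}\thetadir{t}}\le 2\eta(t)/\norm{\thetab_t}$. It combines this with $\int^\infty \eta/\norm{\thetab_t}\,dt=\infty$, which you already have from $\norm{\thetab_t}\le\norm{\thetab_{t_0}}+\int_{t_0}^t\eta$, and balances two scales:
\begin{itemize}
\item Pick $s_n$ with $\norm{\thetadir{s_n}-\bar\thetab}\le\frac1n$, late enough that $\frac1L\log\big(\gamma_\infty/\tilde\gamma(\thetab_{s_n})\big)\le\frac1{n^2}$, where $\gamma_\infty=\lim\tilde\gamma(\thetab_t)$.
\item Pick $s_n'$ with $\int_{s_n}^{s_n'}\eta/\norm{\thetab_t}\,dt=\frac1n$.
\item Let $\beta(t)=\angles{\thetadir{t}}{-\gdir{t}}$. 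Your integrated-gap inequality $\frac1L\log\frac{\tilde\gamma(\thetab_{s_n'})}{\tilde\gamma(\thetab_{s_n})}\ge\int_{s_n}^{s_n'}\frac{\eta}{\norm{\thetab_t}}\big(\frac{1}{\beta(t)}-1\big)dt$ then yields some $t_n\in(s_n,s_n')$ with $\frac{1}{\beta(t_n)}-1\le\frac1n$.
\item Meanwhile the direction has moved by at most $\frac2n$.
\end{itemize}
The point is to measure interval length in $\eta/\norm{\thetab}$, not in soft-margin increase.

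\textbf{Smaller issue: approximate KKT at moving points.} In Step~3 you equate near-equality $\angles{\thetadir{t}}{-\gdir{t}}\approx 1$ with $-\gdir{t}$ being close to $\partial\norm{\cdot}$ at $\thetadir{t}$. You then plan to show the moving points $\tilde\thetab_k$ are $(\epsilon_k,\delta_k)$-approximate KKT points. Definition~\ref{def:approx_kkt_point} requires $\kb$ to lie exactly in $\partial\frac12\norm{\cdot}^2$ at $\tilde\thetab_k$. For non-smooth norms this identification fails. In $(\bbR^2,\norm{\cdot}_\infty)$, the subdifferential at $(1,1-\delta)$ is $\{(1,0)\}$, yet $\vb=(\frac12,\frac12)$ has $\norm{\vb}_1=1$ and $\angles{\vb}{(1,1-\delta)}=1-\frac{\delta}{2}$.

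The paper avoids this by certifying the fixed point $\hat\thetab=\bar\thetab/\gamma(\bar\thetab)^{1/L}$ itself:
\begin{itemize}
\item Extract a limit $\ub^\star$ of $-\gdir{t_n}$. It satisfies $\normst{\ub^\star}=1$ and $\angles{\ub^\star}{\bar\thetab}=1$ exactly, so $\kb=\norm{\hat\thetab}\,\ub^\star\in\partial\frac12\norm{\hat\thetab}^2$.
\item Take limits of the rescaled $\qmin^{1/L-1}\h_i$ in $\partial f(\x_i;\hat\thetab)$, using the closed graph of $\partial f$.
\item Bound the multipliers via $\normst{\g_t}\ge C\Lcal\norm{\thetab_t}^{L-1}$.
\end{itemize}
Your closing remark about passing to the limit points this way. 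However, what puts $\kb$ in the subdifferential is the exact duality equality in the limit. Upper semicontinuity of $\partial\frac12\norm{\cdot}^2$ does not do it, since the $\kb_k$ you would build need not lie in $\partial\frac12\norm{\cdot}^2$ at $\tilde\thetab_k$ to begin with.
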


\subsection{Momentum Steepest Descent, Muon and Muon-Signum}
We consider margin maximization in momentum steepest descent.
The following result is based on the general insight that convergence of $\thetadir{t}$ to a KKT point of Problem~\eqref{eq:min_norm} holds even when the trajectory is only an \textit{approximation }of steepest descent. We elaborate on the appropriate definition and characteristics of approximate steepest descent in Section~\ref{section:proof_ideas} and Appendix~\ref{subsection:approx_sd}.
\begin{theorem}\label{thm:main_text_momentum_nsd}
    Let $\thetab_t$ be a trajectory of normalized or unnormalized momentum steepest descent with respect to a norm $\norm{\cdot}$ (Equation~\eqref{eq:norm_momentum_steepest_descent} or \eqref{eq:momentum_steepest_descent}). Under Assumptions~\ref{model_ass:smooth}, \ref{model_ass:homogeneous_1}, \ref{lr_ass:lr_msd}, \ref{traj_ass:1}, \ref{traj_ass:2}, the limit point $\bar \thetab$ of $\thetadir{t}$ is the direction of a KKT point of Problem~\eqref{eq:min_norm} with the 
    norm $\norm{\cdot}$.
\end{theorem}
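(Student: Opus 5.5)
The plan is to reduce to an "approximate normalized steepest descent" statement and then run a Lyu--Li / Tsilivis-type KKT argument. Since the limit direction $\bar\thetab$ exists by \ref{traj_ass:2}, we only need to exhibit approximate KKT multipliers at $\thetadir{t}$ with vanishing error.

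\textbf{Step 1: the momentum tracks the current gradient.} Write $\g_t=\nabla\Lcal(\thetab_t)$. We have
\[
\m_t-\g_t=-\int_0^t e^{-c_1(t-s)}\tfrac{d\g_s}{ds}\,ds-e^{-c_1 t}\g_0 .
\]
The aim is to show $\normst{\m_t-\g_t}=\smallo{\normst{\g_t}}$ as $t\to\infty$.

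By \ref{traj_ass:1}, \ref{traj_ass:2} and homogeneity, on the cone around $\bar\thetab$ every $z_i\ge \gamma\norm{\thetab}^L/2$. Hence $\Lcal$ is comparable to $e^{-\vphi(\qmin)}$, and $\normst{\g_t}\asymp \Lcal(\thetab_t)\,\vphi'(\qmin)\,\norm{\thetab_t}^{L-1}$. The lower bound uses that $\langle\thetab,\g\rangle=-L\sum_i e^{-\vphi(z_i)}\vphi'(z_i)z_i$ is large.

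Next, $\norm{\tfrac{d\thetab}{dt}}\le\eta(t)$ in the normalized case; in the unnormalized case the speed is $\eta(t)\normst{\m_t}$, which is handled the same way after a bootstrap. By the $C^1$ smoothness of $f$, homogeneity of $\nabla f$ (degree $L-1$) and local Lipschitzness on the sphere, the gradient moves by at most a factor $\exp(\bigO{\eta\,\norm{\thetab}^{L-1}})$ over $\bigO{1}$ time. Here I will need more than $C^1$: the plan is to use uniform continuity of $\nabla f$ on the unit sphere plus the exponential-tail form to control relative changes of $\g$. The assumption $\eta(t)=\smallo{t^{1/L-1}}$ together with $\norm{\thetab_t}\lesssim (\int\eta)$ (hence $\norm{\thetab_t}^{L-1}\eta(t)\to0$ when $\norm{\thetab_t}=\bigO{t}$ scaled by $\eta$) gives that, over windows of length $\gg 1/c_1$, $\g_s/\normst{\g_t}$ is uniformly close to $\g_t/\normst{\g_t}$. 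Splitting the integral into a recent window and a tail then yields
\[
\m_t=\g_t+\smallo{\normst{\g_t}}.
\]
This is the main obstacle. It requires a self-consistent bootstrap: the norm growth must be controlled before knowing that the loss decays, and that decay itself needs the approximation. I would first prove $\norm{\thetab_t}\to\infty$ and $\Lcal\to0$ using the directional convergence together with the positive margin.

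\textbf{Step 2: the update is an approximate steepest descent step.} With $\ub_t=\tfrac{d\thetab}{dt}/\eta(t)$ (normalized), we have $\langle\ub_t,\m_t\rangle=-\normst{\m_t}$. Hence
\[
\langle\ub_t,\g_t\rangle\le-\normst{\g_t}+2\normst{\m_t-\g_t}=-(1-\smallo{1})\normst{\g_t}.
\]
So the loss eventually decreases, $\tilde\gamma$ is eventually almost non-decreasing, and $\norm{\thetab_t}\to\infty$, following the paper's approximate-steepest-descent lemmas (Appendix~\ref{subsection:approx_sd}).

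\textbf{Step 3: approximate KKT conditions.} Set $\lambda_i\propto e^{-\vphi(z_i)}\vphi'(z_i)\,\norm{\thetab}^{\,\cdot}$, normalized so that $\thetadir{t}$ rescaled to unit margin is feasible. Take $\kb$ from $\partial\tfrac12\norm{\cdot}^2$ at the dual-aligned direction of $-\g_t$. Stationarity error reduces to the gap between $\langle\thetadir{t},-\g_t\rangle$ and $\normst{\g_t}$. This gap is integrable against $\normst{\g}\eta\,dt$ because $\tilde\gamma$ is bounded and its derivative dominates it, up to the $\smallo{1}$ Step-2 error. Hence it tends to $0$ along a subsequence, and by directional convergence along the whole trajectory.

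The complementary slackness error is $\bigO{\log m/\norm{\thetab}^L}$ after normalization, and this vanishes. Passing to the limit with continuity of $\nabla f$ and closedness of the subdifferential of the norm gives that $\bar\thetab/\gamma(\bar\thetab)^{1/L}$ is a KKT point of Problem~\eqref{eq:min_norm}.
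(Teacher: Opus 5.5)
Your skeleton matches the paper's. The bound $\norm{\thetab_t}\le\norm{\thetab_0}+\int_0^t\norm{\dtheta{s}}ds=\smallo{t^{1/L}}$ gives $\norm{\thetab_t}^{L-1}\norm{\dtheta{t}}\to0$, so the momentum tracks the gradient. Your bound $\angles{\ub_t}{\g_t}\le-\normst{\g_t}+2\normst{\m_t-\g_t}$ then makes the flow an approximate steepest descent with $\nu(t)=\norm{\dtheta{t}}$. One finishes with the descent lemma, a soft-margin argument along a subsequence, and Theorem~\ref{thm:appendix_kkt_blueprint}.

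The genuine difference is Step 1. The paper splits $\g_t$ into pieces $\g_t[i,j]$, shows $\frac{d}{dt}\log\abs{\g_t^\infty[i,j]}\to0$, applies Corollary~\ref{cor:ema_dlog_dt}, and treats pieces with $\bar\h_\infty[i,j]=0$ separately. Your window/tail split is a valid and more direct alternative:
on $[t-T,t]$ the weights $\ell(z_i^s)\varphi'(z_i^s)$ change by a factor $1+\smallo{1}$;
the direction moves by $\bigO{T\sup_{[t-T,t]}\norm{\dtheta{s}}}/N_{\min}\to0$, so uniform continuity of $\nabla f$ on the unit sphere applies;
the tail is at most $\bigO{e^{-(c_1-\delta)T}}\normst{\g_t}$, because $\log\bigl(\norm{\thetab_t}^{L-1}\ell(\qmin^t)\bigr)$ has vanishing derivative.
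This gives $\normst{\m_t-\g_t}=\smallo{\normst{\g_t}}$ in one stroke. It needs only $C^1$ (drop the $d\g_s/ds$ formula) and avoids the sign/support case analysis. The paper's version instead produces general EMA lemmas that it reuses for $\vb_t$ in Adam.

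Two points need correcting.

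(i) There is no bootstrap. Step 1 uses only \ref{traj_ass:1} and an eventual bound $\gamma(\thetab_t)\ge\gamma_{\min}>0$. In the unnormalized case, $\normst{\g_t}\le Mm\norm{\thetab_t}^{L-1}e^{-\varphi(\gamma_{\min}\norm{\thetab_t}^L)}$ is bounded on $\norm{\thetab_t}\ge N_{\min}$, so the speed is $\bigO{\eta(t)}$ outright (Lemma~\ref{lemma:any_trajectory}, item 5). Your proposed fix, obtaining $\norm{\thetab_t}\to\infty$ and $\Lcal\to0$ from directional convergence and positive margin alone, is invalid: a trajectory converging to a fixed nonzero point satisfies both hypotheses. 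Those facts only become available after Step 2. What the unnormalized case does need, and the paper also leaves implicit, is $\int_0^\infty\norm{\dtheta{t}}dt=\infty$ (condition 1 of Definition~\ref{def:approx_sd}). Otherwise $\thetab_t\to\thetab_\infty$ with $\gamma(\thetab_\infty)>0$, so $\m_t\to\nabla\Lcal(\thetab_\infty)\neq\zero$. Then $\norm{\dtheta{t}}=\eta(t)\normst{\m_t}\ge c\,\eta(t)$ eventually, contradicting $\int_0^\infty\eta=\infty$.

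(ii) Step 3's mechanism fails as written. In the normalized case $\int_{t_1}^\infty\eta\normst{\g_t}dt\le2\Lcal(\thetab_{t_1})$, so integrability of a bounded gap against $\eta\normst{\g_t}dt$ yields no subsequence. The right measure is $\frac{\norm{\dtheta{t}}}{\norm{\thetab_t}}dt$, via $\frac{d\log\tilde\gamma}{dt}\ge\frac{L\norm{\dtheta{t}}}{\norm{\thetab_t}}\bigl(\frac{r(t)}{\beta(t)}-1\bigr)$ with $\beta(t)=\angles{\thetadir{t}}{-\gdir{t}}$. This measure has infinite mass. However, the $\smallo{1}$ defect in $r(t)$ is then not integrable, and $\tilde\gamma$ is not monotone here, so boundedness of $\tilde\gamma$ does not suffice. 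As in Lemma~\ref{lemma:approx_sf_build_sequence}:
use that $\tilde\gamma(\thetab_t)\to\gamma(\bar\thetab)$, which follows from $\norm{\thetab_t}\to\infty$ and \ref{traj_ass:2};
take far-out windows with $\int\frac{\norm{\dtheta{t}}}{\norm{\thetab_t}}dt\ge1$, $r\ge1-\frac1n$, and $\frac1L\log\tilde\gamma$ rising by at most $\frac1n$;
averaging over each window gives $t_n$ with $\beta(t_n)\ge\frac{1-1/n}{1+1/n}$.
Finally, directional convergence does not give $\beta(t)\to1$ along the whole trajectory, but only the subsequence is needed.
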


Moreover, we show that running multiple normalized (momentum) steepest descent algorithms in parallel on different parts of the parameter vector with respect to different norms is equivalent to a single run of normalized (momentum) steepest descent algorithm relative to the maximal norm among them (see Appendix~\ref{subsection:composite_algorithms_muon}). As a result we obtain the following corollary on the implicit bias of Muon:
\begin{corollary}\label{cor:main_text_muon}
    If $\thetab = (W_1,...,W_K)$ is a collection of matrices and Muon is run on each matrix simultaneously with the same schedule $\eta(t)$, then Muon is a case of normalized momentum steepest descent with $\norm{\cdot}=\normspinf{\cdot}$, and the statement of Theorem~\ref{thm:main_text_momentum_nsd} holds.
\end{corollary}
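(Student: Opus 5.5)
The corollary reduces to two facts: that running Muon blockwise is a single normalized momentum steepest descent with respect to $\normspinf{\cdot}$, and that Theorem~\ref{thm:main_text_momentum_nsd} then applies directly. I will first establish the blockwise fact for general norms and then specialize.

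Fix blocks $\thetab=(W_1,\dots,W_K)$ and norms $\norm{\cdot}_{(k)}$ on the blocks. Define $\norm{\thetab}:=\max_k \norm{W_k}_{(k)}$. A direct computation gives its dual as $\normst{\m}=\sum_k \norm{M_k}_{(k),\star}$, where $\m=(M_1,\dots,M_K)$.

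Next I characterize $\arg\min_{\norm{\ub}=1}\angles{\ub}{\m}$. For any $\ub=(U_1,\dots,U_K)$ with $\norm{\ub}\le 1$, we have $\angles{\ub}{\m}=\sum_k\angles{U_k}{M_k}\ge -\sum_k \norm{M_k}_{(k),\star}$. Equality holds exactly when each $U_k$ minimizes $\angles{U_k}{M_k}$ over the unit ball of $\norm{\cdot}_{(k)}$. The problem therefore decouples across blocks. Whenever every $M_k\neq 0$, the minimizer takes each $U_k$ with $\norm{U_k}_{(k)}=1$, so $\norm{\ub}=1$ holds automatically. If some $M_k=0$, any $U_k$ in the unit ball is optimal. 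In particular, the choice made by the blockwise algorithm ($U_k=0$, or any unit vector) is still a valid selection as long as at least one block is nonzero.

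I then specialize to $\norm{\cdot}_{(k)}=\normsp{\cdot}$, so that $\norm{\cdot}=\normspinf{\cdot}$. By von Neumann's trace inequality, $\min_{\normsp{U}\le1}\angles{U}{M}=-\normnuc{M}$, attained at $U=-U_MV_M^T$ where $M=U_M\Sigma V_M^T$ is the SVD. This is exactly Muon's orthogonalized update with exact orthogonalization. The momentum $\m_t$ of \Eqref{eq:momentum} is linear in $\g_t$, so it splits blockwise into the per-matrix momenta. Since the same schedule $\eta(t)$ is used on every block, the simultaneous Muon trajectory satisfies \Eqref{eq:norm_momentum_steepest_descent} with $\norm{\cdot}=\normspinf{\cdot}$.

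Finally, the hypotheses of Theorem~\ref{thm:main_text_momentum_nsd} (\ref{model_ass:smooth}, \ref{model_ass:homogeneous_1}, \ref{lr_ass:lr_msd}, \ref{traj_ass:1}, \ref{traj_ass:2}) concern only the model, the schedule and the trajectory, not the norm. Applying the theorem gives the KKT conclusion for Problem~\eqref{eq:min_norm} with $\normspinf{\cdot}$.

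The main subtlety is the degenerate case where some block's momentum vanishes, or where Muon's convention $0\mapsto 0$ yields a vector of norm less than $1$. I handle this by noting two things. First, the differential inclusion in Theorem~\ref{thm:main_text_momentum_nsd} need only hold for almost every $t$. Second, the proof of that theorem, via approximate steepest descent, only uses $\angles{\dtheta t}{\m_t}=-\eta(t)\normst{\m_t}$ together with $\norm{\dtheta t}\le\eta(t)$. Both remain true for the blockwise selection, so the corollary follows.
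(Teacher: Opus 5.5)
Your proposal is correct and follows the paper's own route. The paper proves the dual of a max norm is the sum of the block duals (Lemma~\ref{lemma:dual_norm_of_max}), shows that blockwise normalized MSD with a shared $\eta(t)$ is normalized MSD for the max norm (Lemma~\ref{lemma:composite_sd}), checks via $\angles{U_kV_k^\top}{M_k}=\normnuc{M_k}$ that Muon is spectral-norm normalized MSD (Lemma~\ref{lemma:muon_sd}), and then invokes Theorem~\ref{thm:main_text_momentum_nsd}; your extra treatment of blocks with $M_k=\zero$ (where the reduced SVD gives a zero update, a case the paper's claim $\normsp{U_kV_k^\top}=1$ glosses over) is a sound refinement rather than a different argument.
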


When running Muon in practice, often the non-matrix parameters are optimized using Adam (\citet{jordan2024muon}, \citet{liu2025muon}). Adam has been compared to sign gradient descent and Signum (see \citet{orvieto2025search}) as possible simplifications. Recently, Scion \citep{pethick2025training} has been proposed, which uses Muon side-by-side with sign gradient descent. This motivates understanding the implicit bias of these \enquote{composite} algorithms, to which we contribute the following corollary regarding Muon-Signum, and a theorem for Muon-Adam in Subsection~\ref{subsection:main_text_adam}.
\begin{corollary}\label{cor:main_text_muon_signum}
    If $\thetab = (W_1,...,W_K,\ub)$ is a collection of matrices and additional parameters $\ub$, Muon is run on each matrix independently and Signum is run on $\ub$ with the same schedule $\eta(t)$, then Muon-Signum is a case of normalized momentum steepest descent with $\norm{\thetab} = \max\{\normspinf{(W_1,...,W_K)}, \norm{\ub}_\infty\}$, and the statement of Theorem~\ref{thm:main_text_momentum_nsd} holds.
\end{corollary}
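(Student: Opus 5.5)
We reduce Corollary~\ref{cor:main_text_muon_signum} to Theorem~\ref{thm:main_text_momentum_nsd} by exhibiting the Muon-Signum update as the normalized momentum steepest descent direction (Equation~\eqref{eq:norm_momentum_steepest_descent}) for the norm $\norm{\thetab} = \max\{\normspinf{(W_1,\dots,W_K)}, \norm{\ub}_\infty\}$. First, we check that this is indeed a norm on $\bbR^p$. It is the maximum of seminorms each acting on a disjoint block of coordinates, and it vanishes only when every block vanishes.

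Next, we compute the dual norm and the normalized steepest descent set. Write the momentum as $\m_t = (M_1,\dots,M_K,\m^{\ub})$, where each $M_k$ is $\int_0^t c_1 e^{-c_1(t-s)}\nabla_{W_k}\Lcal\,ds$. This block structure holds because Equation~\eqref{eq:momentum} is linear and coordinatewise, so the momentum of the full vector is the concatenation of the blockwise momenta that Muon and Signum maintain separately.

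For a max-of-block-norms $\norm{\thetab}=\max_b \norm{\thetab_b}_{(b)}$, the constraint $\norm{\ub'}=1$ in $\arg\min \angles{\ub'}{\m_t}$ decouples into $\norm{\ub'_b}_{(b)}\le 1$ for every block $b$. Equality with the norm value $1$ is then attained automatically at the minimizer whenever some block of $\m_t$ is nonzero. Since the objective $\sum_b \angles{\ub'_b}{\m_b}$ is separable, it is minimized blockwise, giving minimum value $-\sum_b \norm{\m_b}_{(b),\star}$. So the dual norm is the sum of the block dual norms, and the minimizer set is the product of the blockwise minimizer sets.

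Now we identify the blockwise minimizers. For the spectral norm with $M_k = U\Sigma V^T$, $\arg\min_{\normsp{W}\le1}\angles{W}{M_k}$ contains $-UV^T$, since $\angles{-UV^T}{M_k} = -\tr\Sigma = -\normnuc{M_k}$. This is exactly the negated Muon update. For $\ell_\infty$, $-\sign{\m^{\ub}}$ attains $-\norm{\m^{\ub}}_1$, which is the Signum update. Hence the Muon-Signum update, run with a common schedule $\eta(t)$, is an element of $\eta(t)\cdot\arg\min_{\norm{\ub'}=1}\angles{\ub'}{\m_t}$.

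Two minor points remain. When some block of $\m_t$ is zero, any element of the unit ball of that block is admissible, so the algorithm's choice (e.g.\ $0$) still lies in the product set. When all of $\m_t$ vanishes, Equation~\eqref{eq:norm_momentum_steepest_descent} must be interpreted as a set containing the actual update. This degenerate case requires care with the unit-sphere constraint; it happens at most on a negligible set and can be handled by relaxing to the unit ball, which is the convention used for general normalized steepest descent. Corollary~\ref{cor:main_text_muon} is the special case with no $\ub$ block, and this same blockwise argument is the content of Appendix~\ref{subsection:composite_algorithms_muon}.

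Finally, since $\thetab_t$ is a trajectory of normalized momentum steepest descent with respect to this norm, Theorem~\ref{thm:main_text_momentum_nsd} applies under its stated assumptions. It yields that the limit direction is a KKT point of Problem~\eqref{eq:min_norm} for the norm $\max\{\normspinf{\cdot},\norm{\cdot}_\infty\}$.

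The main obstacle is the decoupling step: verifying that the unit-sphere constraint for the max norm reduces to blockwise unit-ball constraints without losing minimizers. Concretely, we must show that every blockwise minimizer, including choices on zero blocks, lies on the sphere $\norm{\ub'}=1$ exactly, so that no block needs norm strictly less than one to achieve the optimum.
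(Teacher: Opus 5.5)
Your proposal is correct and is essentially the paper's argument. Lemma~\ref{lemma:dual_norm_of_max} identifies the dual of the max-of-blocks norm as the sum of the block duals, Lemmas~\ref{lemma:composite_sd} and~\ref{lemma:muon_sd} show the concatenated Muon/Signum update has norm one and attains $-\normst{\m_t}$, and Theorem~\ref{thm:appendix_momentum_sd_kkt} then applies. Only membership of the actual update in the $\arg\min$ is needed, so the full product description you flag as the main obstacle is more than required. You should drop the claim that $\m_t=\zero$ happens only on a null set: you do not justify it, and you do not need it, because the argument is purely asymptotic and $\m_t\neq\zero$ for all large $t$ by Lemma~\ref{lemma:appendix_momentum_follows}. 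Your explicit handling of zero blocks is actually slightly more careful than the paper's Lemma~\ref{lemma:muon_sd}, whose use of $\normsp{U_kV_k^\top}=1$ tacitly requires $M_k\neq\zero$.
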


\subsection{Adam and Muon-Adam}\label{subsection:main_text_adam}
Notably, Adam is \emph{not} a normalized momentum steepest descent algorithm, as its updates are ratio terms of two momentum estimates of different rates. This makes the case of Adam (and hence also Muon-Adam) especially challenging. Yet, we show that results of the same flavor hold for Adam in the decaying learning rate regime. As discussed after Equation~\eqref{eq:adam}, we focus on the parameter regime $c_1 \geq c_2 > 0$, which is the one more common in practice, as this is equivalent to $\beta_1 \leq \beta_2$. Here, we will require that
$\eta(t)$ is non-increasing; we stress that $\eta(t)$ is chosen externally to the algorithm, and in all practical cases of a decaying learning rate, $\eta(t)$ is chosen to be eventually monotonically decreasing. 

\begin{theorem}\label{thm:main_text_adam_decaying}
    Let $\thetab_t$ be a trajectory of Adam with $c_1 \geq c_2$ (Equation~\eqref{eq:adam}). Under Assumptions~\ref{model_ass:smooth}, \ref{model_ass:homogeneous_1}, \ref{lr_ass:lr_adam}, \ref{traj_ass:1}, \ref{traj_ass:2}, \ref{adam_ass:1}, the limit point $\bar \thetab$ of $\thetadir{t}$ is the direction of a KKT point of Problem~\eqref{eq:min_norm} with $\norm{\cdot} = \norm{\cdot}_\infty$.
\end{theorem}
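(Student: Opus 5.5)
Under Assumption~\ref{traj_ass:2} the direction $\thetadir{t}$ converges to $\bar\thetab$ with positive margin. I will show that Adam is an \emph{approximate} normalized steepest descent with respect to $\norm{\cdot}_\infty$, in the sense of the general framework underlying Theorem~\ref{thm:main_text_momentum_nsd}, and then invoke that framework. Concretely, I aim to show that the update direction $\ub_t = \hat\m_t/\sqrt{\hat\vb_t}$ satisfies
\[
\angles{\ub_t}{\g_t} \geq (1-\epsilon_t)\norm{\g_t}_1, \qquad \norm{\ub_t}_\infty \leq C, \qquad \epsilon_t \to 0.
\]
Here $\norm{\cdot}_1$ is the dual of $\norm{\cdot}_\infty$. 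These two properties are exactly what the KKT argument needs: (i) the loss eventually decays and $\norm{\thetab_t}\to\infty$; (ii) the soft margin $\tilde\gamma$ is asymptotically nondecreasing up to summable errors; (iii) the limit direction satisfies an approximate alignment between $\bar\thetab$ and $\sum_i \lambda_i y_i\nabla f(\x_i;\bar\thetab)$ in the $\ell_\infty/\ell_1$ duality, which yields the KKT conditions of Definition~\ref{def:kkt_min_norm} with $\norm{\cdot}_\infty$.

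\textbf{Step 1: gradients vary slowly.} By homogeneity, $\nabla f$ is $(L-1)$-homogeneous, and the loss weights $e^{-\vphi(z_i)}\vphi'(z_i)$ change on a scale governed by $\dtheta t$. Using the boundedness $\norm{\hat\m_t/\sqrt{\hat\vb_t}}_\infty \le C$ and $\eta(t)=\smallo{t^{\frac1L-1}}$, I will show that over a window of length $T$ (a large multiple of $1/c_2$) we have $\norm{\thetab_s-\thetab_t}=\smallo{\norm{\thetab_t}}$. The boundedness holds coordinatewise by Cauchy--Schwarz when $c_1\ge c_2$, since $\m_t[j]^2 \le \frac{c_1}{c_2}\cdot\frac{(1-e^{-c_1 t})^2}{1-e^{-c_2 t}}\cdots \vb_t[j]$. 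Consequently each coordinate $\g_s[j]$ equals $\g_t[j](1+\smallo{1})$ up to an additive error that is small relative to $\norm{\g_t}_1$.

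\textbf{Step 2: sign behaviour.} Because $\m_t$ and $\vb_t$ are exponentially weighted averages, Step~1 gives $\hat\m_t[j]=\g_t[j]+\smallo{\norm{\g_t}_1}$ and $\sqrt{\hat\vb_t[j]}=\abs{\g_t[j]}+\smallo{\norm{\g_t}_1}$, once the contribution of the early window (controlled via Assumption~\ref{adam_ass:1}) has been exponentially forgotten. There is a further subtlety: the gradient norm itself decays exponentially in $\qmin$. Its relative decay rate over a window is bounded by $\normst{\g}\,\eta$ times a factor $\bigO{\norm{\thetab}^{L-1}}$, which tends to $0$ under \ref{lr_ass:lr_adam}. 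The monotonicity of $\eta$ is used here to compare $\eta(s)$ with $\eta(t)$ for $s\le t$. Summing over coordinates then gives
\[
\angles{\ub_t}{\g_t}=\sum_j \abs{\g_t[j]}+\smallo{\norm{\g_t}_1}.
\]
Coordinates with small $\abs{\g_t[j]}$ have bounded $\abs{\ub_t[j]}$ and contribute negligibly.

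\textbf{Main obstacle.} The hard part is Step~2, making these errors uniform relative to $\norm{\g_t}_1$ when the gradient magnitude shrinks like $e^{-\vphi(\qmin)}$. If $\g$ decays too fast within the averaging window, the old large gradients dominate $\vb_t$. I plan to bound $\frac{d}{dt}\log \Lcal$ and $\frac{d}{dt}\log\norm{\g_t}_1$ by $\bigO{\eta(t)\norm{\thetab_t}^{L-1}}=\smallo{1}$, using $\norm{\thetab_t}=\bigO{\int\eta}$ and the learning-rate assumption. This makes the gradient magnitude nearly constant over windows of length $\bigO{1/c_2}$, so the weights in the exponential averages effectively cancel. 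With this in hand, the approximate-steepest-descent framework applies and yields the claim.
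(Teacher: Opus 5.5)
\textbf{There is a genuine gap: alignment plus boundedness is not enough.} Your Steps~1--2 essentially reproduce the paper's alignment argument (Lemma~\ref{lemma:appendix_momentum_follows} plus the split into coordinates with $\abs{\g_t[j]}>\varepsilon\norm{\g_t}_1$ and the rest). But it is false that $\angles{\ub_t}{\g_t}\ge(1-\epsilon_t)\norm{\g_t}_1$ together with $\norm{\ub_t}_\infty\le C$ is ``exactly what the KKT argument needs.''

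The argument also needs the parameters to grow in $\ell_\infty$ no faster than the steepest-descent rate, namely $\limsup_{t\to\infty}\norm{\thetab_t}_\infty/\int_0^t\eta\le1$. This is $R_{\text{max}}\le1$ in Definition~\ref{def:approx_sd} with $\nu=\eta$.

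Pointwise you only have $\norm{\dtheta{t}}_\infty=\eta(t)\norm{\ub_t}_\infty\le C\eta(t)$, with $C=\frac{c_1}{\sqrt{c_2(2c_1-c_2)}}>1$ when $c_1>c_2$. The ratio can genuinely exceed $1$: right after $\abs{\g[j]}$ increases, the short-memory average $\m$ catches up before $\vb$ does.

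Your Step~2 pins $\ub_t[j]$ near $\pm1$ only on coordinates with $\abs{\g_t[j]}\gtrsim\norm{\g_t}_1$. On coordinates where the gradient is asymptotically negligible you only know $\abs{\ub_t[j]}\le C$. That makes them negligible in $\angles{\ub_t}{\g_t}$, but not in $\norm{\thetab_t}_\infty$.

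Nothing in your argument rules out such coordinates growing like $C\int_0^t\eta$. They would then dominate $\norm{\thetab_t}_\infty$, and $\angles{\thetadir{t}}{-\g_t/\norm{\g_t}_1}$ would stay near $1/C<1$ instead of tending to $1$. Your step~(iii) rests on exactly this alignment.

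In your step~(ii) the same problem appears in the term $-L\frac{d}{dt}\log\norm{\thetab_t}_\infty$. It is only bounded below by $-LC\eta/\norm{\thetab_t}_\infty$, and the alignment gain does not compensate it. So $\tilde\gamma$ is not shown to be almost nondecreasing, and errors of size $\smallo{1}\cdot\eta/\norm{\thetab_t}$ are not summable anyway.

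If $c_1=c_2$, Cauchy--Schwarz gives $\abs{\hat\m_t[j]}\le\sqrt{\hat\vb_t[j]}$ and the needed bound is immediate. The difficulty is specific to $c_1>c_2$, which is why the paper notes that $\angles{\dtheta{t}/\norm{\dtheta{t}}_\infty}{-\g_t/\norm{\g_t}_1}\to1$ need not hold for Adam.

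\textbf{The missing ingredient is an averaged bound per coordinate.} This is Lemma~\ref{lemma:lim_sup_zhang}, adapted from \citet{zhang2024_adam_linear}:
\[
\abs{\int_0^t\eta(s)\hat\m_s[j]/\sqrt{\hat\vb_s[j]}\,ds}\le\int_0^t\eta+\bigO{(\int_0^t\eta)^{1/2}}.
\]
\begin{itemize}
\item After Cauchy--Schwarz it suffices to show $\int_0^t\eta(s)\,\m_s[j]^2/\vb_s[j]\,ds\le\int_0^t\eta+\bigO{1}$.
\item This uses $\m^2\le A(\g^2,c_1)$ and the identity $\g^2=\frac1{c_2}\frac{d\vb}{dt}+\vb$ with integration by parts. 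That leaves an excess term with coefficient $\frac{c_1-c_2}{c_2}\ge0$, which is where $c_1\ge c_2$ enters.
\item It also uses $1-\frac xy\le\log\frac yx$ and a swap of the order of integration. The swapped term is controlled precisely because $\eta$ is non-increasing. That is the real role of monotonicity; your Step~1 only needs $\eta=\smallo{t^{1/L-1}}$.
\item The lemma needs $\vb_t\le1$ eventually, i.e.\ $\g_t\to\zero$. The paper gets this by first running the descent lemma with the crude bound $R_{\text{max}}\le C$ to obtain $\Lcal(\thetab_t)\to0$.
\item It also needs $\log\vb_t\ge-c_2t-\bigO{1}$, which comes from Assumption~\ref{adam_ass:1}.
\end{itemize}
With $R_{\text{max}}\le1$ in hand, no summability of errors is needed. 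Lemma~\ref{lemma:approx_sf_build_sequence} compares $\log\frac{\norm{\thetab_{\tau_2}}}{\norm{\thetab_{\tau_1}}}$ with $\int_{\tau_1}^{\tau_2}\frac{\eta}{\norm{\thetab_t}}dt$ to extract times $t_n$ with $\angles{\thetadir{t_n}}{-\g_{t_n}/\norm{\g_{t_n}}_1}\to1$. Theorem~\ref{thm:appendix_kkt_blueprint} then finishes the proof.
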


Next, we consider Muon-Adam. Here, we allow for different momentum parameters and different \textit{base} learning rates for the Muon and Adam algorithms,\footnote{The same generalization can be applied to Muon-Signum, and indeed for the distinct matrices in Muon, in Corollaries~\ref{cor:main_text_muon_signum},~\ref{cor:main_text_muon}.} and show the following:
\begin{theorem}\label{thm:main_text_muon_adam}
Assume $\thetab = (W_1,...,W_K, \ub) \in \bbR^p$ is a parameter vector representing a collection of matrices and additional parameters $\ub$. Assume $W_1,...,W_K$ follow a trajectory of Muon and $\ub$ follows a trajectory of Adam, with respective learning rates of the form $\eta_0^M\eta(t), \eta_0^A\eta(t)$ for $\eta_0^M, \eta_0^A >0$ and momentum parameters $c_{M}$ for Muon and $c_1 \geq c_2$ for Adam. Assume \ref{model_ass:smooth}, \ref{model_ass:homogeneous_1}, \ref{lr_ass:lr_adam}, \ref{traj_ass:1}, \ref{traj_ass:2}, \ref{adam_ass:1}. Then, the limit point $\bar \thetab$ of $\thetadir{t}$ is the direction of a KKT point of Problem~\eqref{eq:min_norm} with respect to
    $$\norm{\thetab} = 
    \max \left\{\frac{\eta_0^A}{\eta_0^M}\normspinf{(W_1,...,W_K)}, 
    \norm{\ub}_\infty\right\}~.$$
\end{theorem}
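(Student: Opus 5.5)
We reduce Theorem~\ref{thm:main_text_muon_adam} to the general approximate steepest descent framework behind Theorems~\ref{thm:main_text_momentum_nsd} and~\ref{thm:main_text_adam_decaying}. That framework says: if along the trajectory the update $\dtheta t$ has norm at most (asymptotically) $\eta(t)$ times a constant in some norm $\norm{\cdot}$, and $-\angles{\dtheta t}{\nabla\Lcal(\thetab_t)} \geq \eta(t)(1-\epsilon_t)\normst{\nabla \Lcal(\thetab_t)}$ with $\epsilon_t\to 0$ in the regime that matters, then the limit direction is a KKT direction for Problem~\eqref{eq:min_norm} with that norm. It therefore suffices to show that Muon-Adam is an approximate normalized steepest descent with respect to
$$\norm{\thetab}=\max\Bigl\{\tfrac{\eta_0^A}{\eta_0^M}\normspinf{(W_1,\dots,W_K)},\norm{\ub}_\infty\Bigr\}.$$
To set this up, we rescale time by $\eta_0^A\eta(t)$. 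The $W$-block then moves with speed $\frac{\eta_0^M}{\eta_0^A}$ in $\normspinf{\cdot}$, which is unit speed in the norm $\frac{\eta_0^A}{\eta_0^M}\normspinf{\cdot}$. The dual of this max norm is the sum of the block duals, namely $\frac{\eta_0^M}{\eta_0^A}\sum_k\normnuc{G_k}+\norm{\g_\ub}_1$. By the composite-algorithm equivalence (Appendix~\ref{subsection:composite_algorithms_muon}), an exact steepest step in the max norm is exactly the concatenation of the blockwise steepest steps.

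The key steps, in order, are as follows.

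\textbf{Step 1.} Take the blockwise approximation guarantees as given from the proofs of Theorems~\ref{thm:main_text_momentum_nsd} and~\ref{thm:main_text_adam_decaying}. For the Muon block, the momentum $\m^W_t$ satisfies $\normst{\m^W_t-\g^W_t}=\smallo{\normst{\g^W_t}}$, modulo exponentially small terms. This comes from $\eta(t)=\smallo{t^{1/L-1}}$ together with smoothness and homogeneity, which make the gradient direction vary slowly relative to the window $1/c_M$. For the Adam block, with $c_1\geq c_2$ and $\eta$ non-increasing, the update $-\hat\m_t/\sqrt{\hat\vb_t}$ has each coordinate bounded in absolute value by a constant, asymptotically by $1$. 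It also satisfies $\angles{\hat\m_t/\sqrt{\hat \vb_t}}{\g^\ub_t}\geq (1-\epsilon_t)\norm{\g^\ub_t}_1-\text{(error)}$.

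\textbf{Step 2.} Sum the two blockwise inequalities. The inner products and the dual norm both decompose additively over the blocks, so we obtain $-\angles{\dtheta t}{\g_t}\geq \eta_0^A\eta(t)\bigl[(1-\epsilon_t)\normst{\g_t}-\delta_t\bigr]$ in the combined dual norm. Here $\delta_t$ collects the blockwise errors.

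\textbf{Step 3.} Verify that the error $\delta_t$ has the form admitted by the framework: it must be $\smallo{\normst{\g_t}}$ or integrably small relative to the loss decay. The result then follows from the framework theorem.

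The main obstacle is Step 3. Each blockwise error is small only relative to that block's own gradient dual norm. One block's gradient could be negligible relative to the other's, but that causes no harm, because the errors are additive and each is bounded by $\smallo{\cdot}$ of its own block's dual norm, which is at most the total $\normst{\g_t}$.

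The subtler issue is Adam's additive error for coordinates with tiny gradient. For these, $\hat\m/\sqrt{\hat\vb}$ may carry the wrong sign, but it is bounded, so the error is bounded by the sum over coordinates $j$ of $\abs{\g^\ub_t[j]}$ restricted to such coordinates. We would handle this as in the proof of Theorem~\ref{thm:main_text_adam_decaying}. Using homogeneity, $\g_t$ scales like $\Lcal(\thetab_t)\norm{\thetab_t}^{L-1}$, and the relative drift of gradients over a window of length $\bigO{1/c_2}$ is $\smallo{1}$ under~\ref{lr_ass:lr_adam}. This bounds the Adam error by $\smallo{\Lcal(\thetab_t)\norm{\thetab_t}^{L-1}}$, which is also $\smallo{\normst{\g_t}}$ since $\normst{\g_t}\gtrsim \Lcal\,\norm{\thetab_t}^{L-1}\gamma$ by Euler's identity and~\ref{traj_ass:2}. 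With the errors controlled, the framework applies and yields the KKT conclusion for the stated hybrid norm.
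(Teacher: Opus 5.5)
Your reduction to approximate steepest descent and your additive handling of the two blocks in Steps~2--3 are fine. Summing each block's error against the \emph{total} dual norm is in fact a slightly simpler route than the paper's case split into $T_{\varepsilon,M}$ and $T_{\varepsilon,A}$. One small correction: Lemma~\ref{lemma:appendix_momentum_follows} gives $\normst{\m_t-\g_t}=\smallo{\normst{\g_t}}$ relative to the total dual norm, not relative to the $W$-block's own dual norm as you claim in Step~1. The total-relative bound is all your summation needs, so this does not break anything.

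The genuine gap is in the framework you invoke. It does not accept an update whose norm is merely at most a constant times $\eta(t)$. Definition~\ref{def:approx_sd} together with Theorem~\ref{thm:approx_sf_kkt_point} requires $R_{\text{max}}\le 1$, i.e.\ $\limsup_{t\to\infty}\norm{\thetab_t}/\int_0^t\nu\le 1$, for the same $\nu$ that normalizes $r(t)$. With $R_{\text{max}}=R>1$, the argument of Lemma~\ref{lemma:approx_sf_build_sequence} only yields $\angles{\thetadir{t_n}}{-\gdir{t_n}}\gtrsim 1/R$, not convergence to $1$. The conclusion can genuinely fail in that case. For example, add to normalized gradient flow on a linear model a unit drift orthogonal to the data span. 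This keeps $-\angles{\dtheta{t}}{\g_t}=\eta(t)\normst{\g_t}$ and $\norm{\dtheta{t}}_2\le 2\eta(t)$, yet tilts the limit direction off every $\ell_2$ KKT direction.

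For the $W$-block the bound with constant $1$ is automatic. Muon is normalized, so $\alpha\normspinf{\W_t}\le\alpha\normspinf{\W_0}+\int_0^t\nu$ with $\alpha=\eta_0^A/\eta_0^M$ and $\nu=\eta_0^A\eta$.

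For the Adam block, however, you assert that every coordinate of $\hat\m_t/\sqrt{\hat\vb_t}$ is asymptotically bounded by $1$ pointwise. This is not justified and need not hold when $c_1>c_2$. Lemma~\ref{lemma:appendix_momentum_follows} controls the ratio only on the significant coordinates $J_\varepsilon(t)$. On the remaining coordinates, the only available bound is the constant $c_1/\sqrt{c_2(2c_1-c_2)}$ of Lemma~\ref{lemma:momentum_ratio_bound}, which exceeds $1$ whenever $c_1>c_2$. To see the mechanism, take a small coordinate whose gradient grows in relative terms, say $g(s)\propto e^{ks}$. Then the ratio tends to $\frac{c_1}{c_1+k}\sqrt{\frac{c_2+2k}{c_2}}$, which is $>1$ for small $k>0$, because $\vb_t$ has the longer memory.

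The missing ingredient is the time-averaged bound of Lemma~\ref{lemma:lim_sup_zhang}:
\[
\limsup_{t\to\infty}\frac{\abs{\int_0^t\eta(s)\,\hat\m_s[j]/\sqrt{\hat\vb_s[j]}\,ds}}{\int_0^t\eta}\le 1 .
\]
Its proof is exactly where non-increasing $\eta$, $c_1\ge c_2$ and Assumption~\ref{adam_ass:1} are used. It also needs $\g_t[j]^2<1-\delta$ eventually, so that $\log\vb_t[j]\le 0$. This forces a two-stage argument:
\begin{enumerate}
\item Use the crude uniform bound $\hat C$ on $\norm{\hat\m_t/\sqrt{\hat\vb_t}}_\infty$ (from Lemmas~\ref{lemma:momentum_ratio_bound} and~\ref{lemma:adam_bias_correction_bounded}) to get $R_{\text{max}}\le\hat C$. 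Then Lemma~\ref{lemma:approx_sf_loss_decays_gamma} gives $\Lcal(\thetab_t)\to 0$, hence $\g_t\to\zero$.
\item Only then apply Lemma~\ref{lemma:lim_sup_zhang} to upgrade to $R_{\text{max}}\le 1$.
\end{enumerate}
Your Step~3 addresses the alignment condition $\essliminf r(t)\ge 1$, which is the easier half. The norm-growth condition is where the Adam-specific work lies, and your proposal does not supply it.
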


\section{Non-Smooth Models}\label{section:non_smooth_models}
Our results for momentum steepest descent and Adam are stated under the assumption of smooth models \ref{model_ass:smooth}. However, as our proofs (Appendix~\ref{subsection:proofs_momentum_sd}, \ref{subsection:proofs_adam}) show, this may be weakened to \ref{model_ass:lipschitz_C1}, if the normalized model subgradients converge. More specifically, denote for all $t$ subgradients $\h(\x_i; \thetab_t) \in \partial f(\x_i; \thetab_t)$ for which $\g_t = -\sum_{i=1}^m\loss{z_i^t}\varphi'(z_i^t)y_i\h(\x_i; \thetab_t)$. The condition is: 
\begin{enumerate}[label=(T\arabic*), start=3, align=left]
    \item $\forall i \in [m]: \frac{\h(\x_i ; \thetab_t)}{\norm{\thetab_t}^{L-1}}$ converges.\label{traj_ass:3}
\end{enumerate}
    
    Note first that this condition is trivially satisfied for smooth models under \ref{traj_ass:2}: by Theorem B.2(a) in \citet{lyu_li_homogeneous}, it holds that $\frac{\h(\x_i ; \thetab_t)}{\norm{\thetab_t}^{L-1}} \in \partial f(\x_i ; \thetadir{t})$. Therefore, for smooth models, i.e., $f \in C^1$, convergence of $\frac{\h(\x_i ; \thetab_t)}{\norm{\thetab_t}^{L-1}}$ is guaranteed from convergence of $\thetadir{t}$ by continuity of $\nabla f(\x_i; \thetab_t)$.
    
    For non-smooth models under \ref{model_ass:lipschitz_C1},  $\frac{\h(\x_i ; \thetab_t)}{\norm{\thetab_t}^{L-1}}$ converges whenever $\thetadir{t}$ eventually stays in the same $C^1$ stratum of $f(\x_i; \cdot)$ (if $\thetadir{t}$ is exactly on a stratum boundary, $\h$ can be chosen to conform to any of the bordering strata; the choice of $\h$ should be continuous to ensure convergence). 
    In particular, under \ref{traj_ass:2}, this holds whenever the limiting direction $\bar \thetab$ is an inner point of a stratum.
    In homogeneous ReLU networks, stratum boundaries are the parameters $\thetab_t$ for which a neuron preactivation is exactly 0. Therefore, when using a consistent choice of ReLU subgradient at 0 (which is always the case in practice),  \ref{traj_ass:3} follows from \ref{traj_ass:2} for every trajectory in which signs of neuron preactivations eventually stabilize. It is unclear whether this is satisfied in practice or under what conditions; it appears to be violated in our experiments on two-layer ReLU networks with the MNIST dataset, but we leave open the possibility that some settings comply with this condition.

\section{Main Proof Ideas}\label{section:proof_ideas}
First, as noted by \citet{tsilivis2025} and \citet{ji2020directional}, KKT stationarity of limit points of $\thetadir{t}$ is closely related to alignment of parameters and gradients $\angles{\thetadir{t}}{-\gdir{t}}$. We extract this insight into a general blueprint that serves us to prove implicit bias results on homogeneous models  satisfying \ref{model_ass:lipschitz_C1}; namely, Theorem~\ref{thm:appendix_kkt_blueprint} states that regardless of the optimization algorithm, any limit point $\bar \thetab$ of $\thetadir{t}$ with $\gamma(\bar \thetab) > 0$ is guaranteed to be a KKT point of Problem~\eqref{eq:min_norm}, if $\Lcal(\thetab_{t_n}) \toinfty{n} 0$ and $\angles{\thetadir{t_n}}{-\gdir{t_n}} \toinfty{n} 1$ on a subsequence $t_n$ for which $\thetadir{t_n} \toinfty{n} \bar \thetab$. 

\vspace{-0.1cm}
\subsection{Approximate Steepest Descent}
\vspace{-0.1cm}
Our main technical contribution is the extension of the KKT stationarity results to \emph{approximate steepest descent algorithms} and specifically momentum-based algorithms, which we describe here. Steepest descent (normalized or unnormalized) with respect to a norm $\norm{\cdot}$ may be described succinctly with the following equation for almost any $t \geq 0$:
\begin{equation}\label{eq:exact_sd_equation}
    \angles{\frac{\dtheta{t}}{\norm{\dtheta{t}}}}{-\gdir{t}} = 1~.
\end{equation}
Equation~\eqref{eq:exact_sd_equation} is the linchpin of analyses of steepest descent, as it allows to prove eventual alignment of the (negative) gradients with the parameters themselves.
When analyzing momentum-based algorithms, Equation~\eqref{eq:exact_sd_equation} will not be exactly satisfied, but the hope is that a similar relation will hold asymptotically. Hence we define:

\begin{definition}[Approximate Steepest Descent]\label{def:main_text_approx_sd}
We say that an arc $\thetab_t$ is a trajectory of Approximate Steepest Descent with respect to $\norm{\cdot}$ if there exist $\nu(t) > 0, R_{\text{max}} > 0$ with:
\begin{enumerate}
    \item $\lim_{t \to \infty} N(t) := \lim_{t \to \infty} \int_0^t \nu = \infty$;
    \item $\limsup_{t \to \infty}\frac{\norm{\thetab_t}}{N(t)} \leq R_{\text{max}}$;
    \item $\essliminf_{t \to \infty} r(t) \geq 1$, where
    $$r(t) \overset{a.e.}= \sup_{\g_t \in \partial \Lcal(\thetab_t)}\angles{\frac{1}{\nu(t)}\dtheta{t}}{-\frac{\g_t}{\normst{\g_t}}}~.$$
\end{enumerate}
\end{definition}
The quantity $\nu(t)$ can be chosen in a flexible manner; for momentum steepest descent (and indeed exact steepest descent) $\nu(t) = \norm{\dtheta{t}}$ is chosen, but for Adam we choose $\nu(t) = \eta(t)$ (the learning rate). Lemma~\ref{lemma:approx_sf_loss_decays_gamma} shows that the properties in Definition~\ref{def:main_text_approx_sd}, taken together with a positive lower bound on the margin, suffice to prove that $\Lcal(\thetab_t) \toinfty{t} 0, \norm{\thetab_t} \toinfty{t} \infty$. Building on this result and on Theorem~\ref{thm:appendix_kkt_blueprint}, in Theorem~\ref{thm:approx_sf_kkt_point} we prove that under \ref{traj_ass:2} and provided that $R_{\text{max}} \leq 1$, the limiting direction $\bar \thetab$ is a KKT point of Problem~\eqref{eq:min_norm}.

\vspace{-0.1cm}
\subsection{Asymptotic Momentum-Gradient Relations}
\vspace{-0.1cm}
Our results for momentum steepest descent (MSD) and Adam rely on the analysis of Approximate Steepest Descent. To show that MSD and Adam indeed satisfy Definition~\ref{def:main_text_approx_sd}, we analyze the properties of the momentum operator in Appendix~\ref{section:momentum_derivation}.

In particular, Corollary~\ref{cor:ema_dlog_dt} offers a key insight, namely that the ratio $\frac{m(t)}{g(t)}$ for a real-valued function $g(t)$ and its momentum estimator $m(t)$ tends to a well-defined limit whenever $\frac{d \log g}{dt}$ converges. This is applied in Lemma~\ref{lemma:appendix_momentum_follows}, which demonstrates, that in our setting of optimization trajectories, when $\norm{\dtheta{t}} \leq \smallo{t^{\frac1L-1}}$, it holds that $\m_t[j] = \g_t[j]\left(1 \pm \smallo{1}\right)$ for any coordinate $j \in[p]$ which is momentarily of \enquote{significant} magnitude at time $t$. This is formalized by observing the set $J_\varepsilon(t) = \left\{j \in[p] \mid \frac{\abs{\g_t[j]}}{\normst{\g_t}} >\varepsilon \right\}$ for an arbitrary $\varepsilon > 0$. Lemma~\ref{lemma:appendix_momentum_follows} also shows that $\frac{\m_t}{\normst{\m_t}} - \frac{\g_t}{\normst{\g_t}} \toinfty{t} 0 $, which allows proving that MSD is indeed an Approximate Steepest Descent algorithm.

In the analysis of Adam, Lemma~\ref{lemma:appendix_momentum_follows} is again vital, as it implies that $\frac{\hat \m_t[j]}{\sqrt{\hat \vb_t}[j]} = \sign{\g_t[j]}(1\pm \smallo{1})$ whenever $\g_t[j]$ is momentarily significant (as above). The approximation of Adam to sign gradient descent is in fact the essence of showing $\ell_\infty$ margin maximization, as sign gradient descent is normalized steepest descent with $\norm{\cdot}=\norm{\cdot}_\infty$. However, it is not necessarily true for Adam that $\angles{\frac{\dtheta{t}}{\norm{\dtheta{t}}_\infty}}{-\frac{\g_t}{\norm{\g_t}_1}} \toinfty{t} 1$. Instead, we rely on the flexibility of Definition~\ref{def:main_text_approx_sd}, choosing $\nu(t) = \eta(t)$. We adapt to our setting an important result proved by \citet{zhang2024_adam_linear} in the discrete case for linear models, which shows that even if momentarily $\frac{\hat \m_t[j]}{\sqrt{\hat \vb_t[j]}} > 1$, the opposite holds on average. Namely, we prove in Lemma~\ref{lemma:lim_sup_zhang} that for any $j \in [p]$
\setlength{\abovedisplayskip}{6pt}
\setlength{\belowdisplayskip}{6pt}
\begin{equation*}
\limsup_{t \to \infty}\frac{\abs{\thetab_t[j]}}{\int_0^t \eta}=\limsup_{t \to \infty} \frac{\abs{\int_0^t \eta(s)\frac{\hat \m _s[j]}{\sqrt{\hat \vb_s[j]}}ds}}{\int_0^t \eta} \leq 1~,    
\end{equation*}
allowing 
us 
to choose $R_{\text{max}} \leq 1$ for 
Definition~\ref{def:main_text_approx_sd} 
and finish the proof.
\FloatBarrier
\vspace{-0.1cm}
\section{Experiments}\label{section:experiments}
\vspace{-0.1cm}

\begin{figure*}[th]
  \centering
  \begin{subfigure}[t]{0.72\textwidth}
    \centering
  \includegraphics[width=\linewidth,trim=0 4mm 0 0mm,clip]{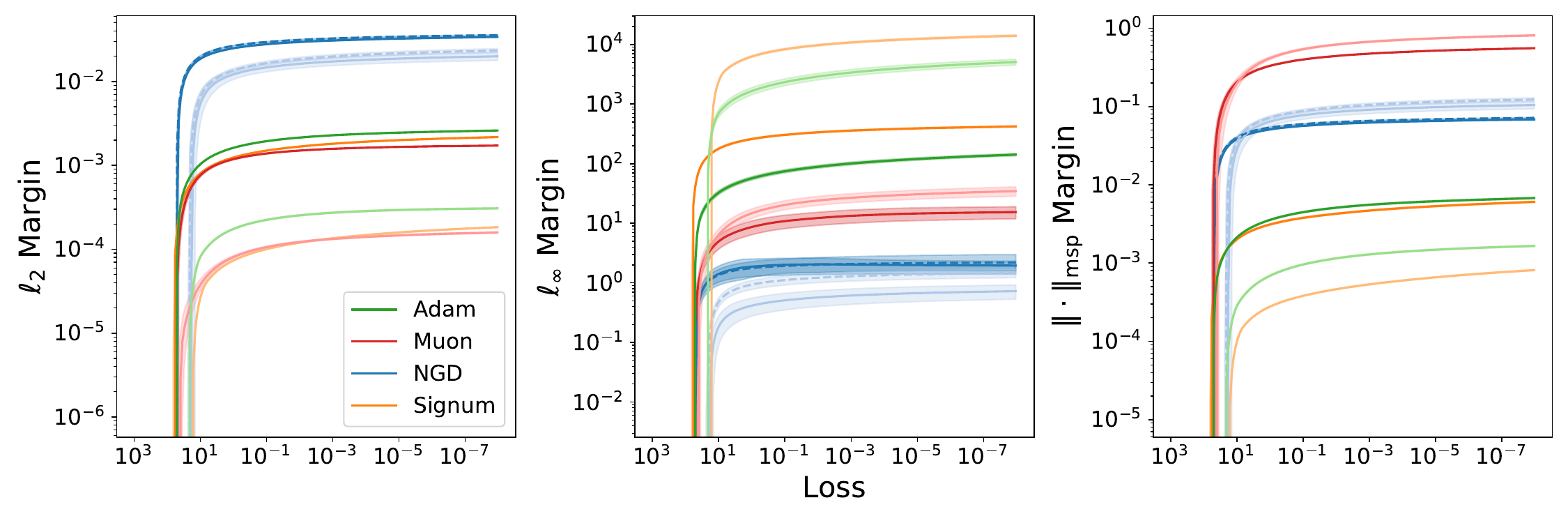}
\caption{}
  \label{fig:margin_plot_nonlinear_loss_x}
  \end{subfigure}
  \hfill
  \begin{subfigure}[t]{0.24\textwidth}
      \centering
      \includegraphics[width=\linewidth,trim=0 4mm 0 0mm,clip]{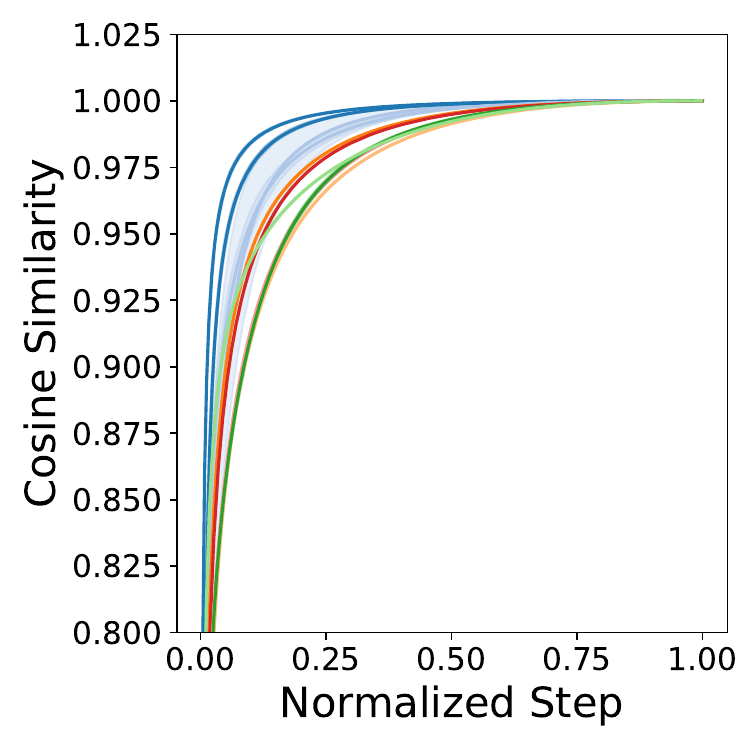}
\caption{}
  \label{fig:alignment_plot_nonlinear}
  \end{subfigure}
  \caption{(a) Margin values vs. loss for different optimizers. A lighter/darker color signifies the squared-ReLU / ReLU activations respectively. Dotted lines represent optimizers with momentum disabled. Lines are mean values over 10 random seeds, while filled areas represent one standard deviation. (b) Cosine similarity to last iterate $\angles{\frac{\thetab_t}{\norm{\thetab_t}_2}}{\frac{\thetab_{\text{last}}}{\norm{\thetab_{\text{last}}}_2}}$, plotted on a normalized linear time scale.}
  \label{fig:plots_nonlinear}
  \vskip -0.3cm
\end{figure*}

To validate our findings we train two-layer (one hidden layer) homogeneous networks to classify $m = 2048$ MNIST digits \citep{lecun2002gradient} as even or odd, using the logistic loss. Since our results hold for smooth activations, we use squared ReLU (i.e., $z \mapsto \max\{0,z\}^2$), and also run ReLU for empirical comparison. We compare the following optimizers: Normalized Gradient Descent (NGD) with and without momentum, Signum, Adam, Muon (treating the output layer as a matrix with a single row) and Muon-Adam. Training proceeds until the loss reaches a small target value ($10^{-8}$). The stability constant for Adam is chosen to be negligible with respect to gradient norm values ($\varepsilon=10^{-20}$). A decaying learning rate $\eta(t) = \eta_0t^{-0.8}$ is chosen to comply with Assumptions~\ref{lr_ass:lr_msd} and~\ref{lr_ass:lr_adam} (as $t^{-0.8} = \smallo{t^{-1/2}}$). See Appendix~\ref{section:experiment_details} for additional details and for a similar experiment on the CIFAR10 dataset \citep{krizhevsky2009learning} with a 4-layer network. 

Results are shown in Figure~\ref{fig:plots_nonlinear}. As expected, NGD (with and without momentum) achieves the largest $\ell_2$ margin, while Signum and Adam do so for the $\ell_\infty$ margin and Muon for $\normspinf{\cdot}$. These findings seem to hold empirically for ReLU as well as squared ReLU, although the latter tends to achieve a higher margin value for $\ell_\infty$-maximizing algorithms. Signum appears to outperform Adam in terms of $\ell_\infty$ margin, which is expected considering that the $\ell_\infty$-margin-maximization properties of Adam may hinge on its similarity to sign gradient descent, of which Signum is a closer approximation. Also, we observe that NGD is second-best to Muon when maximizing $\normspinf{\cdot}$, a phenomenon perhaps explained by the fact that the spectral norm of the output layer is its $\ell_2$ norm. In Appendix~\ref{section:experiment_details} (Figure~\ref{fig:margin_plot_loss_muon_adam}) we compare Muon-Adam with Muon and Adam, and show that it maximizes the appropriate margin.

To test the assumption of directional convergence ~\ref{traj_ass:2}, Figure~\ref{fig:alignment_plot_nonlinear} shows the cosine similarity of the iterates to the last iterate, $\angles{\frac{\thetab_t}{\norm{\thetab_t}_2}}{\frac{\thetab_{\text{last}}}{\norm{\thetab_{\text{last}}}_2}}$. Note that here the x-axis is a linear time scale normalized with respect to the total training time. We observe, for example, that for all algorithms, alignment is above $0.99$ for the entire second half of the trajectory, suggesting that directional convergence indeed holds. All trajectories have the margin $\gamma(\thetab_t)$ bounded away from $0$ for the entire late phase of training, validating $\gamma(\bar \thetab)> 0$. Also, Assumption~\ref{traj_ass:1} holds in all experiments.

\vspace{-0.1cm}
\section{Conclusion}
\vspace{-0.1cm}
In this work we examined the properties of the popular momentum-based optimizers Adam and Muon on smooth homogeneous models. The study is conducted through the unifying perspective of approximate steepest descent, a framework we believe to be general and widely applicable to first-order optimization methods related to the steepest descent family.
We crucially show that the momentum mechanism is asymptotically faithful to the significant gradient coordinates, when the learning rate decays. Our treatment of Muon and Muon-Signum is only a special case of compositions of normalized momentum steepest descent algorithms, while results for Adam and Muon-Adam rely directly on the framework of approximate steepest descent.

Several important questions remain open. First, our results for momentum-based optimizers hold also for \emph{non-smooth} models under a strong trajectory Assumption~\ref{traj_ass:3}, as discussed in Section~\ref{section:non_smooth_models}. It is unclear whether these algorithms have a provable margin-maximization bias for non-smooth models, notably ReLU networks, with no such assumptions, or whether Assumption~\ref{traj_ass:3} can be formally proved in certain settings. 
Second, our results assume directional convergence of the parameters. For gradient descent, the implicit bias in homogeneous models was analyzed by \citet{lyu_li_homogeneous} before \citet{ji2020directional} formally proved directional convergence; a natural question is whether a directional-convergence guarantee can also be proved for Adam and Muon.
Third, the implicit bias of gradient descent was analyzed also 
for certain non-homogeneous models \citep{nacson2019lexicographic,kunin2022asymmetric,cai2025_nearlyhomogeneous}, and it would be interesting to show such results for other optimizers.

Finally, exploring the theoretical and practical implications of our results is an intriguing research direction.
In which settings can generalization of models be deliberately improved with an informed choice of optimizer?
Are 
privacy
attacks based on satisfaction of KKT conditions, as shown in \citet{haim2022reconstructing, buzaglo2023deconstructing,oz2024reconstructing,golbari2025impmia} for gradient descent, also feasible for Adam and Muon?
What are the implications of the implicit bias in these optimizers for adversarial robustness \citep{vardi2022gradient,frei2023double}? 
We hope that our results will help advance understanding of the above questions.

\section*{Impact Statement}

This paper presents work whose goal is to advance the field of machine learning. There are many potential societal consequences of our work, none of which we feel must be specifically highlighted here.

\section*{Acknowledgments}

This work was supported by the Israel Science Foundation (grant No. 2574/25), a research grant from Mortimer Zuckerman (the Zuckerman STEM Leadership Program), and research grants from the Center for New Scientists at the Weizmann Institute of Science, and the Shimon and Golde Picker -- Weizmann Annual Grant. 

\FloatBarrier

\bibliography{refs}
\bibliographystyle{apalike}

\newpage

\appendix
\onecolumn

\section{Clarke Subgradients and Chain Rule}\label{section:clarke_subgradient_chain_rule}
Our notion of the subgradient of a function $f = f(\thetab) : \bbR^p \to \bbR$ is that of \citet{Cla75}:
\begin{equation}
    \partial f(\thetab) := \mathrm{conv} \left\{ \lim_{k \to \infty} \nabla f(\thetab_k) \mid \lim_{k \to \infty}\thetab_k = \thetab, f \text{ is differentiable at } \thetab_k \right\},
\end{equation}
where $\mathrm{conv} (\cdot)$ is the convex hull of a set (the set of finite convex combinations of points from that set).

The following basic chain rule holds for all locally Lipschitz functions.

\begin{theorem}[Theorem 2.3.9 and 2.3.10 in \cite{Cla90}]\label{thm:chain_rule_inclusion}
    Let $f_1, \ldots, f_m : \mathbb{R}^p \to \mathbb{R}$, $\Lcal: \mathbb{R}^m \to \mathbb{R}$ be locally Lipschitz functions and define $\mathbf{f} = (f_1, \ldots, f_m)$. Let $(\Lcal \circ \mathbf{f}) (\thetab) = \Lcal(f_1(\thetab), \ldots, f_m(\thetab)) : \bbR^{p} \to \bbR$ be the composition of $\Lcal$ with $f$. Then, it holds:
    \begin{equation}
        \partial (\Lcal \circ \mathbf{f}) (\thetab) \subseteq \mathrm{conv}\left\{ \sum_{i = 1}^m \alpha_i \mathbf{h}_i : \bm{\alpha} \in \partial \Lcal(f_1(\thetab), \ldots, f_m(\thetab)), \mathbf{h}_i \in \partial f_i(\thetab) \right\} ~.
    \end{equation} 
\end{theorem}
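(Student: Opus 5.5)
The statement is the Clarke chain rule (Theorems 2.3.9 and 2.3.10 in \cite{Cla90}). I would prove it in three steps: first for strictly differentiable outer and inner maps, then extend to locally Lipschitz maps through Clarke's generalized directional derivative and support functions, and finally use compactness to identify the resulting set.

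\textbf{Step 1 (generalized directional derivative).} Write $F = \Lcal\circ\mathbf f$. Recall that for a locally Lipschitz $F$,
$$F^\circ(\thetab;\vb)=\limsup_{\thetab'\to\thetab,\,s\downarrow 0}\frac{F(\thetab'+s\vb)-F(\thetab')}{s}$$
is the support function of $\partial F(\thetab)$, i.e.\ $F^\circ(\thetab;\vb)=\max_{\g\in\partial F(\thetab)}\angles{\g}{\vb}$. This identity is where Rademacher's theorem enters, since it ties the limsup definition to the gradient-limit definition used in the paper.

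\textbf{Step 2 (mean value theorems).} Take $\thetab'$ and $s$ attaining the limsup. Apply Lebourg's mean value theorem to $\Lcal$ between $\mathbf f(\thetab')$ and $\mathbf f(\thetab'+s\vb)$. This gives some $\bm\alpha_s\in\partial\Lcal(\mathbf y_s)$, with $\mathbf y_s$ on that segment, such that
$$F(\thetab'+s\vb)-F(\thetab')=\sum_i\alpha_{s,i}\bigl(f_i(\thetab'+s\vb)-f_i(\thetab')\bigr).$$
Apply the mean value theorem again to each $f_i$ on $[\thetab',\thetab'+s\vb]$. This gives
$$f_i(\thetab'+s\vb)-f_i(\thetab')=s\angles{\h_{i,s}}{\vb},\qquad \h_{i,s}\in\partial f_i(\text{a point near }\thetab').$$
The difference quotient therefore equals $\sum_i\alpha_{s,i}\angles{\h_{i,s}}{\vb}$.

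\textbf{Step 3 (passing to the limit).} Local Lipschitzness makes the Clarke subdifferentials locally bounded, and their graphs are closed (upper semicontinuity). So along a subsequence $\bm\alpha_s\to\bm\alpha\in\partial\Lcal(\mathbf f(\thetab))$ and $\h_{i,s}\to\h_i\in\partial f_i(\thetab)$. This yields
$$F^\circ(\thetab;\vb)\le\max\Bigl\{\angles{\textstyle\sum_i\alpha_i\h_i}{\vb}\Bigr\}$$
over the set $S$ appearing inside the $\mathrm{conv}$.

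\textbf{Step 4 (conclusion).} $S$ is the continuous image of a product of compact sets, hence compact, so $\mathrm{conv}(S)$ is compact and convex. Both $\partial F(\thetab)$ and $\mathrm{conv}(S)$ are compact convex sets, and the support function of the first is dominated by that of the second in every direction $\vb$. Hahn–Banach separation then gives the inclusion.

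The main obstacle is Step 2. One has to keep the intermediate points close enough to $\thetab$ and $\mathbf f(\thetab)$ that upper semicontinuity applies. One also has to handle the coupling: $\bm\alpha_s$ multiplies the differences directly, so the mean value theorem must be applied in the stated order and not to a merely linearized composite.
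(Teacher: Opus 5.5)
Your proof is correct and is essentially the standard proof of Clarke's chain rule: compare support functions via $F^\circ$, apply Lebourg's mean value theorem, and pass to the limit using local boundedness and closedness of the graph of $\partial$. The paper uses exactly this result, citing Clarke rather than proving it; your choice to apply the mean value theorem to each $f_i$ as well, instead of first passing to the limit in $\bm\alpha_s$ and invoking the sum rule for $\sum_i\alpha_i f_i$, is an inessential finite-dimensional variant. Two small remarks: the ``obstacle'' you flag in Step~2 is immediate, since all intermediate points lie on segments shrinking to $\thetab$ and $\mathbf f(\thetab)$ by continuity of $\mathbf f$; and your compactness argument in Step~4 is what justifies the statement's $\mathrm{conv}$ in place of Clarke's closed convex hull.
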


The following corollary holds for a smooth $\Lcal$, and implies in our setting for the loss $\Lcal(\thetab)=\Lcal(f(\x_1; \thetab), \dots,f(\x_m; \thetab))$ that any $\g \in \partial\Lcal(\thetab)$ is given as a sum $\g = \sum_{i=1}^m \ell'(y_if(\x_i; \thetab)) y_i\h_i$ where $\ell'$ is the derivative of the per-sample loss, and $\h_i \in \partial f(\x_i; \thetab)$.
\begin{corollary}[Chain Rule with an Outer Smooth Function]\label{cor:chain_rule_smooth_outer}
    Let $f_1, \ldots, f_m : \mathbb{R}^p \to \mathbb{R}$ be locally Lipschitz and $\Lcal: \mathbb{R}^m \to \mathbb{R}$ be $C^1$. Then
        \begin{equation}
        \partial (\Lcal \circ \mathbf{f}) (\thetab) \subseteq \left\{ \sum_{i = 1}^m \alpha_i \mathbf{h}_i : \bm{\alpha} \in \partial \Lcal(f_1(\thetab), \ldots, f_m(\thetab)), \mathbf{h}_i \in \partial f_i(\thetab) \right\} ~.
    \end{equation}
\end{corollary}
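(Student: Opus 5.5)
The plan is to start from the general inclusion of Theorem~\ref{thm:chain_rule_inclusion} and show that, when $\Lcal$ is $C^1$, the convex hull on its right-hand side can be removed. This takes two observations: the Clarke subdifferential of $\Lcal$ is a singleton, and the set inside the convex hull is then already convex.

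\textbf{Step 1: $\partial\Lcal$ is a singleton.} Write $\mathbf{u} = (f_1(\thetab),\ldots,f_m(\thetab))$. By the definition of the Clarke subdifferential, $\partial \Lcal(\mathbf{u})$ is the convex hull of all limits $\lim_k \nabla \Lcal(\mathbf{u}_k)$ with $\mathbf{u}_k \to \mathbf{u}$. Since $\Lcal \in C^1$, $\Lcal$ is differentiable everywhere and $\nabla\Lcal$ is continuous. Hence every such limit equals $\nabla \Lcal(\mathbf{u})$, and so $\partial \Lcal(\mathbf{u}) = \{\nabla\Lcal(\mathbf{u})\}$. Write $\bm\alpha = \nabla\Lcal(\mathbf{u})$ for this unique element.

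\textbf{Step 2: the inner set is convex.} With $\bm\alpha$ fixed, the set inside the convex hull becomes
$$S = \Big\{\textstyle\sum_{i=1}^m \alpha_i \h_i : \h_i \in \partial f_i(\thetab)\Big\} = \alpha_1 \partial f_1(\thetab) + \cdots + \alpha_m \partial f_m(\thetab),$$
a Minkowski sum of scalar multiples of the sets $\partial f_i(\thetab)$. Each $\partial f_i(\thetab)$ is convex, since it is defined as a convex hull. A scalar multiple of a convex set is convex. A Minkowski sum of convex sets is convex: for $\sum_i \alpha_i\h_i$ and $\sum_i \alpha_i\h_i'$ in $S$ and $\lambda \in [0,1]$, the convex combination equals $\sum_i \alpha_i(\lambda \h_i + (1-\lambda)\h_i')$, and each $\lambda \h_i + (1-\lambda)\h_i'$ lies in $\partial f_i(\thetab)$. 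Therefore $\mathrm{conv}(S) = S$.

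\textbf{Conclusion.} Substituting Steps 1 and 2 into Theorem~\ref{thm:chain_rule_inclusion} gives $\partial(\Lcal\circ\mathbf f)(\thetab) \subseteq S$, which is the claimed inclusion. The argument is routine; the only point needing care is Step 1, where continuity of $\nabla\Lcal$ is exactly what collapses the Clarke hull to the single gradient.
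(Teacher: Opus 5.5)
Your proposal is correct and matches the paper's proof: both invoke Theorem~\ref{thm:chain_rule_inclusion}, use that $\partial\Lcal=\{\nabla\Lcal\}$ for $C^1$ $\Lcal$ (which you additionally justify via continuity of $\nabla\Lcal$), and note that the resulting set is already convex because each $\partial f_i(\thetab)$ is convex. The only implicit step, in both arguments, is that a $C^1$ function is locally Lipschitz, which is what allows Theorem~\ref{thm:chain_rule_inclusion} to be applied.
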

\begin{proof}
    Denote
    $$A = \left\{ \sum_{i = 1}^m \alpha_i \mathbf{h}_i : \bm{\alpha} \in \partial \Lcal(f_1(\thetab), \ldots, f_m(\thetab)), \mathbf{h}_i \in \partial f_i(\thetab) \right\}~.$$
    Using Theorem~\ref{thm:chain_rule_inclusion}, it suffices to show that $\text{conv} A \subseteq A$. Indeed, let $n \in \bbN$, $\bm \alpha^{(j)} \in \partial \Lcal(f_1(\thetab), \ldots, f_m(\thetab)), \h_i^{(j)} \in \partial f_i(\thetab)$ for $i \in [m], j \in [n]$ and let $\lambda_j> 0, j \in[n]$ with $\sum_{j=1}^n \lambda _j = 1$. Since $\Lcal \in C^1$, $\partial \Lcal = \{\nabla \Lcal\} $ is unique at any point, so in fact $\forall j \in [n]: \bm \alpha ^{(j)} = \nabla \Lcal(f_1(\thetab),...,f_m(\thetab)) =: \bm \alpha$, and
    $$\sum_{j=1}^n\lambda_j\sum_{i=1}^m \alpha_i^{(j)}\h_i^{(j)} =  \sum_{j=1}^n \lambda_j\sum_{i=1}^m \alpha_i\h_i^{(j)} = \sum_{i=1}^m\alpha_i\sum_{j=1}^n \lambda_j\h_i^{(j)} ~.$$
    Since $\partial f_i(\thetab)$ is convex by definition, $\sum_{j=1}^n \lambda_j\h_i^{(j)} \in \partial f_i(\thetab)$, so we are finished. 
\end{proof}

We also consider the notion of Whitney-$C^1$ stratifiability. We refer the reader to Section~5.2 in \citet{Dav+20} for a technical introduction to the topic. For our purposes, it suffices to know that if $f : \bbR^p \to \bbR$ is Whitney-$C^1$ stratifiable, its graph is a finite union of $C^1$ manifolds (implying that \ref{traj_ass:3} follows from \ref{traj_ass:2} for some trajectories), with conditions along the boundaries allowing us to assume that $f$ admits a chain rule.

\begin{definition}
    $\thetab_t: [0, \infty) \to \mathbb{R}^p$ is an \textit{arc} if it is absolutely continuous on every compact interval, or equivalently if there exists $\thetab'_t : [0, \infty) \to \bbR^p $ which is Lebesgue integrable on every interval $[0,t]$ so that
    $$\forall t\geq0:\thetab_t = \thetab_0 + \int_0^t\thetab'_sds ~.$$
\end{definition}

In all the trajectories we discuss, $\thetab_t$ is defined as an integral over $\dtheta{t}$, so $\thetab_t$ is an arc.

\begin{theorem}[Chain Rule for Arcs and Stratifiable Functions - Theorem 5.8 in \cite{Dav+20}]\label{thm:chain_rule_arcs_stratifiable}
    If $f: \mathbb{R}^p \to \mathbb{R}$ is locally Lipschitz and Whitney $C^1$-stratifiable, then for any arc $\thetab_t: [0, \infty) \to \mathbb{R}^p$, almost all $t \geq 0$, and all $\mathbf{g} \in \partial f(\thetab_t)$, it holds:
    \begin{equation}
        \frac{d f(\thetab_t)}{dt} = \angles{\g}{\dtheta{t}} ~.
    \end{equation}
\end{theorem}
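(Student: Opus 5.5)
The plan is to reduce the statement to a pointwise identity on each stratum and then patch the strata together in time. Fix a Whitney $C^1$-stratification of the graph of $f$. Projecting it to $\bbR^p$ gives a finite partition of $\bbR^p$ into $C^1$ manifolds $M_1,\dots,M_K$, on each of which the restriction $f|_{M_j}$ is $C^1$. Since $\thetab_t$ is an arc and $f$ is locally Lipschitz, $t\mapsto f(\thetab_t)$ is absolutely continuous on compact intervals. Hence there is a full-measure set of times at which both $\dtheta{t}$ and $\frac{d}{dt}f(\thetab_t)$ exist.

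\textbf{Step 1 (time decomposition).} Set $T_j=\{t:\thetab_t\in M_j\}$; these sets are measurable and partition $[0,\infty)$. By the Lebesgue density theorem, almost every $t\in T_j$ is an accumulation point of $T_j$. At such a $t$, where in addition $\dtheta t$ exists, choose $s_k\to t$ with $s_k\in T_j$. Then
\[
\frac{\thetab_{s_k}-\thetab_t}{s_k-t}\to\dtheta t .
\]
These difference quotients are secants of the $C^1$ manifold $M_j$ through $\thetab_t$, so the limit lies in the tangent space $T_{M_j}(\thetab_t)$. Because $f|_{M_j}$ is $C^1$, the same sequence gives
\[
\frac{d}{dt}f(\thetab_t)=d(f|_{M_j})_{\thetab_t}\bigl(\dtheta t\bigr).
\]

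\textbf{Step 2 (projection formula).} The key lemma is that for every $x\in M_j$, every $\g\in\partial f(x)$, and every $v\in T_{M_j}(x)$, we have $\angles{\g}{v}=d(f|_{M_j})_x(v)$. By the definition of the Clarke subdifferential as a convex hull, and since the claimed identity is linear in $\g$, it suffices to treat limits $\g=\lim_k\nabla f(x_k)$ with $x_k\to x$ and $f$ differentiable at $x_k$. Passing to a subsequence, all $x_k$ lie in a single stratum $M'$ with $M_j\subseteq\overline{M'}$.

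At points of $M'$ the gradient $\nabla f(x_k)$ restricted to $T_{M'}(x_k)$ is $d(f|_{M'})_{x_k}$. Equivalently, $(\nabla f(x_k),-1)$ is normal to the graph stratum over $M'$. Whitney condition (a), applied to the graph strata, says that limits of the tangent spaces of the graph of $f|_{M'}$ contain the tangent space of the graph of $f|_{M_j}$. Therefore $(\g,-1)$ is orthogonal to every vector $(v,d(f|_{M_j})_x(v))$ with $v\in T_{M_j}(x)$, and this orthogonality is exactly the identity above.

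\textbf{Step 3 (conclusion).} Combining Steps 1 and 2 at almost every $t\in T_j$, and using that the union over $j$ of the exceptional null sets is still null, we get for almost all $t$ and all $\g\in\partial f(\thetab_t)$ that
\[
\frac{d}{dt}f(\thetab_t)=\angles{\g}{\dtheta t}.
\]

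I expect Step 2 to be the main obstacle. The delicate points are setting up the limiting argument correctly on the graph rather than on the domain, so that the Whitney condition controls the value of $f$ as well as its domain, and handling non-differentiability points that sit in lower-dimensional strata. The latter is resolved because such points are never used as approximating points in the Clarke construction: $f$ is differentiable almost everywhere (Rademacher), and one may restrict to top-dimensional strata.
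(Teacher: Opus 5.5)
Your proposal is correct and follows essentially the same route as the proof the paper relies on. The paper gives no argument of its own and imports Theorem~5.8 of Davis et al., which is proved exactly by combining two facts: first, that $\frac{d\thetab_t}{dt}\in T_M(\thetab_t)$ and $\frac{d}{dt}f(\thetab_t)=\angles{\nabla_M f(\thetab_t)}{\frac{d\thetab_t}{dt}}$ for almost every $t$ with $\thetab_t$ in a projected stratum $M$ (your Step 1); and second, the Bolte--Daniilidis--Lewis--Shiota projection formula $\angles{\g}{v}=\angles{\nabla_M f(x)}{v}$ for all $\g\in\partial f(x)$ and $v\in T_M(x)$, which you derive in Step 2 via the standard argument applying Whitney condition (a) to the graph-stratum normals $(\nabla f(x_k),-1)$.
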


Finally, we also need a chain rule for the norm $\norm{\thetab_t}$. We may circumvent the requirement that $\norm{\cdot}$ be $C^1$-stratifiable with the following definition and theorem:
\begin{definition}[Subdifferentially Regular Functions - Definition 5.3 in \cite{Dav+20}]
    $f: \mathbb{R}^p \to \mathbb{R}$ is said to be subdifferentially regular if $\forall \thetab \in \bbR^p, \g \in \partial f (\thetab)$,
    $$f(\thetab') \geq f(\thetab) + \angles{\g}
    {\thetab '-\thetab} + \smallo{\norm{\thetab' - \thetab}}\quad \text{as $\thetab' \to \thetab$}~.$$
\end{definition}
In particular, any convex function, including any norm, satisfies the above inequality without an error term $\smallo{\norm{\thetab' - \thetab}}$ and thus is subdifferentially regular.
\begin{theorem}[Chain Rule for Arcs and Subdifferentially Regular Functions - Lemma 5.4 in \cite{Dav+20}]\label{thm:chain_rule_arcs_regular}
    If $f: \mathbb{R}^p \to \mathbb{R}$ is locally Lipschitz and subdifferentially regular, then for any arc $\thetab_t: [0, \infty) \to \mathbb{R}^p$, almost all $t \geq 0$, and all $\mathbf{g} \in \partial f(\thetab_t)$, it holds:
    \begin{equation}
        \frac{d f(\thetab_t)}{dt} = \angles{\g}{\dtheta{t}} ~.
    \end{equation}
\end{theorem}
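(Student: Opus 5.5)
The plan is a direct two-sided difference-quotient argument. The one-sided inequality from subdifferential regularity, applied with $h>0$ and with $h<0$, pins the derivative of $f(\thetab_t)$ between $\angles{\g}{\dtheta{t}}$ from above and from below. No stratification or measurable selection of subgradients is needed, since the argument works pointwise in $t$ and simultaneously for every $\g \in \partial f(\thetab_t)$.

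\textbf{Step 1 (a.e. differentiability).} Fix $T>0$. Since $\thetab_t$ is an arc, it is continuous, so $K=\{\thetab_t : t\in[0,T]\}$ is compact. A locally Lipschitz $f$ is Lipschitz on a neighborhood of $K$, with some constant $L_K$. Hence $|f(\thetab_s)-f(\thetab_{s'})|\le L_K\norm{\thetab_s-\thetab_{s'}}$, and $t\mapsto f(\thetab_t)$ inherits absolute continuity on $[0,T]$. Therefore both $f(\thetab_t)$ and $\thetab_t$ are differentiable at almost every $t\in[0,T]$, with $\thetab_t$ having derivative $\dtheta{t}$ there. Taking $T\in\bbN$ and a countable union of null sets, this holds for almost all $t\ge 0$. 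Fix such a $t$.

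\textbf{Step 2 (two-sided bound).} Let $\g\in\partial f(\thetab_t)$ be arbitrary. By differentiability of the arc at $t$,
\[
\thetab_{t+h}-\thetab_t = h\dtheta{t}+\smallo{h},
\]
so in particular $\norm{\thetab_{t+h}-\thetab_t}=\bigO{|h|}$ and $\thetab_{t+h}\to\thetab_t$ as $h\to0$. Applying subdifferential regularity with $\thetab'=\thetab_{t+h}$ gives
\[
f(\thetab_{t+h})-f(\thetab_t)\ \ge\ h\angles{\g}{\dtheta{t}}+\smallo{h}.
\]
For $h>0$, divide by $h$ and let $h\downarrow0$ to get $\frac{d f(\thetab_t)}{dt}\ge\angles{\g}{\dtheta{t}}$. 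For $h<0$, dividing by $h$ reverses the inequality, and letting $h\uparrow0$ gives $\frac{d f(\thetab_t)}{dt}\le\angles{\g}{\dtheta{t}}$. Both one-sided limits equal the derivative, which exists by Step 1, so equality holds. Since $\g$ was arbitrary, the equality holds for all $\g\in\partial f(\thetab_t)$ at this $t$.

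\textbf{Main obstacle.} The only delicate point is the bookkeeping of error terms. The $\smallo{\norm{\thetab'-\thetab_t}}$ term from regularity becomes $\smallo{h}$ only because $\norm{\thetab_{t+h}-\thetab_t}=\bigO{|h|}$, which uses differentiability of $\thetab$ at $t$ and not merely absolute continuity. The full-measure set of good $t$ is chosen independently of $\g$, because regularity holds for every subgradient at every point. Hence the "for almost all $t$ and all $\g$" quantifier order in the statement comes for free.
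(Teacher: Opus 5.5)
Your proof is correct and follows essentially the same route as the standard proof of Lemma~5.4 in Davis et al., which the paper cites rather than reproves. Absolute continuity of both $\thetab_t$ and $t \mapsto f(\thetab_t)$ gives a full-measure set of common differentiability points, and the regularity inequality taken along $h \downarrow 0$ and $h \uparrow 0$ then yields matching one-sided bounds that hold simultaneously for every $\g \in \partial f(\thetab_t)$.
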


For reference, the following is a standard characterization of the subdifferential of a norm:
\begin{equation}\label{eq:norm_subdifferential}
    \partial\norm{\thetab} = \left\{\vb \in \bbR^p \mid \angles{\vb}{\thetab} = \norm{\thetab}, \normst{\vb} \leq1 \right\} ~.
\end{equation}
Note that by definition of the dual norm it holds that $\angles{\vb}{\thetab} \leq \norm{\thetab}\normst{\vb}$, so in fact when $\thetab \neq \zero$, $\normst{\vb} = 1$ for any $\vb \in \partial \norm{\thetab}$.

\section{Momentum (EMA)}\label{section:momentum_derivation}
\subsection{Discrete Momentum}
For a parameter $\beta \in (0,1)$ and a real-valued sequence $g_n, n \geq 1$, discrete momentum $m_n$ is defined as follows:
\begin{equation}\label{eq:discrete_momentum}
m_n = \beta m_{n-1} + (1 - \beta)g_{n}, \quad m_0 = 0    ~,
\end{equation}
with the explicit solution:
$$m_n = (1-\beta)\sum_{k=1}^n \beta^{n-k}g_k~.$$
We give for reference the definition of Adam in the discrete case for a sequence of subgradients $\g_n$, parameters $\beta_1, \beta_2 \in (0,1)$, a learning rate $\eta_n$  and a constant $\varepsilon \geq 0$ (in our analysis $\varepsilon = 0$). Square, division and square root are taken elementwise.

\textbf{Adam (Discrete):}
\begin{equation}\label{eq:def_adam_discrete}
    \begin{aligned}
        \m_n &= \beta_1 \m_{n-1} + (1 - \beta_1)\g_{n}, \quad \m_0 = \zero\\
        \vb_n &= \beta_2 \vb_{n-1} + (1 - \beta_2)\g_{n}^2, \quad \vb_0 = \zero\\
        \hat \m_n &= (1-\beta_1^{n})^{-1}\m_n, \quad \hat \vb_n=(1-\beta_2^{n})^{-1}\vb_n\\
        \Delta \thetab_n &= -\eta_n\frac{\hat \m_n}{\sqrt{\hat \vb_n}+\varepsilon}~.
    \end{aligned}
\end{equation}
\subsection{Continuous Momentum}
Notice that Equation~\eqref{eq:discrete_momentum} shows that the direction of update of $m_n$ is $g_n - m_{n-1}$:
$$m_n - m_{n-1} = (1-\beta)(g_n - m_{n-1})~.$$
The natural analogue for a continuous time variable $t$ and a parameter $c > 0$ is therefore:
$$\frac{dm_t}{dt} = c(g_t - m_t), \quad m_0 = 0~.$$
This equation has the following explicit solution, clearly of a similar form to the discrete version:
$$m_t = \int_0^t ce^{-c(t-s)} g_sds~.$$
This form is identical to \citep{wang2021_adam_eps_homoegeneous}, with the caveat that they assumed $c = 1 - \beta$, a form restricting $c$ to small values (note that taking $c \to \infty$ induces the regime $m_t = g_t$). To clearly uncover the connection between $\beta$ and $c$, assume that a discrete momentum iteration takes a unit time $\Delta t = 1$, and that steps are small enough so that $g_s$ is roughly constant on $[t, t+1]$. Then
\begin{align*}
  m_{t+1} - m_t &=  \int_0^{t+1} ce^{-c(t+1 -s)} g_sds - \int_0^{t} ce^{-c(t-s)} g_sds = (e^{-c}-1)m_t + \int_t^{t+1}ce^{-c(t+1 -s)} g_sds \\
  &\approx (e^{-c}-1)m_t + (1 - e^{-c})g_t = (1 - e^{-c})(g_t - m_t)~.
\end{align*}
So $c$ is analogous to $- \log \beta$, hence spanning the entire range $(0, \infty)$. When $\beta$ is close to $1$ we obtain $c \approx (1 - \beta)$.

In the definition of discrete Adam, a multiplicative factor of $(1-\beta^n)^{-1}$ is used to correct the initial bias accrued when initializing $m_0 = 0$. The continuous analogue for the bias correction is $(1-e^{-ct})^{-1}$, since for small $t$, $m_t \approx g_0\int_0^t ce^{-c(t-s)}ds = g_0(1-e^{-ct})$.

\begin{definition}
    We denote 
    $$\mathcal{F} = L^\infty_{\text{loc}}([0, \infty)), \quad \mathcal{G} =  \left\{g:[0,\infty) \to \bbR \mid \exists \tau > 0, \rho > 0:g ^2(t) > \rho \text{ a.e. on } [0, \tau]\right\}~,$$ 
    where $L^\infty_{\text{loc}}([0, \infty))$ is the space of Lebesgue measurable functions $g(t):[0,\infty) \to \bbR$ so that $g(t)$ is essentially bounded (bounded except on a measure zero set) on every compact interval $[a,b]$. We denote also for convenience
    $$M_{[a,b]}(g):=\text{ess}\sup_{[a,b]}\abs{g}~.$$
\end{definition}

It is a standard fact that $L^\infty_{\text{loc}}([0, \infty)) \subseteq L^p_{\text{loc}}([0, \infty))$ for any $p \geq 1$, the latter being the space of functions $g$ with $\int_{[a,b]} |g|^p < \infty$ on every compact interval $[a,b]$. This allows us to define the momentum expression for $g$. The class $\Gcal$ encapsulates Assumption~\ref{adam_ass:1}, facilitating a discussion of the Adam ratio.

\begin{definition}[Momentum/EMA]\label{def:momentum}
    Let $g(t) \in \mathcal{F}$. For a parameter $c > 0$ denote the following by $\ema{g}{c}(t)$ or $A(g,c)(t)$ for short:
    $$\ema{g}{c}(t) = A(g,c)(t) := \int_0^t ce^{-c(t-s)}g(s)ds ~.$$
    Also denote for convenience:
    $$I(g,c)(t) = \int_0^t e^{cs}g(s)ds, \quad I_\infty(g,c) = \lim_{t \to \infty}I(g,c)(t)~,$$
    so $A(g,c)(t) = ce^{-ct}I(g,c)(t)$.
\end{definition}

We prove a series of useful lemmas about the properties of the EMA.
\begin{lemma}[Momentum ODE]\label{lemma:momentum_ode}
    Let $g(t) \in \Fcal$ and $c > 0$. Then for almost any $t \geq 0$,
    $$\frac{dA(g,c)}{dt} = c\left(g(t) - A(g,c)(t)\right)~.$$
\end{lemma}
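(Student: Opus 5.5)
The plan is to write $A(g,c)(t) = c e^{-ct} I(g,c)(t)$ and differentiate it as a product of a smooth function and an absolutely continuous function. First, I will check that $I(g,c)$ is absolutely continuous on every compact interval $[0,T]$. On $[0,T]$ the integrand $s \mapsto e^{cs}g(s)$ is bounded in absolute value by $e^{cT} M_{[0,T]}(g)$ except on a null set. Since $g \in \Fcal$ is measurable, the integrand is therefore Lebesgue integrable on $[0,T]$.

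By the Lebesgue differentiation theorem, the indefinite integral $I(g,c)(t)=\int_0^t e^{cs}g(s)\,ds$ is then absolutely continuous on $[0,T]$. Moreover, it is differentiable at almost every $t\in[0,T]$, with $\frac{d}{dt}I(g,c)(t) = e^{ct}g(t)$. Taking a countable union over $T\in\bbN$ of the exceptional null sets shows that this identity holds for almost every $t\geq 0$.

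At every such point $t$, the product rule applies, since $ce^{-ct}$ is $C^\infty$. This gives
\begin{equation*}
\frac{dA(g,c)}{dt}(t) = -c^2 e^{-ct} I(g,c)(t) + c e^{-ct} e^{ct} g(t) = -c\,A(g,c)(t) + c\,g(t),
\end{equation*}
which is exactly $c\left(g(t) - A(g,c)(t)\right)$. As a side remark, $A(g,c)(0)=0$, so $A(g,c)$ is the (Carathéodory) solution of the momentum ODE with zero initial condition.

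There is no substantial obstacle here. The only point needing care is the measure-theoretic justification that the fundamental theorem of calculus holds almost everywhere for a merely $L^\infty_{\text{loc}}$ integrand. This is where the assumption $g\in\Fcal$ is used, and it is also why the statement is only claimed for almost every $t$ rather than for every $t$.
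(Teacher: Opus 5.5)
Your proposal is correct and follows the same route as the paper: write $A(g,c)(t) = ce^{-ct}I(g,c)(t)$, apply the fundamental theorem of calculus for Lebesgue integrals to $I(g,c)$ almost everywhere, then use the product rule. You simply supply more of the measure-theoretic justification (local integrability from $g\in\Fcal$, and the countable union of exceptional null sets) that the paper compresses into one line.
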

\begin{proof}
    Since $g$ is Lebesgue integrable, by differentiation rules and the fundamental theorem of calculus we have
    $$\frac{dA(g,c)}{dt} = -c^2e^{-ct}I(g,c)(t) + ce^{-ct}\frac{dI(g,c)}{dt} = -c A(g,c)(t)+ce^{-ct}e^{ct}g(t)=c(g(t)-A(g,c)(t))$$
\end{proof}

\begin{lemma}[Uniform Bound on Adam Ratio]\label{lemma:momentum_ratio_bound}
Let $g(t) \in \mathcal{F}$ and $c_1 > \frac{c_2}{2}>0$. Then,
$$\abs{A(g,c_1)(t)} \leq A(\abs{g},c_1)(t) \leq\frac{c_1}{\sqrt{c_2(2c_1-c_2)}}\sqrt{A(g^2,c_2)(t)}~.$$
In particular,
$$A(\abs{g},c_1)(t) \leq \bigO{\sqrt{A(g^2,c_2)(t)}}~,$$
and
$$A(\abs{g},c_1)(t) \leq \sqrt{A(g^2,c_1)(t)}~.$$
\end{lemma}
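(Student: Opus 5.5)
The first inequality $\abs{A(g,c_1)(t)} \leq A(\abs{g},c_1)(t)$ follows from the triangle inequality for integrals, since the kernel $c_1e^{-c_1(t-s)}$ is positive. The substance is the second inequality, which I will prove with a single application of Cauchy--Schwarz. The plan is to split the kernel of $A(\abs g,c_1)$ into a factor that reproduces the $c_2$-kernel and a leftover factor that can be integrated explicitly.

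Write
$$c_1e^{-c_1(t-s)}\abs{g(s)} = \left(c_1 e^{-(c_1-\frac{c_2}{2})(t-s)}\right)\cdot\left(e^{-\frac{c_2}{2}(t-s)}\abs{g(s)}\right)~.$$
Applying Cauchy--Schwarz on $[0,t]$ (legitimate since $g\in\Fcal$ implies $g^2$ is locally integrable) gives
$$A(\abs g,c_1)(t)\leq \left(\int_0^t c_1^2e^{-(2c_1-c_2)(t-s)}ds\right)^{1/2}\left(\int_0^t e^{-c_2(t-s)}g(s)^2ds\right)^{1/2}~.$$
The second factor equals $\sqrt{A(g^2,c_2)(t)/c_2}$. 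Since $2c_1-c_2>0$ by assumption, the first factor is at most $c_1/\sqrt{2c_1-c_2}$. Multiplying these yields exactly the constant $\frac{c_1}{\sqrt{c_2(2c_1-c_2)}}$.

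The ``in particular'' statements then follow directly. The $\bigO{\cdot}$ form holds because the constant depends only on $c_1,c_2$. For the last inequality, set $c_2=c_1$, which satisfies $c_1>c_1/2$; the constant becomes $c_1/\sqrt{c_1\cdot c_1}=1$. Equivalently, this is Jensen's inequality for the probability-like kernel of total mass $1-e^{-c_1t}\le 1$.

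I do not expect any real obstacle. The only point requiring care is choosing the exponent split $c_1-\frac{c_2}{2}$ versus $\frac{c_2}{2}$, since this is precisely where the hypothesis $c_1>\frac{c_2}{2}$ is needed to make the first integral finite uniformly in $t$.
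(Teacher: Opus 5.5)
Your proposal is correct and follows the paper's proof essentially line by line. The paper uses the same kernel split $c_1e^{-(c_1-\frac{c_2}{2})(t-s)}\cdot e^{-\frac{c_2}{2}(t-s)}\abs{g(s)}$, applies Cauchy--Schwarz, and bounds $1-e^{-(2c_1-c_2)t}\le 1$. Your specialization $c_2=c_1$ for the final inequality is exactly the intended reading of the ``in particular'' claims.
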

\begin{proof}
By the triangle inequality and Cauchy-Schwarz on the inner product space of $L^2$-integrable functions on $[0,t]$, we get
\begin{align*}
    \abs{A(g,c_1)(t)} &\leq \int_0^t c_1e^{-c_1(t-s)}\abs{g(s)}ds = \int_0^t c_1e^{-(c_1 - \frac{c_2}{2})(t-s)}e^{-\frac{c_2}{2}(t-s)}\abs{g(s)}ds \\
    &\leq c_1\left(\int_0^t e^{-c_2(t-s)}g^2(s)ds \right)^{\frac12} \left(\int_0^t e^{-(2c_1 - c_2)(t-s)}ds\right)^{\frac12} \\
    & \leq \frac{c_1}{\sqrt{c_2}} \sqrt{A\left(g^2,c_2\right)(t)} \left( \frac{1 - e^{-(2c_1-c_2)t}}{2c_1-c_2}\right)^{\frac12} \\
    & \leq \frac{c_1}{\sqrt{c_2(2c_1-c_2)}}\sqrt{A\left(g^2,c_2\right)(t)}
\end{align*}
\end{proof}

\begin{lemma}[Asymptotic Relations]\label{lemma:ema_finite_infinite_time}
Let $c > 0$ and $0 \leq g(t) \in \mathcal{F}$. Assume $g$ is not a.e. 0.
\begin{enumerate}
    \item If $I_\infty(g,c) < \infty$ then $\frac{A(g,c)(t)}{e^{-ct}} \toinfty{t} cI_\infty(g,c) > 0$ and
    $$\forall F \in \Fcal, I_\infty(F,c) < \infty:\frac{A(F,c)}{A(g,c)}\longrightarrow \frac{I_\infty(F,c)}{I_\infty(g,c)} ~.$$
    \item If $I_\infty(g,c) = \infty$ then $\frac{A(g,c)(t)}{e^{-ct}} \to \infty$ and
    \begin{enumerate}
        \item $$\forall t_0 \geq 0:\abs{\frac{\int_{t_0}^tce^{-c(t-s)}g(s)ds}{A(g,c)(t)} - 1} \leq \bigO{\frac{e^{-ct}}{A(g,c)(t)}} \overset{t \to \infty}{\rightarrow}0 ~.$$
        \item For any $F(t) \in \mathcal{F}$ and $C >0, t_0 \geq 0$:
        $$\left(\forall t \geq t_0:F(t) \leq Cg(t) \right) \Rightarrow A(F,c)(t) \leq CA(g,c)(t)(1+\smallo{1}) ~,$$
        $$\left(\forall t \geq t_0:F(t) \geq Cg(t)\right) \Rightarrow A(F,c)(t) \geq CA(g,c)(t)(1 - \smallo{1}) ~.$$
        \item If eventually $g(t) > 0$, for any $F(t) \in \mathcal{F}$ with $\frac{F(t)}{g(t)} \toinfty{t} C \in [-\infty, \infty]$ it holds that
        $$\frac{A(F,c)}{A(g,c)}\toinfty{t} C ~.$$
    \end{enumerate}
    \item In both cases ($I_\infty(g,c) = \infty, I_\infty(g,c) < \infty$), for any $0 \leq F(t) \in \mathcal{F}$:
    $$F(t) \leq \bigO{g(t)} \Rightarrow A(F,c)(t) \leq \bigO{A(g,c)(t)}~,$$
    $$F(t) \geq \Omega{(g(t))} \Rightarrow A(F,c)(t) \geq \Omega{(A(g,c)(t))}~.$$
\end{enumerate}
\end{lemma}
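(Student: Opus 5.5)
We work throughout with the integral $I(g,c)(t)=\int_0^t e^{cs}g(s)\,ds$, using the identity $A(g,c)(t)=ce^{-ct}I(g,c)(t)$ from Definition~\ref{def:momentum}. Since $g\ge 0$ and $g$ is not a.e.\ zero, $I(g,c)$ is nondecreasing and eventually positive. Every claim then reduces to comparing the integrals $I(F,c)$ and $I(g,c)$, because the prefactor $ce^{-ct}$ is common to both EMAs and cancels in their ratio.

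\emph{Case 1.} Suppose $I_\infty(g,c)<\infty$. Then $A(g,c)(t)/e^{-ct}=cI(g,c)(t)\to cI_\infty(g,c)$, and this limit is positive because $g\ge0$ is not a.e.\ $0$. For any $F$ with $I_\infty(F,c)<\infty$, the ratio $A(F,c)/A(g,c)$ equals $I(F,c)(t)/I(g,c)(t)$, which converges to $I_\infty(F,c)/I_\infty(g,c)$.

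\emph{Case 2.} Suppose $I_\infty(g,c)=\infty$. Then $A(g,c)/e^{-ct}=cI(g,c)(t)\to\infty$ immediately.

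For (a), the numerator differs from $A(g,c)(t)$ by $ce^{-ct}I(g,c)(t_0)$, a constant times $e^{-ct}$. Dividing by $A(g,c)(t)$ gives an error of order $\bigO{e^{-ct}/A(g,c)(t)}$, which tends to $0$ by the previous line.

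For (b), assume $F\le Cg$ on $[t_0,\infty)$ and split the integral at $t_0$. The part on $[t_0,t]$ is at most $C\int_{t_0}^t ce^{-c(t-s)}g$, which by (a) equals $CA(g,c)(t)(1+\smallo{1})$. The part on $[0,t_0]$ is $ce^{-ct}I(F,c)(t_0)$. This is finite since $F\in\Fcal$ is locally essentially bounded, so it is $\bigO{e^{-ct}}=\smallo{A(g,c)(t)}$. The lower bound is symmetric.

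For (c), first take $C$ finite. Fix $\epsilon>0$ and choose $t_0$ such that $(C-\epsilon)g\le F\le (C+\epsilon)g$ for $t\ge t_0$; this is possible since $g>0$ eventually. Applying (b) to both inequalities gives $\limsup\abs{A(F,c)/A(g,c)-C}\le\epsilon$. Here the $(1\pm\smallo{1})$ factors must be handled with signs in mind when $C\pm\epsilon<0$. The cleanest route is to run the splitting argument of (b) directly on $F-(C\pm\epsilon)g$, bounding the head term by $\bigO{e^{-ct}}$. For $C=+\infty$, take $F\ge Kg$ eventually for every $K$, so the liminf of the ratio is at least $K$; $C=-\infty$ is symmetric.

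\emph{Part 3.} Suppose $0\le F\le Kg$ for $t\ge t_0$. In Case 2, (b) gives the bound directly. In Case 1, we have $A(F,c)(t)\le ce^{-ct}\bigl(I(F,c)(t_0)+KI(g,c)(t)\bigr)$. Since $I(g,c)(t)\ge I(g,c)(t_1)>0$ for all large $t$, the constant $I(F,c)(t_0)$ is absorbed into $\bigO{I(g,c)(t)}$, giving $A(F,c)\le\bigO{A(g,c)}$. For the lower bound, assume $F\ge kg$ for $t\ge t_0$. Then $I(F,c)(t)\ge k\bigl(I(g,c)(t)-I(g,c)(t_0)\bigr)$. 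If $g$ has positive mass after $t_0$, this is $\Omega(I(g,c)(t))$ for large $t$, in either case. If $g$ vanishes a.e.\ after $t_0$, the relation is degenerate; such $g$ would need to be excluded or treated separately, and this is the one fiddly point of Part 3.

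The main obstacle is this bookkeeping of the initial segment and of signs. That is (c) when $C\le0$ or $F$ changes sign, and the lower bound in Part 3 when $g$ may vanish on tails. Everything else is a direct consequence of $A=ce^{-ct}I$.
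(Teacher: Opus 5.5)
Your proposal is correct and follows the paper's proof essentially step for step: reduce everything to $A=ce^{-ct}I$, split integrals at $t_0$, and absorb the head as $\bigO{e^{-ct}}$. Your uniform treatment of 2(c) via $F-(C\pm\epsilon)g$ is a tidier version of the paper's separate cases $C>0$, $C=0$ (via $\abs{F}+\epsilon g$) and $C<0$ (via $-F$). The degenerate case you flag in Part~3 is a genuine counterexample to the statement as written, not a hole in your argument: take $g=\mathbf{1}_{[0,1]}$ and $F\equiv 0$, so $F\ge\Omega(g)$ holds eventually, yet $A(F,c)\equiv 0$ while $A(g,c)>0$. The paper's proof, which appeals to the limit $I_\infty(F,c)/I_\infty(g,c)$ from item~1, silently shares this blind spot. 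Whenever $F$ is not a.e.\ zero the case closes at once, since $I(F,c)$ is eventually bounded below by a positive constant while $I(g,c)$ stays bounded.
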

\begin{proof}
\begin{enumerate}
    \item By definition $A(g,c)(t) = ce^{-ct}\int_{0}^te^{cs}g(s)ds$, so 
    $$\lim_{t \to \infty}\frac{A(g,c)(t)}{e^{-ct}} = \lim_{t \to \infty}cI(g,c)(t) = cI_\infty(g,c)~.$$
   And for any $g, F$ with $I_\infty(F,c) < \infty$,
   $$\lim_{t \to \infty}\frac{A(F,c)(t)}{A(g,c)(t)} = \lim_{t \to \infty}\frac{A(F,c)(t)}{ce^{-ct}}\frac{ce^{-ct}}{A(g,c)(t)} = \frac{I_\infty(F,c)}{I_\infty(g,c)} ~.$$
    \item \begin{enumerate}
        \item Note that since $g \geq 0$, 
    $$\int_{t_0}^tce^{-c(t-s)}g(s)ds \leq A(g,c)(t) \leq M_{[0,t_0]}(g) \cdot (e^{-c(t-t_0)}-e^{-ct}) + \int_{t_0}^tce^{-c(t-s)}g(s)ds ~.$$
    Since $I_\infty (g,c) = \infty$, in particular $A(g,c)(t) > 0$ for large enough $t$. Dividing by $A(g,c)(t)$ we get
    $$1 - \frac{M_{[0,t_0]}(g) \cdot (e^{-c(t-t_0)}-e^{-ct})}{A(g,c)(t)}\leq \frac{\int_{t_0}^tce^{-c(t-s)}g(s)ds}{A(g,c)(t)} \leq 1~,$$
    $$\abs{\frac{\int_{t_0}^tce^{-c(t-s)}g(s)ds}{A(g,c)(t)} - 1} \leq \frac{M_{[0,t_0]}(g) \cdot (e^{-c(t-t_0)}-e^{-ct})}{A(g,c)(t)} = \bigO{\frac{e^{-ct}}{A(g,c)(t)}} = \smallo{1}~.$$
    \item Let $C>0,t_0 \geq 0$ with $\forall t \geq t_0: F(t) \leq Cg(t)$. Then,
    $$A(F,c)(t) \leq M_{[0, t_0]}(F)\cdot(e^{-c(t-t_0)}-e^{-ct})+ C\int_{t_0}^tce^{-c(t-s)}g(s)ds~.$$
    Therefore by item 2(a),
    $$A(F,c)(t) \leq \bigO{e^{-ct}} + CA(g,c)(t)(1+\smallo{1}) \leq \smallo{A(g,c)} + CA(g,c)(t)(1+\smallo{1}) = CA(g,c)(t)(1+\smallo{1})~.$$
    The other direction is completely symmetrical.
    \item First address the case $C \in (0, \infty)$. Since $C > 0$, by the previous item,
    $$\frac{A(F,c)(t)}{A(g,c)(t)} \leq C(1+\smallo{1}) \toinfty{t} C~,$$
    so
    $$\limsup_{t \to \infty}{\frac{A(F,c)(t)}{A(g,c)(t)}} \leq C~.$$
    In the same fashion
    $$\liminf_{t \to \infty}{\frac{A(F,c)(t)}{A(g,c)(t)}} \geq C~,$$
    showing the limit. If $C = 0$, then fixing $\varepsilon>0$, $\frac{\abs{F} + \varepsilon g}{g} \toinfty{t} \varepsilon$, implying by the previous case
    $$\frac{A(\abs{F},c)}{A(g,c)} + \varepsilon=\frac{A(\abs{F},c) + \varepsilon A(g,c)}{A(g,c)} = \frac{A(\abs{F} + \varepsilon g, c )}{A(g,c)} \toinfty{t} \varepsilon~,$$
    and so
    $$\abs{\frac{A(F,c)}{A(g,c)}} \leq \frac{A(\abs{F},c)}{A(g,c)} \toinfty{t} 0 ~.$$
    For $C \in (-\infty, 0)$, applying the case $C > 0$ with $-F$ suffices, since $\frac{-F}{g} \to -C > 0$ and $A(-F,c) = -A(F,c)$. This finishes for a finite $C$. If $C = \infty$ then the above shows $\liminf_{t \to \infty}{\frac{A(F,c)(t)}{A(g,c)(t)}} \geq C'$ for any $C' > 0$, showing $\liminf_{t \to \infty}{\frac{A(F,c)(t)}{A(g,c)(t)}} = \infty$, and symmetrically for $C = -\infty$. 
    \end{enumerate}
    \item If $I_\infty(g,c) =\infty$ then the result follows from item 2(b). Assume $I_\infty(g,c) < \infty$. If $F(t) \leq \bigO{g(t)}$ then $I_\infty(F,c) < \infty$, so the result follows from item 1. If $F(t) \geq \Omega(g(t))$, consider two cases: if $I_\infty(F,c) < \infty$ then the result again follows from item 1, and if $I_\infty(F,c) = \infty$ then $\frac{A(F,c)}{A(g,c)} = \frac{I(F,c)(t)}{I(g,c)(t)}\to \infty$ and in particular $A(F,c) \geq \Omega(A(g,c))$.
\end{enumerate}
    
\end{proof}

\begin{corollary}\label{cor:g_bound_implies_ema_bound}
    Let $c>0$ and $g \in \mathcal{F}$. 
    \begin{enumerate}
        \item If eventually $g(t) \leq M$ for some $M \in \bbR$ then $A(g,c)(t) \leq M +\smallo{1}$, and if eventually $g(t) \geq M$ then $A(g,c)(t) \geq M - \smallo{1}$.
        \item If $\lim_{t \to \infty}g(t) = C \in [-\infty, \infty]$ then $\lim_{t \to \infty}A(g,c)(t) = C$.
    \end{enumerate}
\end{corollary}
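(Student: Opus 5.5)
We prove the two items of Corollary~\ref{cor:g_bound_implies_ema_bound} by reducing to the positive function $g\equiv 1$ and invoking Lemma~\ref{lemma:ema_finite_infinite_time}. The starting observation is the explicit formula $A(1,c)(t)=\int_0^t ce^{-c(t-s)}ds = 1-e^{-ct}$. From it, $I_\infty(1,c)=\infty$ and $A(1,c)(t)\to 1$. So the constant function $1$ falls into case 2 of Lemma~\ref{lemma:ema_finite_infinite_time}.

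For item 1, assume $g(t)\le M$ for all $t\ge t_0$. We do not want to require $M>0$, so we avoid item 2(b) of the lemma, which is stated for $C>0$, and argue directly instead. Split the integral at $t_0$:
\[
A(g,c)(t)=\int_0^{t_0}ce^{-c(t-s)}g(s)ds+\int_{t_0}^t ce^{-c(t-s)}g(s)ds .
\]
The first term is bounded in absolute value by $M_{[0,t_0]}(g)\,(e^{-c(t-t_0)}-e^{-ct})=\smallo{1}$, which is finite since $g\in\Fcal$. For the second term, the pointwise bound $g\le M$ and the positivity of the kernel give an upper bound of $M\int_{t_0}^t ce^{-c(t-s)}ds = M(1-e^{-c(t-t_0)}) = M+\smallo{1}$, for either sign of $M$. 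Adding the two terms gives $A(g,c)(t)\le M+\smallo{1}$. The lower bound follows by the same computation, or by applying the upper bound to $-g$, using linearity $A(-g,c)=-A(g,c)$.

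For item 2, first take $C$ finite. For any $\varepsilon>0$, $g$ is eventually in $[C-\varepsilon,C+\varepsilon]$. Item 1 then gives $C-\varepsilon-\smallo{1}\le A(g,c)(t)\le C+\varepsilon+\smallo{1}$, so the $\limsup$ and $\liminf$ both lie within $\varepsilon$ of $C$. Letting $\varepsilon\to0$ gives the limit. If $C=\infty$, then for every $M$, eventually $g\ge M$, so item 1 gives $\liminf A(g,c)\ge M$ for every $M$, and hence $A(g,c)\to\infty$. The case $C=-\infty$ is symmetric.

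Alternatively, the finite case follows immediately from item 2(c) of Lemma~\ref{lemma:ema_finite_infinite_time} with the positive function $1$. That gives $A(g,c)/A(1,c)\to C$, and $A(1,c)\to1$ then yields the claim.

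The only point needing care is that Lemma~\ref{lemma:ema_finite_infinite_time} 2(b) assumes $C>0$, whereas $M$ in item 1 may be zero or negative. This is why item 1 uses the direct split above rather than the lemma. Apart from that the argument is routine.
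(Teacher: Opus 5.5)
Your proof is correct; item~1 matches the paper's argument, and your main argument for item~2 differs slightly from the paper's.

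Item~1 is exactly the paper's argument. You split at $t_0$, bound the head by $M_{[0,t_0]}(g)(e^{-c(t-t_0)}-e^{-ct})$ and the tail by $M(1-e^{-c(t-t_0)})$. This works for any sign of $M$, so the issue of Lemma~\ref{lemma:ema_finite_infinite_time}~2(b) needing a positive constant does not arise in the paper either.

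For item~2, the paper uses the route you list as your ``alternative''. It sets $h\equiv 1$, notes $I_\infty(h,c)=\infty$ and $A(h,c)=1-e^{-ct}$, and applies Lemma~\ref{lemma:ema_finite_infinite_time}~2(c) to $g/h\to C$. Since 2(c) is stated for $C\in[-\infty,\infty]$, this covers the infinite limits in the same stroke. Restricting that alternative to finite $C$ is therefore unnecessary.

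Your primary argument instead derives item~2 from item~1: an $\varepsilon$-sandwich for finite $C$, and the bound with arbitrary $M$ for $C=\pm\infty$. This is more elementary and self-contained, because it avoids the ratio lemma, whose own proof goes through 2(b) and the $\abs{F}+\varepsilon g$ trick. The cost is a separate case split for infinite limits.
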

\begin{proof}
    Let $M \in \bbR$, and $t_0$ with $\forall t \geq t_0: g(t) \leq M$. Then
    $$A(g,c)(t) \leq M_{[0,t_0]}(g)(e^{-c(t-t_0)}-e^{-ct}) + M(1 - e^{-c(t-t_0)}) \leq M +\smallo{1}~.$$
    The lower bound is symmetrical.

    For the limit, choose $h \equiv 1 \in \mathcal{F}$.  Clearly $I_\infty(h,c) = \infty$, and $A(h,c) = 1 - e^{-ct}$. If $\lim_{t \to \infty}g(t) = \lim_{t \to \infty}\frac{g(t)}{h(t)} = C$ then $\lim_{t \to \infty}\frac{A(g,c)}{A(h,c)} = C$ by Lemma~\ref{lemma:ema_finite_infinite_time}, implying $\lim_{t \to \infty}A(g,c)(t) = C$.
\end{proof}

The following lemma gives sufficient conditions for $\frac{A(g,c)}{g}$ converging to a constant ratio; see the following corollary for a simpler condition for differentiable functions.
\begin{lemma}\label{lemma:dominated_convergence}
    Let $c > 0$ and $0 < g(t) \in \mathcal{F}$. Assume that 
    \begin{enumerate}
        \item $\lim_{t \to \infty}\frac{e^{-ct}}{g(t)} = 0$.
        \item There exists $k \in [0, c)$ so that for every fixed $u > 0$, $\frac{g(t-u)}{g(t)} \toinfty{t} e^{ku}$.
        \item There exists $M(u) \geq 0$ with $\int_0^\infty ce^{-cu}M(u)du < \infty$, and $t_0 \geq 0$ so that $\forall t \geq t_0,0<u<t-t_0:\frac{g(t-u)}{g(t)} \leq M(u)$.
    \end{enumerate}
     Then $\frac{A(g,c)(t)}{g(t)} \to \frac{c}{c-k}$.
\end{lemma}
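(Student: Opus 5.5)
We substitute $s = t-u$ in the definition of the EMA, which gives
$$\frac{A(g,c)(t)}{g(t)} = \int_0^t c e^{-cu}\,\frac{g(t-u)}{g(t)}\,du~.$$
The target value is $\int_0^\infty c e^{-cu} e^{ku}\,du = \frac{c}{c-k}$, which is finite because $k<c$. So the plan is to pass to the limit under the integral using dominated convergence, with condition 2 giving pointwise convergence and condition 3 giving the dominating function. The only complication is that condition 3 only controls $u \in (0, t-t_0)$, i.e.\ the window $s\in(t_0,t)$, so the initial segment $s\in[0,t_0]$ must be handled separately.

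\textbf{Step 1 (splitting off the initial segment).} We write
$$A(g,c)(t) = \int_0^{t_0} c e^{-c(t-s)} g(s)\,ds + \int_{t_0}^{t} c e^{-c(t-s)} g(s)\,ds~.$$
The first term is at most $M_{[0,t_0]}(g)\, e^{-c(t-t_0)}$, since $g\in\Fcal$ is essentially bounded on $[0,t_0]$. Dividing by $g(t)$, this is $\bigO{e^{-ct}/g(t)}$, which tends to $0$ by condition 1.

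\textbf{Step 2 (dominated convergence on the remaining segment).} After the substitution, the second term divided by $g(t)$ equals
$$\int_0^\infty \mathbf{1}[u<t-t_0]\, c e^{-cu}\, \frac{g(t-u)}{g(t)}\,du~.$$
For each fixed $u>0$, the integrand converges to $c e^{-cu} e^{ku}$ as $t\to\infty$: the indicator is eventually $1$, and condition 2 gives the ratio limit. For $t\ge t_0$, condition 3 bounds the integrand by $c e^{-cu}M(u)$, which is integrable by assumption. Dominated convergence (applied along any sequence $t_n\to\infty$) then shows that this term tends to $\int_0^\infty c e^{-(c-k)u}\,du = \frac{c}{c-k}$. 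Combining with Step 1 proves the claim.

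\textbf{Main obstacle.} The only delicate point is the measure-theoretic bookkeeping. We need the map $u\mapsto g(t-u)/g(t)$ to be measurable, which holds because $g$ is measurable. We also need condition 3 to hold for almost every $u$ in the window, which suffices for dominated convergence. Finally, dominated convergence is stated for sequences, so we apply it along an arbitrary sequence $t_n\to\infty$ and conclude the limit for continuous $t$ from that. None of this requires new ideas.
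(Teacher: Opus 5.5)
Your proposal is correct and follows the paper's proof essentially step for step. Both split off $[0,t_0]$ and kill that piece with condition 1. Both then apply dominated convergence to $\mathbf{1}_{\{u\le t-t_0\}}\,ce^{-cu}\frac{g(t-u)}{g(t)}$, dominated by $ce^{-cu}M(u)$. The only cosmetic difference is that the paper writes the split as an upper and a lower bound (using $g>0$ for the lower one) rather than as an equality plus a vanishing term.
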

\begin{proof}
    Denote $H_t(u) = \mathbf{1}_{\{u \leq t-t_0\}}\cdot ce^{-cu}\frac{g(t-u)}{g(t)}$. For any $t \geq t_0$,
    \begin{align*}
      \frac{A(g,c)(t)}{g(t)} &= \frac{1}{g(t)}\int_{0}^{t_0}ce^{-c(t-s)}g(s)ds + \int_{t_0}^{t}ce^{-c(t-s)}\frac{g(s)}{g(t)}ds =\\
      & \leq \frac{M_{[0,t_0]}(g)}{g(t)}\left(e^{-c(t-t_0)}- e^{-ct}\right) + \int_{0}^{t-t_0}ce^{-cu}\frac{g(t-u)}{g(t)}du = \\
      &=\frac{M_{[0,t_0]}(g)}{g(t)}\left(e^{-c(t-t_0)}- e^{-ct}\right) + \int_{0}^{\infty}H_t(u)du~.
    \end{align*}
    And since $g(t) > 0, A(g,c)(t) > 0$,
    $$\frac{A(g,c)(t)}{g(t)} \geq \int_0^{t-t_0}ce^{-cu}\frac{g(t-u)}{g(t)}du = \int_0^\infty H_t(u)du~.$$
    Recall that by hypothesis $\frac{e^{-ct}}{g(t)} \toinfty{t} 0$. Also, by hypothesis, $H_t(u) \leq ce^{-cu}M(u)$ with $\int_{0}^\infty ce^{-cu}M(u)du < \infty$. Finally note $H_t(u) \toinfty{t} ce^{-(c-k)u}$ pointwise. Therefore by the dominated convergence theorem on $[0, \infty)$: 
    $$\lim_{t \to \infty}\frac{A(g,c)(t)}{g(t)}=\lim_{t \to \infty}\int_{0}^\infty H_t(u)du = \int_0^\infty \lim_{t \to \infty}H_t(u)du = \int_{0}^\infty ce^{-(c-k)u}du=\frac{c}{c-k}$$
\end{proof}

The following useful corollary is stated for $g(t) > 0$, but if $g(t) < 0$ it applies to $-g$ and $A(-g,c) = -A(g,c)$. Therefore for $g(t) \neq 0$ with a constant sign, the condition $-\frac{g'}{g}=-\frac{d\log \abs{g}}{dt} \to k$ implies items 1-3 as written.
\begin{corollary}[Asymptotic Momentum-Function Ratio ]\label{cor:ema_dlog_dt}
    Let $c > 0$ and $0 < g(t) \in \mathcal{F}$.
    Assume $g(t)$ is differentiable almost everywhere, locally absolutely continuous and $\esslim_{t \to \infty}-\frac{g'(t)}{g(t)} =\esslim_{t \to \infty}-\frac{d \log g(t)}{dt}= k \in [0, \infty)$. Then:
    \begin{enumerate}
        \item If $k > c$ then $\frac{A(g,c)(t)}{e^{-ct}} \toinfty{t} cI_\infty(g,c) < \infty$ and in particular $\frac{A(g,c)(t)}{g(t)} \to \infty$.
        \item If $k = c$ then $\frac{A(g,c)(t)}{g(t)} = c\cdot \frac{\int_0^t G}{G(t)} \toinfty{t} \infty$ for $G(t) = e^{ct}g(t)$.
        \item If $k < c$ then $\frac{A(g,c)(t)}{g(t)} \toinfty{t} \frac{c}{c-k}$.
    \end{enumerate}
    
\end{corollary}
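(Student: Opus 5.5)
The plan is to reduce all three items to the behavior of $G(t) := e^{ct} g(t)$, using $A(g,c)(t) = c e^{-ct} I(g,c)(t)$, so that
\[
\frac{A(g,c)(t)}{g(t)} = c\cdot \frac{\int_0^t G}{G(t)}.
\]
Since $g$ is positive and locally absolutely continuous, $\log g$ is locally absolutely continuous, with $\frac{d\log G}{dt} = c + \frac{g'}{g} \to c - k$ essentially. Integrating from some large $t_1$ then gives, for every $\delta>0$ and $t \geq s \geq t_1(\delta)$,
\[
e^{(c-k-\delta)(t-s)} \leq \frac{G(t)}{G(s)} \leq e^{(c-k+\delta)(t-s)}.
\]
Equivalently, $g(t)/g(s)$ lies between $e^{-(k\pm\delta)(t-s)}$. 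This two-sided comparison is the only input from the hypothesis, and each case follows from it.

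Item 1 ($k>c$). Choose $\delta$ with $c-k+\delta<0$. Then $G$ decays exponentially after $t_1$, so $I_\infty(g,c)=\int_0^\infty G<\infty$, and the first part of Lemma~\ref{lemma:ema_finite_infinite_time} gives $A(g,c)(t)/e^{-ct} \to cI_\infty(g,c)$. This limit is positive because $g>0$. Since $g(t)/e^{-ct} = G(t) \to 0$, the ratio $A(g,c)/g$ tends to $\infty$.

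Item 2 ($k=c$). The identity above holds for all $t$, so it remains to show $\int_0^t G / G(t) \to \infty$. Here $d\log G/dt \to 0$, so for any $\delta>0$ and any fixed $T$, eventually $G(s) \geq G(t)\, e^{-\delta(t-s)}$ for $s\in[t-T,t]$. Hence
\[
\frac{\int_0^t G}{G(t)} \geq \int_0^T e^{-\delta u}\,du,
\]
which is close to $T$ when $\delta$ is small. Since $T$ is arbitrary, the ratio diverges.

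Item 3 ($k<c$). Here I will verify the three hypotheses of Lemma~\ref{lemma:dominated_convergence}.
\begin{enumerate}
\item Choose $\delta$ with $k+\delta<c$. Then $g(t) \geq g(t_1)e^{-(k+\delta)(t-t_1)}$, so $e^{-ct}/g(t) \to 0$.
\item For fixed $u$, we have $\log\frac{g(t-u)}{g(t)} = -\int_{t-u}^t \frac{g'}{g} \to ku$. This uses the essential limit of $g'/g$ on the window $[t-u,t]$.
\item For the domination condition, take $t_0=t_1(\delta)$. Then $\frac{g(t-u)}{g(t)} \leq e^{(k+\delta)u} =: M(u)$ for $0<u<t-t_0$, and $\int_0^\infty c e^{-cu} e^{(k+\delta)u}\,du<\infty$ because $k+\delta<c$.
\end{enumerate}
Lemma~\ref{lemma:dominated_convergence} then gives $A(g,c)/g \to \frac{c}{c-k}$.

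I expect the only delicate step to be justifying the integrated bounds on $\log g$. This requires $\log g$ to be locally absolutely continuous, which follows because $g$ is positive and continuous, hence bounded below on compacts, so composition with $\log$ preserves absolute continuity. It also requires that the essential limit controls the integral of $g'/g$ regardless of null sets, which is immediate since null sets do not affect integrals.
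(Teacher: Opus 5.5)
Your proposal is correct and follows essentially the same route as the paper. The shared steps are: integrate the essential limit of $\frac{d\log g}{dt}$ to obtain exponential comparison bounds; use the finite-integral case of Lemma~\ref{lemma:ema_finite_infinite_time} when $k>c$; bound $\int_0^t G/G(t)$ from below directly when $k=c$; and check the three hypotheses of Lemma~\ref{lemma:dominated_convergence} with $M(u)=e^{(k+\delta)u}$ when $k<c$. The differences are cosmetic: you use a fixed window $[t-T,t]$ in item 2 instead of $[t_0,t]$, and in item 1 you spell out, via $G(t)\to 0$, why $A(g,c)/g\to\infty$, which the paper leaves implicit.
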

\begin{proof}
    For any $k \in [0, \infty)$, if $-\frac{d \log g}{dt} \toinfty{t} k$ then, since $g$ is locally absolutely continuous, we have by integration on $[0,t]$
    $$\log g(t) = -kt + o(t) \quad \Rightarrow \quad g(t) = e^{-kt + o(t)}$$
    This implies $\int_0^\infty e^{ct}g(t)dt < \infty$, thus the case $k > c$ is simply a reiteration of Lemma~\ref{lemma:ema_finite_infinite_time} item 1.

    For the case $k = c$, denote $G(t) = e^{ct}g(t)$, so it holds that $\frac{d\log G}{dt} \toinfty{t} c- c=0$ and:
    $$A(g,c)(t) = ce^{-ct}\int_0^tG(s)ds$$
    $$\frac{A(g,c)(t)}{g(t)} = c\cdot \frac{\int_0^tG(s)ds}{G(t)}$$ 
    We claim this expression tends to $\infty$. Indeed, for any $\varepsilon >0$ there exists $t_0 \geq 0$ with for almost all $t \geq t_0:\abs{\frac{d\log G}{dt}} \leq \varepsilon$, hence it holds by integration on any interval $[s,t], t \geq s \geq t_0$ that $\log\frac{G(t)}{G(s)} \leq \varepsilon(t-s)$, hence $G(t) \leq G(s)e^{\varepsilon(t-s)}$ and
    $$\int_0^t\frac{G(s)}{G(t)}ds \geq \int_{t_0}^t\frac{G(s)}{G(t)}ds \geq \int_{t_0}^t e^{-\varepsilon(t-s)}ds = \frac{1}{\varepsilon}\left(1 - e^{-\varepsilon(t- t_0)} \right)\toinfty{t} \frac{1}{\varepsilon}~.$$
    Since $\varepsilon$ was arbitrary this proves $\int_0^t\frac{G(s)}{G(t)}ds  \toinfty{t} \infty$.

    For the case $k < c$ we use Lemma~\ref{lemma:dominated_convergence}. First note that $\frac{e^{-ct}}{g(t)} = e^{-(c-k)t + o(t)} \to 0$. Also, for any fixed $u$,
    $$\log \frac{g(t-u)}{g(t)} = -\int_{t-u}^t \frac{d\log g(s)}{ds}ds \toinfty{t} ku~,$$
    $$\frac{g(t-u)}{g(t)} \toinfty{t} e^{ku}~.$$
    And there exists $t_0$ and $k < c' < c$ for which $\forall t \geq t_0:\frac{-d\log g}{dt} < c'$, so $\forall t\geq t_0, 0< u < t- t_0$:
    $$\log \frac{g(t-u)}{g(t)} \leq c'u~,$$
    $$\frac{g(t-u)}{g(t)} \leq e^{c'u} =: M(u)~.$$
    With $\int_0^\infty ce^{-cu}M(u)du < \infty$. Thus the conditions of Lemma~\ref{lemma:dominated_convergence} hold, implying $\frac{A(g,c)(t)}{g(t)} \toinfty{t} \frac{c}{c-k}$.\\
\end{proof}

\begin{lemma}[Adam Ratio is Bounded at Initialization]\label{lemma:adam_bias_correction_bounded}
    Let $g(t) \in \Fcal \cap \Gcal$, and let $\rho, \tau > 0$ for which $g^2(t) > \rho$ a.e. on $[0 , \tau]$. Denote $m_t = A(g,c_1)(t), v_t = A(g^2, c_2)(t)$ for $c_1 \geq c_2 > 0$, and $\hat m_t = (1-e^{-c_1t})^{-1}m_t, \hat v_t = (1-e^{-c_2t})^{-1}v_t$. Then $\frac{\hat m_t}{\sqrt{\hat v_t}}$ is bounded on $(0, \tau]$.
\end{lemma}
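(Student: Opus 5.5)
The plan is to bound numerator and denominator separately on $(0,\tau]$, using the explicit integral forms together with the two elementary inequalities $1-e^{-ct}\geq ct e^{-ct}$ and $1-e^{-ct}\leq ct$.

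For the numerator, set $M := M_{[0,\tau]}(g) < \infty$, which is finite since $g\in\Fcal$. Then
$$\abs{m_t} \leq \int_0^t c_1 e^{-c_1(t-s)} M\, ds = M(1-e^{-c_1 t}),$$
and therefore $\abs{\hat m_t}\leq M$ for every $t\in(0,\tau]$. This holds for any $c_1>0$.

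For the denominator, use $g^2(s) > \rho$ a.e. on $[0,t]\subseteq[0,\tau]$ to get
$$v_t \geq \rho\int_0^t c_2 e^{-c_2(t-s)}ds = \rho(1-e^{-c_2 t}).$$
Hence $\hat v_t \geq \rho$ and $\sqrt{\hat v_t}\geq\sqrt\rho$ on $(0,\tau]$. Combining the two bounds gives $\abs{\hat m_t/\sqrt{\hat v_t}}\leq M/\sqrt{\rho}$ uniformly on $(0,\tau]$, which proves the lemma.

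This route never uses $c_1\geq c_2$, and the bias corrections cancel the vanishing factors exactly, so there is no real obstacle. The only point needing care is that the bounds hold for every $t\in(0,\tau]$ and not just almost every $t$. This is fine because both are integral bounds and are therefore insensitive to null sets. I could also route the numerator through Lemma~\ref{lemma:momentum_ratio_bound}, but the direct sup bound is simpler.
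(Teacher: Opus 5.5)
Your proof is correct and matches the paper's argument: bound $\abs{\hat m_t}\le M_{[0,\tau]}(g)$ directly from the integral form, bound $\hat v_t\ge\rho$ from $g^2>\rho$ a.e., and conclude $\abs{\hat m_t/\sqrt{\hat v_t}}\le M_{[0,\tau]}(g)/\sqrt{\rho}$, without using $c_1\ge c_2$. The two elementary inequalities $1-e^{-ct}\ge cte^{-ct}$ and $1-e^{-ct}\le ct$ announced at the start of your plan are never used, since the bias corrections cancel the factors $1-e^{-c_it}$ exactly, so you can drop that remark.
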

\begin{proof}
    For any $s \in (0, \tau]$ it holds that
    $$\abs{\hat m_s} = \frac{\abs{m_s}}{1-e^{-c_1s}} \leq \frac{\int_0^sc_1e^{-c_1(s-r)}\abs{g_r}dr}{1-e^{-c_1s}}\leq \frac{M_{[0,\tau]}(g)\int_0^sc_1e^{-c_1(s-r)}dr}{1-e^{-c_1s}} = M_{[0,\tau]}(g) < \infty ~.$$
    And that 
    $$\hat v_s=\frac{v_s}{1-e^{-c_2s}} = \frac{\int_0^sc_2e^{-c_2(s-r)}g^2(r)dr}{1-e^{-c_2s}}\geq \frac{\rho\int_0^sc_2e^{-c_2(s-r)}dr}{1-e^{-c_2s}}=\rho > 0 ~.$$
    Therefore
    $$\forall s \in (0, \tau]:\abs{\frac{\hat m_s}{\sqrt{\hat v_s}}} \leq \frac{M_{[0,\tau]}(g)}{\sqrt{\rho}} < \infty ~.$$
\end{proof}

\begin{lemma}[Adaptation of Lemma~A.4 in \citet{zhang2024_adam_linear}]\label{lemma:lim_sup_zhang}
    Let $c_1 \geq c_2 > 0$ and $0 < \eta(t) \in \mathcal{F}$ non-increasing with $\int_0^\infty \eta(t) dt= \infty$. Let $g_t=g(t) \in \mathcal{F} \cap \Gcal$ with $g^2(t) < 1 - \delta$ eventually for some $\delta > 0$. Denote $m_t = A(g,c_1)(t), v_t = A(g^2, c_2)(t)$ for $c_1 \geq c_2 > 0$, and $\hat m_t = (1-e^{-c_1t})^{-1}m_t, \hat v_t = (1-e^{-c_2t})^{-1}v_t$. Then
    $$\abs{\int_0^t \eta(s)\frac{\hat m_s}{\sqrt{\hat v_s}}ds} \leq \int_0^t \eta(s)ds + \bigO{\left(\int_0^t \eta(s)ds\right)^{\frac12}}~.$$
    And therefore,
    $$\frac{\abs{\int_0^t \eta(s)\frac{\hat m_s}{\sqrt{\hat v_s}}ds}}{\int_0^t \eta(s)ds} \leq 1 + o(1)~.$$
\end{lemma}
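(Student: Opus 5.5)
The plan is to reduce the claim to an integrated bound on the squared ratio $\hat m_s^2/\hat v_s$, and then to control that integral by a telescoping identity for $\log v_t$ that comes from the momentum ODE (Lemma~\ref{lemma:momentum_ode}). The step comparing $\hat m^2$ to $\hat v$ at the slower rate $c_2$ is not settled (see the last paragraph).

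\textbf{Step 1: early times and bias correction.} By Lemma~\ref{lemma:adam_bias_correction_bounded}, $\hat m_s/\sqrt{\hat v_s}$ is bounded on $(0,\tau]$, so $[0,\tau]$ contributes $O(1)$. Fix $t_1\ge\tau$ after which $g^2<1-\delta$. Past $t_1$, the bias-correction factors are $1+O(e^{-c_2 s})$, and $v_s\ge \rho\, c_2 e^{-c_2 s}\int_0^\tau e^{c_2 r}dr>0$. Their effect on $\int\eta\,\hat m/\sqrt{\hat v}$ is therefore at most $\int \eta(s)e^{-c_2 s}\cdot O(1)\,ds=O(1)$, using the uniform ratio bound of Lemma~\ref{lemma:momentum_ratio_bound}. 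Up to $O(1)$ we may thus work with $m_s/\sqrt{v_s}$, where $m=A(g,c_1)$ and $v=A(g^2,c_2)$.

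\textbf{Step 2: Cauchy--Schwarz in time.} Write $S(t)=\int_0^t\eta$. Then
$$\abs{\int_0^t\eta\frac{m_s}{\sqrt{v_s}}ds}\le S(t)^{1/2}\Big(\int_0^t\eta\frac{m_s^2}{v_s}ds\Big)^{1/2}.$$
It therefore suffices to prove $\int_0^t\eta\, m_s^2/v_s\,ds\le S(t)+O(S(t)^{1/2})$, since then the right-hand side is $S\sqrt{1+O(S^{-1/2})}=S+O(S^{1/2})$. This is why the error term is of order $\sqrt{S}$.

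\textbf{Step 3: the telescoping identity.} Since $v'=c_2(g^2-v)$, we have $g_s^2/v_s=1+\frac{1}{c_2}\frac{d}{ds}\log v_s$. Integrating by parts with $\eta$ non-increasing gives
$$\int_{t_1}^t\eta\,\frac{d\log v}{ds}=\eta(t)\log v_t-\eta(t_1)\log v_{t_1}-\int_{t_1}^t\log v\,d\eta .$$
Because $g^2<1-\delta$ eventually, Corollary~\ref{cor:g_bound_implies_ema_bound} gives $v_t\le 1$ eventually, so $\log v\le 0$. Since $d\eta\le 0$, both $\eta(t)\log v_t$ and $-\int\log v\,d\eta$ are $\le O(1)$, and $-\eta(t_1)\log v_{t_1}$ is a constant. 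Hence $\int_0^t\eta\,g^2/v\le S(t)+O(1)$. This is exactly where $g^2<1-\delta$ and monotonicity of $\eta$ enter.

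\textbf{Step 4 (main obstacle): replacing $g^2$ by $m^2$.} The remaining task is to pass from $\int\eta\,g^2/v$ to $\int\eta\,m^2/v$ without losing a multiplicative constant. Jensen on the probability weights gives $m_s^2\le A(g^2,c_1)(s)$ up to the bias factor. Exchanging the order of integration and using $\eta(\tau)\le\eta(s)$ for $\tau\ge s$ turns this into $\int\eta(s)g_s^2\int_s^t c_1e^{-c_1(\tau-s)}v_\tau^{-1}d\tau\,ds$.

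The naive bound $v_\tau\ge e^{-c_2(\tau-s)}v_s$ yields a factor $c_1/(c_1-c_2)$. That factor exceeds $1$, and it is infinite when $c_1=c_2$, so it is too lossy. I would first use the exact identity $c_1\,g^2/v^{(1)}=c_2\,g^2/v+(c_1-c_2)+\frac{d}{ds}\log\big(v^{(1)}/v\big)$, where $v^{(1)}=A(g^2,c_1)$; this follows from the ODEs for both EMAs. Together with the same integration by parts, it should let the excess over $1$ telescope. The boundary terms need $\log(v^{(1)}/v)\le O(1)$, which holds because $v^{(1)}\le\frac{c_1}{c_2}v$ pointwise when $c_1\ge c_2$. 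If this route fails to give constant exactly $1$, the fallback is to split the time axis into blocks of length $\asymp S^{1/2}/\eta$, accept an $O(1)$ loss per block, and recover the $O(S^{1/2})$ error by a block-counting argument.
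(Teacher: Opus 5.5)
Your proposal has a genuine gap at Step 4, which is the actual content of the lemma, and you do not supply an argument for it. Steps 1--2 are fine. Step 3 is fine once it is run from some $t_1>0$: near $0$ one has $v_s=O(s)$, so $g^2/v$ is not integrable at the origin.

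The identity you propose,
\[
c_1 g^2/v^{(1)} = c_2 g^2/v + (c_1-c_2) + \tfrac{d}{ds}\log(v^{(1)}/v),
\]
is correct but controls the wrong quantity. After Jensen, what you must bound is $\int\eta\,v^{(1)}/v$; in your swapped form, this is the $g_s^2$-weighted average of $\int_s^t c_1e^{-c_1(\tau-s)}v_\tau^{-1}d\tau$. Integrating your identity against $\eta$ only relates $\int\eta\,g^2/v^{(1)}$ to $\int\eta\,g^2/v$, and Step 3 already bounds both of these by $S+O(1)$. The derivative term produces only boundary terms in $\log(v^{(1)}/v)$, so no term of the form $v^{(1)}/v$ ever appears.

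The block-splitting fallback also fails. The only comparison you have is multiplicative, a factor $c_1/(c_1-c_2)$. A block carrying $\eta$-mass $S^{1/2}$ therefore loses order $S^{1/2}$, not $O(1)$, and summing over $\asymp S^{1/2}$ blocks loses order $S$. Note also that $c_1=c_2$ needs no work at all, since Jensen already gives $m^2\le A(g^2,c_1)=v$.

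The missing idea is to express $v^{(1)}=A(g^2,c_1)$ through $v$ itself rather than through $g^2$. Substitute $g^2=v+\frac{1}{c_2}v'$ (Lemma~\ref{lemma:momentum_ode}) and integrate by parts using $v_0=0$:
\[
\frac{A(g^2,c_1)(s)}{v_s} = \frac{c_1}{c_2}-\frac{c_1-c_2}{c_2}\,\frac{A(v,c_1)(s)}{v_s} = 1+\frac{c_1-c_2}{c_2}\Big(e^{-c_1 s}+\int_0^s c_1e^{-c_1(s-r)}\Big(1-\frac{v_r}{v_s}\Big)dr\Big).
\]
The excess over $1$ is now a weighted average of $1-v_r/v_s\le\log v_s-\log v_r$. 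It is removed by exactly the mechanism of your Step 3:
\begin{itemize}
\item Swap the order of integration in $(r,s)$.
\item Use that $\eta$ is non-increasing to get $\eta(s)(1-e^{-c_1s})-\int_s^t\eta(r)c_1e^{-c_1(r-s)}dr\ge-\eta(s)e^{-c_1s}$.
\item Combine $\log v\le 0$ eventually with $\log v_s\ge -c_2 s-C$ (from $g^2>\rho$ on $[0,\tau]$) to conclude that the $\eta$-weighted excess is $O(1)$.
\end{itemize}
This gives $\int\eta\,m^2/v\le S+O(1)$ with constant exactly $1$ for every $c_1\ge c_2$. Your Steps 1--2 then finish the proof.
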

\begin{proof}
    From the hypothesis $g \in \Gcal$ let $\tau > 0, \rho > 0$ with $g^2(t) > \rho$ a.e. on $[0, \tau]$, and choose $t_0 < \min\{\tau, \frac{\log 2}{c_1}\}$. By Lemma~\ref{lemma:adam_bias_correction_bounded} and since $\eta 
    \in \Fcal$, we have that $\eta(s)\frac{\hat m_s}{\sqrt{\hat v_s}}$ is bounded on $(0, t_0]$, so
    $$\abs{\int_0^{t_0} \eta(s)\frac{\hat m_s}{\sqrt{\hat v_s}}ds} < \infty~.$$
    Thus we focus now on $[t_0, t]$. Denote $q(s) = \frac{\sqrt{1 - e^{-c_2s}}}{1-e^{-c_1s}}$. By Cauchy-Schwarz,
    \begin{align*}
        \abs{\int_{t_0}^t \eta(s)q(s)\frac{m_s}{\sqrt{v_s}}ds} &\leq \left(\underbrace{\int_{t_0}^t \eta(s)q^2(s)ds}_{\star}\right)^{\frac12}\left( \underbrace{\int_{t_0}^t \eta(s)\frac{m_s^2}{v_s}ds}_{\star \star}\right)^{\frac 12} ~.
    \end{align*}
    It suffices to show that $\star, \star \star$ are each at most $\int_0^t \eta(s)ds + \bigO{1}$.
    Since $\int_0^\infty \eta(s)ds = \infty$, this is equivalent to showing that $\star, \star \star$ are each at most $\int_{t_a}^t \eta(s)ds + \bigO{1}$ for a fixed $t_a$.

    For $\star$, using $\forall x \in [0,\frac12]:\frac{1}{(1-x)^2} \leq 1+6x$, notice that $q^2(t) \leq \frac{1}{(1-e^{-c_1t})^2} \leq 1 + 6e^{-c_1t}$ for all $t \geq \frac{\log 2}{c_1}$, so for all $t \geq \frac{\log 2}{c_1}$ it holds that
    $$\int_{t_0}^t \eta(s)q^2(s)ds \leq \int_{t_0}^{\frac{\log 2}{c_1}}\eta(s)q^2(s)ds+ \int_{\frac{\log 2}{c_1}}^t \eta(s)ds + 6\int_{\frac{\log 2}{c_1}}^t \eta(s)e^{-c_1s}ds ~.$$
    The first term is fixed and finite since $\eta(s)q^2(s)$ is bounded on the fixed interval $[t_0, \frac{\log 2}{c_1}]$. The third term is $\bigO{1}$ since $\eta(t)$ is bounded, so $\int_{\frac{\log 2}{c_1}}^\infty \eta(s)e^{-c_1s}ds$ converges. This finishes for $\star$.
    
    For $\star \star$, by Lemma~\ref{lemma:momentum_ratio_bound},
    \begin{equation*}
      m_t^2 = A({g,c_1})^2 \leq A({g^2, c_1}) ~,  
    \end{equation*}
    so
    \begin{align}\label{eq:term_2}
        \int_{t_0}^t \eta(s)\frac{m_s^2}{v_s}ds &\leq \int_{t_0}^t \frac{\eta(s)}{v_s}\int_{0}^s c_1e^{-c_1(s-r)}g^2_rdr ds ~.
    \end{align}
    It suffices to show that \eqref{eq:term_2} is at most $\int_{t_0}^t \eta(s)ds + \bigO{1}$. \\
    By Lemma~\ref{lemma:momentum_ode}, $g_r^2 = \frac1{c_2}\frac{dv_r}{dr}+v_r$, and integration by parts gives
    \begin{align*}
      \int_{0}^s \frac{c_1}{c_2}e^{-c_1(s-r)}\frac{dv_r}{dr}dr &= \frac{c_1}{c_2}\left(v_re^{-c_1(s-r)} \Big|_0^s - \int_{0}^s c_1e^{-c_1(s-r)}v_rdr\right)  \\
      &\overset{v_0=0}= \frac{c_1}{c_2}\left(v_s - \int_{0}^s c_1e^{-c_1(s-r)}v_rdr \right) ~.
    \end{align*}
    So, 
    $$\int_{0}^s c_1e^{-c_1(s-r)}g^2_rdr = \int_0^s \left(\frac{c_1}{c_2}e^{-c_1(s-r)}\frac{dv_r}{dr}+c_1e^{-c_1(s-r)}v_r\right)dr= \frac{c_1}{c_2}v_s - \left(\frac{c_1}{c_2} - 1 \right)\int_{0}^s c_1e^{-c_1(s-r)}v_rdr ~,$$
    and 
    \begin{align*}
      \text{RHS of \eqref{eq:term_2}} &= \frac{c_1}{c_2}\int_{t_0}^t\eta(s)ds -\left(\frac{c_1}{c_2} - 1 \right)\int_{t_0}^t \eta(s)\int_{0}^s  c_1e^{-c_1(s-r)}\frac{v_r}{v_s}drds \\
      &= \int_{t_0}^t \eta(s)ds + \left(\frac{c_1}{c_2} - 1\right)\int_{t_0}^t \eta(s)ds - \left(\frac{c_1}{c_2} - 1 \right)\int_{t_0}^t \eta(s)\int_{0}^s c_1e^{-c_1(s-r)}\frac{v_r}{v_s}drds \\
      &= \int_{t_0}^t \eta(s)ds + \frac{c_1 - c_2}{c_2}\left( \int_{t_0}^t \eta(s)ds-  \int_{t_0}^t \eta(s)\int_{0}^s c_1e^{-c_1(s-r)}\frac{v_r}{v_s}drds\right) \\
      &= \int_{t_0}^t \eta(s)ds + \frac{c_1 - c_2}{c_2}\left( \int_{t_0}^t \eta(s)\left(1 -  \int_{0}^sc_1 e^{-c_1(s-r)}\frac{v_r}{v_s}dr\right)ds\right) \\
      &= \int_{t_0}^t \eta(s)ds + \frac{c_1 - c_2}{c_2}\left( \int_{t_0}^t \eta(s)\left(e^{-c_1s} + \int_{0}^sc_1e^{-c_1(s-r)}dr -  \int_{0}^sc_1 e^{-c_1(s-r)}\frac{v_r}{v_s}dr\right)ds\right) \\
      &= \int_{t_0}^t \eta(s)ds + \frac{c_1 - c_2}{c_2}\left( \int_{t_0}^t \eta(s)e^{-c_1s}ds + \int_{t_0}^t \eta(s)\int_{0}^sc_1e^{-c_1(s-r)}\left(1 - \frac{v_r}{v_s} \right)drds\right) ~.
    \end{align*}
    Since $\eta$ is bounded, $\int_{t_0}^t \eta(s)e^{-c_1s}ds \leq \bigO{1} $.
    Therefore it suffices to show
    \begin{equation}\label{eq:term_1_minus_frac}
      \int_{t_0}^t \eta(s)\int_{0}^sc_1e^{-c_1(s-r)}\left(1 - \frac{v_r}{v_s} \right)drds \leq \bigO{1}
    \end{equation}
    Indeed, it holds for any positive $x,y$ that $1 - \frac{x}{y} \leq \log \frac{y}{x}$. Applying this and a crucial change of summation order $\{t_0 \leq s \leq t, 0 \leq r \leq s\} \mapsto \{t_0 \leq r \leq t, r \leq s \leq t\}$, 
    \begin{align*}
      \text{LHS of \eqref{eq:term_1_minus_frac}} \leq & \int_{t_0}^t \eta(s)\int_{0}^sc_1e^{-c_1(s-r)}\left(1 - \frac{v_r}{v_s} \right)drds \leq   \int_{t_0}^t \eta(s)\int_{0}^sc_1e^{-c_1(s-r)}\left(\log v_s - \log v_r\right)drds \\
      & = \int_{t_0}^t \eta(s)\log v_s(1-e^{-c_1s})ds - \int_{t_0}^t \eta(s)\int_{0}^s c_1e^{-c_1(s-r)}\log v_rdrds \\
      & \overset{\text{sum order}}= \int_{t_0}^t \eta(s)\log v_s(1-e^{-c_1s})ds - \int_{t_0}^t \log v_r\int_r^t \eta(s)c_1e^{-c_1(s-r)}dsdr \\
      & \overset{r \leftrightarrow s}= \int_{t_0}^t \eta(s)\log v_s(1-e^{-c_1s})ds - \int_{t_0}^t \log v_s\int_s^t \eta(r)c_1e^{-c_1(r-s)}drds \\
      & = \int_{t_0}^t \log v_s \left(\eta(s)(1-e^{-c_1s}) - \int_s^t \eta(r)c_1e^{-c_1(r-s)}dr \right)ds ~.
    \end{align*}
    Since $\eta$ is non-increasing,
    \begin{align*}
    \eta(s)(1-e^{-c_1s}) - \int_s^t \eta(r)c_1e^{-c_1(r-s)}dr &\geq \eta(s)(1-e^{-c_1s}) - \eta(s)\int_s^t c_1e^{-c_1(r-s)}dr \\
    &= \eta(s)(1-e^{-c_1s}) - \eta(s)(1 - e^{-c_1(t-s)}) \\
    &= \eta(s)\left(e^{-c_1(t-s)}-e^{-c_1s}\right) \\
    &\geq -\eta(s)e^{-c_1s} ~.
    \end{align*}
    Since eventually $g_t^2 < 1- \delta$, there exists $t_1$ with $\forall t \geq t_1:v_t \leq 1$ (Corollary~\ref{cor:g_bound_implies_ema_bound}), hence $\log v_t \leq 0$. So,
    \begin{align*}
        \int_{t_1}^t \log v_s \left(\eta(s)(1-e^{-c_1s}) - \int_s^t \eta(r)c_1e^{-c_1(r-s)}dr \right)ds &\leq \int_{t_1}^t -\log v_s \eta(s)e^{-c_1s}ds ~.
    \end{align*}
    
    Since $g_t \neq 0$ on an initial interval, $v_t \geq e^{-c_2t}\int_{0}^tc_2e^{c_2s}g_s^2ds \geq \Omega(e^{-c_2t})$, so there exists $C > 0$ with $\log v_t \geq -c_2t - C$, therefore, 
    $$\int_{t_1}^t -\log v_s \eta(s)e^{-c_1s}ds \leq \int_{t_1}^\infty (c_2s+C)\eta(s)e^{-c_1s}ds < \infty ~.$$
    And since on $[t_0,t_1]$ the integral is also finite (the integrand is bounded), we are finished showing 
    \eqref{eq:term_1_minus_frac} and therefore finished altogether.
\end{proof}

\section{Proof Details}\label{section:proofs}
\subsection{Losses}\label{subsection:losses}
In this work we consider \textit{log-concave, exponentially-tailed losses}. Namely, we consider a loss of the form
$$\Lcal(\thetab) = \sum_{i=1}^m \ell(y_if(\x_i; \thetab)) ~,$$
where $\ell(u) = e^{-\varphi(u)}$, and $\varphi \in C^2(\bbR)$ with $\exists \Phi_M', \Phi_M''>0:\forall u \in \bbR: 0 < \varphi'(u) \leq \Phi_M', 0 \leq \varphi''(u) \leq \Phi_M''$. 

Note that $\ell \in (\ellexp, \ellog)$ satisfy this:
$$\ellexp(u) =e^{-u}= e^{-\varphi_{\exp}(u)}, \quad \ellog(u)=\log(1+e^{-u})=e^{-\varphi_{\log}(u)}~,$$
for
$$\vphi_{\exp}(u) = u, \quad \vphi_{\log}(u) = - \log \log (1 + e^{-u})~.$$
The conditions on $\varphi$ imply the following:
\begin{enumerate}
    \item $\varphi$ is strictly monotone increasing and $\ell$ is strictly monotone decreasing. Also $u \mapsto\varphi'(u)u$ is strictly monotone increasing on $[0, \infty)$:
    $$\frac{d\varphi'(u)u}{du} = \varphi''(u)u + \varphi'(u)\geq \varphi'(u) > 0~.$$
    \item For any $u_0 \in \bbR$, $\varphi$ has at least a linear growth rate on $[u_0, \infty)$: 
    \begin{equation}\label{eq:phi_linear_growth}
      \varphi(u) - \varphi(u_0) = \int_{u_0}^u\varphi'(w)dw \geq \varphi'(u_0)(u - u_0)  ~.
    \end{equation}
    In particular $\varphi(u) \toinfty{u} \infty$.
    \item $\varphi$ has a strictly monotone increasing inverse $\varphi^{-1}$ defined on $\varphi(\bbR)$. In particular $\varphi^{-1}$ is defined on $[\varphi(0), \infty)$, $\varphi^{-1}(u)\toinfty{u}\infty$, and $(\varphi^{-1})'$ is \textit{nonincreasing} since $\left(\varphi^{-1}\right)'(u)=\frac{1}{\varphi'(\varphi^{-1}(u))}$ and $\varphi^{-1}, \varphi'$ are increasing. This implies also that $(\varphi^{-1})'$ is bounded on $[\vphi(0), \infty)$.
    \item For any $a \in \bbR \cup \{\pm \infty\}$, $\lim_{u \to a} \ell(u) = 0$ implies $\lim_{u \to a} \varphi(u) = \infty$, implying $a = \infty$, since $\varphi$ is bounded on any compact interval and monotone increasing. Note that since the same is true of $\varphi^{-1}$, this implies also that if $\lim_{u \to a} \varphi^{-1}(u) = \infty$ then $a = \infty$.
\end{enumerate}

\subsection{Algorithm-Independent KKT Stationarity}
In this section we provide general insight into properties of trajectories of homogeneous models, culminating in Theorem~\ref{thm:appendix_kkt_blueprint} which shows that KKT stationarity of a limit point of $\thetadir{t}$ with respect to Problem~\eqref{eq:min_norm} follows from decay of the loss and alignment between gradients and parameters.

\begin{definition}[Hard and Soft Margins]
    Let $f(\x; \thetab)$ be a model satisfying \ref{model_ass:lipschitz_C1}, \ref{model_ass:homogeneous_1}. We denote the \enquote{hard} and \enquote{soft} margins $\gamma(\thetab), \tilde \gamma(\thetab)$ for $\thetab \neq 0$ as follows:
    $$\gamma(\thetab) := \min_{i \in [m]}y_if\left( x_i ; \thetadir{}\right), \quad \tilde \gamma(\thetab) := \frac{\varphi^{-1}\left(\log \frac 1 {\Lcal(\thetab)}\right)}{\norm{\thetab}^L}~.$$
    Note that $\tilde \gamma$ is well defined whenever $\log \frac{1}{\Lcal(\thetab)} \in \vphi(\bbR)$, and in particular whenever $\log \frac1{\Lcal(\thetab)} > \varphi(0)$.
\end{definition}
Recall the notations from Subsection~\ref{subsection:notations}; in particular $\qmin^t = \min_{i \in [m]}z_i^t = \min_{i \in [m]}y_if(\x_i; \thetab_t) = \gamma(\thetab_t)\norm{\thetab_t}^L$.
\begin{lemma}[Properties of Any Trajectory ]\label{lemma:any_trajectory}
    Let $\thetab_t$ be any arc of parameters of a model $f(\x; \thetab)$ assuming \ref{model_ass:lipschitz_C1}, \ref{model_ass:homogeneous_1}. Then the following hold for any norm $\norm{\cdot}$:
    \begin{enumerate}
        \item For any $t \geq 0$:
        $$ \loss{\qmin^t} \leq \Lcal(\thetab_t) \leq m\cdot \loss{\qmin^t} ~.$$
        \item For any $t \geq 0$ with $\thetab_t \neq \mathbf 0$ and $\log \frac{1}{\Lcal(\thetab_t)} > \varphi(0)$:
        $$\tilde \gamma(\thetab_t) \leq \gamma(\thetab_t) ~.$$
        For any $t \geq 0$ with $\thetab_t \neq \mathbf 0$ and $\varphi(\qmin^t) - \log m > \varphi(0)$:
         $$\gamma(\thetab_t) - \frac{(\varphi^{-1})'(\varphi(\qmin^t)-\log m)\cdot \log m}{\norm{\thetab_t}^L}\leq \tilde \gamma (\thetab_t)~,$$
        Finally, if $\Lcal(\thetab_t) \toinfty{t} 0$, then $\norm{\thetab_t} \toinfty{t} \infty $ and $\abs{\tilde \gamma(\thetab_t) - \gamma(\thetab_t)} \toinfty{t} 0$.
        \item For almost any $t \geq 0$ with $\thetab_t \neq \mathbf 0$:
        \begin{equation}\label{eq:norm_of_direction_change}
          \norm{\frac{d\frac{\thetab_t}{\norm{\thetab_t}}}{dt}} \leq \frac{2\norm{\frac{d\thetab_t}{dt}}}{\norm{\thetab_t}}  ~.
        \end{equation}
        \item There exists $M>0$ so that for any $t \geq 0$ with $\thetab_t \neq \mathbf{0}$,
        \begin{equation}\label{eq:h_is_bounded}
        \normst{\g_t} \leq M \norm{\thetab_t}^{L-1}\Lcal(\thetab_t) ~.
        \end{equation}
        Also, for any $t \geq 0$,
        $$L \Lcal(\thetab_t)\cdot\min_{i \in [m]}\varphi'(z_i^t)z_i^t\leq \angles{-\g_t}{\thetab_t} \leq \normst{\g_t}\norm{\thetab_t} ~.$$
        If $\log \frac 1 {\Lcal(\thetab_t)} > \varphi(0)$ then 
        \begin{equation}\label{eq:ip_grad_params_phi-1}
          L \Lcal (\thetab_t)\frac{\varphi^{-1}(\log \frac1{\Lcal(\thetab_t)})}{(\varphi^{-1})'(\log \frac{1}{\Lcal(
        \thetab_t)})}\leq L \Lcal(\thetab_t)\cdot\min_{i \in [m]}\varphi'(z_i^t)z_i^t\leq \angles{-\g_t}{\thetab_t} ~.  
        \end{equation}
        If $\qmin^t > q > 0$ and $\thetab_t \neq \mathbf{0}$ then 
        \begin{equation}\label{eq:ip_grad_params_minq}
        \varphi'(q)L\Lcal(\thetab_t)\gamma(\thetab_t)\norm{\thetab_t}^L \leq L \Lcal(\thetab_t)\cdot\min_{i \in [m]}\varphi'(z_i^t)z_i^t \leq \angles{-\g_t}{\thetab_t} ~.
        \end{equation}
        \item If for all large enough $t$, $\norm{\thetab_t} \geq N_{min} > 0$ and $\gamma(\thetab_t) > \gamma_{min} > 0$ then
        $$\normst{\g_t} = \Theta(\norm{\thetab_t}^{L-1}\Lcal(\thetab_t)) = \Theta(\norm{\thetab_t}^{L-1}\loss{\qmin^t}) ~.$$
        And in particular $\normst{\g_t}$ is bounded.
    \end{enumerate}
\end{lemma}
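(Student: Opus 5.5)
We treat the five items in order; each follows from the structure $\Lcal=\sum_i e^{-\vphi(z_i)}$, $L$-homogeneity (M2), and the Euler identity for Clarke subgradients of homogeneous functions (Theorem B.2 in \citet{lyu_li_homogeneous}): for $\h_i\in\partial f(\x_i;\thetab)$ we have $\angles{\h_i}{\thetab}=Lf(\x_i;\thetab)$, and $\h_i/\norm{\thetab}^{L-1}\in\partial f(\x_i;\thetadir{})$.

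For item 1, every summand is positive and the summand of the minimizing index equals $\loss{\qmin^t}$. Each other summand is at most $\loss{\qmin^t}$ by monotonicity of $\ell$. This gives both bounds.

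For item 2, write $\Lcal\geq e^{-\vphi(\qmin)}$. Then $\log\frac1\Lcal\leq\vphi(\qmin)$, and since $\vphi^{-1}$ is increasing, $\vphi^{-1}(\log\frac1\Lcal)\leq\qmin=\gamma\norm{\thetab}^L$. Dividing by $\norm{\thetab}^L$ gives $\tilde\gamma\leq\gamma$. Conversely, $\log\frac1\Lcal\geq\vphi(\qmin)-\log m$. The mean value theorem, together with $(\vphi^{-1})'$ being nonincreasing, gives
\begin{equation*}
\vphi^{-1}(\log\tfrac1\Lcal)\geq\vphi^{-1}(\vphi(\qmin)-\log m)\geq\qmin-(\vphi^{-1})'(\vphi(\qmin)-\log m)\log m .
\end{equation*}
Dividing by $\norm{\thetab}^L$ gives the lower bound. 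Now suppose $\Lcal\to0$. Then $\ell(\qmin)\leq\Lcal\to0$, so $\qmin\to\infty$ by property 4 of the losses. The margin satisfies $\gamma\leq B:=\max_i\sup_{\norm{\ub}=1}\abs{f(\x_i;\ub)}<\infty$ by continuity on a compact set, so $\norm{\thetab}^L\geq\qmin/B\to\infty$. The gap $\abs{\tilde\gamma-\gamma}$ is then bounded by a bounded quantity (since $(\vphi^{-1})'$ is bounded) times $\log m/\norm{\thetab}^L$, which tends to $0$.

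For item 3, differentiate $\thetab/\norm{\thetab}$ for a.e. $t$, using Theorem~\ref{thm:chain_rule_arcs_regular} for the norm: $\frac{d\norm{\thetab}}{dt}=\angles{\vb}{\dot\thetab}$ with $\normst{\vb}=1$, so $\abs{\frac{d\norm{\thetab}}{dt}}\leq\norm{\dot\thetab}$. Then
\begin{equation*}
\frac{d}{dt}\frac{\thetab}{\norm{\thetab}}=\frac{\dot\thetab}{\norm{\thetab}}-\frac{\thetab}{\norm{\thetab}^2}\frac{d\norm{\thetab}}{dt},
\end{equation*}
and the triangle inequality bounds its norm by $2\norm{\dot\thetab}/\norm{\thetab}$.

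For item 4, use Corollary~\ref{cor:chain_rule_smooth_outer} to write $\g=-\sum_i\ell(z_i)\vphi'(z_i)y_i\h_i$. By homogeneity, $\normst{\h_i}\leq H\norm{\thetab}^{L-1}$, where $H$ bounds the Clarke subdifferential on the unit sphere (finite because $f$ is locally Lipschitz on a compact set). Since $\vphi'\leq\Phi_M'$, we get $\normst{\g}\leq\Phi_M'H\norm{\thetab}^{L-1}\Lcal$. Next, the Euler identity gives $\angles{-\g}{\thetab}=L\sum_i\ell(z_i)\vphi'(z_i)z_i$. We bound this below by $L\Lcal\min_i\vphi'(z_i)z_i$, which is valid when the minimum is nonnegative; if the minimum is negative, some care is needed, but the stated inequality is only used in the regimes below where all $z_i>0$. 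The upper bound is the dual norm inequality. For \eqref{eq:ip_grad_params_phi-1}, use that $u\mapsto\vphi'(u)u$ is increasing and that $\qmin\geq\vphi^{-1}(\log\frac1\Lcal)=:u_*\geq0$. Then $\vphi'(\qmin)\qmin\geq\vphi'(u_*)u_*$, and $\vphi'(u_*)=1/(\vphi^{-1})'(\log\frac1\Lcal)$. For \eqref{eq:ip_grad_params_minq}, $\vphi'$ is nondecreasing, so $\vphi'(z_i)z_i\geq\vphi'(q)\qmin$.

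For item 5, the upper bound is item 4. For the lower bound, combine \eqref{eq:ip_grad_params_minq} with $\normst{\g}\geq\angles{-\g}{\thetab}/\norm{\thetab}$. Eventually $\qmin\geq\gamma_{\min}N_{\min}^L=:q$, which yields $\normst{\g}\geq\vphi'(q)L\gamma_{\min}\norm{\thetab}^{L-1}\Lcal$. The equivalence of $\Lcal$ and $\ell(\qmin)$ up to factor $m$ is item 1. Boundedness of $\normst{\g}$ follows because $\norm{\thetab}^{L-1}\ell(\qmin)\leq\norm{\thetab}^{L-1}e^{-\vphi(\gamma_{\min}\norm{\thetab}^L)}$, and by the linear growth \eqref{eq:phi_linear_growth} this is bounded in $\norm{\thetab}$. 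The main technical care lies in item 4, specifically the subgradient bound and the Euler identity for Clarke subgradients.
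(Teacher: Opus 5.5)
Your proof follows the paper's argument item by item. Item 1 uses monotonicity of $\ell$. Item 2 applies $\vphi^{-1}$ and the mean value theorem with nonincreasing $(\vphi^{-1})'$. Item 3 uses the chain rule for the norm. Item 4 uses the chain-rule decomposition of $\g_t$, Euler's identity, and the scaling $\h_i/\norm{\thetab}^{L-1}\in\partial f(\x_i;\thetab/\norm{\thetab})$. Item 5 combines these with linear growth of $\vphi$.

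One thing to correct is your caveat in item 4: no sign condition is needed. Since $\ell(z_i^t)>0$ and $\vphi'(z_i^t)z_i^t\geq\min_j\vphi'(z_j^t)z_j^t$ for every $i$, you get $\sum_i\ell(z_i^t)\vphi'(z_i^t)z_i^t\geq\Lcal(\thetab_t)\min_j\vphi'(z_j^t)z_j^t$ whatever the sign of the minimum. So that lower bound holds for every $t\geq 0$ exactly as stated, which is how the paper uses it.
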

\begin{proof}
\begin{enumerate}
    \item This follows directly from the definition of the loss,
    $$\Lcal(\thetab_t) = \sum_{i=1}^m \loss{y_if(\x_i;\thetab_t)}~,$$
    noting that $\qmin^t = \min_{i \in [m]}y_if(\x_i; \thetab_t)$ by definition, and $\ell$ is monotone decreasing, giving
    $$\loss{\qmin^t} \leq \Lcal(\thetab_t) \leq m\cdot \loss{\qmin^t}$$
    \item From item 1, when $\thetab_t \neq \zero$,
    $$\frac1me^{\varphi(\qmin^t)}\leq \frac{1}{\Lcal(\thetab_t)} \leq e^{\varphi(\qmin^t)}~,$$
    \begin{equation}\label{eq:log_1_L_phi_ineq}
      - \log m+ \vphi\left(\qmin^t\right)\leq \log \frac{1}{\Lcal(\thetab_t)} \leq \varphi\left(\qmin^t\right)~.  
    \end{equation}
    Whenever $\log \frac{1}{\Lcal(\thetab_t)} > \varphi(0)$, we may apply $\varphi^{-1}$ to the right inequality (recall $\varphi^{-1}$ is increasing) and divide by $\norm{\thetab_t}^L$, getting
    $$\tilde \gamma(\thetab_t) \leq \gamma(\thetab_t)~.$$
    Whenever $- \log m+ \vphi\left(\qmin^t\right) \in \vphi(\bbR)$, and in particular whenever $- \log m+ \vphi\left(\qmin^t\right) > \varphi(0)$, we may also apply $\varphi^{-1}$ to the left inequality of Equation~\eqref{eq:log_1_L_phi_ineq}:

    $$    \frac{\varphi^{-1}\left(\varphi\left(\qmin^t\right)-\log m\right)}{\norm{\thetab_t}^L}\leq \tilde \gamma(\thetab_t) ~. $$
    Applying the mean value theorem, for some $u \in [\varphi(\qmin^t)-\log m, \varphi(\qmin^t)]$ it holds that
    $$\qmin^t - \varphi^{-1}\left(\varphi\left(\qmin^t\right)-\log m\right) = \varphi^{-1}(\varphi(\qmin^t))- \varphi^{-1}\left(\varphi\left(\qmin^t\right)-\log m\right)=(\varphi^{-1})'(u)\cdot (\log m)~.$$
    And since $(\varphi^{-1})'$ is monotone decreasing,
    $$\qmin^t - \varphi^{-1}\left(\varphi\left(\qmin^t\right)-\log m\right) \leq (\varphi^{-1})'(\varphi(\qmin^t)-\log m)\cdot \log m~.$$
    Altogether
    \begin{equation}\label{eq:soft_margin_left_ineq_proof}
   \gamma(\thetab_t) - \frac{(\varphi^{-1})'(\varphi(\qmin^t)-\log m)\cdot \log m}{\norm{\thetab_t}^L}\leq \tilde \gamma (\thetab_t)~.
    \end{equation}
    Now assume $\Lcal(\thetab_t) \toinfty{t} 0$. In particular $\ell(\qmin^t)=\ell(\gamma(\thetab_t)\norm{\thetab_t}^L) \toinfty{t} 0$, implying $\vphi(\qmin^t)=\varphi(\gamma(\thetab_t)\norm{\thetab_t}^L)\toinfty{t} \infty$ and therefore $\gamma(\thetab_t)\norm{\thetab_t}^L\toinfty{t}\infty$.  In particular this means $\gamma(\thetab_t) > 0$ for all large enough $t$, and since $\gamma(\thetab_t)$ is bounded, also $\norm{\thetab_t} \toinfty{t} \infty$. Since $\varphi(\qmin^t) - \log m \toinfty{t} \infty$ we have Equation~\eqref{eq:soft_margin_left_ineq_proof} for all large enough $t$. 
    Finally, since $\varphi(\qmin^t) \toinfty{t} \infty, \norm{\thetab_t}\toinfty{t}\infty$ and $(\varphi^{-1})'$ is bounded on $[\vphi(0), \infty)$ it holds that
    $$\abs{\gamma(\thetab_t) - \tilde \gamma(\thetab_t)}  \toinfty{t} 0~.$$   
    \item Let $\n_t \in \partial \norm{\thetab_t}$, so by the chain rule for arcs (Theorem~\ref{thm:chain_rule_arcs_regular}), and using $\normst{\n_t} \leq 1$ from Equation~\eqref{eq:norm_subdifferential}:
    $$\abs{\frac{d\norm{\thetab_t}}{dt}} \leq \abs{\angles{\n_t}{\dtheta{t}}} \leq \norm{\dtheta{t}}\normst{\n_t} \leq \norm{\dtheta{t}} ~.$$
    Therefore
\begin{align*}
\norm{\frac{d\thetadir{t}}{dt}} = \norm{\frac{1}{\norm{\thetab_t}}\dtheta{t} + \thetab_t\left(-\frac{1}{\norm{\thetab_t}^2}\frac{d\norm{\thetab_t}}{dt}\right)} \leq 
\frac{\norm{\dtheta{t}}}{\norm{\thetab_t}} + 
\frac{1}{\norm{\thetab_t}}\abs{\frac{d\norm{\thetab_t}}{dt}} \leq 2 \cdot \frac{\norm{\dtheta{t}}}{\norm{\thetab_t}}~.
\end{align*}
\item By the chain rule (Corollary~\ref{cor:chain_rule_smooth_outer}), let $\h_i^{t} \in \partial f(\x_i; \thetab_t)$ with
$$\g_t = -\sum_{i=1}^m \ell(z_i^t)\varphi'(z_i^t)y_i\h_i^t~.$$
For the upper bound, using Theorem B.2(a) of \citep{lyu_li_homogeneous},
$$\g_{t} = - \sum_{i=1}^m\loss{z_i^t}\varphi'(z_i^t)y_i\h_i^{t} = - \norm{\thetab_t}^{L-1}\sum_{i=1}^m\loss{z_i^t}\varphi'(z_i^t)y_i\bar \h_i^{t}~,$$
where $\bar \h_i^{t} \in \partial f(\x_i; \thetadir{t})$. On the unit sphere, which is a compact set, $f$ is Lipschitz, hence $\exists M' >0: \normst{\bar \h_i^{t}} \leq M'$ for any $t$. So, using $\abs{\varphi'(z_i^t)} \leq \Phi_M'$,
$$\normst{\g_t} \leq M'\Phi_M'\norm{\thetab_t}^{L-1}\Lcal(\thetab_t) =: M\norm{\thetab_t}^{L-1}\Lcal(\thetab_t)~.$$
For the lower bounds,
\begin{align*}
    \norm{\thetab_t}\normst{\g_t}\geq \angles{\thetab_t}{-\g_t} &= \angles{\thetab_t}{\sum_{i=1}^m \loss{z_i^t}\varphi'(z_i^t)y_i\h_i^{t}} \\
    &= \sum_{i=1}^m \loss{z_i^t}\varphi'(z_i^t)y_i\angles{\thetab_t}{\h_i^{t}}\\
    &= L\sum_{i=1}^m \loss{z_i^t}\varphi'(z_i^t)z_i^t~.
\end{align*}
Where the last equality uses Theorem B.2 from \citep{lyu_li_homogeneous} (Euler's Theorem for homogeneous functions). Continuing, we use that fact that $u \mapsto \vphi'(u)u$ is increasing on $[0, \infty)$, that $\log \frac 1 \Lcal > \vphi(0)$, that $z_i^t \geq \qmin^t \geq \varphi^{-1}(\log\frac1\Lcal)$ by previous items, and finally that $\varphi'(\varphi^{-1}(u))=\frac{1}{(\varphi^{-1})'(u)}$. Altogether, this gives $\varphi'(z_i^t)z_i^t \geq \varphi'(\varphi^{-1}(\log \frac 1\Lcal))\cdot \varphi^{-1}(\log \frac 1 \Lcal) = \frac{\varphi^{-1}(\log \frac 1 \Lcal)}{(\varphi^{-1})'(\log \frac 1 \Lcal)}$, so we continue with
\begin{align*}
    &\geq  L\Lcal \min_{i \in [m]}\varphi'(z_i^t)z_i^t\\
    &\geq L\Lcal \frac{\varphi^{-1}(\log \frac 1 \Lcal)}{(\varphi^{-1})'(\log \frac 1 \Lcal)}~.
\end{align*}
Also note that if $\qmin(\thetab_t)>q> 0$ then $\forall i \in[m]: \varphi'(z_i^t) \geq \varphi'(q)> 0$, and recall $z_i^t \geq \qmin ^t = \gamma(\thetab_t)\norm{\thetab_t}^L$, so
$$\varphi'(q)L\Lcal(\thetab_t)\gamma(\thetab_t)\norm{\thetab_t}^L \leq  L\Lcal \min_{i \in [m]}\varphi'(z_i^t)z_i^t \leq \angles{-\g_t}{\thetab_t}~,$$
finishing the lower bounds.
\item
Note that $\qmin^t \geq N_{min}^L\gamma_{min} =:q > 0$, so we use Equations~\eqref{eq:h_is_bounded},~\eqref{eq:ip_grad_params_minq} to conclude
$$\varphi'(q)\gamma_{min}L\Lcal(\thetab_t)\norm{\thetab_t}^{L-1}\leq \normst{\g_t} \leq M\norm{\thetab_t}^{L-1}\Lcal(\thetab_t)~.$$
We claim the RHS is bounded. Since it is strictly positive, so we show that its logarithm is bounded from above. Indeed, $\Lcal(\thetab_t) \leq m e^{-\varphi(\qmin(\thetab_t))} \leq me^{-\varphi(\gamma_{min}\norm{\thetab_t}^L)}$, so
$$\log \norm{\thetab_t}^{L-1}\Lcal(\thetab_t) \leq (L-1)\log \norm{\thetab_t} + \log m -\varphi(\gamma_{min}\norm{\thetab_t}^L)~.$$
On any compact interval $\norm{\thetab_t} \in [N_{min},N]$ this expression is bounded from above by continuity, and by Equation~\eqref{eq:phi_linear_growth}, choosing $0 < u < q$, $\varphi(\gamma_{min}\norm{\thetab_t}^L) \geq \varphi(u)+ \varphi'(u)( \gamma_{min}\norm{\thetab_t}^L - u)$, so the expression tends to $-\infty$ as $\norm{\thetab_t} \to \infty$.
\end{enumerate}

\end{proof}

In this work we prove KKT stationarity using the notion of \textit{approximate }KKT points.

\begin{definition}
    We say $\thetab$ is a feasible point of Problem~\eqref{eq:min_norm} if $\forall i \in [m]: y_i f(\x_i; \thetab) \geq 1$, or equivalently if $\qmin(\thetab) \geq 1$.
\end{definition}

\begin{definition}\label{def:approx_kkt_point}
    A feasible point $\thetab$ of Problem~$\eqref{eq:min_norm}$ is called an $(\varepsilon, \delta)$-approximate KKT point if there exist $\lambda_1,...,\lambda_m \geq 0$, $\h_i \in \partial f(\x_i; \thetab)$ and $\kb \in \partial \frac12 \norm{\thetab}^2$ with 
    \begin{enumerate}
        \item $\norm{\sum_{i=1}^m \lambda_iy_i \h_i - \kb}_2 \leq \varepsilon$.
        \item $\sum_{i=1}^m\lambda _i (y_i f(\x_i; \thetab ) -1 ) \leq \delta$.
    \end{enumerate}
\end{definition}

\begin{definition}
    We say that a feasible point of Problem~\eqref{eq:min_norm} satisfies the \textit{Mangasarian-Fromovitz Constraint Qualifications} if there exists $\mathbf{v} \in \mathbb{R}^p$ such that for all $i \in [m]$ with $1 - y_i f(\x_i;\thetab) = 0$ and for all $\mathbf{h} \in \partial  \left(1 - y_i f(\x_i; \thetab) \right)$, it holds:
    \begin{equation}
        \innerprod{\mathbf{v}}{\mathbf{h}} > 0.
    \end{equation}
\end{definition}

\begin{lemma}[Lemma~A.11 in \citet{tsilivis2025}]\label{lemma:mfcq}
The MFCQ constraints are satisfied by any feasible point $\thetab$ of Problem~\eqref{eq:min_norm} for $f$ satisfying, \ref{model_ass:lipschitz_C1}, \ref{model_ass:homogeneous_1}.
\end{lemma}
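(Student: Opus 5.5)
The plan is to exhibit the MFCQ direction explicitly as the point itself, $\vb = \thetab$, and to verify the condition using Euler's theorem for homogeneous functions. The geometric idea is that scaling a feasible point up by a factor $\alpha > 1$ multiplies every margin $y_if(\x_i;\thetab)$ by $\alpha^L > 1$. So moving along $\thetab$ strictly increases every active margin and strictly relaxes every active constraint. This is exactly what MFCQ demands.

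First, I reduce subgradients of the constraint functions to subgradients of the model. For a locally Lipschitz $f$, the Clarke subdifferential satisfies $\partial(-f) = -\partial f$, and adding the constant $1$ does not change it. Hence every $\h \in \partial\left(1 - y_if(\x_i;\thetab)\right)$ has the form $\h = -y_i\h'$ with $\h' \in \partial f(\x_i;\thetab)$. Feasibility gives $\qmin(\thetab) \geq 1$, so $\thetab \neq \zero$ and the candidate direction is nontrivial.

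Next, I invoke Euler's theorem in its Clarke form, as already used in Lemma~\ref{lemma:any_trajectory} (Theorem B.2 of \citet{lyu_li_homogeneous}). Under \ref{model_ass:lipschitz_C1} and \ref{model_ass:homogeneous_1}, every $\h' \in \partial f(\x_i;\thetab)$ satisfies $\angles{\thetab}{\h'} = L f(\x_i;\thetab)$. For an active index, $y_if(\x_i;\thetab) = 1$, so
\[
\angles{\thetab}{y_i\h'} = L\,y_if(\x_i;\thetab) = L > 0,
\qquad\text{equivalently}\qquad
\angles{\thetab}{\h} = -L < 0 .
\]
This holds uniformly over all subgradients $\h' \in \partial f(\x_i;\thetab)$ and over all active $i$, because the identity is exact rather than merely an inequality for some subgradient.

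The only delicate point is the sign convention in the stated definition. Read with the constraints written as $1 - y_if(\x_i;\thetab) \leq 0$, MFCQ asks for a direction $\vb$ along which every active constraint function strictly decreases. In the displayed inequality, this corresponds to orienting the active constraint as $y_if - 1 \geq 0$, i.e.\ requiring $\angles{\vb}{y_i\h'} > 0$ for all $\h' \in \partial f(\x_i;\thetab)$. The computation above shows that $\vb = \thetab$ satisfies this, with value exactly $L$. I would state this orientation explicitly so the sign is unambiguous; the flipped choice $\vb = -\thetab$ must not be used, since it shrinks every margin and leads out of the feasible set. No genuine obstacle remains: the Euler identity for Clarke subgradients, which is the only nontrivial input, is already available from the cited result.
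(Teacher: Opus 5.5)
Your core argument is the standard one behind the cited Lemma~A.11 of \citet{tsilivis2025}; the paper gives no proof of its own. You use $\partial\bigl(1-y_if(\x_i;\thetab)\bigr) = -y_i\,\partial f(\x_i;\thetab)$ together with the Clarke form of Euler's identity, which gives exactly $\angles{\thetab}{\h} = -L$ for every $\h \in \partial\bigl(1-y_if(\x_i;\thetab)\bigr)$ at an active index.

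The one real error is your sign discussion, which gets the witness backwards for the definition as displayed. That definition requires $\angles{\vb}{\h}>0$ for $\h \in \partial\bigl(1-y_if(\x_i;\thetab)\bigr)$. It does not refer to $\partial(y_if-1)$, so you are not entitled to reorient it. Your own computation then shows that $\vb=\thetab$ fails the displayed condition, while $\vb=-\thetab$ satisfies it with $\angles{-\thetab}{\h}=L>0$. Your assertion that $-\thetab$ ``must not be used'' is therefore wrong for the statement you were asked to prove.

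The convention is in fact immaterial. With only inequality constraints, the condition is an existence statement that is invariant under replacing $\vb$ by $-\vb$ and flipping the strict inequality. Under the textbook orientation ($\angles{\vb}{\h}<0$, a direction into the feasible region) the witness is $\thetab$; under the displayed one it is $-\thetab$. The two forms are equivalent as hypotheses for Theorem~\ref{thm:approx_kkt_is_kkt}. Your scaling-up-the-margins intuition explains the textbook orientation, but it does not determine which vector witnesses the displayed inequality. State the witness as $-\thetab$, or note the equivalence explicitly, and the proof is complete.
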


\begin{theorem}[Theorem~C.4 in \citet{lyu_li_homogeneous} + Lemma~\ref{lemma:mfcq}]\label{thm:approx_kkt_is_kkt} Let $\thetab_n \rightarrow \thetab$ be a converging sequence so that $\thetab_n$ is a $(\varepsilon_n, \delta_n)$-approximate KKT point of Problem~\eqref{eq:min_norm}, where $\varepsilon_n \rightarrow 0, \delta_n \rightarrow 0$. Then $\thetab$ is a KKT point of Problem~\eqref{eq:min_norm}.
\end{theorem}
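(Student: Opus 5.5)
The plan is to follow the standard argument behind Theorem~C.4 of \citet{lyu_li_homogeneous}: first show that the multipliers stay bounded along the sequence, then extract convergent subsequences of all the certificates, and finally pass to the limit using closedness of the subdifferentials. Lemma~\ref{lemma:mfcq} is used only to rule out unbounded multipliers.

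\textbf{Setup and certificate extraction.} For each $n$, fix $\lambda^{(n)}\in\bbR^m_{\geq 0}$, $\h_i^{(n)}\in\partial f(\x_i;\thetab_n)$ and $\kb_n\in\partial\frac12\norm{\thetab_n}^2$ as in Definition~\ref{def:approx_kkt_point}.
\begin{itemize}
\item \emph{Feasibility of the limit.} By continuity of $f(\x_i;\cdot)$, the limit $\thetab$ is feasible.
\item \emph{Bounded $\h$.} Since $\thetab_n$ ranges in a compact set and $f(\x_i;\cdot)$ is locally Lipschitz, the $\h_i^{(n)}$ are uniformly bounded.
\item \emph{Bounded $\kb$.} We have $\partial\frac12\norm{\cdot}^2(\thetab_n)=\norm{\thetab_n}\,\partial\norm{\thetab_n}$. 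By \eqref{eq:norm_subdifferential} its elements have dual norm at most $\norm{\thetab_n}$, which is bounded.
\item \emph{Closed graphs.} The Clarke subdifferential of a locally Lipschitz function is outer semicontinuous, and so is the convex subdifferential of $\frac12\norm{\cdot}^2$. Hence along a subsequence $\h_i^{(n)}\to\h_i\in\partial f(\x_i;\thetab)$ and $\kb_n\to\kb\in\partial\frac12\norm{\thetab}^2$.
\item \emph{Complementary slackness.} By feasibility each term $\lambda_i^{(n)}(y_if(\x_i;\thetab_n)-1)$ is nonnegative, so condition~2 gives $0\le\lambda^{(n)}_i(y_if(\x_i;\thetab_n)-1)\le\delta_n$ for every $i$.
\end{itemize}

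\textbf{Main obstacle: bounding the multipliers.} Suppose $\norm{\lambda^{(n)}}_1\to\infty$ along a subsequence. Set $\mu^{(n)}=\lambda^{(n)}/\norm{\lambda^{(n)}}_1$ and pass to a further subsequence with $\mu^{(n)}\to\mu$, where $\mu\ge 0$ and $\norm{\mu}_1=1$.
\begin{itemize}
\item Dividing condition~1 by $\norm{\lambda^{(n)}}_1$ and using boundedness of $\kb_n$ gives $\sum_i\mu_iy_i\h_i=\zero$.
\item Dividing the slackness bound by $\norm{\lambda^{(n)}}_1$ gives $\mu_i(y_if(\x_i;\thetab)-1)=0$. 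So $\mu$ is supported on the constraints that are active at $\thetab$.
\item By Lemma~\ref{lemma:mfcq}, MFCQ holds at the feasible point $\thetab$. Since $\partial(1-y_if(\x_i;\cdot))=-y_i\partial f(\x_i;\cdot)$, there is $\vb$ with $\angles{\vb}{-y_i\h_i}>0$ for every active $i$.
\item Then $0=\angles{\vb}{\sum_i\mu_iy_i\h_i}<0$, because $\mu\neq\zero$ is supported on active indices. This contradiction shows the multipliers are bounded.
\end{itemize}

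\textbf{Passing to the limit.} With $\lambda^{(n)}$ bounded, take a subsequence $\lambda^{(n)}\to\lambda\ge0$.
\begin{itemize}
\item Letting $n\to\infty$ in condition~1, with $\varepsilon_n\to0$, gives $\sum_i\lambda_iy_i\h_i-\kb=\zero$.
\item Letting $n\to\infty$ in the slackness bound, with $\delta_n\to0$, gives $\lambda_i(y_if(\x_i;\thetab)-1)=0$ for each $i$, hence the sum is $0$.
\end{itemize}
Thus $\thetab$ satisfies Definition~\ref{def:kkt_min_norm}. The only delicate points are the MFCQ contradiction and the closed-graph property of the Clarke subdifferential; the rest is compactness.
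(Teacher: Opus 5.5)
Your proposal is correct and is essentially the argument the paper relies on. The paper gives no proof of its own: it invokes Theorem~C.4 of \citet{lyu_li_homogeneous} (a limit of approximate KKT points at which MFCQ holds is a KKT point) and supplies MFCQ via Lemma~\ref{lemma:mfcq}, and you have unpacked the standard proof of that theorem — MFCQ-based boundedness of the multipliers, then outer semicontinuity of the subdifferentials — including the correct observation that feasibility turns the paper's summed slackness condition into per-constraint bounds.
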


\begin{theorem}[KKT Stationarity Derived from Alignment]\label{thm:appendix_kkt_blueprint}
    Let $\thetab_t$ be any arc of parameters of a model $f(\x; \thetab)$ assuming \ref{model_ass:lipschitz_C1}, \ref{model_ass:homogeneous_1}, and denote $\g_t \in \partial\Lcal(\thetab_t)$ any choice of subgradient along the trajectory. Let $\bar \thetab$ with $\norm{\bar \thetab} = 1,\gamma(\bar \thetab) > 0$. Assume that there exists a sequence $t_n$ with $\thetab_{t_n} \neq \zero, \g_{t_n}\neq \zero$ and:
    \begin{enumerate}
        \item $\Lcal(\thetab_{t_n}) \toinfty{n} 0$
        \item $\thetadir{t_n} \toinfty{n} \bar \thetab$
        \item $\angles{\thetadir{t_n}}{-\gdir{t_n}} \toinfty{n} 1$.
    \end{enumerate}
    Then $\bar \thetab$ is the direction of a KKT point of Equation~\eqref{eq:min_norm}.
\end{theorem}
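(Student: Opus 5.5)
The plan is to build, for each $n$, a rescaled point $\tilde\thetab_n = \thetab_{t_n}/(\qmin^{t_n})^{1/L}$ and show that it is an $(\varepsilon_n,\delta_n)$-approximate KKT point of Problem~\eqref{eq:min_norm} with $\varepsilon_n,\delta_n \to 0$. Then Theorem~\ref{thm:approx_kkt_is_kkt} says the limit of $\tilde\thetab_n$ is a KKT point. That limit is $\bar\thetab/\gamma(\bar\thetab)^{1/L}$, a positive multiple of $\bar\thetab$, because $\thetadir{t_n}\to\bar\thetab$ and $\gamma$ is continuous. Feasibility is automatic, since $\qmin(\tilde\thetab_n)=1$ by homogeneity. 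Since $\Lcal(\thetab_{t_n})\to 0$, Lemma~\ref{lemma:any_trajectory}(2) gives $\norm{\thetab_{t_n}}\to\infty$ and $\qmin^{t_n}\to\infty$.

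\textbf{Constructing the multipliers.} By Corollary~\ref{cor:chain_rule_smooth_outer}, write $-\g_{t_n}=\sum_i \ell(z_i)\varphi'(z_i)y_i\h_i$ with $\h_i\in\partial f(\x_i;\thetab_{t_n})$. By Theorem B.2(a) of \citet{lyu_li_homogeneous}, the vector $\tilde\h_i=\h_i/(\qmin)^{(L-1)/L}$ lies in $\partial f(\x_i;\tilde\thetab_n)$. Pick $\kb_n\in\partial\frac12\norm{\tilde\thetab_n}^2=\norm{\tilde\thetab_n}\partial\norm{\tilde\thetab_n}$, and define
\[
\lambda_i = \beta_n\,\ell(z_i)\varphi'(z_i)\,(\qmin)^{(L-1)/L},
\]
so that $\sum_i\lambda_i y_i\tilde\h_i=-\beta_n\g_{t_n}$. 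Choose $\beta_n>0$ with $\beta_n\normst{\g_{t_n}}=\norm{\tilde\thetab_n}$, so the vector $\sum_i\lambda_iy_i\tilde\h_i$ has dual norm exactly $\norm{\tilde\thetab_n}=\normst{\kb_n}$.

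\textbf{The stationarity error.} We need $\norm{\beta_n(-\g_{t_n})-\kb_n}_2\to0$. Dividing by $\norm{\tilde\thetab_n}$, which stays bounded because $\norm{\tilde\thetab_n}=\gamma(\thetab_{t_n})^{-1/L}\to\gamma(\bar\thetab)^{-1/L}$, the task reduces to showing that $\ub_n=-\gdir{t_n}$ and $\n_n=\kb_n/\norm{\tilde\thetab_n}\in\partial\norm{\thetadir{t_n}}$ become close. Both have unit dual norm, and hypothesis 3 gives $\angles{\ub_n}{\thetadir{t_n}}\to1$. Passing to subsequences, $\ub_n\to\ub$ with $\normst{\ub}\le1$ and $\angles{\ub}{\bar\thetab}=1$, so $\ub\in\partial\norm{\bar\thetab}$. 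The subdifferential of a norm is not a singleton, so instead of forcing $\n_n$ to converge to $\ub$, we use the freedom in choosing $\kb_n$. We take $\kb_n=\norm{\tilde\thetab_n}\,\mathrm{proj}_{\partial\norm{\thetadir{t_n}}}(\ub_n)$ and argue that this distance tends to $0$. By outer semicontinuity of $\partial\norm{\cdot}$ and compactness, any limit $\ub$ of $\ub_n$ is approximately attained: the dual-norm ball is compact and the constraint $\angles{\vb}{\thetadir{t_n}}=1$ is nearly satisfied. The delicate point is that $\partial\norm{\cdot}$ is not inner semicontinuous. To handle it, I would instead choose $\kb_n$ in $\partial\frac12\norm{\cdot}^2$ at the limit, and exploit that Theorem~\ref{thm:approx_kkt_is_kkt} concerns only the limit. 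An alternative is to run the argument directly on the limiting point: apply a diagonal argument to obtain $\lambda_i$ limits, and use closedness of the Clarke subdifferential graph, so that $\sum\lambda_iy_i\bar\h_i=\ub\cdot\norm{\tilde{\bar\thetab}}\in\partial\frac12\norm{\cdot}^2$ at the limit. The multipliers stay bounded via the lower bound \eqref{eq:ip_grad_params_minq}, which shows $\normst{\g}\ge c\,\Lcal\,\norm{\thetab}^{L-1}$. I expect this to be the main obstacle; the fallback is exactly this direct closed-graph argument, bypassing approximate KKT points.

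\textbf{Complementary slackness.} We need $\sum_i\lambda_i(z_i/\qmin-1)\to0$. Using $\beta_n=\norm{\tilde\thetab_n}/\normst{\g}$ and $\normst{\g}\ge\angles{-\g}{\thetab}/\norm{\thetab}\ge L\Lcal\varphi'(\qmin)\qmin/\norm{\thetab}$, the sum is bounded by
\[
C\sum_i \frac{\ell(z_i)\varphi'(z_i)(z_i-\qmin)}{\Lcal\,\varphi'(\qmin)\,\qmin}.
\]
Indices with $z_i-\qmin$ large are exponentially suppressed, since $\ell(z_i)/\ell(\qmin)\le e^{-\varphi'(\qmin)(z_i-\qmin)}$ by convexity of $\varphi$. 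Since $\varphi'$ is bounded above, and bounded below on $[\varphi(0),\infty)$, each term is $O(1/\qmin)\to0$. This part is routine, following \citet{lyu_li_homogeneous}.
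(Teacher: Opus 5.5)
Your fallback argument is correct and is the proof you should write. The headline plan, however, cannot work as stated, and the sentence claiming that outer semicontinuity makes $\mathrm{dist}\bigl(\ub_n,\partial\norm{\thetadir{t_n}}\bigr)\to0$ is false. Hypothesis~3 only says that $\ub_n=-\gdir{t_n}$ nearly attains the dual pairing with $\thetadir{t_n}$. It does not say that $\ub_n$ is close to the possibly much smaller set $\partial\norm{\thetadir{t_n}}$.

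For example, take $\norm{\cdot}=\norm{\cdot}_\infty$ on $\bbR^2$, $\thetadir{t_n}=(1,1-\tfrac1n)$ and $\ub_n=(\tfrac12,\tfrac12)$. Then $\normst{\ub_n}=1$ and $\angles{\ub_n}{\thetadir{t_n}}\to1$, yet $\partial\norm{(1,1-\tfrac1n)}_\infty=\{(1,0)\}$. So with your multipliers the stationarity residual at $\tilde\thetab_n$ stays bounded away from $0$. This is exactly the situation to expect for $\ell_\infty$-margin limits with tied coordinates (Adam, Signum).

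Your intermediate patch also fails: taking $\kb_n$ at the limit while keeping $\tilde\h_i\in\partial f(\x_i;\tilde\thetab_n)$ does not give an approximate KKT point in the sense of Definition~\ref{def:approx_kkt_point}, which needs $\kb$ and the $\h_i$ at the same point. The only consistent fix is to move everything to the limit, which is what your fallback does.

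Compared with the paper, both proofs use the same ingredients:
\begin{itemize}
\item rescaling so that $\tilde\thetab_n\to\hat\thetab=\bar\thetab/\gamma(\bar\thetab)^{1/L}$;
\item extracting limits $\hat\h_i\in\partial f(\x_i;\hat\thetab)$ from local boundedness and the closed graph of the Clarke subdifferential;
\item taking a limit $\ub^\star$ of $-\gdir{t_n}$, which lies in $\partial\norm{\bar\thetab}$ by hypothesis~3;
\item setting $\kb=\norm{\hat\thetab}\ub^\star$ at the limit point;
\item bounding the multipliers through $\normst{\g_t}\ge C\Lcal\norm{\thetab_t}^{L-1}$.
\end{itemize}

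The paper then leaves the multipliers $\lambda_{s,i}$ unconverged and declares the constant point $\hat\thetab$ an $(\varepsilon_s,\delta_s)$-approximate KKT point. It finishes by invoking Theorem~\ref{thm:approx_kkt_is_kkt}, i.e.\ Lyu--Li's result plus MFCQ (Lemma~\ref{lemma:mfcq}).

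You instead pass to $\lambda_{n,i}\to\lambda_i\ge0$ and read off exact stationarity. Exact complementary slackness then follows from your $O(1/\qmin)$ slackness bound together with $y_if(\x_i;\tilde\thetab_n)\to y_if(\x_i;\hat\thetab)$; state this last limit explicitly in the write-up. Because the multipliers are bounded a priori, your route is self-contained and shows that the MFCQ machinery is not actually needed here. The paper's packaging only buys conformity with the standard approximate-KKT framework.

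Your slackness estimate via $\ell(z_i)/\ell(\qmin)\le e^{-\varphi'(\qmin)(z_i-\qmin)}$ is equivalent to the paper's $ze^{-z}\le 1/e$ bound.
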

\begin{proof}
    First, note that since $\Lcal(\thetab_{t_n}) \toinfty{n} 0$, it holds that $\norm{\thetab_{t_n}} \toinfty{n} \infty, \qmin^{t_n}\toinfty{n} \infty$. Since $\gamma(\thetab_{t_n})$ converges to $\gamma(\bar \thetab) > 0$, in particular for large enough $n$, $\gamma(\thetab_{t_n}) > \frac12 \gamma(\bar \thetab) > 0$. Therefore it holds by Lemma~\ref{lemma:any_trajectory} that for large enough $n$, there exists $C > 0$ with
    $$\normst{\g_{t_n}} \geq C\Lcal(\thetab_{t_n})\norm{\thetab_{t_n}}^{L-1}~.$$
    Now, any $\thetab$ with $\gamma(\thetab) > 0$ has for all $i\in [m]$ that $y_if(\x_i; \thetab) > 0$ and therefore there exists a feasible point $\tilde \thetab$ of Equation~\eqref{eq:min_norm} along its direction with $\qmin(\tilde \thetab) = 1$, given by
     $$\tilde \thetab := \frac{\thetab}{\qmin(\thetab)^{1/L}} = \frac{\thetab}{\min_{i \in [m]}\left(y_if(\x_i; \thetab)\right)^{1/L}} = \frac{\thetab}{\gamma(\thetab)^{1/L}\norm{\thetab}}~.$$
    
     Denote $\hat \thetab := \tilde{\bar \thetab} = \frac{\bar \thetab}{\gamma(\bar \thetab)^{1/L}}$. We claim $\hat \thetab$ is the desired KKT point. We show that in fact $\hat \thetab$ is itself a $(\varepsilon_s,\delta_s)$-approximate KKT point for arbitrarily small $(\varepsilon_s,\delta_s)$, allowing us to use Theorem~\ref{thm:approx_kkt_is_kkt} with the constant sequence $\hat \thetab$. 
     
     First note that by continuity and scale-invariance of $\gamma(\thetab)$, it holds that 
     $$\tilde{\thetab}_{t_n} =\frac{1}{\gamma( \thetab_{t_n})^{1/L}}\thetadir{t_n}\toinfty{n} \frac{1}{\gamma(\bar \thetab)^{1/L}}\bar \thetab=\hat \thetab~.$$ 
     Take subgradients $\h_i^t \in \partial f(\x_i, \thetab_t)$ defining $\g_t$ by the chain rule as in Lemma~\ref{lemma:any_trajectory}. Since $
     \left(\qmin(\thetab_t)^{-\frac 1L}\right)^{L-1} = \qmin(\thetab_t)^{\frac1L-1}$, it holds by Theorem~B.2(a) in \citet{lyu_li_homogeneous} that $\tilde \h_i^t := \qmin(\thetab_t)^{\frac1L-1}\h_i^t 
     \in  \partial f(\x_i; \tilde \thetab_t)$. Since $f$ is locally Lipschitz, it holds that $\partial f(\x_i;\cdot)$ is bounded and has a closed graph around $\hat \thetab$, so for every $i \in [m]$ there exists a subsequence $t_s$ of $t_n$ with $\tilde \h_i^{(t_s)} \rightarrow \hat \h_i$ for some $\hat \h_i \in \partial f(\x_i; \hat \thetab)$. By iteratively choosing subsequences for $i=1,..., m$ we may assume that $\tilde \h_i^{(t_s)} \rightarrow \hat \h_i$ for all $i \in [m]$. Take a Bolzano-Weierstrass limit $\ub^\star$ of $-\gdir{t_s}$, and again abuse notation by assuming w.l.o.g $-\gdir{t_s} \toinfty{s} \ub^{\star}$. It therefore holds that $\normst{\ub^{\star}}=1$ and 
     $$\angles{\bar \thetab}{\ub^{\star}} = \lim_{s \to \infty} \angles{\bar \thetab}{-\gdir{t_s}} = 1~.$$
     Denote
     $$\kb := \norm{\hat \thetab} \ub^\star, \quad \forall i \in [m],s \in \bbN: \lambda_{s, i}  := \frac{\norm{\hat \thetab}\qmin(\thetab_{t_s})^{1 - \frac1L}\loss{z_i^{t_s}}\varphi'(z_i^{t_s})}{\normst{\g_{t_s}}} \geq 0~.$$
     Note that $\kb \in \partial \frac12 \norm{\hat \thetab}^2$ since by the chain rule $\partial \frac12 \norm{\hat \thetab}^2 = \norm{\hat \thetab}\cdot \partial \norm{\hat \thetab} = \left\{\norm{\hat \thetab}\vb \mid \angles{\vb}{\hat \thetab} = \norm{\hat \thetab}, \normst{\vb} \leq 1\right\}$  and indeed $\angles{\ub^{\star}}{\hat \thetab} = \angles{\ub^{\star}}{\norm{\hat \thetab}\bar \thetab}=\norm{\hat \thetab}$ and $\normst{\ub^{\star}} = 1$.  Choosing $\hat \h_i, i \in [m]$ as the subgradients from Definition~\ref{def:approx_kkt_point}, it remains to show that the errors are bounded by $\varepsilon_s, \delta_s \to 0$. Beginning from $\varepsilon_s$, first note that
     \begin{equation*}
    \sum_{i=1}^m\lambda_{s, i} y_i \tilde{ \mathbf{h}}_i^{(t_s)} = \sum_{i=1}^m\lambda_{s, i}\qmin^{\frac1L-1}y_i \mathbf{h}_i^{(t_s)} = \frac{\norm{\hat \thetab}}{\normst{\g_{t_s}}} \sum_{i=1}^m \loss{z_i^{t_s}}\varphi'(z_i^{t_s}) y_i \h_i^{(t_s)} = -\norm{\hat \thetab}\gdir{t_s}~.
    \end{equation*}
    So,
    \begin{equation}
    \varepsilon_s := \left\|\sum_{i=1}^m\lambda_{s, i}y_i\hat{\mathbf{h}}_i - \norm{\hat \thetab}\ub^{\star}\right\|_2 \leq \left\|\sum_{i=1}^m\lambda_{s, i}y_i \left(\hat{\mathbf{h}}_i - \tilde{\mathbf{h}}_i ^{(t_s)}\right)\right\|_2 + \left\|\sum_{i=1}^m\lambda_{s, i} y_i \tilde{\mathbf{h}}_i ^{(t_s)} - \norm{\hat \thetab}\mathbf{u}^{\star}\right\|_2~.
    \end{equation}
    The second term goes to 0 since $-\gdir{t_s} \rightarrow \mathbf{u}^{\star}$. To show the first term goes to 0, we use $\tilde{\mathbf{h}}_i^{(t_s)} \rightarrow \hat{\mathbf{h}}_i$ and show $\lambda_{s, i}$ are bounded. We use $\normst{\g_{t_s}} \geq C\Lcal(\thetab_{t_s})\norm{\thetab_{t_s}}^{L-1}$:
    \begin{align*}
        \lambda_{s, i}  &= \frac{\norm{\hat \thetab}\qmin(\thetab_{t_s})^{1 - \frac1L}\loss{z_i^t}\varphi'(z_i^t)}{\normst{\g_{t_s}}} \leq \frac{\Phi_M'\norm{\hat \thetab}\gamma(\thetab_{t_s})^{1-\frac1L}\norm{\thetab_{t_s}}^{L-1}\Lcal}{\normst{\g_{t_
        s}}} \\
        & \leq \frac{\Phi_M'\norm{\hat \thetab}\gamma(\thetab_{t_s})^{1-\frac1L}}{C} < \infty ~,
    \end{align*}
    since $\gamma$ is bounded. For $\delta_s$,
     \begin{equation}
        \delta_s = \left|\sum_{i=1}^m\lambda_{s, i}(y_if(\mathbf{x}_i; \hat{\thetab}) - 1)\right| \leq \left|\sum_{i=1}^m\lambda_{s, i}(y_if(\mathbf{x}_i; \hat{\thetab}) - y_i f(\mathbf{x}_i; \tilde{\thetab}_{t_s}))\right| + \left|\sum_{i=1}^m\lambda_{s, i}(y_if(\mathbf{x}_i;  \tilde{\thetab}_{t_s}) - 1)\right|
    \end{equation}
    The first term goes to 0 from continuity of $f$ and boundedness of $\lambda_{s,i}$. As for the second term, note that each summand is non-negative. Plugging in $\lambda_{s, i}$ (here $z_i = z_i^{t_s}$, and similarly for $\qmin, \Lcal$),
    \begin{align*}
        \sum_{i=1}^m \lambda_{s,i} \big(y_if(\mathbf{x}_i; \tilde{\thetab}_{t_s})-1\big) 
        &=  \frac{\norm{\hat \thetab}}{\normst{\g_{t_s}}}\sum_{i=1}^m\qmin^{1 - \frac1L}\loss{z_i}\varphi'(z_i)\left(\frac{z_i}{\qmin}-1\right) \\
        & =  \frac{\norm{\hat \thetab}}{\qmin^{1/L}\normst{\g_{t_s}}}\sum_{i=1}^m\varphi'(z_i)\loss{z_i}\left(z_i - \qmin\right) \\
        & {\leq} \frac{\norm{\hat \thetab}}{\norm{\thetab_{t_s}}\gamma(\thetab_{t_s})^{1/L}C \Lcal \norm{\thetab_{t_s}}^{L-1}}\sum_{i=1}^m \varphi'(z_i)e^{-\varphi(z_i)}\left(z_i - \qmin\right) \\
        & = \frac{\norm{\hat \thetab}e^{-\varphi(\qmin)}}{C\gamma(\thetab_{t_s})^{1/L} \Lcal \norm{\thetab_{t_s}}^{L}}\sum_{i=1}^m\varphi'(z_i)e^{-(\varphi(z_i)-\varphi(\qmin))}\left(z_i - \qmin\right) \\
        & \leq \frac{\norm{\hat \thetab}}{C\gamma(\thetab_{t_s})^{1/L}\norm{\thetab_{t_s}}^{L}}\sum_{i=1}^m\varphi'(z_i)e^{-(\varphi(z_i)-\varphi(\qmin))}\left(z_i - \qmin\right) \\
        & = \frac{\norm{\hat \thetab}}{C\gamma(\thetab_{t_s})^{1/L}\norm{\thetab_{t_s}}^{L}}\sum_{i:z_i>\qmin}\varphi'(z_i)e^{-(\varphi(z_i)-\varphi(\qmin))}\left(\varphi(z_i)-\varphi(\qmin)\right)\frac{z_i - \qmin}{\varphi(z_i)-\varphi(\qmin)}\\
        & \leq \frac{\norm{\hat \thetab}}{C\gamma(\thetab_{t_s})^{1/L}\norm{\thetab_{t_s}}^{L}}\sum_{i:z_i>\qmin}\frac{\varphi'(z_i)}{e}\frac{z_i - \qmin}{\varphi(z_i)-\varphi(\qmin)}\\
        &\toinfty{s} 0
    \end{align*}
    The first inequality is again by ${\normst{\g_t}\geq C \Lcal \norm{\thetab_t}^{L-1}}$, and the second by $e^{-\varphi(\qmin)} \leq \Lcal$. In the last inequality we used the fact $\forall z \geq 0 :z e^{-z} \leq 1/e$. 
    The limit holds since $\norm{\thetab_{t_s}} \toinfty{s} \infty$, $\gamma(\thetab_{t_s}) \toinfty{s} \gamma(\bar \thetab) > 0$, $\vphi'$ is bounded, and since for all large enough $s$, $\qmin^{t_s} > 1$, so by Equation~\eqref{eq:phi_linear_growth}:
    $$\forall i \in [m], z_i^{t_s}>\qmin^{t_s}: \frac{z_i^{t_s} - \qmin^{t_s}}{\varphi(z_i^{t_s})-\varphi(\qmin^{t_s})} \leq  \frac{z_i^{t_s} -\qmin^{t_s}}{\varphi'(1)(z_i^{t_s} - \qmin^{t_s})}= \frac{1}{\varphi'(1)} < \infty$$
\end{proof}

\subsection{Normalized Steepest Descent}\label{subsection:normalized_sd}
In this section we follow the proof ideas of \citet{tsilivis2025} to extend both main results: monotonicity of the soft margin and convergence to KKT points, to normalized steepest descent with a learning rate schedule $\eta(t)$. As mentioned in Section~\ref{subsection:main_text_nsd}, the results may also be derived from a reparameterization argument; we put forth the proofs for completeness and for the rates in Lemma~\ref{lemma:nsd_loss_decays}. For convenience, we introduce notations of two additional assumptions used only in this section:

\begin{enumerate}[align=left]
    \litem{(LR-NSD)}{lr_ass:lr_nsd}$\eta(t)$ satisfies $\int_0^\infty \eta(t)dt=\infty$.
    \litem{(R1)}{real_ass} $\exists t_0 \geq 0: \Lcal(\thetab_{t_0}) < 1$.
\end{enumerate}

\begin{theorem}[Monotonicity of the Soft Margin] \label{thm:appendix_normalized_sd_soft_margin}
    Let $\thetab_t$ follow a trajectory of Equation~\eqref{eq:norm_steepest_descent}, under Assumptions~\ref{model_ass:lipschitz_C1}, \ref{model_ass:homogeneous_1}, \ref{real_ass}, \ref{lr_ass:lr_nsd}. Then $\tilde \gamma(\thetab_t)$ is non-decreasing on $[t_0, \infty)$.
\end{theorem}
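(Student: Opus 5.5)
We work with $\log\tilde\gamma(\thetab_t)=\log\varphi^{-1}\bigl(\log\frac1{\Lcal(\thetab_t)}\bigr)-L\log\norm{\thetab_t}$ and show its a.e. derivative is nonnegative on $[t_0,\infty)$. Since $\tilde\gamma(\thetab_t)$ is absolutely continuous on compact intervals where it is defined (composition of locally Lipschitz functions with the arc $\thetab_t$), this gives monotonicity.

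\textbf{Step 1: the loss is non-increasing.} By the chain rule for arcs and stratifiable functions (Theorem~\ref{thm:chain_rule_arcs_stratifiable}), for a.e.\ $t$ and every $\g\in\partial\Lcal(\thetab_t)$ we have $\frac{d\Lcal}{dt}=\angles{\g}{\dtheta t}$. Take $\g=\g_t$, the subgradient realizing \Eqref{eq:norm_steepest_descent}. Then $\angles{\g_t}{\dtheta t}=\eta(t)\min_{\norm{\ub}=1}\angles{\ub}{\g_t}=-\eta(t)\normst{\g_t}\le 0$. Hence $\Lcal(\thetab_t)\le\Lcal(\thetab_{t_0})<1$ for all $t\ge t_0$.

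If $\varphi(0)<0$, this yields $\log\frac1\Lcal>0>\varphi(0)$. In general we need $\log\frac1\Lcal>\varphi(0)$ so that $\tilde\gamma$ is defined and Lemma~\ref{lemma:any_trajectory} item 4 applies. I expect this threshold bookkeeping to be the main obstacle. For the exponential loss $\varphi(0)=0$ and (R1) suffices. For general $\varphi$ one either interprets (R1) as $\Lcal<\ell(0)$, or observes that $\qmin>0$ is needed. I would state the argument under the condition $\log\frac1{\Lcal(\thetab_t)}>\varphi(0)$, which is preserved since $\Lcal$ is non-increasing. The same condition gives $\qmin>0$, hence $\thetab_t\neq\zero$ and $\g_t\neq\zero$.

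\textbf{Step 2: differentiate the two terms.} For the norm term, Theorem~\ref{thm:chain_rule_arcs_regular} together with \Eqref{eq:norm_subdifferential} gives
\[
\frac{d\norm{\thetab_t}}{dt}\le\norm{\dtheta t}=\eta(t).
\]
For the loss term, write $\Lambda=\log\frac1\Lcal$. Then
\[
\frac{d}{dt}\log\varphi^{-1}(\Lambda)=\frac{(\varphi^{-1})'(\Lambda)}{\varphi^{-1}(\Lambda)}\cdot\frac{\eta(t)\normst{\g_t}}{\Lcal}.
\]

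\textbf{Step 3: compare the two terms.} From \Eqref{eq:ip_grad_params_phi-1} and the dual-norm inequality,
\[
\normst{\g_t}\,\norm{\thetab_t}\ge\angles{-\g_t}{\thetab_t}\ge L\Lcal\,\frac{\varphi^{-1}(\Lambda)}{(\varphi^{-1})'(\Lambda)}.
\]
Therefore the loss term is at least $\frac{L\eta(t)}{\norm{\thetab_t}}$, while the norm term satisfies $L\frac{d}{dt}\log\norm{\thetab_t}\le\frac{L\eta(t)}{\norm{\thetab_t}}$. Subtracting, $\frac{d}{dt}\log\tilde\gamma(\thetab_t)\ge0$ a.e.\ on $[t_0,\infty)$. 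Integrating over any interval in $[t_0,\infty)$ then gives that $\tilde\gamma(\thetab_t)$ is non-decreasing there.
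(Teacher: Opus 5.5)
Your proof is correct and is essentially the paper's. You differentiate $\log\tilde\gamma(\thetab_t)$, use $\frac{d\Lcal}{dt}=-\eta(t)\normst{\g_t}$ for the subgradient realizing the NSD step, bound $\frac{d}{dt}\norm{\thetab_t}\le\eta(t)$, and compare the two terms via Equation~\eqref{eq:ip_grad_params_phi-1} and the dual-norm inequality. The only minor difference is that you keep $\log\frac{1}{\Lcal}>\varphi(0)$ in force through monotonicity of the loss, whereas the paper uses a continuation argument (``$\log\tilde\gamma$ is non-decreasing whenever $\tilde\gamma>0$'').

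Your threshold caveat is also well founded. The paper asserts that (R1) is equivalent to $\log\frac{1}{\Lcal(\thetab_{t_0})}>\varphi(0)$, which holds only when $\varphi(0)=0$ (e.g.\ the exponential loss). For the logistic loss, $\varphi(0)=-\log\log 2>0$, so the argument genuinely requires $\Lcal(\thetab_{t_0})<\ell(0)$, exactly as you propose.
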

\begin{proof}
    By the chain rule, for almost any $t \geq t_0$ and for any $\g_t \in \partial \Lcal(\thetab_t)$:
    $$\frac{d \Lcal(\thetab_t)}{dt}  = \angles{\g_t}{\dtheta{t}} ~.$$
    In particular this holds for $\g_t \in \partial\Lcal(\thetab_t)$ from the definition of normalized steepest descent for which
    \begin{equation}\label{eq:nsd_dL_dt}
      -\normst{\g_t}=\angles{\frac{1}{\eta(t)}\dtheta{t}}{\g_t} = \frac{1}{\eta(t)}\frac{d\Lcal(\thetab_t)}{dt} ~.
    \end{equation}
    Notice that \ref{real_ass} is equivalent to $\log \frac{1}{\Lcal(\thetab_t)} > \varphi(0)$, or $\varphi^{-1}\left(\log \frac{1}{\Lcal(\thetab_t)}\right) > 0$, so for $t=t_0$ we have $\tilde \gamma (\thetab_t) > 0$, and $\log \tilde \gamma $ is well defined. For any $t \geq t_0$ let $\n_t \in \partial \norm{\thetab_t}$ (with $\normst{\n_t} \leq 1$) and any $\g_t \in \partial \Lcal(\thetab_t)$. For almost any $t \geq t_0$ with $\tilde \gamma (\thetab_t) > 0$ it holds that
  
    \begin{align*}
    \frac{d \log \tilde \gamma(\thetab_t)}{dt} &= \frac{d}{dt} \log \varphi^{-1}\left(\log \frac{1}{\Lcal(\thetab_t)}\right) - L \frac{d}{dt} \log \norm{\thetab_t} &\\
    & =  - \frac{d \Lcal (\thetab_t)}{dt}\frac{(\varphi^{-1})'(\log \frac{1}{\Lcal (\thetab_t)})}{\Lcal (\thetab_t) \varphi^{-1}(\log \frac{1}{\Lcal (\thetab_t)})} - L\angles{\frac{\n_t}{\norm{\thetab_t}}}{\frac{d\thetab_t}{dt}} & \text{Chain Rule}\\
    &\geq  - \frac{d \Lcal (\thetab_t)}{dt}\frac{(\varphi^{-1})'(\log \frac{1}{\Lcal (\thetab_t)})}{\Lcal (\thetab_t) \varphi^{-1}(\log \frac{1}{\Lcal (\thetab_t)})} - L\frac{\norm{\frac{d\thetab_t}{dt}}}{\norm{\thetab_t}} & \text{Def. of dual norm}\\
    & = \normst{\g_t}\eta(t)\frac{(\varphi^{-1})'(\log \frac{1}{\Lcal (\thetab_t)})}{\Lcal (\thetab_t) \varphi^{-1}(\log \frac{1}{\Lcal (\thetab_t)})} -  \frac{L\eta(t)}{\norm{\thetab_t}} &\text{Equation~\eqref{eq:nsd_dL_dt}}\\
    & \geq \frac{L\eta(t)\normst{\g_t}}{\angles{\thetab_t}{-\g_t}} -  \frac{L\eta(t)}{\norm{\thetab_t}}  &\text{Equation~\eqref{eq:ip_grad_params_phi-1}} \\
    & \geq 0 ~. & \text{Def. of Dual Norm}
    \end{align*}
    This shows that $\log \tilde \gamma(\thetab_t) $ is non-decreasing whenever $\tilde \gamma(\thetab_t) > 0$. Since $\tilde \gamma (\thetab_{t_0}) > 0$, we get that $\log \tilde \gamma(\thetab_t)$ and hence $\tilde \gamma(\thetab_t)$ are non-decreasing on $[t_0, \infty)$.

\end{proof}

\begin{lemma}[Descent Lemma for NSD]\label{lemma:nsd_loss_decays}
    Let $\thetab_t$ follow a trajectory of Equation~\eqref{eq:norm_steepest_descent} under Assumptions~\ref{model_ass:lipschitz_C1}, \ref{model_ass:homogeneous_1}, \ref{real_ass}, \ref{lr_ass:lr_nsd}. Then for almost any $t \geq t_0:\frac{d\Lcal(\thetab_t)}{dt} < 0$, and it holds that $\Lcal(\thetab_t) \toinfty{t} 0$, $\norm{\thetab_t} \toinfty{t} \infty$, that $\gamma, \tilde \gamma $ both converge to some $\gamma_\infty > 0$ and $\frac{\norm{\thetab_t}}{\int_0^t\eta} \toinfty{t} 1$,$\frac{\varphi^{-1}\left(\log \frac{1}{\Lcal(\thetab_t)}\right)}{\gamma_\infty\norm{\thetab_t}^L} \toinfty{t}1$.
\end{lemma}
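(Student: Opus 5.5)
Throughout, $t \geq t_0$, and I write $\Lcal_t=\Lcal(\thetab_t)$.

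\emph{Step 1: the loss is strictly decreasing.} Equation~\eqref{eq:nsd_dL_dt} gives $\frac{d\Lcal_t}{dt} = -\eta(t)\normst{\g_t}$. This is strictly negative as long as $\g_t \neq \zero$. To show $\g_t \neq \zero$, I use the lower bound \eqref{eq:ip_grad_params_phi-1} of Lemma~\ref{lemma:any_trajectory}: whenever $\log\frac1{\Lcal_t} > \varphi(0)$ we have $\angles{-\g_t}{\thetab_t} > 0$. The condition $\log\frac1{\Lcal_t}>\varphi(0)$ holds at $t_0$ by \ref{real_ass}. It is then preserved for all later times, because $\Lcal$ is non-increasing; a continuity/first-exit-time argument makes this rigorous.

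\emph{Step 2: quantitative decay of the loss.} By Theorem~\ref{thm:appendix_normalized_sd_soft_margin}, $\tilde\gamma(\thetab_t) \geq \tilde\gamma_0 := \tilde\gamma(\thetab_{t_0}) > 0$. By item~2 of Lemma~\ref{lemma:any_trajectory}, the hard margin then satisfies $\gamma(\thetab_t) \geq \tilde\gamma_0$. Since $\norm{\thetab_t}^L\tilde\gamma(\thetab_t) = \varphi^{-1}(\log\frac1{\Lcal_t}) \geq \varphi^{-1}(\log\frac1{\Lcal_{t_0}}) > 0$, the norm is also bounded below: $\norm{\thetab_t} \geq N_{min}>0$. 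Item~5 of Lemma~\ref{lemma:any_trajectory} then applies. I do not plan to use it directly, and instead use the explicit bound \eqref{eq:ip_grad_params_minq}, which gives
$$\normst{\g_t} \ge \frac{\angles{-\g_t}{\thetab_t}}{\norm{\thetab_t}} \ge c\,\Lcal_t\norm{\thetab_t}^{L-1}\ \text{ for some } c>0.$$
This bound alone has the issue that $\norm{\thetab_t}^{L-1}$ might blow up or shrink. I therefore prefer to work with the potential $\Psi_t := \varphi^{-1}(\log\frac1{\Lcal_t})$ and to use \eqref{eq:ip_grad_params_phi-1}:
$$\frac{d\Psi_t}{dt} = \frac{(\varphi^{-1})'(\log\frac1{\Lcal_t})}{\Lcal_t}\,\eta(t)\normst{\g_t} \ge \frac{\eta(t)L\Psi_t}{\norm{\thetab_t}}.$$
Since $\norm{\thetab_t} \le \norm{\thetab_{t_0}} + \int_{t_0}^t\eta$ (the update has unit norm times $\eta$), this gives
$$\frac{d\log\Psi_t}{dt} \ge \frac{L\eta(t)}{\norm{\thetab_{t_0}}+\int_{t_0}^t\eta}.$$
Integrating, $\Psi_t \ge \Psi_{t_0}\left(\frac{\norm{\thetab_{t_0}}+\int_{t_0}^t\eta}{\norm{\thetab_{t_0}}}\right)^L \to\infty$ by \ref{lr_ass:lr_nsd}. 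Hence $\Lcal_t\to0$, and by item~2 of Lemma~\ref{lemma:any_trajectory}, $\norm{\thetab_t}\to\infty$ and $|\gamma-\tilde\gamma|\to0$.

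\emph{Step 3: margin convergence and the two rates.} $\tilde\gamma$ is monotone non-decreasing and bounded above by $\gamma \le \sup_{\norm{\thetab}=1}\qmin(\thetab)<\infty$, so it converges to some $\gamma_\infty \ge \tilde\gamma_0 >0$. Then $\gamma$ converges to the same limit, and $\Psi_t/(\gamma_\infty\norm{\thetab_t}^L) = \tilde\gamma(\thetab_t)/\gamma_\infty \to 1$ immediately. For $\norm{\thetab_t}/\int_0^t\eta$, the upper bound $\limsup\le1$ is the triangle inequality above. For the lower bound, the chain rule for the norm gives $\frac{d\norm{\thetab_t}}{dt} \ge \eta(t)\angles{\n}{\frac{-\g_t}{\normst{\g_t}}}$ for $\n$ a suitable subgradient. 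I would pick $\n$ via the alignment
$$\angles{\thetadir{t}}{-\gdir{t}}\ge \frac{\angles{-\g_t}{\thetab_t}}{\norm{\thetab_t}\normst{\g_t}}.$$
I expect this to be the main obstacle: to show $\frac{d\norm{\thetab_t}}{dt}\ge\eta(t)(1-o(1))$ one needs the alignment $\angles{\thetadir t}{-\gdir t}\to1$, at least in an averaged sense, and nothing so far provides it.

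My plan to obtain it is to integrate the inequality from the proof of Theorem~\ref{thm:appendix_normalized_sd_soft_margin}, keeping the slack term:
$$\frac{d\log\tilde\gamma}{dt} \ge \frac{L\eta(t)}{\norm{\thetab_t}}\Big(\frac{1}{a_t}-1\Big), \qquad a_t:=\angles{\thetadir t}{-\gdir t}\le1.$$
Since $\log\tilde\gamma$ is bounded, the integral $\int \frac{L\eta}{\norm{\thetab}}(1-a_t)\,dt$ converges. On the other hand, $\frac{d\log\norm{\thetab_t}}{dt} \ge \frac{\eta(t)a_t}{\norm{\thetab_t}}$, and $\int \frac{\eta}{\norm{\thetab}} = \infty$ because $\norm{\thetab_t}\le C+\int\eta$. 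Comparing the two integrals gives $\log\norm{\thetab_t} \ge \int^t\frac{\eta}{\norm{\thetab}} - O(1)$. Combined with $\frac{d\norm{\thetab_t}}{dt}\le\eta$, an ODE-comparison argument (treating $\frac{d\norm{\thetab_t}}{dt}$ against $\eta$ as a ratio that is close to $1$ in a Cesàro sense) then yields $\norm{\thetab_t}/\int_0^t\eta\to1$.
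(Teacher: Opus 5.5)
\textbf{Where the proposal breaks.} Your Steps 1--2 and the first part of Step 3 are fine. In particular, bounding $\norm{\thetab_t}\le\norm{\thetab_{t_0}}+\int_{t_0}^t\eta$ to get $\Psi_t\to\infty$ is a valid alternative route to $\Lcal(\thetab_t)\to0$. The gap is in the lower bound $\liminf_{t\to\infty}\norm{\thetab_t}/\int_0^t\eta\ge1$. Your plan rests on the pointwise inequality $\frac{d\log\norm{\thetab_t}}{dt}\ge\frac{\eta(t)a_t}{\norm{\thetab_t}}$, which holds for $\norm{\cdot}_2$ but is false for a general norm. By the chain rule, $\frac{d\norm{\thetab_t}}{dt}=\eta(t)\angles{\n_t}{\ub_t}$, where $\n_t\in\partial\norm{\thetab_t}$ and $\ub_t$ is the unit steepest-descent direction. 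The quantity $\angles{\n_t}{\ub_t}$ is not controlled by $a_t=\angles{\thetadir{t}}{-\gdir{t}}$. Your expression $\eta(t)\angles{\n}{-\g_t/\normst{\g_t}}$ silently identifies $\ub_t$ with $-\g_t/\normst{\g_t}$, which is only valid in the Euclidean case. Concretely, take $\norm{\cdot}_\infty$ on $\bbR^2$ with $\thetab=(1,\frac12)$ and $\g=(\epsilon,-1)$. Then $\ub=(-1,1)$ and $\partial\norm{\thetab}_\infty=\{(1,0)\}$, so $\frac{d\norm{\thetab}_\infty}{dt}=-\eta$, while $a=\frac{1/2-\epsilon}{1+\epsilon}>0$. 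Hence the \emph{comparison of the two integrals} does not go through as written. The closing ODE-comparison step is also left unspecified.

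\textbf{The fix.} No alignment information is needed at all. The inequality you already derived in Step 2, $\frac{d\Psi_t}{dt}\ge\frac{L\eta(t)\Psi_t}{\norm{\thetab_t}}$, is equivalent to
\[
\frac{d}{dt}\Psi_t^{1/L}\ \ge\ \eta(t)\,\frac{\Psi_t^{1/L}}{\norm{\thetab_t}}\ =\ \eta(t)\,\tilde\gamma(\thetab_t)^{1/L}.
\]
What went wrong is that you discarded the exact identity $\Psi_t^{1/L}/\norm{\thetab_t}=\tilde\gamma(\thetab_t)^{1/L}$ by bounding $\norm{\thetab_t}$ from above. Instead, integrate on $[t_\varepsilon,t]$, where $\tilde\gamma\ge\gamma_\infty-\varepsilon$. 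This gives $\liminf_{t}\Psi_t^{1/L}/\int_0^t\eta\ge(\gamma_\infty-\varepsilon)^{1/L}$. Since $\norm{\thetab_t}=\bigl(\Psi_t/\tilde\gamma(\thetab_t)\bigr)^{1/L}$ and $\tilde\gamma(\thetab_t)\to\gamma_\infty$, you get $\liminf_t\norm{\thetab_t}/\int_0^t\eta\ge\bigl((\gamma_\infty-\varepsilon)/\gamma_\infty\bigr)^{1/L}$ for every $\varepsilon>0$. This is the paper's argument, and it also yields $\Lcal(\thetab_t)\to0$ along the way. Even within your own framework, the intermediate bound $\log\norm{\thetab_t}\ge\int_{t_0}^t\frac{\eta}{\norm{\thetab_s}}ds-O(1)$ follows directly without any statement about $\frac{d\norm{\thetab_t}}{dt}$. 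Integrate $\frac{d\log\Psi_t}{dt}\ge\frac{L\eta(t)}{\norm{\thetab_t}}$ and use $\log\Psi_t=\log\tilde\gamma(\thetab_t)+L\log\norm{\thetab_t}$ with $\log\tilde\gamma$ bounded. You would still need a separate argument to turn that logarithmically weighted bound into the ratio limit, which the direct route makes unnecessary.
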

\begin{proof}
    By Equation~\eqref{eq:nsd_dL_dt} and Equation~\eqref{eq:ip_grad_params_phi-1}, for almost any $t \geq t_0$, there exists $\g_t \in \partial \Lcal(\thetab_t)$ with
    \begin{equation}\label{eq:loss_ineq_basic}
      \frac{d\Lcal(\thetab_t)}{dt} = -\eta(t)\normst{\g_t} \leq -\frac{1}{\norm{\thetab_t}}\eta(t)L \Lcal \frac{\varphi^{-1}(\log \frac{1}{\Lcal})}{(\varphi^{-1})'(\log \frac 1 \Lcal)} < 0  ~.
    \end{equation}
    Note that $\varphi^{-1}(\log \frac{1}{\Lcal}) > 0$ since by Theorem~\ref{thm:appendix_normalized_sd_soft_margin}, $\tilde \gamma(\thetab_t) \geq \tilde \gamma (\thetab_{t_0}) > 0$. Rearranging,
    $$-\frac1L\frac{d\Lcal(\thetab_t)}{dt}\frac{1}{\Lcal}\left(\varphi^{-1}\left(\log \frac{1}{\Lcal}\right)\right)^{\frac1L-1}(\varphi^{-1})'\left(\log \frac 1 \Lcal\right) \geq \eta(t)\frac{\left(\varphi^{-1}\left(\log \frac{1}{\Lcal}\right)\right)^{1/L}}{\norm{\thetab_t}} = \eta(t) \tilde \gamma(\thetab_t)^{1/L} ~.$$
    Since $\tilde \gamma$ is non-decreasing, positive and bounded by above (from Lemma~\ref{lemma:any_trajectory} item 2, $\tilde \gamma \leq \gamma$, and $\gamma$ is bounded by continuity of $f$ and compactness of the unit $\norm{\cdot}$-sphere), it converges to some $ \gamma_\infty > 0$. Therefore, for every $\varepsilon  > 0$ there exists some $t_\varepsilon$ with $\forall t \geq t_\varepsilon: \tilde \gamma \geq \gamma_\infty - \varepsilon$. Integrating on $[t_\varepsilon, t]$:
    \begin{equation}\label{eq:nsd_ineq_loss_integrated}
\left(\varphi^{-1}\left(\log \frac{1}{\Lcal(\thetab_t)}\right)\right)^{1/L} - \left(\varphi^{-1}\left(\log \frac{1}{\Lcal(\thetab_{t_\varepsilon})}\right)\right)^{1/L} \geq (\gamma_\infty - \varepsilon)^{1/L}\int_{t_\varepsilon}^t\eta    ~.
    \end{equation}
    By Assumption~\ref{lr_ass:lr_nsd}, the RHS tends to $\infty$ as $t \to \infty$, showing that $\varphi^{-1}(\log \frac{1}{\Lcal(\thetab_t)}) \to \infty$, and therefore $\frac{1}{\Lcal(\thetab_t)}\toinfty{t} \infty, \Lcal(\thetab_t) \toinfty{t} 0$ and in particular $\norm{\thetab_t} \to \infty$. Note that by Lemma~\ref{lemma:any_trajectory}, it holds since $\norm{\thetab_t} \to \infty$ that $\abs{\gamma - \tilde \gamma} \to 0$, so also $\gamma \to\gamma_\infty$. Moreover, we get that $\lim \inf_{t \to \infty}\frac{\varphi^{-1}\left(\log \frac{1}{\Lcal(\thetab_t)}\right)}{\left(\int_0^t{\eta}\right)^L} \geq \gamma_\infty - \varepsilon $, but since this holds for any $\varepsilon > 0$ it holds that
    \begin{equation}\label{eq:nsd_loss_int_eta}
      \lim \inf_{t \to \infty}\frac{\varphi^{-1}\left(\log \frac{1}{\Lcal(\thetab_t)}\right)}{\left(\int_0^t{\eta}\right)^L} \geq \gamma_\infty ~. 
    \end{equation}
    Note that by definition of $\tilde \gamma $, 
    \begin{equation}\label{eq:nsd_log_loss_rate}
      \frac{\varphi^{-1}\left(\log \frac{1}{\Lcal(\thetab_t)}\right)}{\norm{\thetab_t}^L} = \tilde \gamma(\thetab_t) \toinfty{t} \gamma_\infty ~. 
    \end{equation}
    Putting together Equations~\eqref{eq:nsd_loss_int_eta},~\eqref{eq:nsd_log_loss_rate} implies
    $$\liminf_{t \to \infty} \frac{\norm{\thetab_t}}{\int_0^t\eta} = \liminf_{t \to \infty}\left( \frac{\norm{\thetab_t}^L}{\varphi^{-1}\left(\log \frac{1}{\Lcal(\thetab_t)}\right)}\frac{\varphi^{-1}\left(\log \frac{1}{\Lcal(\thetab_t)}\right)}{\left(
    \int_0^t\eta\right)^L} \right)^{1/L}\geq  \left(\frac{1}{\gamma_\infty}\gamma_\infty \right)^{1/L} = 1~.$$
    It also holds by the triangle inequality and by Assumption~\ref{lr_ass:lr_nsd} that
    $$\limsup_{t \to \infty} \frac{\norm{\thetab_t}}{\int_0^t\eta} \leq \limsup_{t \to \infty} \frac{\norm{\thetab_0}+\int_0^t\norm{\dtheta{s}}ds}{\int_0^t\eta} = \limsup_{t \to \infty} \frac{\norm{\thetab_0}+\int_0^t\eta}{\int_0^t\eta} = 1~.$$
    Proving
    $$\lim_{t \to \infty}\frac{\norm{\thetab_t}}{\int_0^t \eta} = 1~.$$
    \end{proof}

    \begin{theorem}[KKT Stationarity for NSD - Theorem~\ref{thm:main_text_nsd_kkt}] \label{thm:appendix_nsd_kkt}
        Let $\thetab_t$ be a trajectory of normalized steepest descent with respect to a norm $\norm{\cdot}$ (Equation~\eqref{eq:norm_steepest_descent}) under Assumptions~\ref{model_ass:lipschitz_C1}, \ref{model_ass:homogeneous_1}, \ref{real_ass}, \ref{lr_ass:lr_nsd}. Then any limit point $\bar \thetab$ of $\thetadir{t}$ is the direction of a KKT point of Problem~\eqref{eq:min_norm} with the same norm $\norm{\cdot}$.
    \end{theorem}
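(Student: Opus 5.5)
The plan is to reduce to Theorem~\ref{thm:appendix_kkt_blueprint}. Fix a limit point $\bar \thetab$ of $\thetadir{t}$, with $\norm{\bar\thetab}=1$. By Lemma~\ref{lemma:nsd_loss_decays}, $\Lcal(\thetab_t)\to 0$ and $\gamma(\thetab_t)\to\gamma_\infty>0$, so by continuity $\gamma(\bar\thetab)=\gamma_\infty>0$. Since $\Lcal>0$ and (by Equation~\eqref{eq:ip_grad_params_phi-1}) $\angles{-\g_t}{\thetab_t}>0$ for $t\ge t_0$, we have $\g_t\neq\zero$. Items 1 and 2 of the blueprint hold along any sequence $t_n\to\infty$ with $\thetadir{t_n}\to\bar\thetab$. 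It therefore remains to find such a sequence on which the alignment $\beta(t):=\angles{\thetadir{t}}{-\gdir{t}}$ tends to $1$, with $\g_t$ the subgradient realizing the NSD step.

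For the alignment I would strengthen the chain of inequalities in Theorem~\ref{thm:appendix_normalized_sd_soft_margin}. Without dropping the last step, it gives for a.e.\ $t\ge t_0$
\[
\frac{d\log\tilde\gamma(\thetab_t)}{dt}\;\ge\;\frac{L\eta(t)}{\norm{\thetab_t}}\left(\frac{\normst{\g_t}\norm{\thetab_t}}{\angles{\thetab_t}{-\g_t}}-1\right)=\frac{L\eta(t)}{\norm{\thetab_t}}\left(\frac1{\beta(t)}-1\right)\;\ge\;\frac{L\eta(t)}{\norm{\thetab_t}}\bigl(1-\beta(t)\bigr).
\]
Since $\log\tilde\gamma$ is bounded and nondecreasing, integrating yields $\int_{t_0}^\infty \frac{\eta(t)}{\norm{\thetab_t}}(1-\beta(t))\,dt<\infty$. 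By Lemma~\ref{lemma:nsd_loss_decays}, $\norm{\thetab_t}\sim N(t):=\int_0^t\eta$, while $\int^\infty \frac{\eta}{N}\,dt=\int^\infty d\log N=\infty$. So $1-\beta$ is small on most of the time, measured in the ``log-time'' $\log N(t)$.

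The main obstacle is combining this with the requirement $\thetadir{t_n}\to\bar\thetab$, because $\bar\thetab$ is only a limit point. I would control the motion of the direction using Equation~\eqref{eq:norm_of_direction_change}: $\norm{\frac{d}{dt}\thetadir{t}}\le 2\eta/\norm{\thetab_t}\approx 2\,d\log N$. Thus in log-time the direction moves at bounded speed. Now fix $\varepsilon>0$ and a time $s$ (arbitrarily large) with $\norm{\thetadir{s}-\bar\thetab}<\varepsilon$. On the window $[s,s']$ where $\log N$ increases by $\varepsilon$, the direction stays within $O(\varepsilon)$ of $\bar\thetab$. The window has log-length $\varepsilon$, and the tail of the convergent integral beyond $s$ is $o(1)$. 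Hence there is a time $t\in[s,s']$ with $1-\beta(t)\le o(1)/\varepsilon$, and we may choose it among the a.e.\ times where the NSD relation holds. Letting $s\to\infty$ and then $\varepsilon\to 0$ diagonally produces a sequence $t_n$ with $\thetadir{t_n}\to\bar\thetab$ and $\beta(t_n)\to1$. Theorem~\ref{thm:appendix_kkt_blueprint} then concludes that $\bar\thetab$ is the direction of a KKT point.
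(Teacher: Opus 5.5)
Your proposal is correct and follows essentially the same route as the paper: reduce to Theorem~\ref{thm:appendix_kkt_blueprint}, keep the last term $\frac{1}{\beta(t)}-1$ in the soft-margin derivative bound, integrate it over a short ``log-time'' window that starts where $\thetadir{s}$ is close to $\bar\thetab$, and use Equation~\eqref{eq:norm_of_direction_change} to keep the direction close throughout the window. The only differences are bookkeeping: the paper measures windows directly by $\int \norm{\frac{d\thetab_t}{dt}}/\norm{\thetab_t}\,dt$, so it needs only $\norm{\thetab_t}\to\infty$ rather than $\norm{\thetab_t}\sim\int_0^t\eta$, and it picks $s_n$ so that $\tilde\gamma(\thetab_{s_n})$ is close to $\gamma_\infty$ rather than via the tail of a convergent integral.
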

    \begin{proof}
        Let $\bar \thetab$ be a limit point of $\thetadir{t}$. By Theorem~\ref{thm:appendix_kkt_blueprint}, and since by Lemma~\ref{lemma:nsd_loss_decays}, $\Lcal(\thetab_t) \toinfty{t} 0$ and $\gamma(
        \thetab_t) \toinfty{t} \gamma_\infty > 0$, it suffices to find a sequence $t_n \toinfty{n} \infty$ for which $\thetadir{t_n} \toinfty{n} \bar \thetab$ and $\angles{\thetadir{t_n}}{-\gdir{t_n}} \to 1$.
    
        We construct a sequence $t_n$ by induction, taking $t_0$ as the base. Suppose $t_0 <t_1 < ... < t_{n-1}$ have already been constructed. By Lemma~\ref{lemma:nsd_loss_decays}, $\tilde \gamma(\thetab_t) \to \gamma_\infty > 0$. Using this together with the fact that $\bar \thetab$ is a limit point of $\thetadir{t}$, there exists $s_n > t_{n-1} + 1$ with the following:
        $$\norm{\thetadir{s_n}-\bar \thetab} \leq \frac1n \quad \text{and} \quad \frac1L \log \frac{\gamma _\infty}{\tilde \gamma(\thetab_{s_n})} \leq \frac{1}{n^2}~.$$
        Notice that since $\norm{\thetab_t} \to \infty$, 
        $$\int_{s_n}^\infty\frac{\norm{\dtheta{t}}}{\norm{\thetab_t}}dt \geq \int_{s_n}^\infty\frac{\frac{d \norm{\thetab_t}}{dt}}{\norm{\thetab_t}}dt = \int_{s_n}^\infty \frac{d \log \norm{\thetab_t}}{dt}dt = \lim_{t \rightarrow \infty} \log \frac{\norm{\thetab_t}}{\norm{\thetab_{s_n}}} = \infty~.$$ 
        Therefore, there exists $s_n' > s_n$ with $\int_{s_n}^{s_n'}\frac{\norm{\dtheta{t}}}{\norm{\thetab_t}}dt = \frac1n$.
        Now, from the proof of Theorem~\ref{thm:appendix_normalized_sd_soft_margin}, it holds that
        $$\frac{d\log \tilde \gamma (\thetab_t)}{dt} \geq \frac{L\eta(t)\normst{\g_t}}{\angles{\thetab_t}{-\g_t}} -  \frac{L\eta(t)}{\norm{\thetab_t}}=\frac{L\norm{\dtheta{t}}}{\norm{\thetab_t}}\left( \frac{1}{\angles{-\gdir{t}}{\thetadir{t}}}-1\right)~.$$
        Therefore, rearranging terms and integrating on $[s_n, s_n']$:
        $$\frac{1}{L}\log\frac{\tilde \gamma(s_n')}{\tilde \gamma(s_n)} \geq \int_{s_n}^{s_n'}\frac{\norm{\dtheta{t}}}{\norm{\thetab_t}}\left(\frac{1}{\angles{-\gdir{t}}{\thetadir{t}}}-1 \right)dt~.$$
        By a standard proof by contradiction there exists a (non-zero measure set of points) $t_n \in (s_n, s_n')$ with
        $$\frac{1}{\angles{\thetadir{t_n}}{-\gdir{t_n}}}-1 \leq \frac1L \frac{\log \frac{\tilde \gamma (s_n')}{\tilde \gamma (s_n)}}{\int_{s_n}^{s_n'}\frac{\norm{\frac{d\theta_t}{dt}}}{\norm{\thetab_t}}dt} \leq \frac1n~,$$
        because otherwise for almost every $t \in (s_n, s_n')$ we have the opposite inequality, and so
        $$\int_{s_n}^{s_n'}\frac{\norm{\frac{d\theta_t}{dt}}}{\norm{\thetab_t}}\left(\frac{1}{\angles{-\gdir{t}}{\thetadir{t}}}-1 \right)dt > \frac1L\frac{\log \frac{\tilde \gamma (s_n')}{\tilde \gamma (s_n)}}{\int_{s_n}^{s_n'}\frac{\norm{\frac{d\theta_t}{dt}}}{\norm{\thetab_t}}dt}\int_{s_n}^{s_n'}\frac{\norm{\frac{d\theta_t}{dt}}}{\norm{\thetab_t}}dt = \frac1L\log \frac{\tilde \gamma (s_n')}{\tilde \gamma (s_n)}~,$$
        a contradiction. Thus, $\angles{\thetadir{t_n}}{-\gdir{t_n}} \geq \frac{1}{1+\frac1n} \toinfty{n} 1$. 
        The opposite inequality follows from the definition of the dual norm, so in fact $\angles{\thetadir{t_n}}{-\gdir{t_n}} \toinfty{n} 1$. Furthermore, we claim the sequence $\thetadir{t_n}$ converges to $\bar \thetab$:
        \begin{align*}
        \norm{\thetadir{t_n} - \bar \thetab} &\leq \norm{\thetadir{t_n} - \thetadir{s_n}} + \norm{\thetadir{s_n} - \bar \thetab} \leq \norm{\thetadir{t_n} - \thetadir{s_n}} + \frac{1}{n} \\
        & \leq \frac1n + \int_{s_n}^{t_n} \norm{\frac{d\thetadir{t}}{dt}}dt  \leq \frac1n + 2\int_{s_n}^{t_n}\frac{\norm{\dtheta{t}}}{\norm{\thetab_t}}dt\leq \frac3n \rightarrow 0~,
        \end{align*}
        where we used Equation~\eqref{eq:norm_of_direction_change} in the second to last inequality.
   
\end{proof}

\subsection{Approximate Steepest Descent}\label{subsection:approx_sd}
In this section we prove a KKT result for trajectories $\thetab_t$ approximating steepest descent. We provide the following definition:
\begin{definition}[Approximate Steepest Descent]\label{def:approx_sd}
We say an arc $\thetab_t$ is a trajectory of Approximate Steepest Descent with respect to a norm $\norm{\cdot}$ if eventually $\exists \g_t \neq \zero \in \partial \Lcal(\thetab_t)$ for almost any $t$, and there exist $\nu(t) > 0, R_{\text{max}} > 0$ with:
\begin{enumerate}
    \item $\lim_{t \to \infty} N(t) := \lim_{t \to \infty} \int_0^t \nu = \infty$
    \item $\limsup_{t \to \infty}\frac{\norm{\thetab_t}}{N(t)} \leq R_{\text{max}}$
    \item $\essliminf_{t \to \infty} r(t) \geq 1$, where
    $$r(t) \overset{a.e.}= \sup_{\g_t \in \partial \Lcal(\thetab_t) \setminus \{\zero\}}\angles{\frac{1}{\nu(t)}\dtheta{t}}{-\frac{\g_t}{\normst{\g_t}}}~.$$
\end{enumerate}
\end{definition}
Note that as $\partial \Lcal(\thetab_t)$ is a compact set for any locally Lipschitz $\Lcal$, the supremum in the definition of $r(t)$ is attained by some $\g_t \in \partial \Lcal(\thetab_t)$.

Some motivation is in order. A possible and natural choice of $\nu(t)$ is $\nu(t) = \norm{\dtheta{t}}$. This choice is motivated by the fact that for exact Steepest Flow, we have by definition for almost any $t$
$$r(t) = \angles{\frac{1}{\nu(t)}\frac{d\thetab_t}{dt}}{-\gdir{t}} = 1~,$$
and
$$\norm{\thetab_t} \leq \norm{\thetab_0} + \int_0^t \nu~.$$
Also, the following type of trajectory, which is a simpler and more \enquote{natural} definition of Approximate Steepest Descent, is also covered by our definition when choosing $\nu(t) = \norm{\dtheta{t}}$:
$$\angles{\frac{\dtheta{t}}{\norm{\dtheta{t}}}}{-\gdir{t}} \toinfty{t} 1 ~.$$
It is crucial in our analysis for an inner product of this type to tend to 1. However, we can allow $\nu(t)$ to be momentarily larger than $\norm{\dtheta{t}}$, as long as on average it is not so (condition 2, when $R_{\text{max}} \leq 1$). Thus the definition allows more flexibility for adaptive algorithms such as Adam, since we can choose $\nu(t)$ as an external learning rate, with $\dtheta{t}$ affected both by $\nu(t)$ and by the dynamics. In particular this flexibility will be invaluable for the main result of KKT convergence for Adam.\\

\begin{lemma}\label{lemma:approx_sf_nu=eta}
    If $\thetab_t$ is an arc with eventually $\exists \g_t \neq \zero \in \partial \Lcal(\thetab_t)$ for almost any $t$, $\essliminf r(t) \geq 1$ for $r(t) \overset{a.e.}= \sup_{\g_t \in \partial \Lcal(\thetab_t) \setminus \{\zero\}}\angles{\frac{1}{\nu(t)}\dtheta{t}}{-\frac{\g_t}{\normst{\g_t}}}, \nu(t)=\norm{\dtheta{t}}$, and $\int_0^t \nu \toinfty{t} \infty$, then $\thetab_t$ is a trajectory of Approximate Steepest Descent with $R_{\text{max}} \leq 1$.
\end{lemma}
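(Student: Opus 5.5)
Conditions 1 and 3 of Definition~\ref{def:approx_sd}, together with the existence of nonzero subgradients, are assumed directly in the hypotheses of the lemma. So the plan is only to verify condition 2 with $R_{\text{max}} = 1$ (any $R_{\text{max}}\le 1$ satisfying the limsup bound would do; we show the limsup is at most $1$).

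The key step is the triangle inequality applied to the arc representation. Since $\thetab_t$ is an arc, we have $\thetab_t = \thetab_0 + \int_0^t \dtheta{s}\,ds$. Taking norms and using that the norm of a Bochner/Lebesgue integral is at most the integral of the norm (valid for any norm on $\bbR^p$, e.g.\ by convexity or by duality $\norm{\int \ub} = \sup_{\normst{\vb}\le1}\int\angles{\vb}{\ub}$), we get
$$\norm{\thetab_t} \leq \norm{\thetab_0} + \int_0^t \norm{\dtheta{s}}ds = \norm{\thetab_0} + N(t)~.$$
Dividing by $N(t)$ gives $\frac{\norm{\thetab_t}}{N(t)} \le 1 + \frac{\norm{\thetab_0}}{N(t)}$. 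Because $N(t)\toinfty{t}\infty$ by hypothesis, the second term vanishes, and we obtain $\limsup_{t\to\infty}\frac{\norm{\thetab_t}}{N(t)} \le 1$. Hence condition 2 holds with $R_{\text{max}} = 1$.

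Two points need a brief remark. First, $\nu(t)=\norm{\dtheta{t}}$ must be strictly positive as required in the definition. It may vanish on a set of times, and there $r(t)$ is undefined. Since condition 3 is stated as an essential liminf, and the hypothesis on $r$ is given, we only need to note that the hypothesis implicitly treats $\nu>0$ almost everywhere eventually. Otherwise, zero-velocity times can be handled by regarding $r$ as defined a.e.\ where $\nu>0$, without affecting $N(t)$. Second, $\nu$ is locally integrable because $\dtheta{t}$ is integrable on compact intervals, which follows from the arc definition. No step here is a real obstacle; the only care needed is justifying the integral triangle inequality for a general norm, which I would handle via the duality formula above.
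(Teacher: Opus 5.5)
Your proposal is correct and matches the paper's proof: conditions 1 and 3 are taken directly from the hypotheses, and condition 2 follows from the triangle inequality $\norm{\thetab_t} \leq \norm{\thetab_0} + N(t)$ together with $N(t) \to \infty$, giving $R_{\text{max}} \leq 1$. Your extra remarks on the positivity of $\nu$ and on the integral triangle inequality for a general norm are harmless refinements that the paper leaves implicit.
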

\begin{proof}
    According to Definition~\ref{def:approx_sd}, it remains to show that $\limsup_{t \to \infty}\frac{\norm{\thetab_t}}{N(t)} \leq 1$ where $N(t) = \int_0^t \nu = \int_0^t \norm{\dtheta{s}}ds$. By the triangle inequalities, $\norm{\thetab_t} \leq \norm{\thetab_0} + N(t)$, and since $N(t) \toinfty{t} \infty$,
    $$\limsup_{t \to \infty}\frac{\norm{\thetab_t}}{N(t)} \leq \limsup_{t \to \infty} \left(1 + \frac{\norm{\thetab_0}}{N(t)} \right)=  1$$
\end{proof}
\begin{lemma}\label{lemma:approx_sf_dL_dt}
If $\thetab_t$ is a trajectory of Approximate Steepest Descent, then for almost all $t \geq 0$ there exists $\g_t \in \partial \Lcal(\thetab_t)$ with:
$$ r(t)\nu(t) = \frac{\frac{-d\Lcal}{dt}}{\normst{\g_t}} \leq \norm{\dtheta{t}}~,$$

and in particular, $\frac{d\Lcal}{dt} < 0$ for almost all large enough $t$.\\
\end{lemma}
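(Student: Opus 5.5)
The plan is to combine the chain rule for arcs with the choice of subgradient attaining the supremum in the definition of $r(t)$. Since $\Lcal$ is locally Lipschitz and a finite sum of compositions of the $C^2$ function $\ell$ with Whitney $C^1$-stratifiable functions $f(\x_i;\cdot)$, Theorem~\ref{thm:chain_rule_arcs_stratifiable} applies to $\Lcal$ along the arc $\thetab_t$. (Alternatively, $\Lcal$ itself is Whitney stratifiable as a composition, or under \ref{model_ass:smooth} one can use the classical chain rule.) Hence, for almost every $t$, the identity $\frac{d\Lcal(\thetab_t)}{dt} = \angles{\g}{\dtheta{t}}$ holds simultaneously for \emph{all} $\g \in \partial\Lcal(\thetab_t)$.

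Next I fix, for almost every large $t$, a maximizer $\g_t \in \partial\Lcal(\thetab_t)\setminus\{\zero\}$ of the supremum defining $r(t)$. The supremum is attained by compactness of $\partial\Lcal(\thetab_t)$, as remarked after Definition~\ref{def:approx_sd}. Some care is needed here: excluding $\zero$ could break closedness. I would handle this by noting that the chain rule forces $\angles{\g}{\dtheta{t}}$ to be constant over $\g\in\partial\Lcal(\thetab_t)$. Consequently $r(t)=\frac{1}{\nu(t)}\cdot\frac{-d\Lcal/dt}{\normst{\g}}$ is maximized by minimizing $\normst{\g}$ over the nonzero subgradients when $-\frac{d\Lcal}{dt}>0$. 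If attainment still fails, any near-maximizer gives the equality $r(t)\nu(t) = \frac{-d\Lcal/dt}{\normst{\g_t}}$ for that $\g_t$ up to the defined value. Since the lemma only asks for existence of $\g_t$ with the displayed relations, I would simply take $\g_t$ realizing the supremum. Multiplying by $\nu(t)\normst{\g_t}$ and using the chain-rule identity gives
$$r(t)\nu(t)\normst{\g_t} = \angles{\dtheta{t}}{-\g_t} = -\frac{d\Lcal}{dt}.$$

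For the inequality, I apply the definition of the dual norm: $\angles{\dtheta{t}}{-\g_t}\le \norm{\dtheta{t}}\normst{\g_t}$. Dividing by $\normst{\g_t}>0$ yields $r(t)\nu(t)\le\norm{\dtheta{t}}$.

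Finally, by condition 3 of Definition~\ref{def:approx_sd}, $\essliminf r(t)\ge 1$, so $r(t)>\frac12$ for almost every sufficiently large $t$. Since $\nu(t)>0$ and $\normst{\g_t}>0$ (because $\g_t\neq\zero$ eventually), we get $-\frac{d\Lcal}{dt} = r(t)\nu(t)\normst{\g_t}>0$ almost everywhere for large $t$. The only delicate point is the applicability of a chain rule valid for all subgradients at once, together with attainment of the supremum; the rest is a one-line computation.
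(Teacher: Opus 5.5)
Your proposal is correct and follows essentially the same route as the paper. Both take the subgradient attaining the supremum in $r(t)$, use the chain rule to get $-\frac{d\Lcal}{dt} = r(t)\nu(t)\normst{\g_t}$, bound this with the dual-norm inequality, and use $\essliminf r(t)\ge 1$ for the sign. Your attainment worry is in fact settled by your own observation that the chain rule makes $\angles{\g}{\dtheta{t}}$ constant on $\partial\Lcal(\thetab_t)$: if $\zero\in\partial\Lcal(\thetab_t)$ this constant is $0$ and every nonzero subgradient attains $r(t)=0$, and otherwise $\partial\Lcal(\thetab_t)\setminus\{\zero\}=\partial\Lcal(\thetab_t)$ is compact. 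So the ``near-maximizer'' fallback, which would only give approximate equality, is unnecessary.
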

\begin{proof}
For almost any $t$, let $\g_t \neq \zero \in \partial \Lcal (\thetab_t)$ be such that $r(t) = \angles{\frac{1}{\nu(t)}\dtheta{t}}{-\gdir{t}}$. By the chain rule, for almost any $t$,
$$\frac{d \Lcal}{dt}  = \angles{\g_t}{\dtheta{t}}~,$$
so
$$-\frac{d \Lcal}{dt}\leq \abs{\frac{d\Lcal}{dt}} =  \abs{\angles{\g_t}{\frac{d\thetab_t}{dt}}} \leq \normst{\g_t} \norm{\frac{d\thetab_t}{dt}}~,$$
and
\begin{equation}\label{eq:approx_sf_eq}
-\frac{d \Lcal}{dt} = \angles{-\g_t}{\frac{d\thetab_t}{dt}} = r(t)\nu(t)\normst{\g_t}~,
\end{equation}
so $\frac{d\Lcal}{dt} < 0$ for all $t$ with $r(t) > 0$, which occurs for almost all large enough $t$ since $\essliminf_{t\to \infty} r(t) \geq 1$.
\end{proof}

\begin{lemma}[Descent Lemma for Approximate SD]\label{lemma:approx_sf_loss_decays_gamma}
Assume $\thetab_t$ is a trajectory of Approximate Steepest Descent, and assume that there exists $t_0 \geq 0, \gamma_{min}>0$ with  $\forall t \geq t_0:\thetab_t \neq \zero $ and $\gamma(\thetab_{t}) > \gamma_{min} > 0$. Then $\Lcal(\thetab_t) \toinfty{t} 0, \norm{\thetab_t} \toinfty{t} \infty$.

\end{lemma}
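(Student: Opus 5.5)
The plan is to use the descent identity of Lemma~\ref{lemma:approx_sf_dL_dt}, together with a uniform lower bound on $\normst{\g_t}$ that holds as long as the loss stays bounded away from $0$, and then argue by contradiction. Once $\Lcal(\thetab_t)\toinfty{t}0$ is established, $\norm{\thetab_t}\toinfty{t}\infty$ follows directly from item~2 of Lemma~\ref{lemma:any_trajectory}. So the whole task is to prove that the loss tends to $0$.

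\textbf{Step 1 (monotonicity).} By Lemma~\ref{lemma:approx_sf_dL_dt}, for almost every large $t$ we have $-\frac{d\Lcal}{dt}=r(t)\nu(t)\normst{\g_t}$ for a suitable nonzero $\g_t\in\partial\Lcal(\thetab_t)$. Since $\essliminf r\geq1$, there is $t_1\geq t_0$ with $r(t)\geq\frac12$ for almost every $t\geq t_1$. Hence $\Lcal(\thetab_t)$ is non-increasing on $[t_1,\infty)$ and converges to some $\Lcal^\star\geq0$. Suppose, for contradiction, that $\Lcal^\star>0$.

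\textbf{Step 2 (the norm stays in a compact range).} This is the step I expect to be the main obstacle, because we have no a priori lower bound on $\norm{\thetab_t}$.
\begin{itemize}
\item Upper bound. Since $\ell(\qmin^t)\geq\Lcal(\thetab_t)/m\geq\Lcal^\star/m$ (item~1 of Lemma~\ref{lemma:any_trajectory}), $\qmin^t$ is bounded above. As $\qmin^t=\gamma(\thetab_t)\norm{\thetab_t}^L\geq\gamma_{min}\norm{\thetab_t}^L$, this gives $\norm{\thetab_t}\leq B$.
\item Lower bound. Because $\gamma(\thetab_{t_1})>0$, we have $\qmin^{t_1}>0$. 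Therefore $\Lcal(\thetab_{t_1})\leq m\ell(\qmin^{t_1})<m\ell(0)$. By monotonicity, $\Lcal(\thetab_t)\leq\Lcal(\thetab_{t_1})<m\ell(0)$ for all $t\geq t_1$. On the other hand, homogeneity gives $f(\x_i;\zero)=0$, and by continuity $\Lcal(\thetab)\to m\ell(0)$ as $\thetab\to\zero$. Hence there is $N>0$ with $\norm{\thetab_t}\geq N$ for all $t\geq t_1$.
\end{itemize}

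\textbf{Step 3 (uniform lower bound on $\normst{\g_t}$).} Item~4 of Lemma~\ref{lemma:any_trajectory} gives
\[
\normst{\g_t}\norm{\thetab_t}\geq\angles{-\g_t}{\thetab_t}\geq L\Lcal(\thetab_t)\min_i\varphi'(z_i^t)z_i^t .
\]
The map $u\mapsto\varphi'(u)u$ is increasing on $[0,\infty)$, and $\varphi'$ is non-decreasing because $\varphi$ is convex. Since $z_i^t\geq\qmin^t>0$, each term satisfies $\varphi'(z_i^t)z_i^t\geq\varphi'(0)\,\qmin^t\geq\varphi'(0)\gamma_{min}N^L$. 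Combining this with $\norm{\thetab_t}\leq B$ and $\Lcal(\thetab_t)\geq\Lcal^\star$ yields
\[
\normst{\g_t}\geq c:=\frac{L\Lcal^\star\varphi'(0)\gamma_{min}N^L}{B}>0
\]
for almost every $t\geq t_1$.

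\textbf{Step 4 (contradiction).} Integrating the descent identity over $[t_1,t]$ gives
\[
\Lcal(\thetab_{t_1})-\Lcal(\thetab_t)\geq\frac{c}{2}\int_{t_1}^t\nu .
\]
The right-hand side tends to $\infty$ by condition~1 of Definition~\ref{def:approx_sd}, while the left-hand side is bounded by $\Lcal(\thetab_{t_1})$. This is a contradiction, so $\Lcal^\star=0$. Hence $\Lcal(\thetab_t)\toinfty{t}0$, and item~2 of Lemma~\ref{lemma:any_trajectory} then gives $\norm{\thetab_t}\toinfty{t}\infty$.
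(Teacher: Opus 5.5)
Your proof is correct and follows the same route as the paper. Both arguments use monotone decrease of the loss from Lemma~\ref{lemma:approx_sf_dL_dt}, the bound $\norm{\thetab_t}\geq N>0$ from $\Lcal(\thetab_t)<m\ell(0)$ and continuity at $\zero$, a lower bound on $\normst{\g_t}$ from Lemma~\ref{lemma:any_trajectory}, and integration against $\int\nu=\infty$. The only difference is that you argue by contradiction with a constant bound $\normst{\g_t}\geq c$, and this is why you need the extra upper bound $\norm{\thetab_t}\leq B$: you replaced $\qmin^t\geq\gamma_{min}\norm{\thetab_t}^L$ by $\gamma_{min}N^L$ too early. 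The paper instead keeps $\normst{\g_t}\geq C\Lcal(\thetab_t)\norm{\thetab_t}^{L-1}$, divides by $\Lcal$ and integrates $\log\Lcal$, which skips that step and also gives the explicit rate $\Lcal(\thetab_t)\leq\Lcal(\thetab_{t_1})\exp\bigl(-\tfrac{C N_{min}^{L-1}}{2}(N(t)-N(t_1))\bigr)$.
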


\begin{proof}
Take $t_1 \geq t_0$ with:
$$\forall t \geq t_1:r(t) \geq \frac12 \aevry, \quad \gamma(\thetab_t) > \gamma_{min}$$

By Lemma~\ref{lemma:approx_sf_dL_dt}, for almost any $t \geq t_1$ we have $\frac{d\Lcal}{dt} < 0$, so 
$$\Lcal (\thetab_t) = \Lcal(\thetab_{t_1}) + \int_{t_1}^{t} \frac{d\Lcal}{ds}ds < \Lcal(\thetab_{t_1}) \leq m\cdot\loss{\gamma_{min} \norm{\thetab_{t_1}}^L} < m \cdot \ell(0)~.$$
In particular, this implies that $\norm{\thetab_t} \geq \Omega(1)$, because if towards a contradiction there existed a sequence $t_n \toinfty{n} \infty$ with $\thetab_{t_n} \toinfty{n} \zero$, this would imply by continuity of $f$ that $\forall i \in [m]:f(\x_i; \thetab_{t_n}) \to f(\x_i; \mathbf0)=0$ and therefore $\Lcal(\thetab_{t_n}) \toinfty{n} m\cdot\ell(0)$, a contradiction. Therefore denote $N_{min} = \inf_{t \geq t_1} \norm{\thetab_t} > 0$. By Lemma~\ref{lemma:any_trajectory}, there exists $C> 0$ with $\normst{\g_t} \geq C\Lcal(\thetab_t)\norm{\thetab_t}^{L-1}$. 

So, by Lemma~\ref{lemma:approx_sf_dL_dt}, for almost any $t \geq t_1$:
\begin{align*}
- \frac{d\Lcal}{dt} &= r(t)\nu(t) \normst{\g_t} \geq r(t)\nu(t)C\norm{\thetab_t}^{L-1}\Lcal \\
& \geq \frac{N_{min}^{L-1}C}{2} \nu(t)\Lcal~.
\end{align*}

Dividing by $\Lcal$ and integrating both sides on $[t_1,t]$ we get 
$$-\log \Lcal(\thetab_t) - (-\log \Lcal(\thetab_{t_1}))  \geq \frac{N_{min}^{L-1}C}{2} (N(t) - N(t_1)) \toinfty{t} \infty~,$$
implying $\Lcal \rightarrow 0$ and therefore $\norm{\thetab_t} \rightarrow \infty$. 
\end{proof}

\begin{lemma}\label{lemma:approx_sf_build_sequence}
    Assume $\thetab_t$ is a trajectory of Approximate Steepest Descent with $R_{\text{max}} \leq 1$, and assume that eventually $\thetab_t \neq 0$ and $\lim_{t \to \infty}\frac{\thetab_t}{\norm{\thetab_t}} = \bar \thetab$ with $\gamma(\bar \thetab) > 0$. Then there exists a sequence $t_n \to \infty$ with $\angles{\thetadir{t_n}}{- \gdir{t_n}} \to 1$.
\end{lemma}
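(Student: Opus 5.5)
Throughout, $t_n$ will be chosen where the measure of misalignment between parameters and negative gradients is small. The natural potential is again the log soft margin $\log\tilde\gamma(\thetab_t)$. First, since $\thetadir{t}\to\bar\thetab$ and $\gamma$ is continuous, $\gamma(\thetab_t)>\gamma_{min}:=\gamma(\bar\thetab)/2$ eventually. Lemma~\ref{lemma:approx_sf_loss_decays_gamma} then gives $\Lcal(\thetab_t)\to0$ and $\norm{\thetab_t}\to\infty$. By Lemma~\ref{lemma:any_trajectory}, $\tilde\gamma(\thetab_t)\to\gamma(\bar\thetab)=:\gamma_\infty>0$, so $\log\tilde\gamma(\thetab_t)$ converges to a finite limit.

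Next I repeat the computation in Theorem~\ref{thm:appendix_normalized_sd_soft_margin}, but use Lemma~\ref{lemma:approx_sf_dL_dt} in place of the exact identity $-\frac{d\Lcal}{dt}=\eta\normst{\g_t}$. For a.e.\ large $t$, take $\g_t$ attaining $r(t)$. Then $-\frac{d\Lcal}{dt}=r(t)\nu(t)\normst{\g_t}$, and by Equation~\eqref{eq:ip_grad_params_phi-1},
\[
\frac{d\log\tilde\gamma}{dt}\ \ge\ L\nu(t)\Big(\frac{r(t)\normst{\g_t}}{\angles{\thetab_t}{-\g_t}}\Big)-L\frac{\norm{\dtheta t}}{\norm{\thetab_t}}.
\]
The last term is the obstacle. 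For exact steepest descent it equals $L\nu/\norm{\thetab_t}$, but here $\norm{\dtheta t}$ may exceed $\nu(t)$. I therefore avoid bounding it pointwise. Instead I integrate it exactly, using $\frac{d\log\norm{\thetab_t}}{dt}$ directly. I write $\log\tilde\gamma=\log\varphi^{-1}(\log\frac1\Lcal)-L\log\norm{\thetab_t}$ and only lower bound the first term:
\[
\frac{d}{dt}\log\varphi^{-1}\!\big(\log\tfrac1\Lcal\big)\ \ge\ \frac{L\,r(t)\nu(t)}{\norm{\thetab_t}\,a(t)},\qquad a(t):=\angles{\thetadir t}{-\gdir t}\in(0,1].
\]
Integrating over $[s,s']$ and using that $\log\tilde\gamma$ converges gives
\[
\int_s^{s'}\frac{L\,r\nu}{\norm{\thetab_t}\,a}\,dt\ \le\ L\log\frac{\norm{\thetab_{s'}}}{\norm{\thetab_s}}+o(1)\qquad\text{as } s\to\infty .
\]

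The key step is to compare $\log\norm{\thetab}$ growth with $\int\nu/\norm{\thetab}$ via $R_{max}\le1$, and I expect this to be the main difficulty. Suppose, for contradiction, that $\frac{r}{a}\ge1+\delta$ a.e.\ for all $t\ge T$, which is the negation of the claim since $\essliminf r\ge1$. Then for large $s'$,
\[
(1+\delta)\int_T^{s'}\frac{\nu}{\norm{\thetab_t}}\,dt\ \le\ \log\norm{\thetab_{s'}}+C .
\]
Set $Y(t)=\int_T^t \nu/\norm{\thetab}$. Condition 2 of Definition~\ref{def:approx_sd} gives $\norm{\thetab_t}\le(1+\delta/3)N(t)$ eventually. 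Hence $Y'\ge \frac{N'}{(1+\delta/3)N}$, so $Y(t)\ge\frac{\log N(t)}{1+\delta/3}-C'$. Also $\log\norm{\thetab_{s'}}\le\log N(s')+C''$. Combining the two,
\[
\frac{1+\delta}{1+\delta/3}\log N(s')\ \le\ \log N(s')+C''',
\]
which contradicts $N\to\infty$. So for every $\delta$ and $T$ there is a positive-measure set of $t\ge T$ with $r(t)/a(t)<1+\delta$. On that set I also want $r(t)\ge1-\delta$, which holds a.e.\ eventually, so $a(t)>\frac{1-\delta}{1+\delta}$. Combined with $a\le1$, this lets me choose $t_n\to\infty$ with $a(t_n)\to1$, using the specific $\g_{t_n}$ attaining $r(t_n)$.

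A subtle point remains in the displayed inequality: the bound $\frac{r\normst{\g}}{\angles{\thetab}{-\g}}$ uses $r$ possibly larger than 1. The contradiction argument only needs the ratio $r/a$, so this causes no harm. Directional convergence of $\thetadir{t_n}$ to $\bar\thetab$ is automatic by hypothesis.
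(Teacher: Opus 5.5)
Your proposal is correct and follows the paper's proof: the same potential $\log\tilde\gamma$, the same lower bound on $\frac{d}{dt}\log\varphi^{-1}(\log\frac{1}{\Lcal})$ via Lemma~\ref{lemma:approx_sf_dL_dt} and Equation~\eqref{eq:ip_grad_params_phi-1}, with the $\log\norm{\thetab_t}$ term integrated exactly, and the same use of $R_{\text{max}}\le 1$ to compare both $\int \nu/\norm{\thetab_t}$ and $\log\norm{\thetab_t}$ with $\log N(t)$. The only difference is organizational: you argue by contradiction on a whole tail $[T,\infty)$, whereas the paper builds $t_n$ inductively on finite windows $[\tau_1,\tau_2]$ on which $\log\tilde\gamma$ changes by at most $L/n$, which amounts to the same estimate.
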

\begin{proof}
Since $\lim_{t \to \infty}\frac{\thetab_t}{\norm{\thetab_t}} = \bar \thetab$ with $\gamma(\bar \thetab) > 0$, eventually $\gamma(\thetab_t) > \frac12 \gamma(\bar \thetab) > 0$, so from Lemma~\ref{lemma:approx_sf_loss_decays_gamma}, $\Lcal (\thetab_t) \to 0, \norm{\thetab_t} \to \infty$. By Lemma~\ref{lemma:any_trajectory} we conclude $\abs{\gamma(\thetab_t) - \tilde \gamma(\thetab_t)} \to 0$, so $\gamma(\thetab_t), \tilde \gamma (\thetab_t)$ both converge to $\gamma(\bar \thetab) > 0$, hence also $\log \tilde \gamma (\thetab_t)$ converges. We build a sequence $t_n$ by induction. Choose $t_0$ with $\forall t \geq t_0: \Lcal(\thetab_t) < \ell(0)$, and suppose $t_0 < ... < t_{n-1}$ have been chosen. Choose $\tau_1 > t_{n-1} + 1$ with 
    $$\sup_{\tau_2 \geq \tau_1}\log\left(\frac{\tilde \gamma(\thetab_{\tau_2})}{\tilde \gamma(\thetab_{\tau_1})}\right) \leq \frac{L}{n}$$
    $$\forall t>\tau_1:r(t) > 1 - \frac1n\aevry$$
    For almost any $t \geq t_0$, $\exists \g_t \neq \zero \in \partial \Lcal(\thetab_t)$ with:
    \begin{align*}
    \frac{d \log \tilde \gamma}{dt} &= \frac{d}{dt} \log \varphi^{-1}\left(\log \frac{1}{\Lcal(\thetab_t)}\right) - L \frac{d}{dt} \log \norm{\thetab_t} \\
    & = - \frac{d \Lcal (\thetab_t)}{dt}\frac{(\varphi^{-1})'(\log \frac{1}{\Lcal (\thetab_t)})}{\Lcal (\thetab_t) \varphi^{-1}(\log \frac{1}{\Lcal (\thetab_t)})} -  L \frac{d}{dt} \log \norm{\thetab_t}  \\
    & = \nu(t)\normst{\g_t}r(t)\frac{(\varphi^{-1})'(\log \frac{1}{\Lcal (\thetab_t)})}{\Lcal(\thetab_t) \varphi^{-1}(\log \frac{1}{\Lcal (\thetab_t)})} -  L \frac{d}{dt} \log \norm{\thetab_t}\\
    & \geq \frac{L\nu(t)}{\norm{\thetab_t}}\frac{r(t)}{\angles{\thetadir{t}}{-\gdir{t}}} - L \frac{d}{dt} \log \norm{\thetab_t}~.
    \end{align*}
    In the last inequality we used Equation~\eqref{eq:ip_grad_params_phi-1}, namely $\angles{\thetab_t}{-\g_t} \geq L \Lcal (\thetab_t)\frac{\varphi^{-1}(\log \frac1{\Lcal(\thetab_t)})}{(\varphi^{-1})'(\log \frac{1}{\Lcal(
    \thetab_t)})}$, implying $\frac{(\varphi^{-1})'(\log \frac{1}{\Lcal(
    \thetab_t)})}{\Lcal(\thetab_t)\varphi^{-1}(\log \frac1{\Lcal(\thetab_t)})} \geq \frac{L}{\angles{\thetab_t}{-\g_t}}$.

    So, rearranging terms and integrating on $[\tau_1, \tau_2]$ for any $\tau_2 > \tau_1 $,
    $$\frac1L\log\left(\frac{\tilde \gamma(\thetab_{\tau_2})}{\tilde \gamma(\thetab_{\tau_1})}\right) + \log \frac{\norm{\thetab_{\tau_2}}}{\norm{\thetab_{\tau_1}}} \geq \int_{\tau_1}^{\tau_2}\frac{\nu(t)}{\norm{\thetab_t}}\frac{r(t)}{\angles{\thetadir{t}}{-\gdir{t}}}dt~.$$
    In particular, by a standard proof-by-contradiction argument (see Theorem~\ref{thm:appendix_nsd_kkt}), for every $\tau_2$ there exists a non-zero measure set of points $t_n \in (\tau_1, \tau_2)$ with
    $$\frac{\frac1L\log\left(\frac{\tilde \gamma(\thetab_{\tau_2})}{\tilde \gamma(\thetab_{\tau_1})}\right) + \log \frac{\norm{\thetab_{\tau_2}}}{\norm{\thetab_{\tau_1}}}}{\int_{\tau_1}^{\tau_2}\frac{\nu(t)}{\norm{\thetab_t}}dt} \geq \frac{r(t_n)}{\angles{\thetadir{t_n}}{-\gdir{t_n}}} ~.$$
    In particular $t_n$ can be chosen with $r(t_n) > 1 - \frac 1n$.
    
    Note that for any $F(t),G(t)$ with $\limsup{\frac{F}{G}} \leq 1$ and $\int_0^\infty G = \infty$ it holds that $\limsup\frac{\int_0^tF}{\int_0^tG} \leq 1$. Indeed, for any $\varepsilon > 0$ take $t_\varepsilon$ with $\forall t \geq t_\varepsilon: F(t) \leq (1+\varepsilon)G(t)$, so $\frac{\int_{0}^tF}{\int_{0}^tG} = \frac{\int_0^{t_\varepsilon}F+\int_{t_\varepsilon}^tF}{\int_0^{t_\varepsilon}G+\int_{t_\varepsilon}^tG} \leq  \frac{\int_0^{t_\varepsilon}F+(1+\varepsilon)\int_{t_\varepsilon}^tG}{\int_0^{t_\varepsilon}G+\int_{t_\varepsilon}^tG} \toinfty{t} 1 + \varepsilon$. Also, note that if $\limsup\frac{F}{G} \leq 1$ and $\int_0^\infty F=\infty$ then in particular $\int_0^\infty G=\infty$.
    
    In our case, since $\limsup_{t \to \infty} \frac{\nu(t)}{N(t)}\big/\frac{\nu(t)}{\norm{\thetab_t}} = \limsup_{t \to \infty}\frac{\norm{\thetab_t}}{N(t)} \leq 1$, and $\int_{\tau_1}^\infty \frac{\nu(t)}{N(t)}dt = \lim_{t \to \infty}\log\frac{N(t)}{N(\tau_1)}=\infty$ (implying also $\int_{\tau_1}^\infty \frac{\nu(t)}{\norm{\thetab_t}}dt = \infty$), there exists $q(\tau_2)$ with $q(\tau_2) \toinfty{\tau_2} 1$ and
    $$\int_{\tau_1}^{\tau_2}\frac{\nu(t)}{\norm{\thetab_t}}dt \geq q(\tau_2)\int_{\tau_1}^{\tau_2}\frac{\nu(t)}{N(t)}dt = q(\tau_2)\log\left(\frac{N(\tau_2)}{N(\tau_1)}\right)~.$$
    Since $\norm{\thetab_{t}}, N(t) \to \infty$ and again since $\limsup_{t \to \infty} \frac{\norm{\thetab_t}}{N(t)} \leq 1$ there exists large enough $\tau_2$ with $\int_{\tau_1}^{\tau_2}\frac{\nu(t)}{\norm{\thetab_t}}dt \geq 1$ and
    $$1 + \frac{1}{n} \geq \frac{\log \frac{\norm{\thetab_{\tau_2}}}{\norm{\thetab_{\tau_1}}}}{q(\tau_2)\log\frac{N(\tau_2)}{N(\tau_1)}} ~.$$
    Altogether for such $\tau_2$ and $t_n \in (\tau_1, \tau_2)$ as above,
    $$\frac{\frac1L\log\left(\frac{\tilde \gamma(\thetab_{\tau_2})}{\tilde \gamma(\thetab_{\tau_1})}\right) + \log \frac{\norm{\thetab_{\tau_2}}}{\norm{\thetab_{\tau_1}}}}{\int_{\tau_1}^{\tau_2}\frac{\nu(t)}{\norm{\thetab_t}}dt} \leq \frac1n + \left(1 + \frac1n\right) = 1 + \frac2n ~.$$
    So,
    $$\angles{\thetadir{t_n}}{-\gdir{t_n}} \geq\frac{1 - \frac{1}{n}}{1 + \frac{2}{n}} \toinfty{n} 1 ~.$$
    And the other direction is immediate by the definition of the dual norm, implying $\angles{\thetadir{t_n}}{-\gdir{t_n}} \toinfty{n} 1$.
\end{proof}

\begin{theorem}[KKT Stationarity for Approximate SD ]\label{thm:approx_sf_kkt_point}
Assume $\thetab_t$ is a trajectory of Approximate Steepest Descent with $R_{\text{max}} \leq 1$, and assume that eventually $\thetab_t \neq \zero$ and $\thetab_t$ converges in direction to some $\bar \thetab$ with $\gamma(\bar \thetab) > 0$. Then $\bar \thetab$ is along the direction of a KKT point of Equation~\eqref{eq:min_norm}.
\end{theorem}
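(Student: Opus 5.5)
The plan is to assemble Theorem~\ref{thm:approx_sf_kkt_point} from the general blueprint Theorem~\ref{thm:appendix_kkt_blueprint} and the preceding lemmas. That blueprint needs three ingredients along some sequence $t_n$:
\begin{enumerate}
\item decay of the loss, $\Lcal(\thetab_{t_n}) \toinfty{n} 0$;
\item directional convergence, $\thetadir{t_n} \toinfty{n} \bar \thetab$ with $\gamma(\bar\thetab)>0$;
\item alignment, $\angles{\thetadir{t_n}}{-\gdir{t_n}} \toinfty{n} 1$.
\end{enumerate}
Since here the whole trajectory direction converges to $\bar \thetab$, the second ingredient holds for \emph{any} sequence $t_n \to \infty$. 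So the real work is to supply the first and third ingredients and to check the nondegeneracy hypotheses $\thetab_{t_n} \neq \zero$ and $\g_{t_n} \neq \zero$.

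First, I will use directional convergence and continuity of $\gamma$ on the unit sphere to fix $t_0$ such that $\gamma(\thetab_t) > \frac12\gamma(\bar\thetab) > 0$ and $\thetab_t \neq \zero$ for all $t \geq t_0$. Lemma~\ref{lemma:approx_sf_loss_decays_gamma} then applies with $\gamma_{min} = \frac12 \gamma(\bar\thetab)$. It gives $\Lcal(\thetab_t) \toinfty{t} 0$ and $\norm{\thetab_t} \toinfty{t} \infty$, so the loss tends to zero along every sequence going to infinity.

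Second, Lemma~\ref{lemma:approx_sf_build_sequence} uses exactly the hypotheses $R_{\text{max}}\le 1$, eventual nonvanishing of $\thetab_t$, and directional convergence to $\bar\thetab$ with $\gamma(\bar\thetab)>0$. It produces a sequence $t_n \to \infty$ along which $\angles{\thetadir{t_n}}{-\gdir{t_n}} \to 1$. In that construction, the $\g_{t_n}$ are the nonzero subgradients attaining the supremum in $r(t_n)$. We may take the $t_n$ from a positive-measure set, so we can also require that $t_n$ lies in the full-measure set where such $\g_{t_n} \neq \zero$ exists (guaranteed by Definition~\ref{def:approx_sd}) and where $t_n \ge t_0$. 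This ensures $\thetab_{t_n} \neq \zero$ and $\g_{t_n}\neq\zero$.

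Finally, the sequence $t_n$ satisfies all hypotheses of Theorem~\ref{thm:appendix_kkt_blueprint} with $\norm{\bar\thetab}=1$ and $\gamma(\bar\thetab)>0$, so $\bar\thetab$ is the direction of a KKT point of Problem~\eqref{eq:min_norm}. The blueprint allows any choice of subgradients along the trajectory, so using the specific $\g_{t_n}$ from the alignment lemma is legitimate. The only delicate point, and the main obstacle, is making sure the alignment is stated for the same subgradients $\g_{t_n}$ that the blueprint uses, and that those are nonzero. I handle this as described above, by choosing $t_n$ inside the full-measure good set, which is possible because the lemma yields a positive-measure set of admissible times in each interval $(\tau_1,\tau_2)$.
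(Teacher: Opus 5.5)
Your proposal is correct and matches the paper's proof, which likewise obtains the result by feeding Lemma~\ref{lemma:approx_sf_loss_decays_gamma} (loss decay) and Lemma~\ref{lemma:approx_sf_build_sequence} (alignment along a sequence) into the blueprint Theorem~\ref{thm:appendix_kkt_blueprint}. Your added care in choosing $t_n$ inside the full-measure set where a nonzero maximizing subgradient exists only makes explicit a detail the paper leaves implicit.
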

\begin{proof}
Follows from Theorem~\ref{thm:appendix_kkt_blueprint} using Lemma~\ref{lemma:approx_sf_build_sequence} and Lemma~\ref{lemma:approx_sf_loss_decays_gamma}.
\end{proof}

\subsection{Momentum Steepest Descent}\label{subsection:proofs_momentum_sd}
In this section we show that under Assumptions~\ref{model_ass:lipschitz_C1}, \ref{model_ass:homogeneous_1}, \ref{lr_ass:lr_msd}, \ref{traj_ass:1}, \ref{traj_ass:2}, \ref{traj_ass:3}, normalized and unnormalized momentum steepest descent are approximate steepest descent algorithms, which allows us to infer Theorem~\ref{thm:appendix_momentum_sd_kkt}, namely that the directional limit point is a KKT point of the $\norm{\cdot}$-max-margin problem. Recall that \ref{traj_ass:3} is implied if strengthening \ref{model_ass:lipschitz_C1} to \ref{model_ass:smooth}.

\begin{definition}\label{def:gradient_subterms}
    We denote for a choice $\h(\x_i; \thetab_t) \in \partial f(\x_i; \thetab_t)$ and $\forall i,i' \in[m], j \in [p]$ (see Definition~\ref{def:momentum} for the definition of $A$):
    $$\bar \h_t[i,j] := \frac{1}{\norm{\thetab_t}^{L-1}}\h(x_i; \thetab_t)[j], \quad \bar \h_\infty[i,j] := \lim_{t \to \infty}\bar \h_t[i,j]$$
    $$\g_t[i,j] = -\norm{\thetab_t}^{L-1} \ell(z_i^t)\varphi'(z_i^t)y_i\bar \h_t[i,j], \quad \g_t^\infty[i,j] = -\norm{\thetab_t}^{L-1} \ell(z_i^t)\varphi'(z_i^t)y_i\bar \h_\infty[i,j]$$
    $$\m_t[i,j] = A(\g_t[i,j], c_1), \quad \m_t^\infty[i,j] = A(\g_t^\infty[i,j], c_1)$$
    $$\vb_t[i,i',j] = A(\g_t[i,j]\g_t[i',j], c_2), \quad \vb_t^\infty[i,i',j] = A(\g_t^\infty[i,j]\g_t^\infty[i',j], c_2)$$
    The notations derived from $\bar \h_\infty$ are well-defined for every $i \in [m]$ under \ref{traj_ass:3}. Note that
    $$\g_t[j] = \sum_{i \in [m]}\g_t[i,j], \quad \m_t[j] = \sum_{i \in [m]}\m_t[i,j], \quad \vb_t[j] = \sum_{i,i' \in [m]}\vb_t[i,i',j]~.$$
\end{definition}

The following is a key lemma showing that under certain conditions, the momentum estimates follow the asymptotically significant gradients with ratio approaching 1. Note that the lemma is independent of a specific optimization algorithm.

\begin{lemma}[Asymptotic Momentum-Gradient Relations Under a Decaying Update]\label{lemma:appendix_momentum_follows}
    Assume $\thetab_t$ is an arc of parameters of a model $f(\x; \thetab)$ and let $\norm{\cdot}$ be a norm. Let $\g_t \in \partial \Lcal(\thetab_t)$ be a choice of subgradients and $\m_t, \vb_t$ the appropriate momentum estimates for $\g_t,\g_t^2$ with momentum rates $c_1,c_2 > 0$. Assume that \ref{model_ass:lipschitz_C1}, \ref{model_ass:homogeneous_1}, \ref{traj_ass:1}, \ref{traj_ass:2}, \ref{traj_ass:3} are satisfied. If $\norm{\dtheta{t}} \leq \smallo{t^{\frac1L-1}}$, then:
    \begin{enumerate}

    \item For every $\varepsilon > 0$, denoting $J_\varepsilon(t) = \left\{j \in [p] : \frac{\abs{\g_t[j]}}{\normst{\g_t}} > \varepsilon \right\}$ there exist for every $j \in [p]$ vanishing error terms $e_j(t), e_j'(t) \toinfty{t} 0$ with
    $$\forall t \geq 0, j \in J_\varepsilon(t): \quad \quad \m_t[j] = \g_t[j](1 + e_j(t)), \quad \sqrt{\vb_t[j]} = \abs{\g_t[j]}(1+e_j'(t)) ~.$$
    \item It holds that
    $$\normst{\m_t - \g_t} \leq \smallo{\normst{\g_t}}, \quad  \frac{\normst{\m_t}}{\normst{\g_t}} \toinfty{t} 1, \quad \quad \frac{\m_t}{\normst{\m_t}} - \frac{\g_t}{\normst{\g_t}} \toinfty{t} 0~,$$
    $$\normst{\vb_t - \g^2_t} \leq \smallo{\normst{\g^2_t}}, \quad \quad \frac{\normst{\vb_t}}{\normst{\g_t^2}} \toinfty{t} 1, \quad \quad \frac{\vb_t}{\normst{\vb_t}} - \frac{\g_t^2}{\normst{\g_t^2}} \toinfty{t} 0~.$$
    \end{enumerate}
\end{lemma}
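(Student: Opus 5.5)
The plan is to reduce everything to scalar statements about the per-sample components $\g_t[i,j]$ and $\g^\infty_t[i,j]$ of Definition~\ref{def:gradient_subterms}, and then apply Corollary~\ref{cor:ema_dlog_dt} and Lemma~\ref{lemma:ema_finite_infinite_time}.

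\textbf{Step 1: logarithmic derivatives of the scalar weights.} For each $i\in[m]$, set $w_i(t) := \norm{\thetab_t}^{L-1}\ell(z_i^t)\varphi'(z_i^t)>0$. Then $\g^\infty_t[i,j] = -y_i\bar\h_\infty[i,j]\, w_i(t)$, which has constant sign whenever $\bar\h_\infty[i,j]\neq 0$. I will show that $\esslim_{t\to\infty}\frac{d\log w_i}{dt} = 0$. Writing
\[
\frac{d\log w_i}{dt} = (L-1)\frac{d\log\norm{\thetab_t}}{dt} - \varphi'(z_i^t)\frac{dz_i^t}{dt} + \frac{\varphi''(z_i^t)}{\varphi'(z_i^t)}\frac{dz_i^t}{dt},
\]
it suffices to control $\frac{dz_i^t}{dt}$. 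By homogeneity, $\abs{\frac{dz_i^t}{dt}} = \abs{\angles{\h(\x_i;\thetab_t)}{\dtheta t}} \le C\norm{\thetab_t}^{L-1}\norm{\dtheta t}$.

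The key estimate is $\norm{\thetab_t}\le \norm{\thetab_0}+\int_0^t \smallo{s^{\frac1L-1}}ds = \smallo{t^{1/L}}$, which gives $\norm{\thetab_t}^{L-1}\norm{\dtheta t}\le \smallo{t^{(L-1)/L}}\cdot\smallo{t^{\frac1L-1}} = \smallo{1}$. Since $\varphi'$ and $\varphi''$ are bounded and $\varphi'(z_i^t)\ge\varphi'(q)>0$ eventually (the margin is positive by \ref{traj_ass:1} and \ref{traj_ass:2}), the last two terms are $\smallo{1}$. The first term is bounded by $(L-1)\norm{\dtheta t}/\norm{\thetab_t}\le \smallo{1}/N_{\min}$. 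Hence the rate $k=0<c$, and item 3 of Corollary~\ref{cor:ema_dlog_dt} yields $A(w_i,c)/w_i\to 1$, and likewise for products $w_iw_{i'}$ (whose log-derivatives also tend to $0$). Consequently $\m^\infty_t[i,j] = \g^\infty_t[i,j](1+\smallo{1})$ and $\vb^\infty_t[i,i',j]=\g^\infty_t[i,j]\g^\infty_t[i',j](1+\smallo{1})$; the case $\bar\h_\infty[i,j]=0$ is trivial.

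\textbf{Step 2: replacing $\bar\h_\infty$ by $\bar\h_t$.} By \ref{traj_ass:3}, $\g_t[i,j]-\g^\infty_t[i,j] = \smallo{w_i(t)}$. Item 2(c) of Lemma~\ref{lemma:ema_finite_infinite_time} with $g=w_i$ (note $I_\infty=\infty$ since $\log w_i=\smallo{t}$) then gives $\m_t[i,j]-\m^\infty_t[i,j]=\smallo{A(w_i,c_1)}=\smallo{w_i}$, and similarly for $\vb$. Summing over $i$ gives $\m_t[j]=\g_t[j]+\smallo{\max_i w_i(t)}$.

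\textbf{Step 3: the comparison to $\normst{\g_t}$, which I expect to be the main obstacle.} I need $\max_i w_i(t)=\bigO{\normst{\g_t}}$. The minimizing sample alone does not give this, since cancellations across $i$ could shrink $\g_t$. The route I would take is Lemma~\ref{lemma:any_trajectory}, item 5: because $\ell(z_i)\le\Lcal$, we have $\max_i w_i\le \Phi_M'\norm{\thetab_t}^{L-1}\Lcal = \Theta(\normst{\g_t})$. Then $\m_t[j]=\g_t[j]+\smallo{\normst{\g_t}}$ holds coordinatewise, and by equivalence of norms in finite dimension, $\normst{\m_t-\g_t}=\smallo{\normst{\g_t}}$. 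The normalized statements follow by the triangle inequality.

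For $j\in J_\varepsilon(t)$, dividing by $\abs{\g_t[j]}>\varepsilon\normst{\g_t}$ gives item 1 with $e_j(t)\to0$. For $\vb$, I use the bound $\max_{i,i'}w_iw_{i'}=\bigO{\normst{\g_t}^2}$, which gives $\vb_t[j]=\g_t[j]^2+\smallo{\normst{\g_t}^2}$. On $J_\varepsilon$ this becomes $\g_t[j]^2(1+\smallo{1})$, and taking square roots yields $e'_j$. The statement $\normst{\vb_t-\g_t^2}=\smallo{\normst{\g_t^2}}$ needs $\normst{\g_t^2}=\Theta(\normst{\g_t}^2)$, which holds by norm equivalence.
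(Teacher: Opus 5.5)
Your proposal is correct and follows the paper's proof essentially step for step: vanishing log-derivatives of the per-sample weights, Corollary~\ref{cor:ema_dlog_dt} for the frozen terms, Lemma~\ref{lemma:ema_finite_infinite_time}~2(c) to swap $\bar\h_\infty$ for $\bar\h_t$, and Lemma~\ref{lemma:any_trajectory} item 5 to compare with $\normst{\g_t}$. The only difference is cosmetic: you use $w_i$ as a comparator with additive $\smallo{w_i}$ errors for every $i$, whereas the paper splits into $i\in I_j$ and $i\notin I_j$ and uses the comparator $G(t)=\norm{\thetab_t}^{L-1}\ell(\qmin^t)$, so your version spares you from differentiating $\qmin^t$.
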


We first prove an auxiliary lemma.
\begin{lemma}\label{lemma:momentum_follows_auxiliary}
    Let $\ub, \w \in \bbR^p$  be nonzero vectors and let $\normst{\cdot}$ a norm. Let $\xi > 0$ with
    $$\normst{\ub - \w} \leq \xi\normst{\w}$$
    Then 
    $$\quad \abs{\frac{\normst{\ub}}{\normst{\w}} -1} \leq \xi, \quad \normst{\frac{\ub}{\normst{\ub}} - \frac{\w}{\normst{\w}}} \leq 2\xi$$
\end{lemma}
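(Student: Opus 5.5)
The plan is to use only the triangle inequality for $\normst{\cdot}$, in two short steps.

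First, the ratio of norms. The reverse triangle inequality gives
$$\abs{\normst{\ub} - \normst{\w}} \leq \normst{\ub - \w} \leq \xi\normst{\w}.$$
Dividing by $\normst{\w} > 0$ yields $\abs{\frac{\normst{\ub}}{\normst{\w}} - 1} \leq \xi$ directly.

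Second, the normalized difference. Insert the intermediate vector $\frac{\ub}{\normst{\w}}$ and split:
$$\normst{\frac{\ub}{\normst{\ub}} - \frac{\w}{\normst{\w}}} \leq \normst{\frac{\ub}{\normst{\ub}} - \frac{\ub}{\normst{\w}}} + \normst{\frac{\ub - \w}{\normst{\w}}}.$$
The second term is at most $\xi$ by hypothesis.

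The first term equals
$$\normst{\ub}\cdot\abs{\frac{1}{\normst{\ub}} - \frac{1}{\normst{\w}}} = \frac{\abs{\normst{\w} - \normst{\ub}}}{\normst{\w}},$$
which is at most $\xi$ by the first step. Adding the two bounds gives $2\xi$.

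There is no real obstacle here. The only point needing care is choosing the intermediate vector $\frac{\ub}{\normst{\w}}$ rather than $\frac{\w}{\normst{\ub}}$: the other choice would produce a bound with $\normst{\ub}$ in the denominator, which is not controlled as cleanly. Nonzeroness of $\ub$ and $\w$ makes every division legitimate.
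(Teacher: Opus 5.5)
Your proof is correct and follows the paper's argument. The paper likewise gets the ratio bound from the triangle inequalities, inserts the intermediate vector $\frac{\ub}{\normst{\w}}$, and bounds each of the two resulting terms by $\xi$.
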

\begin{proof}
    Since by the triangle inequalities
    $$\normst{\w} - \normst{\ub-\w} \leq \normst{\ub} \leq \normst{\w} + \normst{\ub-\w}$$
    This gives
    $$\quad \abs{\frac{\normst{\ub}}{\normst{\w}} -1} \leq \frac{\normst{\ub-\w}}{\normst{\w}}\leq \xi$$
    Now, 
    \begin{align*}
      \normst{\frac{\ub}{\normst{\ub}} - \frac{\w}{\normst{\w}}} &\leq \normst{\frac{\ub}{\normst{\ub}}-\frac{\ub}{\normst{\w}}} + \normst{\frac{\ub}{\normst{\w}}-\frac{\w}{\normst{\w}}}  \\
      &= \normst{\ub}\abs{\frac{1}{\normst{\ub}}-\frac{1}{\normst{\w}}} + \frac{\normst{\ub - \w}}{\normst{\w}} \\
      &= \frac{\normst{\ub}}{\normst{\ub}}\abs{1-\frac{\normst{\ub}}{\normst{\w}}} + \frac{\normst{\ub - \w}}{\normst{\w}}\\
      & \leq 2\xi\\
    \end{align*}
\end{proof}

\begin{proof}[Proof of Lemma~\ref{lemma:appendix_momentum_follows}]
        Denote $\forall j \in [p], I_j:= \left\{i \in [m]: \bar \h_\infty[i,j] \neq 0 \right\}$. For every $i,j$ with $i \in I_j$, it holds that $\g_t^\infty[i,j], \g_t[i,j]$ eventually have a constant (nonzero) sign and
    \begin{equation}\label{eq:ratio_limit_1}
        \frac{\g_t^\infty[i,j]}{\g_t[i,j]} \toinfty{t} 1~.
    \end{equation}
    
    We analyze $\frac{d \log \abs{\g_t^\infty[i,j]}}{dt}$. One has for almost any $t$ (using the chain rule theorems ~\ref{thm:chain_rule_arcs_stratifiable}, ~\ref{thm:chain_rule_arcs_regular}) that $\norm{\thetab_t},\ell(z_i^t), \varphi'(z_i^t)$ and therefore $\g_t^\infty[i,j]$ are differentiable w.r.t $t$. Note 

    $$\frac{d\log \loss{z_i^t}}{dt} = -\frac{d\varphi(z_i^t)}{dt} = -\varphi'(z_i^t)\cdot \frac{dz_i^t}{dt}~,$$
    and for almost any $t$,
    $$\abs{\frac{dz_i^t}{dt}} = \abs{\frac{dy_if(\x_i;\thetab_t)}{dt}} = \abs{\angles{\dtheta{t}}{y_i\h(\x_i; \thetab_t)}} \leq  \norm{\thetab_t}^{L-1}\norm{\dtheta{t}}\frac{\normst{\h(\x_i; \thetab_t)}}{\norm{\thetab_t}^{L-1}} \toinfty{t}0 ~.$$
    The limit holds since $\frac{\normst{\h(\x_i; \thetab_t)}}{\norm{\thetab_t}^{L-1}} \leq \bigO{1}$ ($f$ is locally Lipschitz, see the proof of Equation~\eqref{eq:h_is_bounded}) and $\norm{\dtheta{t}} \leq \smallo{t^{\frac1L-1}}$, implying $\norm{\thetab_t} \leq \smallo{t^{1/L}}$, therefore
    $$\norm{\thetab_t}^{L-1}\norm{\dtheta{t}} \leq \smallo{t^{(1/L)(L-1)}t^{\frac1L-1}} = \smallo{1}~.$$
    Since $\varphi'$ is bounded, it also holds that $\esslim_{t \to \infty} \frac{d\log \loss{z_i^t}}{dt} = 0$. Also, since for large enough $t$, $\gamma(\thetab_t) \geq \frac12\gamma(\bar \thetab) > 0$ and $\norm{\thetab_t} \geq N_{min}> 0$ (Assumptions ~\ref{traj_ass:1},~\ref{traj_ass:2}), and since $\varphi'$ is non-decreasing, $\varphi'$ is bounded from below by $\varphi'(N_{min}^L\gamma_{min}) > 0$. $\varphi''$ too is bounded, so
    $$\esslim_{t \to \infty} \frac{d\log \varphi'(z_i^t)}{dt} = \esslim_{t \to \infty}\frac{\varphi''(z_i^t)\frac{dz_i^t}{dt}}{\varphi'(z_i^t)} = 0 ~.$$
    Altogether
    \begin{equation}\label{eq:momentum_follows_dlog_calculation}
    \begin{aligned}
      \abs{\frac{d \log \abs{\g_t^\infty[i,j]}}{dt}} &= \abs{(L-1)\frac{d \log \norm{\thetab_t}}{dt} + \frac{d\log \loss{z_i^t}}{dt} + \frac{d\log \varphi'(z_i^t)}{dt}} \\ 
      & \leq (L-1)\frac{\norm{\dtheta{t}}}{\norm{\thetab_t}} + \abs{\frac{d\log \loss{z_i^t}}{dt}} + \abs{\frac{d\log \varphi'(z_i^t)}{dt}} \toinfty{t} 0 ~.
    \end{aligned}
    \end{equation}   
    The first term goes to 0 since $\norm{\dtheta{t}} = \smallo{1}$ and $\norm{\thetab_t} = \Omega(1)$ by Assumption \ref{traj_ass:1}, and the rest of the terms have already been shown to go to 0.

    Therefore, for every $i,i',j$ with $i,i' \in I_j$, it holds that $\esslim_{t \to \infty}\frac{d \log \abs{\g_t^\infty[i,j]}}{dt} = 0$ and $\esslim_{t \to \infty}\frac{d \log \abs{\g_t^\infty[i,j]\g_t^\infty['i,j]}}{dt} = 0$. Also, since $z_i^t$ is locally Lipschitz w.r.t $\thetab_t$ and $\ell, \varphi'$ are $C^1$, all of $\norm{\thetab_t}, \ell(z_i^t),\varphi'(z_i^t)$ are locally Lipschitz w.r.t $\thetab_t$. Furthermore $\thetab_t$ is eventually Lipschitz w.r.t $t$ because $\abs{\frac{d\norm{\thetab_t}}{dt}} \leq \norm{\dtheta{t}} \leq \smallo{1}$. Altogether $\g_t^\infty[i,j]$ is locally Lipschitz w.r.t $t$ and in particular locally absolutely continuous with regard to $t$. Therefore by Corollary~\ref{cor:ema_dlog_dt},
    \begin{equation}\label{eq:ratio_limit_2}
      \frac{\m_t^\infty[i,j]}{\g_t^\infty[i,j]} \to 1, \quad \frac{\vb_t^\infty[i,i',j]}{\g_t^\infty[i,j]\g_t^\infty[i',j]} \to 1  ~.
    \end{equation}
    Also, the fact that $\frac{d \log \abs{\g_t^\infty[i,j]}}{dt} \to 0$ implies that $\forall c > 0$ and $\forall i,i' \in I_j$,
    $$\int_0^\infty e^{cs}\abs{g_s^\infty[i,j]}ds = \infty, \quad \int_0^\infty e^{cs}\abs{g_s^\infty [i',j]g_s^\infty[i',j]}ds = \infty ~.$$
    So by Lemma~\ref{lemma:ema_finite_infinite_time} 2(c) and Equation~\eqref{eq:ratio_limit_1},
    \begin{equation}\label{eq:ratio_limit_3}
        \frac{\m_t[i,j]}{\m_t^\infty[i,j]} \to 1, \quad \quad \frac{\vb_t[i,i,'j]}{\vb_t^\infty[i,i',j]} \to 1~.
    \end{equation}
    Putting together \eqref{eq:ratio_limit_1},~\eqref{eq:ratio_limit_2},~\eqref{eq:ratio_limit_3}, we get $\forall i,i' \in I_j$:
    \begin{equation}\label{eq:ratio_ij}
      \frac{\m_t[i,j]}{\g_t[i,j]} \toinfty{t} 1, \quad \frac{\vb_t[i,i',j]}{\g_t[i,j]\g_t[i',j]} \toinfty{t} 1 ~. 
    \end{equation}

     Denote $G(t) = \norm{\thetab_t}^{L-1}\ell(\qmin^t) > 0$, and notice that since $\forall t \geq 0, \exists i \in [m]:\qmin^t=\gamma(\thetab_t)\norm{\thetab_t}^L = y_if(\x_i; \thetab_t)$ it holds as in Equation~\eqref{eq:momentum_follows_dlog_calculation} that $G$ is differentiable almost everywhere, locally absolutely continuous and $\esslim_{t \to \infty}\frac{d \log G}{dt} = 0$, implying by Corollary~\ref{cor:ema_dlog_dt} that $\frac{A(G,c)}{G} \to 1$ for any $c > 0$. 
     
     Since $\gamma(\bar \thetab) > 0$ and by Assumption~\ref{traj_ass:1}, by Lemma~\ref{lemma:any_trajectory} we have $\normst{\g_t} = \Theta(\norm{\thetab_t}^{L-1}\Lcal) = \Theta(G(t))$.  Also note that for every $j \in [p]$ and $\bar i \notin I_j$ it holds that $\bar \h_\infty[\bar i,j]=0$, so $\abs{\g_t[\bar i,j]} \leq \smallo{\norm{\thetab_t}^{L-1}\Lcal(\thetab_t)} \leq  \smallo{G(t)}$. By Lemma~\ref{lemma:ema_finite_infinite_time}, item 2(c) (applied with $g=G, F=\g_t[\bar i,j]$), it follows also that $\abs{\m_t[\bar i,j]} \leq \smallo{A(G,c_1)(t)}\leq \smallo{G(t)} \leq \smallo{\normst{\g_t}}$. 
     
     Therefore, for any $j \in [p]$,
    $$\forall \bar i \notin I_j: \abs{\m_t[\bar i,j] - \g_t[\bar i,j]} \leq \abs{\m_t[\bar i,j]} + \abs{\g_t[\bar i,j]} = \smallo{\normst{\g_t}}~.$$
    Also, for any $i\in [m],j \in [p]: \abs{\g_t[i,j]} \leq \bigO{\norm{\thetab_t}^{L-1}\Lcal}=\bigO{\normst{\g_t}}$. Therefore by Equation~\eqref{eq:ratio_ij},
    $$\forall i \in I_j: \abs{\m_t[i,j] - \g_t[i,j]} = \smallo{\g_t[i,j]} = \smallo{\normst{\g_t}}~.$$
    So altogether, for any $j \in [p]$
    \begin{equation}\label{eq:momentum_mt-gt}
      \abs{\m_t[j] - \g_t[j]} \leq \sum_{i \in [m]}\abs{\m_t[i,j] - \g_t[i,j]} = \smallo{\normst{\g_t}} ~.
    \end{equation}
    Therefore, denoting $J_\varepsilon(t) = \{j \in [p]: \frac{\abs{\g_t[j]}}{\normst{\g_t}} > \varepsilon\}$ we have:
    $$\forall j \in J_\varepsilon(t):\abs{\frac{\m_t[j]}{\g_t[j]} - 1} = \frac{\abs{\m_t[j] - \g_t[j]}}{\abs{\g_t[j]}} \leq \frac{\smallo{\normst{\g_t}}}{\varepsilon \normst{\g_t}} \toinfty{t} 0~,$$
    which gives item 1 for $\m_t$. We repeat the argument with $\vb_t$ and $\g_t^2$. Whenever $i \in [m], \bar i \notin I_j$ we have $\g_t[i,j]\g_t[\bar i,j] = \smallo{\normst{\g_t}^2}$ and $\vb_t[i,\bar i,j] = \smallo{\normst{\g_t}^2}$ by applying Lemma~\ref{lemma:ema_finite_infinite_time} 2(c) (with $g=G^2, F=\g_t[i,j]\g_t[\bar i,j]$), so
    
    $$\forall i \in [m],\bar i \notin I_j: \abs{\vb_t[i,\bar i,j] - \g_t[i,j]\g_t[\bar i,j]} \leq \smallo{\normst{\g_t}^2} +  \smallo{\normst{\g_t}^2}~,$$
    and by Equation~\eqref{eq:ratio_ij}
    $$\forall i,i' \in I_j: \abs{\vb_t[i,i',j] - \g_t[i,j]\g_t[i',j]} = \smallo{\g_t[i,j]\g_t[i',j]} = \smallo{\normst{\g_t}^2}~,$$
    so 
    \begin{equation}\label{eq:momentum_vt-g2t}
        \forall j \in [p]: \abs{\vb_t[j] - \g_t^2[j]} \leq \smallo{\normst{\g_t}^2}~.
    \end{equation}
    Hence,
    $$\forall j \in J_\varepsilon(t):\abs{\frac{\vb_t[j]}{\g_t^2[j]} - 1} = \frac{\abs{\vb_t[j] - \g_t^2[j]}}{\abs{\g_t^2[j]}} \leq \frac{\smallo{\normst{\g_t}^2}}{\varepsilon^2 \normst{\g_t}^2} \toinfty{t} 0$$
    This gives the result of item 1 for $\vb_t$. 
    
    Notice that from norm equivalence, $\normst{\g_t}^2 \leq \bigO{\norm{\g_t}_\infty^2} = \bigO{\norm{\g_t^2}_\infty} \leq \bigO{\normst{\g_t^2}}$. Therefore Equations~\eqref{eq:momentum_mt-gt}, ~\eqref{eq:momentum_vt-g2t} imply that there exists a vanishing $e(t) \toinfty t 0$ with
    $$\normst{\m_t - \g_t} \leq e(t)\normst{\g_t}, \quad \normst{\vb_t - \g_t^2} \leq e(t)\normst{\g_t^2}~.$$
    Thus item 2 follows from Lemma~\ref{lemma:momentum_follows_auxiliary} (note that $\g_t \neq 0$ since $\normst{\g_t} \geq \Omega( \Lcal \norm{\thetab_t}^{L-1} )> 0$ by Assumptions~\ref{traj_ass:1},\ref{traj_ass:2} and Lemma~\ref{lemma:any_trajectory}, and therefore $\m_t, \vb_t \neq 0$ by Equations~\eqref{eq:momentum_mt-gt}, ~\eqref{eq:momentum_vt-g2t}).
     
\end{proof}

\begin{theorem}[KKT Stationarity for Momentum SD - Implies Theorem~\ref{thm:main_text_momentum_nsd}]\label{thm:appendix_momentum_sd_kkt}
    Assume $\thetab_t$ follows a trajectory of normalized or unnormalized momentum steepest descent under Assumptions~\ref{model_ass:lipschitz_C1}, \ref{model_ass:homogeneous_1}, \ref{lr_ass:lr_msd}, \ref{traj_ass:1}, \ref{traj_ass:2}, \ref{traj_ass:3}.
    Then $\bar \thetab =\lim_{t \to \infty}\thetadir{t}$ is along the direction of a KKT point of the $\norm{\cdot}$-max-margin problem (Equation~\eqref{eq:min_norm}).
\end{theorem}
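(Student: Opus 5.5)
The plan is to verify that (normalized or unnormalized) momentum steepest descent is a trajectory of Approximate Steepest Descent (Definition~\ref{def:approx_sd}) with the choice $\nu(t)=\norm{\dtheta{t}}$, and then invoke Theorem~\ref{thm:approx_sf_kkt_point}. By Lemma~\ref{lemma:approx_sf_nu=eta}, with $\nu(t)=\norm{\dtheta t}$ it suffices to check three things: that $\g_t\neq\zero$ eventually, that $\essliminf r(t)\geq 1$, and that $\int_0^\infty\norm{\dtheta t}dt=\infty$. The condition $R_{\text{max}}\le 1$ then comes for free. Assumptions~\ref{traj_ass:1}, \ref{traj_ass:2} give eventually $\thetab_t\neq\zero$ and a limit direction with positive margin, as Theorem~\ref{thm:approx_sf_kkt_point} requires.

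\textbf{Applying Lemma~\ref{lemma:appendix_momentum_follows}.} First I verify its hypothesis $\norm{\dtheta t}\le\smallo{t^{\frac1L-1}}$.
\begin{itemize}
\item In the normalized case, $\norm{\dtheta t}=\eta(t)$, so this is exactly \ref{lr_ass:lr_msd}.
\item In the unnormalized case, $\norm{\dtheta t}=\eta(t)\normst{\m_t}$. Here $\normst{\m_t}\le A(\normst{\g},c_1)(t)$, and $\normst{\g_t}$ is bounded by Lemma~\ref{lemma:any_trajectory} item 5 (using \ref{traj_ass:1}, \ref{traj_ass:2}). Hence $\normst{\m_t}=\bigO{1}$ and the hypothesis again follows from \ref{lr_ass:lr_msd}.
\end{itemize}
The lemma then gives $\frac{\m_t}{\normst{\m_t}}-\gdir t\to0$ and $\m_t\neq\zero$ eventually; Lemma~\ref{lemma:any_trajectory} gives $\g_t\neq\zero$ eventually.

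\textbf{Alignment $r(t)\to1$.} In both variants, $\dtheta t/\norm{\dtheta t}=\ub_t$ where $\norm{\ub_t}=1$ and $\angles{\ub_t}{-\m_t}=\normst{\m_t}$. Then
\[
r(t)\ge\angles{\ub_t}{-\gdir t}=1+\angles{\ub_t}{\tfrac{\m_t}{\normst{\m_t}}-\gdir t}\ge 1-\normst{\tfrac{\m_t}{\normst{\m_t}}-\gdir t}\to1 .
\]

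\textbf{Divergence of $\int\norm{\dtheta t}$.} In the normalized case this is $\int\eta=\infty$. In the unnormalized case I need $\int\eta\normst{\m_t}=\infty$, which is the main obstacle, since $\normst{\m_t}\sim\normst{\g_t}=\Theta(\norm{\thetab_t}^{L-1}\Lcal)$ may decay. I would argue by contradiction. Suppose $\int_0^\infty\norm{\dtheta t}<\infty$. Then $\thetab_t$ converges to some $\thetab_\infty$, which is nonzero by \ref{traj_ass:1}, and has positive margin by \ref{traj_ass:2}. Consequently $\Lcal(\thetab_t)$ and $\normst{\g_t}$ stay bounded below by a positive constant, so $\normst{\m_t}\ge c>0$ eventually. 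This gives $\int\eta\normst{\m_t}\ge c\int\eta=\infty$, a contradiction.

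With all conditions verified, Theorem~\ref{thm:approx_sf_kkt_point} yields that $\bar\thetab$ is along the direction of a KKT point of Problem~\eqref{eq:min_norm} with norm $\norm{\cdot}$.
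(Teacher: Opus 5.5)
Your proposal is correct and follows the paper's proof. As there, you bound $\norm{\dtheta{t}}\le \smallo{t^{\frac1L-1}}$ in both variants, use Lemma~\ref{lemma:appendix_momentum_follows} to get $\frac{\m_t}{\normst{\m_t}}-\gdir{t}\to 0$ and hence $r(t)\to 1$ with $\nu(t)=\norm{\dtheta{t}}$, and conclude via Lemma~\ref{lemma:approx_sf_nu=eta} and Theorem~\ref{thm:approx_sf_kkt_point}. Your contradiction argument for $\int_0^\infty\norm{\dtheta{t}}\,dt=\infty$ in the unnormalized case is valid: a bounded trajectory keeps $\Lcal$, hence $\normst{\g_t}$ and $\normst{\m_t}$, bounded below, which forces $\int\eta\normst{\m_t}=\infty$. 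This in fact supplies a hypothesis of Lemma~\ref{lemma:approx_sf_nu=eta} that the paper's proof never explicitly checks.
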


\begin{proof}
We claim that in both the normalized and unnormalized cases it holds that $\norm{\dtheta{t}} \leq \smallo{t^{\frac1L-1}}$. In the normalized case this is straightforward since $\norm{\frac{d\theta_t}{dt}} = \eta(t) \leq \smallo{t^{\frac1L-1}}$. In the unnormalized case, by \ref{traj_ass:1},\ref{traj_ass:2}, it holds that $\normst{\g_t}=\Theta(\norm{\thetab_t}^{L-1}\Lcal)$ is bounded (Lemma ~\ref{lemma:any_trajectory}), so $\normst{\m_t}$ is bounded (Corollary~\ref{cor:g_bound_implies_ema_bound}), implying that also in this case $\norm{\dtheta{t}} =\eta(t)\normst{\m_t}\leq \bigO{\eta(t)} \leq \smallo{t^{\frac1L-1}}$.

Note that for any $\alpha > 0$ it holds that
$$ \arg \min_{\norm{\ub}=\alpha}\angles{\ub}{\m_t} = \arg \min_{\norm{\ub} = 1}\angles{\alpha\ub}{\m_t} = \alpha  \arg \min_{\norm{\ub} = 1}\angles{\ub}{\m_t}~,$$

so in both the normalized and unnormalized cases, it holds that
$$\frac{\dtheta{t}}{\norm{\dtheta{t}}} = \arg \min_{\norm{\ub}=1}\angles{\ub}{\m_t}~.$$
Thus
\begin{align*}
\angles{\frac{\dtheta{t}}{ \norm{\dtheta{t}}}}{-\gdir{t}} &= \angles{ \arg \min_{\norm{\ub} = 1}\angles{\ub}{\m_t}}{-\gdir{t}} = \angles{ \arg \min_{\norm{\ub} = 1}\angles{\ub}{\m_t}}{-\frac{\m_t}{\normst{\m_t}}+\frac{\m_t}{\normst{\m_t}}-\gdir{t}}\\
&=1 + \angles{ \arg \min_{\norm{\ub} = 1}\angles{\ub}{\m_t}}{\frac{\m_t}{\normst{\m_t}}-\gdir{t}}~.    
\end{align*}

We claim that $\thetab_t$ is an approximate steepest descent trajectory (Definition~\ref{def:approx_sd}) with $\nu(t) = \norm{\dtheta{t}}$ and $R_{max}  \leq 1$. We have already seen $\normst{\g_t} \neq 0$. By Lemma~\ref{lemma:approx_sf_nu=eta}, it suffices to show that $\esslim_{t \to \infty}\angles{\frac{\dtheta{t}}{ \norm{\dtheta{t}}}}{-\gdir{t}} = 1$. It therefore suffices to show that 
$$\esslim_{t \to \infty}\angles{ \arg \min_{\norm{\ub} = 1}\angles{\ub}{\m_t}}{\frac{\m_t}{\normst{\m_t}}-\gdir{t}} = 0 ~.$$
And since $\norm{\arg \min_{\norm{\ub} = 1}\angles{\ub}{\m_t}} = 1$, it suffices to show that 
$$\frac{\m_t}{\normst{\m_t}}-\gdir{t} \toinfty{t} 0~.$$
And this is a result of Lemma~\ref{lemma:appendix_momentum_follows}. Now the KKT result follows from Theorem~\ref{thm:approx_sf_kkt_point}.
\end{proof}

\subsection{Composite MSD Algorithms, Muon and Muon-Signum}\label{subsection:composite_algorithms_muon}
In this section we prove a simple lemma showing that by partitioning the parameters of a model and training each part with a different normalized SD/MSD algorithm, the resulting algorithm is itself normalized SD/MSD with respect to the maximum among the norms. This allows formulating Muon (on multi-layer networks) and composite algorithms (such as running Muon on matrices and Signum on non-matrix parameters) as normalized momentum steepest descent algorithms.

\begin{lemma}[Dual of Max Norm]\label{lemma:dual_norm_of_max}
    Let $\thetab = \left(\ub_1, ..., \ub_K\right)$ be a product representation of $\bbR^p$, with $\ub_k \in \bbR^{p_k}, \sum_k p_k = p$. Let $\{\norm{\cdot}_{(k)}\}_{k=1}^K$ be norms on $\bbR^{p_k}$ with dual norms $\{\norm{\cdot}_{(k),\star}\}_{k=1}^K$. Denote $\norm{\thetab} = \max_k \norm{\ub_k}$. Then $\norm{\cdot}$ is a norm, and its dual norm is $\normst{\thetab} = \sum_k \norm{\ub_k}_{(k),\star}$.
\end{lemma}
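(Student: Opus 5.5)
Two things must be shown: that $\norm{\thetab} = \max_k \norm{\ub_k}_{(k)}$ is a norm, and that its dual is $\sum_k \norm{\ub_k}_{(k),\star}$.

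\emph{The norm axioms.} Each follows componentwise.
\begin{itemize}
\item Positivity: if $\max_k \norm{\ub_k}_{(k)} = 0$, then every $\ub_k = \zero$, so $\thetab = \zero$.
\item Homogeneity: $\norm{\alpha \ub_k}_{(k)} = \abs{\alpha}\norm{\ub_k}_{(k)}$ holds in every block and passes through the maximum.
\item Triangle inequality: $\norm{\ub_k + \ub'_k}_{(k)} \leq \norm{\ub_k}_{(k)} + \norm{\ub'_k}_{(k)} \leq \norm{\thetab} + \norm{\thetab'}$ for each $k$; taking the maximum over $k$ gives the claim.
\end{itemize}

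\emph{The dual norm.} Write $\x = (\x_1, \dots, \x_K)$ in the same block decomposition. The inner product then splits as $\angles{\thetab}{\x} = \sum_k \angles{\ub_k}{\x_k}$. The dual norm is $\normst{\x} = \max_{\norm{\thetab} \leq 1} \angles{\thetab}{\x}$, and the unit ball of the max norm is exactly the product of the unit balls $B_k = \{\ub_k : \norm{\ub_k}_{(k)} \leq 1\}$. The maximum over a product set of a sum of terms, each depending only on its own block, decouples:
\[
\normst{\x} = \sum_k \max_{\ub_k \in B_k} \angles{\ub_k}{\x_k} = \sum_k \norm{\x_k}_{(k),\star}.
\]
Using the unit ball ($\leq 1$) rather than the unit sphere ($= 1$) gives the same value, because a linear functional attains its maximum over a norm ball on the boundary, or the value is $0$ when $\x = \zero$.

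To present this more carefully, I will prove two inequalities.
\begin{itemize}
\item Upper bound: for $\norm{\thetab} \leq 1$, each $\angles{\ub_k}{\x_k} \leq \norm{\ub_k}_{(k)} \norm{\x_k}_{(k),\star} \leq \norm{\x_k}_{(k),\star}$. Summing over $k$ gives $\normst{\x} \leq \sum_k \norm{\x_k}_{(k),\star}$.
\item Lower bound: for each $k$, choose a maximizer $\ub_k^\star \in B_k$ of $\angles{\cdot}{\x_k}$, which exists by compactness. The point $\thetab^\star = (\ub_1^\star, \dots, \ub_K^\star)$ satisfies $\norm{\thetab^\star} \leq 1$ and attains $\angles{\thetab^\star}{\x} = \sum_k \norm{\x_k}_{(k),\star}$.
\end{itemize}

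No step is a real obstacle. The only subtlety is the ball-versus-sphere convention for the dual norm noted above.
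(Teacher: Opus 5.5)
Your proof is correct and follows essentially the same route as the paper. Both arguments split the inner product blockwise, bound each block by the dual-norm inequality, and attain the bound by taking each block to be a maximizer for its own dual norm. The only differences are cosmetic: you verify the norm axioms explicitly (the paper calls them standard), and you maximize over the unit ball rather than the unit sphere, which you correctly note gives the same value.
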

\begin{proof}
     First, it is a standard result that $\norm{\cdot}$ is a norm (one can easily verify that $\norm{\cdot}$ satisfies positive definiteness, homogeneity and the triangle inequality). Note that
    \begin{align*}
      \normst{\thetab} &:= \max \left \{\angles{\x}{\thetab} \mid \norm{\x} = 1\right\} \\
      &= \max \left\{\sum_k\angles{\x_k}{\ub_k} \mid \max_k\norm{\x_k}_{(k)} = 1\right\} \\
      &= \max \left\{\sum_k \norm{\x_k}_{(k)}\norm{\ub_k}_{(k), \star} \mid \max_k\norm{\x_k}_{(k)} = 1\right\} \\
      &= \sum_{k}\norm{\ub_k}_{(k), \star}~.  
    \end{align*}
    The third equality holds since for each $k$, by definition of the dual norm $\angles{\x_k}{\ub_k}$ is upper bounded by $\norm{\x_k}_{(k)}\norm{\ub_k}_{(k), \star}$ and this bound is attained by choosing $\x_k$ in the direction that defines $\norm{\ub_k}_{(k),\star}$. The last equality holds by choosing $\forall k: \norm{\x_k}_{
    (k)}=1$.
\end{proof}

\begin{lemma}[Composing SD and MSD Algorithms]\label{lemma:composite_sd}
    Let $\thetab_t = \left(\ub_1(t), ..., \ub_K(t)\right)$ be an arc of parameters of a model $f(\x_i; \thetab_t)$ with $\ub_k \in \bbR^{p_k}, \sum_k p_k = p$. Denote a choice of subgradients $\g_t = (\g_1(t),...,\g_K(t)) \in \partial \Lcal(\thetab_t)$, and momentum estimates $\m_t = (\m_1(t), ..., \m_K(t))$. Let $\{\norm{\cdot}_{(k)}\}_{k=1}^K$ be norms on $\bbR^{p_k}$ with dual norms $\{\norm{\cdot}_{(k),\star}\}_{k=1}^K$. Assume there exists $\eta (t) > 0$ so that for all $k \in [K]$ one of the following hold:
    \begin{enumerate}
        \item (Normalized SD) For almost any $t$, 
    $$\frac{d\ub_k(t)}{dt} \in \eta(t)\arg\min_{\norm{\ub}_{(k)}=1} \angles{\ub}{\g_k(t)}~.$$
    \item (Normalized MSD) For almost any $t$, 
    $$\frac{d\ub_k(t)}{dt} \in \eta(t)\arg\min_{\norm{\ub}_{(k)}=1} \angles{\ub}{\m_k(t)}~.$$
    \end{enumerate}
    Then, $\thetab_t$ is a trajectory of normalized SD / normalized MSD respectively, with respect to the norm $\norm{\thetab} = \max_k \norm{\ub_k}$.
\end{lemma}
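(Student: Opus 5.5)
The plan is to reduce everything to the separable structure of the linear objective $\angles{\ub}{\vb}$ under the max norm, using Lemma~\ref{lemma:dual_norm_of_max}. Write $\vb = (\vb_1,\dots,\vb_K)$ for either $\g_t$ (case 1) or $\m_t$ (case 2). Note that $\m_t$ splits blockwise as $(\m_1(t),\dots,\m_K(t))$, since the EMA in \Eqref{eq:momentum} acts coordinatewise. I will also allow a mixture of cases 1 and 2 across blocks only when it yields a single algorithm. In the "respectively" reading, all blocks are of one type, so $\vb$ is uniformly either $\g_t$ or $\m_t$. The goal is then to show that
$$\w := \left(\w_1,\dots,\w_K\right),\qquad \w_k \in \arg\min_{\norm{\ub}_{(k)}=1}\angles{\ub}{\vb_k},$$
belongs to $\arg\min_{\norm{\ub}=1}\angles{\ub}{\vb}$ for $\norm{\ub}=\max_k\norm{\ub_k}_{(k)}$. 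Once this holds, $\dtheta{t} = \eta(t)\w$ and the claim follows directly from the definitions in \Eqref{eq:norm_steepest_descent} and \Eqref{eq:norm_momentum_steepest_descent}. In the SD case there is one extra point: the same $\g_t\in\partial\Lcal(\thetab_t)$ must serve all blocks. This holds because $\g_t$ is fixed globally and the blocks are just its components.

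The key computation is as follows. First, $\norm{\w} = \max_k \norm{\w_k}_{(k)} = 1$, so $\w$ is feasible. For each $k$, the definition of the dual norm gives $\min_{\norm{\ub}_{(k)}=1}\angles{\ub}{\vb_k} = -\norm{\vb_k}_{(k),\star}$, hence
$$\angles{\w}{\vb} = \sum_k \angles{\w_k}{\vb_k} = -\sum_k \norm{\vb_k}_{(k),\star} = -\normst{\vb},$$
where the last equality is Lemma~\ref{lemma:dual_norm_of_max}. Conversely, for any $\ub$ with $\norm{\ub}=1$ we have $\angles{\ub}{\vb} \ge -\norm{\ub}\normst{\vb} = -\normst{\vb}$. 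Therefore $\w$ attains the minimum.

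I expect no real obstacle here. The one subtlety is the degenerate case where some block has $\vb_k=\zero$. Then every unit vector in that block is a minimizer, and the sum identity still holds, so the argument goes through. A second minor point is ensuring that the "almost every $t$" qualifiers across finitely many blocks combine. A finite union of null sets is null, so this is immediate.
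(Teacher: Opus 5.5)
Your proof is correct and follows the paper's argument: both apply the blockwise identity $\min_{\norm{\ub}_{(k)}=1}\angles{\ub}{\vb_k} = -\norm{\vb_k}_{(k),\star}$, sum over blocks, and use Lemma~\ref{lemma:dual_norm_of_max} to conclude $\angles{\frac{1}{\eta(t)}\dtheta{t}}{\vb} = -\normst{\vb}$. You also make explicit two steps the paper leaves implicit, namely feasibility ($\norm{\w}=1$) and the lower bound $\angles{\ub}{\vb}\ge -\normst{\vb}$ for unit $\ub$, which together certify that $\w$ lies in the argmin.
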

\begin{proof}
    By definition of the dual norm it holds that
    $$\min_{\norm{\ub}_{(k)}=1} \angles{\ub}{\g_k(t)} = -\norm{\g_k(t)}_{(k),\star}, \quad \min_{\norm{\ub}_{(k)}=1} \angles{\ub}{\m_k(t)} = -\norm{\m_k(t)}_{(k),\star}~.$$
    Therefore in the case of SD, for almost any $t$,
    $$\angles{\frac {1}{\eta(t)}\dtheta{t}}{\g_t} = \sum_k \angles{\frac1{\eta(t)}\frac{d\ub_k(t)}{dt}}{\g_k(t)} = -\sum_k \norm{\g_k(t)}_{(k),\star} \overset{\text{Lemma~\ref{lemma:dual_norm_of_max}}}= -\normst{\g_t}~.$$
    Similarly in the case of MSD,
    $$\angles{\frac {1}{\eta(t)}\dtheta{t}}{\m_t} = \sum_k \angles{\frac1{\eta(t)}\frac{d\ub_k(t)}{dt}}{\m_k(t)} = -\sum_k \norm{\m_k(t)}_{(k),\star}\overset{\text{Lemma~\ref{lemma:dual_norm_of_max}}} = -\normst{\m_t}~.$$
\end{proof}

The following is a more general lemma pertaining to approximate SD algorithms.
\begin{lemma}[Composing Approximate SD Algorithms]\label{lemma:composing_approx_sd}
    Let $\thetab_t = \left(\ub_1(t), ..., \ub_K(t)\right)$ be an arc of parameters of a model $f(\x_i; \thetab_t)$ with $\ub_k \in \bbR^{p_k}, \sum_k p_k = p$. Denote a choice of subgradients $\g_t = (\g_1(t),...,\g_K(t)) \in \partial \Lcal(\thetab_t)$, with eventually $\g_i(t) \neq \zero$ for almost any $t$. Assume there exists $\nu(t) > 0, R_1,..,R_K > 0$ and norms $\{\norm{\cdot}_{(k)}\}_{k=1}^K$ (with dual norms $\{\norm{\cdot}_{(k),\star}\}_{k=1}^K$) so that:
    \begin{enumerate}
        \item $\int_0^t \nu \toinfty{t} \infty$
        \item $\forall k \in [K]:\limsup_{t \to \infty}\frac{\norm{\ub_k(t)}_{(k)}}{\int_0^t \nu} \leq R_k$
        \item $\forall k \in [K]:\essliminf_{t \to \infty}r_k(t) \geq 1$ where $r_k(t) \overset{a.e.}= \angles{\frac{1}{\nu(t)}\frac{d\ub_k(t)}{dt}}{-\frac{\g_k(t)}{\norm{\g_k(t)}_{(k),\star}}}$
    \end{enumerate}
    Then $\thetab_t$ is a trajectory of Approximate Steepest Descent with respect to the norm $\norm{\cdot} = \max_k \norm{\cdot}_{(k)}$, $\nu(t)$ and $R_{\text{max}} \leq \max_k R_k$.
\end{lemma}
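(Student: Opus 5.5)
The plan is to check the three conditions of Definition~\ref{def:approx_sd} for $\norm{\cdot}=\max_k\norm{\cdot}_{(k)}$ with the same $\nu(t)$. By Lemma~\ref{lemma:dual_norm_of_max}, the dual norm is $\normst{\g_t}=\sum_k\norm{\g_k(t)}_{(k),\star}$. Nonvanishing is immediate: eventually every $\g_k(t)\neq\zero$, so $\g_t\neq\zero$. Condition~1 is hypothesis~1 verbatim.

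For condition~2, note that $\norm{\thetab_t}=\max_k\norm{\ub_k(t)}_{(k)}$ and that a maximum of finitely many functions satisfies $\limsup_t\max_k a_k(t)\le\max_k\limsup_t a_k(t)$. Applying this to $a_k(t)=\norm{\ub_k(t)}_{(k)}/N(t)$ with hypothesis~2 gives $\limsup_{t\to\infty}\norm{\thetab_t}/N(t)\le\max_k R_k$.

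Condition~3 is the main step. Since $r(t)$ is a supremum over $\partial\Lcal(\thetab_t)\setminus\{\zero\}$, it is at least the value attained at the fixed choice $\g_t=(\g_1(t),\dots,\g_K(t))$. Write $w_k(t)=\norm{\g_k(t)}_{(k),\star}/\normst{\g_t}$. These weights are nonnegative and sum to $1$ by Lemma~\ref{lemma:dual_norm_of_max}. Splitting the inner product block by block gives
$$\angles{\frac{1}{\nu(t)}\dtheta{t}}{-\frac{\g_t}{\normst{\g_t}}}=\sum_k w_k(t)\,r_k(t)~,$$
so for almost every $t$ we have $r(t)\ge\sum_k w_k(t)r_k(t)\ge\min_k r_k(t)$.

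Fix $\varepsilon>0$. By hypothesis~3, for each $k$ there is a time $T_k$ such that $r_k(t)\ge1-\varepsilon$ for almost every $t\ge T_k$. Taking the maximum of finitely many $T_k$ and the union of finitely many null sets gives $r(t)\ge1-\varepsilon$ for almost every $t\ge\max_k T_k$. Hence $\essliminf r\ge1$.

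The only delicate points are that the essential liminf is taken uniformly over finitely many blocks and that the convex-combination weights are well defined. Both are handled by finiteness of $K$ and by nonvanishing of each $\g_k$.
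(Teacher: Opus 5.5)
Your proof is correct and follows the paper's argument. The paper also uses Lemma~\ref{lemma:dual_norm_of_max} to write the alignment at the fixed subgradient choice as $\sum_k q_k/\sum_k \tilde q_k$, with $\tilde q_k=\norm{\g_k(t)}_{(k),\star}$ and $q_k=r_k(t)\tilde q_k$, and bounds it below blockwise exactly as your convex-combination argument does; it treats condition~2 by the same finite-max observation. Your writing $r(t)\ge$ (rather than $=$) the value at the fixed choice is slightly more precise than the paper, since $r(t)$ is defined as a supremum.
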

\begin{proof}
    It clearly holds by definition that $\limsup_{t \to \infty}\frac{\norm{\thetab_t}}{\int_0^t\nu} \leq \max_k R_k = R_{\text{max}}$. Therefore it remains to show $\essliminf_{t \to \infty}r(t) \geq 1$. Indeed, with the same choice of subgradients $\g_t$ we have
    $$r(t) = \angles{\frac1{\nu(t)}\dtheta{t}}{-\gdir{t}} = \frac{\sum_k\angles{\frac{1}{\nu(t)}\frac{d\ub_k(t)}{dt}}{-\g_k(t)}}{\sum_k\norm{\g_k(t)}_{(k),\star}}~.$$
    Now note the following general claim: for any set of functions $q_1(t),...,q_K(t)$ and $\tilde q_1(t),..., \tilde q_K(t) > 0$ with $\forall k \in [K]:\essliminf_{t \to \infty}\frac{q_k}{\tilde q_k} \geq 1$ it holds that $\essliminf_{t \to \infty}\frac{\sum_kq_k}{\sum_k \tilde q_k} \geq 1$. Indeed, for any $\varepsilon > 0$ it holds eventually for almost any $t$ that $q_k(t) \geq \tilde q_k(t)(1 - \varepsilon)$ for all $k$, so $\sum_k q_k(t) \geq \sum_k\tilde q_k(t)(1 - \varepsilon) = (1 - \varepsilon)\sum_k \tilde q_k(t)$. 

    Applying the claim with $q_k(t) = \angles{\frac{1}{\nu(t)}\frac{d\ub_k(t)}{dt}}{-\g_k(t)}$ and $\tilde q_k (t) = \norm{\g_k(t)}_{(k),\star}$ finishes.
\end{proof}

Below is our definition of Muon in the exact orthogonalization setting:

\begin{definition}[Spectral and Nuclear Norms]
    For a real-valued matrix $W$ let $W = U\Sigma V^T$ the SVD of $W$, for $\Sigma = \text{diag}(\sigma_1,..., \sigma_r)$, $\sigma_i > 0$. We denote the spectral and nuclear norms of $W$ respectively as
    $$\normsp{W} = \max_{i \in [r]}\sigma_i, \quad \normnuc{W} = \sum_{i \in [r]}\sigma_i~.$$
    For a collection of matrices $\W = (W_1,...,W_K)$ we denote (msp short for max-sp, snuc short for sum-nuc)
    $$\normspinf{\W} = \max_{k \in [K]}\normsp{W_k}, \quad \normnucone{\W} = \sum_{k \in [K]} \normnuc{W_k}~.$$
\end{definition}

It is a standard fact that $\normnuc{\cdot}$ is the dual norm of $\normsp{\cdot}$. By Lemma~\ref{lemma:dual_norm_of_max} this implies that $\normnucone{\cdot}$ is the dual norm of $\normspinf{\cdot}$.
\begin{definition}[Muon]
    Let $\thetab_t = (W_1(t),...,W_K(t)) \in \bbR^p$ be a trajectory representing a collection of matrices, $\g_t = (G_1(t),...,G_K(t)) \in \partial \Lcal(\thetab_t)$ a choice of subgradients and $\m_t = (M_1(t),...,M_K(t))$ a momentum estimate of $\g_t$ with parameter $c_1 > 0$. The Muon update is defined as $\dtheta{t} = (\frac{dW_1}{dt}, ..., \frac{dW_K}{dt})$ for:
    $$\forall k \in [K]:\frac{dW_k}{dt} \in \left\{- \eta(t) \cdot U_k(t)V_k^T(t) \mid M_k(t)  \overset{\text{SVD}}{=} U_k(t)\Sigma_k(t)V_k^T(t)\right \}~,$$
    where SVD is the reduced SVD operator, i.e. $\Sigma_k(t)$ is a square diagonal matrix with strictly positive diagonal entries and $U_k(t), V_k(t)$ have orthonormal columns. 
\end{definition}

\begin{lemma}\label{lemma:muon_sd}
    Muon is a normalized momentum steepest descent algorithm with $\norm{\cdot} = \normspinf{\cdot}$.
\end{lemma}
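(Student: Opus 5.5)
The plan is to reduce the claim to Lemma~\ref{lemma:composite_sd} (normalized MSD case). For each block $k$, I will show that the Muon update $\frac{dW_k}{dt} = -\eta(t) U_k V_k^T$ lies in $\eta(t)\arg\min_{\normsp{U}=1}\angles{U}{M_k(t)}$. Lemma~\ref{lemma:composite_sd} then says that $\thetab_t$ is normalized MSD with respect to $\max_k \normsp{\cdot} = \normspinf{\cdot}$. Throughout, the inner product is the Frobenius (trace) inner product $\angles{A}{B} = \tr(A^T B)$, consistent with viewing $\thetab$ as a vector in $\bbR^p$.

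The per-block claim reduces to two facts about a nonzero matrix $M_k = U_k\Sigma_k V_k^T$ (reduced SVD):
\begin{enumerate}
\item $\normsp{U_kV_k^T} = 1$, since $U_kV_k^T$ has all nonzero singular values equal to $1$: its columns-orthonormal factors give $(U_kV_k^T)^T(U_kV_k^T) = V_kV_k^T$, an orthogonal projection.
\item $\angles{-U_kV_k^T}{M_k} = -\tr(V_kU_k^TU_k\Sigma_kV_k^T) = -\tr(\Sigma_k) = -\normnuc{M_k}$.
\end{enumerate}
Since $\normnuc{\cdot}$ is the dual of $\normsp{\cdot}$, every $U$ with $\normsp{U}=1$ satisfies $\angles{U}{M_k} \ge -\normnuc{M_k}$. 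Hence $-U_kV_k^T$ attains the minimum and belongs to the argmin set. The update $\frac{dW_k}{dt}$ is therefore an admissible choice in the normalized MSD inclusion for block $k$, with the shared schedule $\eta(t)$.

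The one subtlety is the degenerate case $M_k(t) = \zero$, where the reduced SVD is empty and the Muon update is $\zero$. This falls outside the unit-sphere argmin. I would handle it in one of two ways. The first is to note that in the regime of interest (Lemma~\ref{lemma:appendix_momentum_follows}) $\m_t \neq \zero$ eventually. The second is to adopt the convention, implicit in the definitions, that the argmin over the sphere is the set of all minimizers; in that case, when $M_k = \zero$, any unit-norm direction is a minimizer, and the statement is read for times with $M_k \neq \zero$. With either reading, applying Lemma~\ref{lemma:composite_sd} with all blocks of type 2 finishes the proof. Using Lemma~\ref{lemma:dual_norm_of_max}, the dual norm of $\normspinf{\cdot}$ is $\normnucone{\cdot}$.
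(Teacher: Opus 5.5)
Your proof is correct and is essentially the paper's argument. The paper checks the same two per-block facts, $\normsp{U_kV_k^\top}=1$ and $\angles{U_kV_k^\top}{M_k}=\tr[\Sigma_k]=\normnuc{M_k}$, and assembles them directly into $\normspinf{\frac{1}{\eta(t)}\dtheta{t}}=1$ and $\angles{\frac{1}{\eta(t)}\dtheta{t}}{\m_t}=-\normnucone{\m_t}$. That is, it inlines Lemma~\ref{lemma:composite_sd} rather than citing it.

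On the degenerate case, which the paper ignores: $\m_t\neq\zero$ does not force every block $M_k\neq\zero$, so your first fix does not rescue the per-block reduction. It does work at the whole-vector level, since the update with a zero block still lies in the $\normspinf{\cdot}$-unit-sphere argmin as long as some other block is nonzero.
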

\begin{proof}
    By the definition of normalized momentum steepest descent, we need to prove that
    $$\normspinf{\frac{1}{\eta(t)}\dtheta{t}}=1,\quad \angles{\frac{1}{\eta(t)}\frac{d\thetab_t}{dt}}{\m_t} = - \normnucone{\m_t}~.$$
   By definition of these norms, it suffices to show for every $k$ that (omitting $t$ for brevity) $\normsp{U_kV_k^\top} = 1$ and $\angles{U_kV_k^\top}{M_k} = \normnuc{M_k}$. Indeed, $\normsp{U_kV_k^\top} = 1$ by definition ($U_kV_k^\top$ is an SVD of itself with singular values in $\{1,0\}$). Denote $\Sigma_k = \text{diag} (\sigma_1,...,\sigma_r)$ then we must show
    $$\angles{U_kV_k^\top}{M_k} = \sum_{i=1}^r \sigma_i~.$$
    And indeed, since the elementwise dot product between two matrices $X,Y$ is equal to $\tr[X^\top Y]$, using $U_k^\top U_k = I, V_k^\top V_k=I$ we get
    $$\angles{U_kV_k^\top}{M_k} = \tr[M_k^\top U_kV_k^\top] = \tr[V_k\Sigma_k^\top U_k^\top U_kV_k^\top] = \tr[V_k\Sigma_k^\top V_k^\top] = \tr[V_k^\top V_k\Sigma_k^\top] = \tr[\Sigma_k^\top] = \sum_{i=1}^r \sigma_i$$
\end{proof}
\begin{corollary}[KKT Stationarity for Muon - Implies Corollary~\ref{cor:main_text_muon}]\label{cor:appendix_muon}
    Assume $\thetab = (W_1,...,W_K) \in \bbR^p$ is a parameter vector representing a collection of matrices, following a trajectory of Muon with a shared learning rate schedule $\eta(t)$ under Assumptions~\ref{model_ass:lipschitz_C1}, \ref{model_ass:homogeneous_1}, \ref{lr_ass:lr_msd}, \ref{traj_ass:1}, \ref{traj_ass:2}, \ref{traj_ass:3}. 
    Then $\bar \thetab =\lim_{t \to \infty}\thetadir{t}$ is along the direction of a KKT point of the $\norm{\cdot}$-max-margin problem (Equation~\eqref{eq:min_norm}), with $\norm{\cdot} = \normspinf{\cdot}$.
\end{corollary}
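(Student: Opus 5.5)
The plan is to reduce the statement directly to Theorem~\ref{thm:appendix_momentum_sd_kkt}, using Lemma~\ref{lemma:muon_sd} as the bridge. By Lemma~\ref{lemma:muon_sd}, along the trajectory we have, for almost every $t$,
\[
\normspinf{\tfrac{1}{\eta(t)}\dtheta{t}}=1
\quad\text{and}\quad
\angles{\tfrac{1}{\eta(t)}\dtheta{t}}{\m_t}=-\normnucone{\m_t}.
\]
By Lemma~\ref{lemma:dual_norm_of_max} and the duality between the spectral and nuclear norms, $\normnucone{\cdot}$ is the dual norm of $\normspinf{\cdot}$. Consequently the update lies in $\eta(t)\arg\min_{\normspinf{\ub}=1}\angles{\ub}{\m_t}$. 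This is exactly Equation~\eqref{eq:norm_momentum_steepest_descent} with $\norm{\cdot}=\normspinf{\cdot}$, where $\m_t$ is the momentum estimate of the chosen subgradients $\g_t$ with parameter $c_1$.

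Next I will check that the hypotheses of Theorem~\ref{thm:appendix_momentum_sd_kkt} hold verbatim. Assumptions \ref{model_ass:lipschitz_C1}, \ref{model_ass:homogeneous_1}, \ref{lr_ass:lr_msd}, \ref{traj_ass:1}, \ref{traj_ass:2} and \ref{traj_ass:3} are all stated in the corollary. Two points require attention.
\begin{enumerate}
\item \ref{traj_ass:1} and \ref{traj_ass:2} are phrased for a norm $\norm{\cdot}$. All norms on $\bbR^p$ are equivalent, and directional convergence in one norm implies directional convergence in any other: normalizing by a different norm gives the same limiting ray. So taking $\norm{\cdot}=\normspinf{\cdot}$ is consistent, and the limit $\bar\thetab$ is understood as normalized in that norm.
\item The shared schedule $\eta(t)$ across the matrices is what allows the per-matrix updates to be assembled into a single normalized MSD step, as in Lemma~\ref{lemma:composite_sd}, case 2.
\end{enumerate}

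Applying Theorem~\ref{thm:appendix_momentum_sd_kkt} then shows that $\bar\thetab$ is along the direction of a KKT point of Problem~\eqref{eq:min_norm} with the norm $\normspinf{\cdot}$.

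The only step with real content is the identification of the Muon update as an element of the $\arg\min$ set. This is handled by Lemma~\ref{lemma:muon_sd}: $\angles{U_kV_k^\top}{M_k}=\normnuc{M_k}$ and $\normsp{U_kV_k^\top}=1$, and summing over $k$ gives the dual-norm identity. One subtlety is that when some $M_k(t)=\zero$ the reduced SVD is empty. In that case the update on that block is zero, which still attains the minimum value $0=-\normnuc{M_k}$ for that block. The norm condition, however, must then be interpreted as $\normspinf{\cdot}\le 1$ rather than $=1$. I expect this to be harmless, since Lemma~\ref{lemma:appendix_momentum_follows} gives $\m_t\neq\zero$ eventually. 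If a block can vanish on a set of positive measure, I would instead verify the Approximate Steepest Descent conditions directly via Lemma~\ref{lemma:composing_approx_sd}. Apart from this, the proof is a direct invocation of earlier results.
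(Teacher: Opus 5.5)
Your proposal is correct and follows the paper's proof. Like the paper, you identify Muon as normalized momentum steepest descent with respect to $\normspinf{\cdot}$ via Lemmas~\ref{lemma:muon_sd} and~\ref{lemma:composite_sd}, and then invoke Theorem~\ref{thm:appendix_momentum_sd_kkt}. Your worry about a vanishing block $M_k(t)=\zero$ is moot once $\m_t\neq\zero$: $\normspinf{\cdot}$ is a maximum over blocks and a zero block contributes $0=-\normnuc{M_k}$, so no fallback to Lemma~\ref{lemma:composing_approx_sd} is needed (that lemma would in any case require per-block alignment, which Lemma~\ref{lemma:appendix_momentum_follows} does not guarantee).
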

\begin{proof}
    Follows from Theorem~\ref{thm:appendix_momentum_sd_kkt} since $\thetab_t$ is a trajectory of normalized momentum steepest descent with $\norm{\cdot} = \normspinf{\cdot}$ (by Lemmas~\ref{lemma:composite_sd}, \ref{lemma:muon_sd}).
\end{proof}

Notice that Lemma~\ref{lemma:composite_sd} reveals that any collection of momentum steepest descent algorithms may be run in parallel, resulting in a new momentum steepest descent algorithm with respect to the maximal norm. In particular, Muon-Signum, i.e. running Muon on weight matrices and Signum on non-matrix parameters, answers this definition.

\begin{corollary}[KKT Stationarity for Muon-Signum - Implies Corollary~\ref{cor:main_text_muon_signum}]\label{cor:appendix_muon_signum}
    Assume $\thetab = (W_1,...,W_K, \ub) = (\W, \ub)\in \bbR^p$ is a parameter vector representing a collection of matrices and additional parameters $\ub$. Assume $W_1,...,W_K$ follow a trajectory of Muon and $\ub$ follows a trajectory of Signum, with a shared scheduled learning rate $\eta(t)$. Assume \ref{model_ass:lipschitz_C1}, \ref{model_ass:homogeneous_1}, \ref{lr_ass:lr_msd}, \ref{traj_ass:1}, \ref{traj_ass:2}, \ref{traj_ass:3}.
    Then $\bar \thetab =\lim_{t \to \infty}\thetadir{t}$ is along the direction of a KKT point of the $\norm{\cdot}$-max-margin problem (Equation~\eqref{eq:min_norm}), with $\norm{\thetab} = \max\{\normspinf{\W}, \norm{\ub}_\infty\}$.
\end{corollary}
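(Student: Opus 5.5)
The plan mirrors the proof of Corollary~\ref{cor:appendix_muon}. We show that Muon-Signum is a normalized momentum steepest descent trajectory with respect to $\norm{\thetab} = \max\{\normspinf{\W}, \norm{\ub}_\infty\}$, and then invoke Theorem~\ref{thm:appendix_momentum_sd_kkt}.

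We split the parameter vector into $K+1$ blocks, $\thetab = (W_1,\dots,W_K,\ub)$. Block $k \le K$ carries the norm $\normsp{\cdot}$, and the last block carries $\norm{\cdot}_\infty$. All blocks share the schedule $\eta(t)$ and the momentum estimate $\m_t = (M_1(t),\dots,M_K(t),\m_{\ub}(t))$, i.e.\ the restriction of the single EMA of $\g_t$ with rate $c_1$.

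For the matrix blocks, the computation in Lemma~\ref{lemma:muon_sd} shows that $\frac{dW_k}{dt} \in \eta(t)\arg\min_{\normsp{U}=1}\angles{U}{M_k(t)}$. Indeed, $U_kV_k^\top$ has unit spectral norm and attains $\angles{U_kV_k^\top}{M_k} = \normnuc{M_k}$, which is the dual-norm value.

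For the $\ub$ block, Signum sets $\frac{d\ub}{dt} = -\eta(t)\,\sign{\m_{\ub}(t)}$. This satisfies $\norm{\sign{\m_{\ub}}}_\infty = 1$ and $\angles{-\sign{\m_{\ub}}}{\m_{\ub}} = -\norm{\m_{\ub}}_1$, so it lies in $\eta(t)\arg\min_{\norm{\ub'}_\infty=1}\angles{\ub'}{\m_{\ub}}$, because $\norm{\cdot}_1$ is dual to $\norm{\cdot}_\infty$.

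Zero coordinates of $\m_{\ub}$ need a remark. There any value of the sign in $[-1,1]$ still attains the minimum, and only the unit-norm requirement needs care. If $\m_{\ub}(t)=\zero$ or $M_k(t)=0$, then $\g_t$ and $\m_t$ are nonzero eventually by Lemma~\ref{lemma:appendix_momentum_follows}, but individual blocks may vanish. In that case we take the convention that the block update is any unit-norm vector, which still lies in the argmin since the objective is identically zero.

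Next we apply Lemma~\ref{lemma:composite_sd} in its normalized MSD case to the $K+1$ blocks. This shows that $\thetab_t$ is a normalized momentum steepest descent trajectory with respect to $\max\{\max_k\normsp{W_k}, \norm{\ub}_\infty\} = \max\{\normspinf{\W},\norm{\ub}_\infty\}$. By Lemma~\ref{lemma:dual_norm_of_max}, the dual norm is $\normnucone{\cdot}+\norm{\cdot}_1$.

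Finally, the assumptions \ref{model_ass:lipschitz_C1}, \ref{model_ass:homogeneous_1}, \ref{lr_ass:lr_msd}, \ref{traj_ass:1}, \ref{traj_ass:2}, \ref{traj_ass:3} are exactly those of Theorem~\ref{thm:appendix_momentum_sd_kkt}, which then gives that $\bar\thetab$ is along the direction of a KKT point for this norm.

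The only delicate point is the identification of the block updates when some block of $\m_t$ vanishes. Since the definition of normalized MSD only requires membership in the argmin set, I expect the convention above to suffice. Beyond that, everything is bookkeeping already carried out in Lemmas~\ref{lemma:dual_norm_of_max}, \ref{lemma:composite_sd} and~\ref{lemma:muon_sd}.
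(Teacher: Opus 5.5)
Your proposal is correct and follows the paper's proof. The paper likewise gets the normalized MSD structure for $\max\{\normspinf{\W},\norm{\ub}_\infty\}$ from Lemmas~\ref{lemma:composite_sd} and~\ref{lemma:muon_sd}, then invokes Theorem~\ref{thm:appendix_momentum_sd_kkt}. Your unit-norm convention for vanishing blocks is harmless but unnecessary: for the max norm, a zero update on a block whose momentum vanishes still lies in the argmin defining normalized MSD whenever $\m_t \neq \zero$ (which holds eventually), so the reduced-SVD Muon update, which is $0$ when $M_k(t)=0$, can be kept unmodified.
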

\begin{proof}
    Follows from Theorem~\ref{thm:appendix_momentum_sd_kkt} since $\thetab_t$ is a trajectory of normalized momentum steepest descent with $\norm{\thetab} = \max\{\normspinf{\W}, \norm{\ub}_\infty\}$ (by Lemmas~\ref{lemma:composite_sd}, \ref{lemma:muon_sd}).
\end{proof}

\subsection{Adam}\label{subsection:proofs_adam}
In this section we show that Adam is an approximate steepest descent algorithm with $\norm{\cdot} = \norm{\cdot}_\infty$ in the regime of a non-increasing learning rate $\eta(t) \leq \smallo{t^{\frac1L-1}}, \int_0^\infty \eta = \infty$ (i.e. \ref{lr_ass:lr_adam}). This allows us to infer using Theorem~\ref{thm:approx_sf_kkt_point} that the assumed directional limit point $\bar \thetab = \lim_{t \to \infty}\thetadir{t}$ of the trajectory is the direction of a KKT point of the $\ell_\infty$-max-margin problem. Recall again that \ref{traj_ass:3} is implied by \ref{traj_ass:2} if strengthening \ref{model_ass:lipschitz_C1} to \ref{model_ass:smooth}.

\begin{theorem}[KKT Stationarity for Adam - Implies Theorem~\ref{thm:main_text_adam_decaying}]\label{thm:appendix_adam_decaying}
    Let $\thetab_t$ be a trajectory of Adam with $c_1 \geq c_2$ (Equation~\eqref{eq:adam}), under Assumptions~\ref{model_ass:lipschitz_C1}, \ref{model_ass:homogeneous_1}, \ref{lr_ass:lr_adam}, \ref{traj_ass:1}, \ref{traj_ass:2}, \ref{traj_ass:3}. Then the limit point $\bar \thetab$ of $\thetadir{t}$ is the direction of a KKT point of Problem~\eqref{eq:min_norm} with $\norm{\cdot} = \norm{\cdot}_\infty$.
\end{theorem}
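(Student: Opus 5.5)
Our plan is to verify that Adam is a trajectory of Approximate Steepest Descent (Definition~\ref{def:approx_sd}) with respect to $\norm{\cdot}_\infty$, choosing $\nu(t) = \eta(t)$, and then to invoke Theorem~\ref{thm:approx_sf_kkt_point}. Condition 1 of the definition is immediate from \ref{lr_ass:lr_adam}, since $N(t)=\int_0^t\eta \to \infty$. Before checking the other two conditions, we record that the update is small. By Lemma~\ref{lemma:momentum_ratio_bound}, applicable since $c_1 \geq c_2 > c_2/2$, and by the bias-correction bound of Lemma~\ref{lemma:adam_bias_correction_bounded} near $t=0$, each coordinate ratio $\hat\m_t[j]/\sqrt{\hat\vb_t[j]}$ is uniformly bounded. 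Hence $\norm{\dtheta{t}}\leq \bigO{\eta(t)}\leq\smallo{t^{\frac1L-1}}$, so Lemma~\ref{lemma:appendix_momentum_follows} applies. Assumption \ref{adam_ass:1} keeps $\vb_t[j]>0$, so Adam is well defined. By \ref{traj_ass:1}, \ref{traj_ass:2} and Lemma~\ref{lemma:any_trajectory}, we also have $\g_t\neq\zero$ eventually.

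For condition 3, we compute
$r(t)=\sum_j \frac{\hat\m_t[j]}{\sqrt{\hat\vb_t[j]}}\cdot\frac{\g_t[j]}{\norm{\g_t}_1}$.
We fix $\varepsilon>0$ and split the coordinates into $J_\varepsilon(t)$ and its complement.
\begin{itemize}
\item For $j\in J_\varepsilon(t)$, item 1 of Lemma~\ref{lemma:appendix_momentum_follows} gives $\m_t[j]=\g_t[j](1+e_j)$ and $\sqrt{\vb_t[j]}=\abs{\g_t[j]}(1+e'_j)$. The bias-correction factors tend to $1$, so each summand equals $\frac{\abs{\g_t[j]}}{\norm{\g_t}_1}(1-\smallo{1})$.
\item For $j\notin J_\varepsilon(t)$, the ratio is bounded by some constant $B$, so each summand is at least $-B\varepsilon$.
\end{itemize}
The mass of the complement in $\norm{\g_t}_1$ is at most $p\varepsilon$. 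Altogether $r(t)\geq (1-p\varepsilon)(1-\smallo{1})-pB\varepsilon$, and letting $\varepsilon\to0$ gives $\essliminf r(t)\geq 1$.

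Condition 2 with $R_{\max}\leq 1$ is the main obstacle, because the ratio can momentarily exceed $1$. Here we apply Lemma~\ref{lemma:lim_sup_zhang} coordinatewise with $g=\g_t[j]$, using that $\eta$ is non-increasing and that $c_1\geq c_2$. This gives
\[
\abs{\thetab_t[j]}\leq \abs{\thetab_0[j]}+\abs{\int_0^t \eta\,\tfrac{\hat\m_s[j]}{\sqrt{\hat\vb_s[j]}}ds}\leq N(t)(1+\smallo{1}),
\]
and hence $\limsup \norm{\thetab_t}_\infty/N(t)\leq 1$. The lemma has two hypotheses to check. First, $g\in\Gcal$ is exactly \ref{adam_ass:1}. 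Second, we need $g^2<1-\delta$ eventually. This follows because Lemma~\ref{lemma:approx_sf_loss_decays_gamma}, which uses only conditions 1 and 3 together with the margin lower bound, yields $\Lcal\to0$. Then Lemma~\ref{lemma:any_trajectory} gives $\normst{\g_t}=\Theta(\norm{\thetab_t}^{L-1}\Lcal)\to 0$, since $\vphi$ grows at least linearly.

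If that ordering looks circular, the alternative is to rescale. The ratio $\hat\m/\sqrt{\hat\vb}$ is invariant under scaling $g$ by a constant, so we may multiply $g$ by a small constant to ensure $g^2<1-\delta$, using only that $\normst{\g_t}$ is bounded (Lemma~\ref{lemma:any_trajectory} item 5). This rescaling only shifts $\log v$ by a constant. We prefer it because it avoids the circularity entirely.

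With all three conditions verified and $R_{\max}\leq1$, Theorem~\ref{thm:approx_sf_kkt_point} together with \ref{traj_ass:2} shows that $\bar\thetab$ is the direction of a KKT point of Problem~\eqref{eq:min_norm} with $\norm{\cdot}_\infty$.
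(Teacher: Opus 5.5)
Your proposal is correct and follows the paper's proof essentially step for step. It uses the same ingredients: Approximate Steepest Descent with $\nu(t)=\eta(t)$, the uniform ratio bound from Lemma~\ref{lemma:momentum_ratio_bound} and Lemma~\ref{lemma:adam_bias_correction_bounded}, the $J_\varepsilon(t)$ split via Lemma~\ref{lemma:appendix_momentum_follows} for condition 3, and Lemma~\ref{lemma:lim_sup_zhang} for $R_{\text{max}}\le 1$ once $\Lcal\to 0$. The only difference is cosmetic: the paper first records the crude bound $R_{\text{max}}\le \hat C$ before invoking Lemma~\ref{lemma:approx_sf_loss_decays_gamma}, whereas your observation that this lemma never uses condition 2, or your scale-invariance rescaling for the $g^2<1-\delta$ hypothesis, works equally well.
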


\begin{proof}
    First, by assumptions \ref{traj_ass:1}, \ref{traj_ass:2}, it holds (Lemma ~\ref{lemma:any_trajectory}) that $\norm{\g_t}_1 = \Theta(\norm{\thetab_t}^{L-1}\Lcal)$ and in particular eventually $\g_t \neq \zero$. By Lemma~\ref{lemma:momentum_ratio_bound}, denote $C=C(c_1,c_2) > 0$ with $\forall t:\norm{\frac{\m_t}{\sqrt{\vb_t}}}_\infty \leq C$. This and \ref{lr_ass:lr_adam} imply that $\norm{\dtheta{t}} \leq \smallo{t^{\frac1L-1}}$. Therefore by Lemma~\ref{lemma:appendix_momentum_follows} with $\norm{\cdot} = \norm{\cdot}_\infty$, fixing  $\varepsilon > 0$, there exist $e_j(t),e_j'(t) \toinfty{t} 0$ with:
    $$\forall j \in J_\varepsilon(t): \quad \quad \m_t[j] = \g_t[j](1 + e_j(t)), \quad \sqrt{\vb_t[j]} = \abs{\g_t[j]}(1+e_j'(t))~.$$
    Recalling $\hat \m_t = (1 - e^{-c_1t})^{-1}\m_t, \hat \vb_t = (1-e^{-c_2t})^{-1}\vb_t$, since the bias correction terms themselves tend to 1 there exist $e_j(t) \toinfty{t} 0$ with (reusing the notation $e_j$)
    $$\forall j \in J_\varepsilon(t):\frac{\hat \m_t[j]}{\sqrt{\hat \vb_t}[j]} = \sign{\g_t[j]}\cdot (1 + e_j(t))~.$$
     Again since $\frac{\sqrt{1 - e^{-c_2t}}}{1-e^{-c_1t}} \to 1$ there exists $t_1 \geq 0$ with $\forall t \geq t_1:2C \geq \norm{\frac{\hat \m_t}{\sqrt{\hat \vb_t}}}_\infty$. Denote $\hat C:=\max\left\{2C, \sup_{t \in (0,t_1]}\norm{\frac{\hat \m_t}{\sqrt{\hat \vb_t}}}_\infty\right\}$ an upper bound on $\norm{\frac{\hat \m_t}{\sqrt{\hat \vb_t}}}_\infty$ for all $t > 0$ (the supremum is finite by Lemma~\ref{lemma:adam_bias_correction_bounded}). For any $j \notin J_\varepsilon(t)$ it holds that $\abs{\frac{\hat \m_t[j]}{\sqrt{\hat \vb_t[j]}} \frac{\g_t[j]}{\norm{\g_t}_1}} \leq \varepsilon\hat C$. Also, $\sum_{j \in J_\varepsilon(t)}\abs{\g_t[j]} \geq \sum_{j \in [p]}\abs{\g_t[j]}-\varepsilon p\norm{\g_t}_1=\norm{\g_t}_1(1 - \varepsilon p)$. Therefore,
    \begin{align*}
      \angles{\frac{\hat \m_t}{\sqrt{\hat \vb_t}}}{\frac{\g_t}{\norm{\g_t}_1}} &\geq \sum_{j \in J_\varepsilon(t)}\frac{\hat \m_t[j]}{\sqrt{\hat \vb_t}[j]}\frac{\g_t[j]}{\norm{\g_t}_1} - \varepsilon \hat C p = \sum_{j \in J_\varepsilon(t)}(1+e_j(t))\frac{\sign{\g_t[j]}\g_t[j]}{\norm{\g_t}_1} - \varepsilon \hat C p \\
      &\geq \min_{j}(1+e_j(t)) \cdot \frac{\sum_{j \in J_\varepsilon(t)}\abs{\g_t[j]}}{\norm{\g_t}_1} - \varepsilon \hat Cp  \\
      &\geq \min_{j} (1+e_j(t)) \cdot \frac{\norm{\g_t}_1(1 - \varepsilon p)}{\norm{\g_t}_1} - \varepsilon \hat Cp \\
      &\toinfty{t} 1 - \varepsilon p -\varepsilon \hat Cp  ~.
    \end{align*}
    Since this holds for any $\varepsilon > 0$, this implies 
    $$\liminf_{t \to \infty} \angles{\frac{\hat \m_t}{\sqrt{\hat \vb_t}}}{\frac{\g_t}{\norm{\g_t}_1}} \geq 1 ~.$$
    Therefore, 
    $$\lim \inf _{t \to \infty}\angles{\frac{1}{\eta(t)}\dtheta{t}}{-\frac{\g_t}{\norm{\g_t}_1}} \geq 1 ~.$$
    Also, since $\norm{\thetab_t}_\infty \leq \int_0^t \eta(s) \norm{\frac{\hat \m_s}{\sqrt{\hat \vb_s}}}_\infty ds \leq \hat C \int_0^t \eta(s)ds$ it holds that 
    $$\limsup_{t \to \infty}\frac{\norm{\thetab_t}_\infty}{\int_0^t \eta} \leq \hat C ~.$$
    Therefore, by Lemma~\ref{lemma:approx_sf_loss_decays_gamma}, $\Lcal(\thetab_t) \to 0, \norm{\thetab_t}\to \infty$. This in turn implies that $\forall j \in [p]:\abs{\g_t[j]} \toinfty{t} 0$ (by Equation~\eqref{eq:h_is_bounded}, noting that $\norm{\thetab_t}^{L-1} = \smallo{\Lcal^{-1}}$), so by Lemma~\ref{lemma:lim_sup_zhang} it follows that in fact
    $$\lim \sup_{t \to \infty} \frac{\norm{\thetab_t}_\infty}{\int_0^t \eta} \leq 1~.$$
    Therefore by Theorem~\ref{thm:approx_sf_kkt_point} with $\nu(t) = \eta(t)$ we get the result.\\
\end{proof}

\subsection{Muon-Adam}\label{subsection:proofs_muon_adam}
Muon-Adam differs from Muon-Signum in that it does not adhere to the definition of normalized momentum steepest descent (due to the presence of Adam). Therefore the treatment of Muon-Adam relies directly on the framework of Approximate Steepest Descent, drawing main techniques from both Theorem~\ref{thm:appendix_adam_decaying} and Theorem~\ref{thm:appendix_momentum_sd_kkt}. In addition to those, the key technical step is dividing $[0, \infty)$ into times $t$ when the dual norm has a significant contribution from the $\norm{\cdot}_1$ norm of the gradient w.r.t $\ub$, and times when there is a significant contribution from the $\normnucone{\cdot}$ norm of the gradient w.r.t $\W$ (these subsets may overlap).
\begin{theorem}[KKT Stationarity for Muon-Adam - Implies Theorem~\ref{thm:main_text_muon_adam}]\label{thm:appendix_muon_adam}
Assume $\thetab = (W_1,...,W_K, \ub) = (\W, \ub) \in \bbR^p$ is a parameter vector representing a collection of matrices and additional parameters $\ub$. Assume $W_1,...,W_K$ follow a trajectory of Muon and $\ub$ follows a trajectory of Adam, with respective learning rates of the form $\eta_0^M\eta(t), \eta_0^A\eta(t)$ for $\eta_0^M, \eta_0^A >0$ and momentum parameters $c_{M}$ for Muon and $c_1 \geq c_2$ for Adam. Assume \ref{model_ass:lipschitz_C1}, \ref{model_ass:homogeneous_1}, \ref{lr_ass:lr_adam}, \ref{traj_ass:1}, \ref{traj_ass:2}, \ref{traj_ass:3}, \ref{adam_ass:1}.
    Then $\bar \thetab =\lim_{t \to \infty}\thetadir{t}$ is along the direction of a KKT point of the $\norm{\cdot}$-max-margin problem (Equation~\eqref{eq:min_norm}), with 
    $$\norm{\thetab} = \max \left\{\frac{\eta_0^A}{\eta_0^M}\normspinf{\W}, \norm{\ub}_\infty\right\}~.$$
\end{theorem}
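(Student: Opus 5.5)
We will show that the composite trajectory is an Approximate Steepest Descent trajectory (Definition~\ref{def:approx_sd}) with respect to $\norm{\thetab} = \max\{\alpha\normspinf{\W}, \norm{\ub}_\infty\}$, $\alpha := \eta_0^A/\eta_0^M$, with $\nu(t) = \eta_0^A\eta(t)$ and $R_{\text{max}} \leq 1$. The conclusion then follows from Theorem~\ref{thm:approx_sf_kkt_point}. The natural tool is Lemma~\ref{lemma:composing_approx_sd}, applied to the blocks $\ub_1 = \W$ with norm $\alpha\normspinf{\cdot}$ (dual norm $\frac1\alpha\normnucone{\cdot}$) and $\ub_2 = \ub$ with norm $\norm{\cdot}_\infty$. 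With this common $\nu$, the Muon block moves at speed $\eta_0^M\eta(t)$ in $\normspinf{\cdot}$, that is, at speed exactly $\nu(t)$ in $\alpha\normspinf{\cdot}$. This is the reason for the rescaling.

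First, the update is small: $\norm{\dtheta t} \leq \bigO{\eta(t)} = \smallo{t^{\frac1L-1}}$. For the Adam block this uses Lemma~\ref{lemma:momentum_ratio_bound} together with the bias-correction bound of Lemma~\ref{lemma:adam_bias_correction_bounded}, exactly as in Theorem~\ref{thm:appendix_adam_decaying}. Hence Lemma~\ref{lemma:appendix_momentum_follows} applies to the momentum estimates of each block. One should check that its proof works blockwise: the per-block momenta are EMAs of the corresponding gradient coordinates, and the relevant errors are $\smallo{\normst{\g_t}}$, where $\normst{\g_t} = \Theta(\norm{\thetab_t}^{L-1}\Lcal)$ is the full dual norm.

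\textbf{Muon block.} Since $\W$ evolves by normalized MSD, we get $\angles{\frac1\nu \frac{d\W}{dt}}{-\g_\W} = \frac{1}{\alpha}\angles{U V^\top}{-\g_\W}$, and this equals $\frac1\alpha\normnucone{\m_\W} - \smallo{\normst{\g_t}}$ because $\normst{\m_\W - \g_\W} = \smallo{\normst{\g_t}}$. Also $\alpha\normspinf{\W_t} \leq \alpha\normspinf{\W_0} + \int_0^t\nu$, so $R_1 \leq 1$.

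\textbf{Adam block.} By the argument of Theorem~\ref{thm:appendix_adam_decaying}, $\angles{\frac{\hat\m_\ub}{\sqrt{\hat\vb_\ub}}}{\g_\ub} \geq \norm{\g_\ub}_1 - \smallo{\normst{\g_t}}$. This uses the coordinate sets $J_\varepsilon(t)$ defined relative to the full dual norm $\normst{\g_t}$, then lets $\varepsilon\to0$. Once Lemma~\ref{lemma:approx_sf_loss_decays_gamma} gives $\Lcal\to0$ (using the crude bound $R \leq \hat C$ first), all gradient coordinates vanish. Lemma~\ref{lemma:lim_sup_zhang} then yields $\limsup \norm{\ub_t}_\infty/\int_0^t\nu \leq 1$, so $R_2 \leq 1$.

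\textbf{Main obstacle.} Lemma~\ref{lemma:composing_approx_sd} requires $\essliminf r_k \geq 1$ for each block separately, normalized by that block's own dual norm. Our errors, however, are only $\smallo{\normst{\g_t}}$ with respect to the total dual norm. If one block's gradient is negligible compared to the total, its ratio $r_k$ can be poor. The plan is therefore to bypass the per-block requirement and bound $r(t)$ directly:
\[
r(t) = \frac{\frac1\alpha\angles{UV^\top}{-\g_\W} + \angles{\frac{\hat\m_\ub}{\sqrt{\hat\vb_\ub}}}{-\g_\ub}\cdot\frac{\eta_0^A}{\nu/\eta}\cdots}{\frac1\alpha\normnucone{\g_\W}+\norm{\g_\ub}_1} \geq \frac{\frac1\alpha\normnucone{\g_\W}+\norm{\g_\ub}_1 - \smallo{\normst{\g_t}}}{\normst{\g_t}} \to 1 ,
\]
where the numerator term for the Adam block is simply $\angles{\hat\m_\ub/\sqrt{\hat\vb_\ub}}{-\g_\ub}$ after dividing by $\nu$. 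This works because both additive errors are $\smallo{}$ of the common denominator; it is the ``significant contribution'' split in disguise. We must check that the full dual norm $\frac1\alpha\normnucone{\cdot}+\norm{\cdot}_1$ is comparable to the norm used in Lemma~\ref{lemma:appendix_momentum_follows}, which holds by norm equivalence. With $\essliminf r \geq 1$ and $R_{\text{max}} \leq \max(R_1,R_2) \leq 1$, the KKT conclusion follows from Theorem~\ref{thm:approx_sf_kkt_point}.
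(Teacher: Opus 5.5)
Your proposal is correct and essentially matches the paper's proof. It uses the same rescaled norm $\max\{\alpha\normspinf{\W},\norm{\ub}_\infty\}$ with $\nu(t)=\eta_0^A\eta(t)$, bounds $r(t)$ directly instead of invoking Lemma~\ref{lemma:composing_approx_sd} (for exactly the reason you identify), and obtains $R_{\text{max}}\le 1$ by the same two steps: a crude $\hat C$ bound, then Lemma~\ref{lemma:approx_sf_loss_decays_gamma}, then Lemma~\ref{lemma:lim_sup_zhang}.

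The only difference is cosmetic. The paper splits $[0,\infty)=T_{\varepsilon,M}\cup T_{\varepsilon,A}$ and compares normalized block directions, whereas you absorb both additive $\smallo{\normst{\g_t}}$ errors into the common denominator. Your version is slightly cleaner and avoids needing $\mathbf M_t\neq\zero$.

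Two small fixes. First, the signs: $\frac1\nu\frac{d\W}{dt}=-\frac1\alpha UV^\top$, so the block terms are $\frac1\alpha\angles{UV^\top}{\g_\W}$ and $\angles{\hat\m_\ub/\sqrt{\hat\vb_\ub}}{\g_\ub}$. Second, your displayed formula for $r(t)$ contains a garbled fragment ($\frac{\eta_0^A}{\nu/\eta}\cdots$) that should be replaced by this corrected Adam term.
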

\begin{proof}
    Denote $\g_t = (\mathbf G_t,\g_t^{(u)}), \mathbf G_t = (G_1(t),...,G_k(t))$ the choice of subgradients along the trajectory. Denote $\mathbf M_t, \mu_t, \omega_t$ the momentum estimates of $\mathbf G_t,\g_t^{(u)},{\g_t^{(u)}}^2$ with parameters $c_M,c_1,c_2$ respectively, and $\tilde \m_t$ the momentum estimate for the whole gradient vector $\g_t$ with $c_M$. We retain the notations $\m_t, \vb_t$ for the momentum estimates of $\g_t, \g_t^2$ with $c_1,c_2$. Denote $J_M \cup J_A = [p]$ a partition of $[p]$ into the indices optimized by Muon and by Adam respectively. Define $\alpha = \frac{\eta_0^A}{\eta_0^M}$, $\normspa{\cdot} = \alpha \normspinf{\cdot}$, and its dual norm by $\normnuca{\cdot} = \frac 1 \alpha \normnucone{\cdot}$. We aim to show that $\thetab_t$ is a trajectory of Approximate Steepest Descent with respect to the norm $\norm{\thetab} = \max\{\normspa{\W}, \norm{\ub}_\infty\}$, $\nu(t) = \eta_0^A\eta(t)$ and $R_{\text{max}} \leq 1$. First, by assumptions \ref{traj_ass:1}, \ref{traj_ass:2}, it holds (Lemma ~\ref{lemma:any_trajectory}) that $\normst{\g_t} = \Theta(\norm{\thetab_t}^{L-1}\Lcal)$ and in particular eventually $\g_t \neq \zero$. 

    First note that we can write the update for Muon and Adam separately as
    $$\frac{d\W_t}{dt} \in \eta_0^M\eta(t)\cdot \arg \min_{\normspinf{\W}=1}\angles{\W}{\mathbf M_t}, \quad \frac{d\ub_t}{dt} = -\eta_0^A\eta(t)\frac{\hat \mu_t}{\sqrt{\hat \omega_t}}~.$$
    But since by definition $\forall \W,\normspa{\W} = \alpha \normspinf{\W}$, we can also write 
    $$\frac{d\W_t}{dt} \in \eta_0^A\eta(t)\cdot \arg \min_{\normspinf{\W}=\frac{\eta_0^M}{\eta_0^A}}\angles{\W}{\mathbf M_t} = \eta_0^A\eta(t)\cdot \arg \min_{\normspa{\W}=1}\angles{\W}{\mathbf M_t}~.$$
    
    So we consider the optimization of $\W_t$ as normalized momentum steepest descent with respect to $\normspa{\cdot}$ and learning rate $
    \nu(t) = \eta_0^A\eta(t)$. In particular this implies $\normspa{\frac{1}{\nu(t)}\frac{d\W_t}{dt}} = 1$.

    We observe the quantity of interest for Definition \ref{def:approx_sd}, with the choice of subgradients fixed to be the subgradients chosen by the momentum:
    \begin{equation}\label{eq:adam_muon_rt}
            \begin{aligned}
     r(t) = \angles{\frac{1}{\nu(t)}\dtheta{t}}{-\gdir{t}} &=
      \angles{\frac{1}{\nu(t)}\frac{d\ub_t}{dt}}{\frac{-\g_t^{(u)}}{\normst{\g_t}}} + \angles{\frac{1}{\nu(t)}\frac{d\mathbf W_t}{dt}}{-\frac{\mathbf G_t}{\normst{\g_t}}}\\
      &= \frac{ \angles{\frac{\hat \mu_t}{\sqrt{\hat \omega_t}}}{\g_t^{(u)}} + \angles{\frac{1}{\nu(t)}\frac{d\mathbf W_t}{dt}}{-\mathbf G_t}}{\normnuca{\mathbf G_t} + \norm{\g_t^{(u)}}_1}~.
    \end{aligned}
    \end{equation}

    Our main goal is showing that $\essliminf_{t \to \infty} r(t) \geq 1$.
    
    Fix $\frac1{2p} > \varepsilon > 0$ and denote $J_{\varepsilon^2}(t) = \left\{ j \in [p] \mid  \frac{\abs{\g_t[j]}}{\normst{\g_t}} > \varepsilon^2\right\}$. Also denote 
    $$T_{\varepsilon, M} = \left\{t \geq 0 \mid  \frac{\normnuca{\mathbf G_t}}{\normst{\g_t}} \geq \varepsilon p\right\}, \quad T_{\varepsilon, A} = \left\{t \geq 0 \mid  \frac{\norm{\g_t^{(u)}}_1}{\normst{\g_t}} \geq \varepsilon p\right\}~.$$
Since $\normst{\g_t} = \normnuca{\mathbf G_t} + \norm{\g_t^{(u)}}_1$ (Lemma~\ref{lemma:dual_norm_of_max}) and $\varepsilon p < \frac12 $ it holds that $T_{\varepsilon, M} \cup T_{\varepsilon , A} = [0, \infty)$. By Lemma~\ref{lemma:appendix_momentum_follows} (applied separately with $c_M$ and with $c_1,c_2$) it holds that there exist vanishing error terms $e_j(t) \to 0$ with 
$$\forall j \in J_{\varepsilon^2}(t):\tilde \m _t[j] = \g_t[j](1 + e_j(t)), \quad \frac{\hat \m_t[j]}{\sqrt{\hat \vb_t[j]}} = \sign{\g_t[j]}(1 + e_j(t))~.$$

    We first observe the Adam expression. Notice that by definition $\norm{\g_t^{(u)}}_1 = \sum_{j \in J_A}\abs{\g_t[j]}$, so it holds for $t \in T_{\varepsilon, A}$ that
    $$\sum_{j \in J_{\varepsilon^2}(t) \cap J_A}\abs{\g_t[j]} = \norm{\g_t^{(u)}}_1 - \sum_{j \in J_A \setminus J_{\varepsilon^2}(t)}\abs{\g_t[j]} \geq \norm{\g_t^{(u)}}_1 -\varepsilon^2 p\normst{\g_t} \geq \norm{\g_t^{(u)}}_1 - \varepsilon \norm{\g_t^{(u)}}_1 = (1- \varepsilon)\norm{\g_t^{(u)}}_1~.$$
    The first inequality from the definition of $J_{\varepsilon^2}(t)$ and the second from the definition of $T_{\varepsilon, A}$.
    Denoting $\hat C = \sup_{t \in [0, \infty)}\norm{\frac{\hat \mu _t}{\sqrt{\hat \omega_t}}}_\infty$ as in the proof of Theorem~\ref{thm:appendix_adam_decaying}, 
    \begin{align*}
      \forall t \in T_{\varepsilon, A}:\angles{\frac{1}{\nu(t)}\frac{d\ub_t}{dt}}{\frac{-\g_t^{(u)}}{\normst{\g_t}}}&=\angles{\frac{\hat \mu_t}{\sqrt{\hat \omega_t}}}{\frac{\g_t^{(u)}}{\norm{\g_t^{(u)}}}_1}\\  &= \frac{1}{\norm{\g_t^{(u)}}_1}\sum_{j \in J_A}\frac{\hat \m_t[j]}{\sqrt{\hat \vb_t[j]}}\g_t[j]\\
      &\geq \frac{1}{\norm{\g_t^{(u)}}_1}\sum_{j \in J_A \cap J_{\varepsilon^2}(t)}\frac{\hat \m_t[j]}{\sqrt{\hat \vb_t[j]}}\g_t[j]- \varepsilon^2 \hat Cp\\
      &=\frac{1}{{\norm{\g_t^{(u)}}}_1} \sum_{j \in J_A \cap J_{\varepsilon^2}(t)}(1+e_j(t))\abs{\g_t[j]} - \varepsilon^2\hat Cp \\
      &\geq \min_{j}(1+e_j(t))\frac{1}{\norm{\g_t^{(u)}}_1}\norm{\g_t^{(u)}}_1\left(1 - \varepsilon \right) - \varepsilon^2\hat Cp\\
      &\toinfty{t} 1 - \bigO{\varepsilon}~.  
    \end{align*}
    Where $\bigO{\cdot}$ hides uniform constants independent of $t$ and $\varepsilon$.
    Also
    $$\forall t \notin T_{\varepsilon, A}: \abs{\angles{\frac{\hat \mu_t}{\sqrt{\hat \omega_t}}}{\frac{\g_t^{(u)}}{\normst{\g_t}}}} \leq \hat C\frac{\norm{\g_t^{(u)}}_1}{\normst{\g_t}} \leq \hat C \varepsilon p = \bigO{\varepsilon} ~.$$
    Considering the Muon expression, by norm equivalence and by Lemma~\ref{lemma:appendix_momentum_follows} (and since $\mathbf M_t- \mathbf G_t$ is a subvector of $\tilde \m_t - \g_t$), it holds that
    $$\normnuca{\mathbf M_t - \mathbf G_t} \leq \bigO{\norm{\mathbf M_t - \mathbf G_t}_\infty} \leq \bigO{\norm{\tilde \m_t - \g_t}_\infty }\leq  \bigO{ \normst{\tilde \m_t - \g_t}} \leq \smallo{\normst{\g_t}}~.$$
    For $t \in T_{\varepsilon, M}$ it holds that $\normst{\g_t} \leq \frac{1}{\varepsilon p}\normnuca{\mathbf G(t)}$ so there exists a vanishing $e(t) \toinfty{t} 0$ with
    $$\forall t \in T_{\varepsilon, M}:\normnuca{\mathbf M_t - \mathbf G_t} \leq e(t)\normnuca{\mathbf G_t}~.$$
    Note that in particular $\normst{\g_t} \leq \frac{1}{\varepsilon p}\normnuca{\mathbf G(t)}$ implies $\mathbf G_t \neq 0$. Since $e(t)$ is vanishing this implies that $\mathbf M_t \neq 0$ for all large enough $t \in T_{\varepsilon, M}$. Thus by Lemma~\ref{lemma:momentum_follows_auxiliary}, for all large enough $t$,
    $$\forall t \in T_{\varepsilon,M}: \normnuca{\frac{\mathbf M_t}{\normnuca{\mathbf M_t}}-\frac{\mathbf G_t}{\normnuca{\mathbf G_t}}} \leq 2e(t) ~.$$
    For all $t$ large enough with $2e(t) \leq \varepsilon$, since $\normspa{\frac{1}{\nu(t)}\frac{d\W_t}{dt}} = 1$,
    $$\forall t \in T_{\varepsilon, M}: \angles{\frac{1}{\nu(t)}\frac{d\mathbf W_t}{dt}}{-\frac{\mathbf G_t}{\normnuca{\mathbf G_t}}} = 1-\angles{\frac{1}{\nu(t)}\frac{d\mathbf W_t}{dt}}{\frac{\mathbf M_t}{\normnuca{\mathbf M_t}}-\frac{\mathbf G_t}{\normnuca{\mathbf G_t}}} \geq 1 - \varepsilon~,$$
    $$\forall t \notin T_{\varepsilon, M}: \abs{\angles{\frac{1}{\nu(t)}\frac{d\mathbf W_t}{dt}}{-\frac{\mathbf G_t}{\normst{\g_t}}} }\leq \frac{\normnuca{\mathbf G_t}}{\normst{\g_t}} < \varepsilon p~.$$
    Now observe any such large enough $t$. From Equation~\eqref{eq:adam_muon_rt},
    \begin{align*}
     r(t) = \angles{\frac{1}{\nu(t)}\dtheta{t}}{-\gdir{t}} &=
      \frac{ \angles{\frac{\hat \mu_t}{\sqrt{\hat \omega_t}}}{\g_t^{(u)}} + \angles{\frac{1}{\nu(t)}\frac{d\mathbf W_t}{dt}}{-\mathbf G_t}}{\normnuca{\mathbf G_t} + \norm{\g_t^{(u)}}_1}~.
    \end{align*}
    If $t \in T_{\varepsilon, M} \setminus T_{\varepsilon, A}$ then $\normnuca{\mathbf G_t} \geq (1-\varepsilon p)\normst{\g_t} = (1 - \varepsilon p)\left(\normnuca{\mathbf G_t} + \norm{\g_t^{(u)}}_1\right)$, so
    $$r(t) \geq \frac{(1 - \varepsilon)\normnuca{\mathbf G_t} - \bigO{\varepsilon}\normnuca{\mathbf G_t}}{\frac{1}{1-\varepsilon p}\normnuca{\mathbf G_t}} = 1 - \bigO{\varepsilon} ~.$$
    And similarly if $t \in T_{\varepsilon, A} \setminus T_{\varepsilon, M}$. Also, if $t \in T_{\varepsilon, M} \cap T_{\varepsilon, A}$ then
    $$r(t) \geq \frac{(1 - \varepsilon)\normnuca{\mathbf G_t} + (1- \bigO{\varepsilon})\norm{\g_t^{(u)}}_1}{\normnuca{\mathbf G_t} + \norm{\g_t^{(u)}}_1} \geq 1 - \bigO{\varepsilon} ~.$$
    Altogether we have $r(t) \geq 1 - \bigO{\varepsilon}$ for almost all large enough $t$. Since $\varepsilon > 0$ was arbitrarily small, this implies 
    $$\essliminf_{t \to \infty}r(t) \geq 1~.$$
    Denote $R_{\text{max}} = \limsup_{t \to \infty}\frac{\norm{\thetab_t}}{\int_0^t\tilde  \eta}$. As in the proof of Theorem~\ref{thm:appendix_adam_decaying}, we first show $R_{\text{max}}$ is finite and then that $R_{\text{max}} \leq 1$. Since $\norm{\thetab_t} = \max\{\normspa{\mathbf W}, \norm{\ub}_\infty\}$ and 
    $$\limsup \frac{\normspa{\mathbf W}}{\int_0^t\nu } \leq 1, \quad \limsup\frac{\norm{\ub}_\infty}{\int_0^t\nu} \leq \hat C~,$$
    It holds that $R_{\text{max}} \leq \hat C$, so by Lemma~\ref{lemma:approx_sf_loss_decays_gamma} it holds that $\Lcal(\thetab_t) \toinfty{t} 0$, implying that $\normst{\g_t} \to 0$, so by Lemma~\ref{lemma:lim_sup_zhang}, it holds in fact that $\limsup\frac{\norm{\ub}_\infty}{\int_0^t\nu} \leq 1$, so $R_{\text{max}} \leq 1$ as required. Therefore we get the result by Theorem~\ref{thm:approx_sf_kkt_point}.
\end{proof}

\clearpage

\section{Experimental Details}\label{section:experiment_details}
\FloatBarrier
\subsection{2-Layer Networks on MNIST}
As mentioned in Section~\ref{section:experiments}, we train two-layer (one hidden layer) homogeneous networks to classify $m = 2048$ MNIST digits \citep{lecun2002gradient} as even or odd, using the logistic loss. We compare squared ReLU and ReLU, and the following optimizers: Normalized Gradient Descent (NGD) with and without momentum, Signum, Adam, Muon (treating the output layer as a matrix with a single row) and Muon-Adam. Training proceeds until the loss reaches a small target value ($10^{-8}$). The stability constant for Adam was chosen to be negligible with respect to gradient norm values ($\varepsilon=10^{-20}$). 

Adam momentum parameters are default ($\beta_1 = 0.9, \beta_2 = 0.999$) and momentum for other optimizers is also $\beta = 0.9$. Network parameters are initialized using Kaiming \citep{he2015_kaiming_init}, with no corrections for non-linearities, and multiplied by a uniform factor $\alpha = 0.01$. Initial learning rate values $\eta_0$ are tuned per-setting to allow for gradual convergence to the target loss value, according to Table~\ref{tab:learning_rates_mnist}. 

\begin{table}[h]
\centering
\caption{Learning Rates for MNIST Task}
\label{tab:learning_rates_mnist}
\begin{tabular}{lcccccc}
\toprule
Activation & NGD w.o.\ momentum & NGD & Signum & Adam & Muon & Muon-Adam \\
\midrule
ReLU          & $2.0$ & $8\times 10^{-1}$ & $5\times 10^{-3}$ & $1\times 10^{-2}$ & $2\times 10^{-1}$ & $5\times 10^{-2}$\\
Squared ReLU  & $1.5$ &$ 3\times 10^{-1}$ & $3\times 10^{-3}$ &  $5\times 10^{-3}$ & $8\times 10^{-2}$ & $5\times 10^{-2}$\\
\bottomrule
\end{tabular}
\end{table}

Figure~\ref{fig:margin_plot_loss_muon_adam} presents results for Muon-Adam (the main results are shown in Figure~\ref{fig:plots_nonlinear}). We observe that compared to Muon and Adam, Muon-Adam maximizes the norm $\max\{\normspinf{W}, \norm{\ub}_\infty\}$ as expected, where the matrix $W$ is the first layer and the vector $\ub$ is the output layer.

\begin{figure*}[h]
  \centering
  \includegraphics[width=0.3\linewidth,trim=0 4mm 0 0mm,clip]{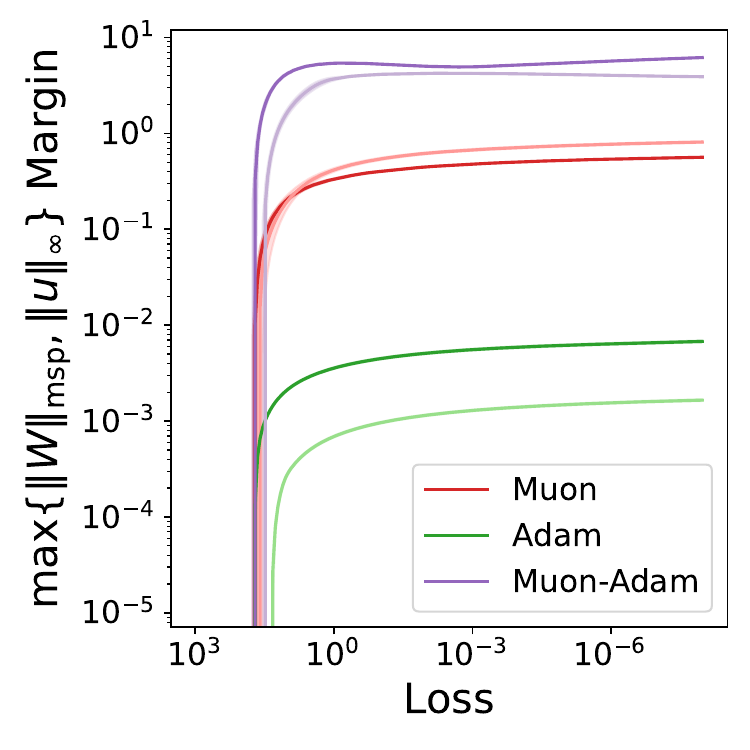}
\caption{Margin values vs. loss for different optimizers. A lighter/darker color signifies the squared-ReLU / ReLU activations respectively. Lines are mean values over 10 random seeds, while filled areas represent one standard deviation.}
\label{fig:margin_plot_loss_muon_adam}
  \end{figure*}

\subsection{4-Layer Networks on CIFAR10}
Additionally, we extend the experimental setting to the CIFAR10 dataset, training 4-homogeneous ReLU networks with a single convolutional layer (16 output channels, $3 \times 3$ kernel with $\text{stride}=\text{padding}=1$), followed by ReLU and $2\times 2$ max pooling, and 3 fully connected layers (2 hidden layers with ReLU activation) with hidden width 1024.
The training task is classification of $m=4096$ training points into two classes (\enquote{cat}, \enquote{airplane}), with both classes represented equally and points chosen at random from the CIFAR10 dataset. Models were trained in full-batch with the logistic loss until a small loss value of $10^{-8}$ was reached. Initialization was done according to \citep{he2015_kaiming_init} with the ReLU correction, and weights later scaled by $\alpha = 0.1$. NGD with momentum, Signum, Adam, and Muon-Adam were compared, with the stability constant for Adam again chosen to be $\varepsilon = 10^{-20}$. The learning rate schedule was again chosen to align with theory $t^{-0.8} = \smallo{t^{1/4 - 1}} = \smallo{t^{-0.75}}$, with base learning rates shown in Table \ref{tab:learning_rates_cifar}, chosen for convergence in comparable time between optimizers. Results shown in Figure \ref{fig:margin_plot_loss_cifar} again corroborate the theory, with each optimizer achieving the highest margin value for its associated norm. Interestingly, as in the MNIST experiment, Signum achieves a larger $\ell_\infty$ margin than Adam.

\begin{figure*}[h]
  \centering
  \begin{subfigure}[t]{0.72\textwidth}
    \centering
  \includegraphics[width=\linewidth,trim=0 4mm 0 0mm,clip]{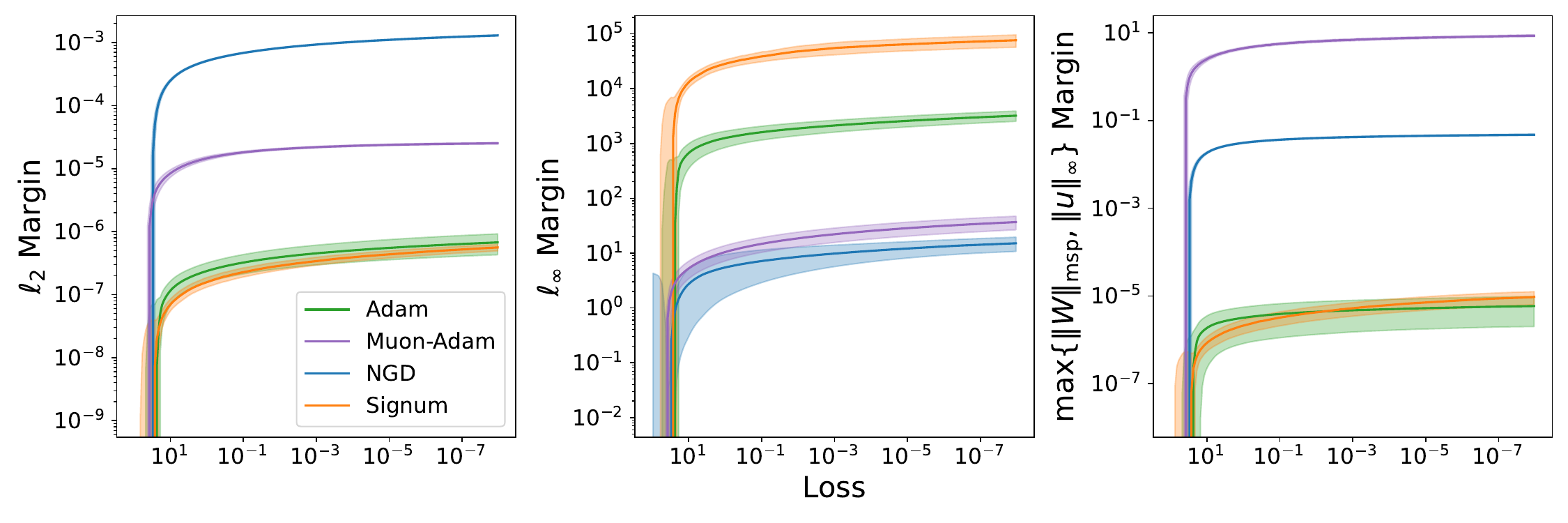}
\caption{}
  \label{fig:margin_plot_loss_cifar}
  \end{subfigure}
  \hfill
  \begin{subfigure}[t]{0.24\textwidth}
      \centering
      \includegraphics[width=\linewidth,trim=0 4mm 0 0mm,clip]{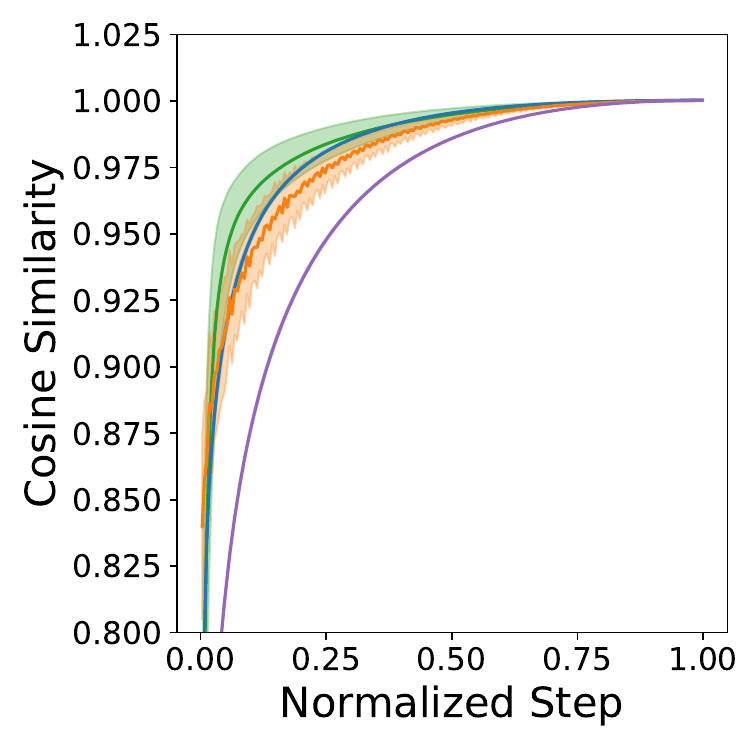}
\caption{}
  \label{fig:alignment_plot_cifar}
  \end{subfigure}
  \caption{(a) Margin values vs. loss for different optimizers. Lines are mean values over 10 random seeds, while filled areas represent one standard deviation. (b) Cosine similarity to last iterate $\angles{\frac{\thetab_t}{\norm{\thetab_t}_2}}{\frac{\thetab_{\text{last}}}{\norm{\thetab_{\text{last}}}_2}}$, plotted on a normalized linear time scale.}
  \label{fig:plots_cifar}
  \vskip -0.3cm
\end{figure*}

\begin{table}[h]
\centering
\caption{Learning Rates for CIFAR10 Task}
\label{tab:learning_rates_cifar}
\begin{tabular}{lcccc}
\toprule
Activation & NGD & Signum & Adam & Muon-Adam \\
\midrule
ReLU          & $5 \times 10^{-1}$ & $3\times 10^{-3}$ & $1\times 10^{-2}$ & $3\times 10^{-2}$\\
\bottomrule
\end{tabular}
\end{table}
\newpage

\end{document}